\documentclass{article}


\PassOptionsToPackage{numbers,sort,compress}{natbib}

\usepackage[preprint]{neurips_2021}




\usepackage[utf8]{inputenc} 
\usepackage[T1]{fontenc}    
\usepackage{xr-hyper} 
\usepackage{url}            
\usepackage{booktabs}       
\usepackage{amsfonts}       
\usepackage{nicefrac}       
\usepackage{microtype}      
\usepackage{xcolor}         

\usepackage{float}
\usepackage{amsmath}
\usepackage{amsthm}
\usepackage{dsfont}
\usepackage{amssymb}
\usepackage{bm}
\usepackage{nicematrix}
\usepackage{hyperref}
\hypersetup{
    colorlinks=true,
    linkcolor=blue,
    filecolor=magenta,      
    urlcolor=cyan,
    citecolor=blue,
}
\usepackage{url}
\usepackage[section]{placeins}
\usepackage{graphicx}
\usepackage{wrapfig}
\usepackage{mathtools} 
\usepackage{sidecap}
\usepackage{subcaption}


\usepackage{xcolor}
\usepackage{enumitem,lipsum}
\usepackage[most]{tcolorbox}
\tcbset{boxsep=0mm,boxrule=0pt,colframe=white,arc=0mm,left=0.5mm,right=0.5mm}
\usepackage[textwidth=3.5cm]{todonotes}

\usepackage[font=small,labelfont=bf]{caption}

\usepackage[toc,page,header]{appendix}
\usepackage{minitoc}



\usepackage{setspace}
\newcommand{\mb}{\mathbf}
\newcommand{\wM}[1]{\mb W^{#1}}
\newcommand{\wMt}[1]{\mb W^{#1^\top}}

\newcommand{\wMasupp}{\mb V}
\newcommand{\wMbsupp}{\mb W}
\newcommand{\wMa}{\mb W^{1}}
\newcommand{\wMb}{\mb W^{2}}

\newcommand{\bias}[1]{\mb b^{#1}}

\newcommand{\W}{\mathbf W}

\newcommand{\btheta}{\pmb \theta}

\newcommand{\x}{\mathbf x}
\newcommand{\y}{\mathbf y}

\newcommand{\Fn}{F_{\btheta}}
\newcommand{\Fni}[1]{F_{\btheta_{#1}}}
\newcommand{\Loss}{{\mathcal{L}}}
\newcommand{\Om}{{\mb{\Omega}}}
\newcommand{\effparams}{\frac{\textrm{rank}}{\textrm{\#\,params}}}

\newcommand{\covx}{{\bm{\Sigma}_{\x \x}}}
\newcommand{\covyx}{{\bm{\Sigma}_{\y \x}}}

\newcommand{\samplecovx}{{\widehat{\bm{\Sigma}}_{\x \x}}}

\newcommand{\E}{\mathbf E \,}
\newcommand{\jacobi}[2]{\frac{\partial #1}{\partial #2}}

\renewcommand{\Re}{\mathbb R}

\makeatletter
\newtheorem*{rep@theorem}{\rep@title}
\newcommand{\newreptheorem}[2]{%
\newenvironment{rep#1}[1]{%
 \def\rep@title{#2 \ref{##1}}%
 \begin{rep@theorem}}%
 {\end{rep@theorem}}}
\makeatother

\newtheorem{theorem}{Theorem}
\usepackage{mdframed}

\newtheorem{lemma}[theorem]{Lemma}


\newtheorem{proposition}[theorem]{Proposition}
\newtheorem{corollary}[theorem]{Corollary}

\newtheorem{remark}{Remark}
\newtheorem{faact}[theorem]{Fact}

\DeclareMathOperator{\rank}{rk}
\DeclareMathOperator{\rankdef}{rank-deficiency}
\DeclareMathOperator{\diag}{diag}
\DeclareMathOperator{\vect}{vec}

\DeclareMathOperator{\evals}{evals}
\newcommand{\Fmap}[1]{F^{#1}}
\newcommand{\Fmapt}[1]{F^{{#1}^\top}}
\newcommand{\Jmap}[1]{\mb J^{#1}}

\newcommand{\Lmap}[1]{\pmb \Lambda^{#1}}


\newcommand{\HO}{\mb{H}_{o}}
\newcommand{\HF}{{\mb{H}_{f}}}
\newcommand{\HOtilde}{{\tilde{\mb{H}_{o}}}}
\newcommand{\HFhat}{{\widehat{\mb{H}}_{f}}}

\newcommand{\HFcol}[1]{{\mb{H}_{f}^{\bullet #1}}}
\newcommand{\HL}{{\mb{H}_{\mathcal{L}}}}
\newcommand{\HOsup}[1]{{\mb{H}^{#1}_{o}}} 
\newcommand{\HFsup}[1]{{\mb{H}_{f}^{#1}}} 
\newcommand{\HFhatsup}[1]{{\widehat{\mb{H}}_{f}^{#1}}} 
\newcommand{\HLhatSup}[1]{{\widehat{\mb{H}}_{\mathcal{L}}^{#1}}}
\newcommand{\HLtildeSup}[1]{{\widetilde{\mb{H}}_{\mathcal{L}}^{#1}}}

\newcommand{\HLsup}[1]{{\mb{H}_{\mathcal{L}}^{#1}}} 

\newcommand{\idx}{k}

\newcommand{\e}{{\mb e}}
\newcommand{\ab}{{\mb a}}
\newcommand{\bb}{{\mb b}}

\newcommand{\kro}{%
  \mathbin{\mathop{\otimes}}%
}

\def\Am{{\bf A}}
\def\Bm{{\bf B}}
\def\Cm{{\bf C}}
\def\Dm{{\bf D}}

\def\Im{{\bf I}}

\def\Mm{{\bf M}}
\def\Nm{{\bf N}}
\def\Pm{{\bf P}}
\def\Qm{{\bf Q}}

\def\Wm{{\bf W}}
\def\Vm{{\bf V}}

\def\Zm{{\bf Z}}
\def\Ym{{\bf Y}}
\def\Xm{{\bf X}}
\usepackage{xcolor}
\usepackage{ifmtarg}
\usepackage{xifthen}
\usepackage{environ}
\usepackage{multido}
\usepackage{lipsum}%
\usepackage{mdframed}
\theoremstyle{plain}

\newcommand*{\QEDB}{\null\nobreak\hfill\ensuremath{\square}}%

\usepackage{thmtools}
\usepackage{cleveref}


\declaretheoremstyle[%
  spaceabove=-2pt,%
  spacebelow=6pt,%
  headfont=\normalfont\itshape,%
  postheadspace=1em,%
  qed=\qedsymbol%
]{mystyle}

\makeatletter

\makeatother

\newtheorem{assump}{Assumption}
\newcommand*{\colorboxed}{}
\def\colorboxed#1#{%
  \colorboxedAux{#1}%
}
\newcommand*{\colorboxedAux}[3]{%
  \begingroup
    \colorlet{cb@saved}{.}%
    \color#1{#2}%
    \boxed{%
      \color{cb@saved}%
      #3%
    }%
  \endgroup
}

\newreptheorem{proposition}{Proposition}
\newreptheorem{lemma}{Lemma}
\newreptheorem{theorem}{Theorem}
\newreptheorem{corollary}{Corollary}
\newreptheorem{assump}{Assumption}

\makeatletter
\newcommand{\neutralize}[1]{\expandafter\let\csname c@#1\endcsname\count@}
\makeatother


\newenvironment{assumpprime}[1]
  {%
   \addtocounter{assump}{-1}%
   \begin{assump}}
  {\end{assump}}

\title{Analytic Insights into Structure and Rank of \\Neural Network Hessian Maps}

\newcommand*\samethanks[1][\value{footnote}]{\footnotemark[#1]}

\usepackage{authblk}

\author[1,2]{Sidak Pal Singh\thanks{Detailed list of contributions are: Sidak first discovered that the Hessian rank formula, in an early form, holds experimentally to high fidelity, thus kick-starting the project. Sidak came up with the proof technique and proved Theorem~\ref{theorem:ub-outer}, Theorem~\ref{theorem:func-hess-cols},  Theorem~\ref{thm:relu-rank}, Theorem~\ref{theorem:bias}. Sidak wrote essentially the entire paper and noted the rank-deficiency interpretation. Gregor proved Lemma~\ref{lemma:evolve}, assisted in a part of Theorem~\ref{theorem:ub-outer}, and empirically observed the eventual formula for the Hessian rank. Gregor essentially ran all the experiments for the final submission and made the corresponding figures. Correspondence to \texttt{sidak.singh@inf.ethz.ch}.} \textsuperscript{$\,\,$}}

\author[1]{Gregor Bachmann\samethanks \textsuperscript{$\,\,$} }
\author[1,2]{Thomas Hofmann}
\affil[1]{ETH Zürich}
\affil[2]{Max Planck ETH Center for Learning Systems}

\begin{document}
\maketitle

\doparttoc 
\faketableofcontents 

\let\saveFloatBarrier\FloatBarrier
\let\FloatBarrier\relax

\begin{abstract}
The Hessian of a neural network captures parameter interactions through second-order derivatives of the loss. It is a fundamental object of study, closely tied to various problems in deep learning, including model design, optimization, and generalization. Most prior work has been empirical, typically focusing on low-rank approximations and heuristics that are blind to the network structure.  In contrast, we develop theoretical tools to analyze the range of the Hessian map, providing us with a precise understanding of its rank deficiency as well as the structural reasons behind it. This yields exact formulas and tight upper bounds for the Hessian rank of deep linear networks, allowing for an elegant interpretation in terms of rank deficiency. Moreover, we demonstrate that our bounds remain faithful as an estimate of the numerical Hessian rank, for a larger class of models such as rectified and hyperbolic tangent networks. Further, we also investigate the implications of model architecture (e.g.~width, depth, bias) on the rank deficiency. Overall, 
our work provides novel insights into the source and extent of redundancy in overparameterized networks. 
\end{abstract}

\section{Introduction}
Since the very infancy of neural networks, the Hessian matrix has been a central object of study. This is because the Hessian captures pairwise interactions of parameters via second-order derivatives of the loss function. As a result, the Hessian was productively employed, for instance, in (quasi-Newton) optimization methods~\cite{lbfgs,setiono1995use}, model design and pruning~\cite{lecun1990optimal,hassibiStork,hochreiter1995simplifying}, generalization~\cite{mackay}, network calibration~\cite{10.5555/2986766.2986882}, automatically tuning hyper-parameters~\cite{LeCun1992AutomaticLR}. 
But, from the outset the main practical challenge has been its size, scaling quadratically with 
the model dimensionality. This makes the problem severe for today's DNNs which have millions or even billions of parameters~\cite{he2015deep,devlin2019bert}. 

Consequently, most prior work has focused on designing scalable Hessian approximations, which either take the route of Hessian-vector products (R-operator) \cite{pearlmutter1994fast,martens2011learning,bastien2012theano} or employ positive definite approximations by appealing to the Fisher information matrix. Additional approximations --- without exception --- are needed on top, such as diagonal approximations \cite{lecun1990optimal,schaul2013pesky} in the former or  K-FAC~\cite{martens2020optimizing,botev2017practical,goldfarb2020practical} restricted to layerwise or arbitrary blocks on the diagonal~\cite{laurent2018an,dong2017learning,singh2020woodfisher}, in the latter.

The goal of this paper is to advance the analytical understanding of the Hessian map of a neural network. We pursue the fundamental question of how the model architecture induces structural properties of the Hessian. In particular, we analyze the dimension of its range (i.e., the rank) and identify the sources of rank deficiency.
Understanding the range of the Hessian map, in turn, delivers insights into the important aspect of how gradients change between iterations.

 \begin{wrapfigure}{R}{0.4\textwidth}
    \vspace{-2em}
    \centering
    \includegraphics[width=0.4\textwidth]{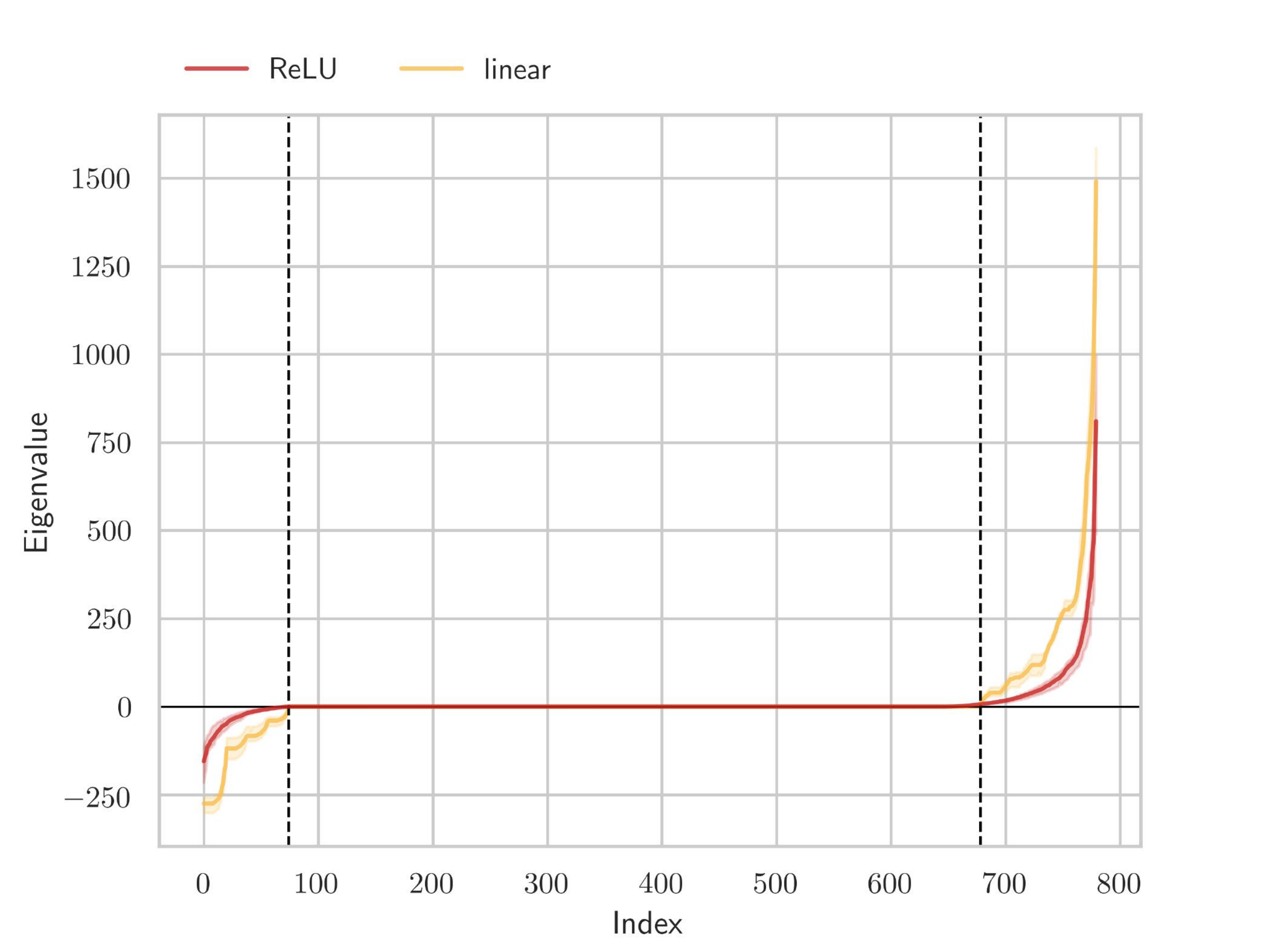}
    \caption{\small{Hessian spectrum of linear and ReLU networks at initialization. Dashed lines indicate our rank predictions. Results have been averaged over 3 runs.
    }}
    \label{fig:intro}
    \vspace{-1em}
 \end{wrapfigure}
 
A reason why such an important direction currently remains sidelined is that non-linearities in a neural network result in an increased dependence on the data distribution, making a suitable theoretical analysis seem intractable. Following \citet{saxe2014exact,kawaguchi2016deep}, who delivered useful general insights on neural networks by looking at the linear case, we take a step back and rigorously characterize the range of the Hessian map and determine the resultant rank deficiency for deep linear networks. 
\textit{The key result of our paper is an exact formula along with tight upper bound on the rank of the Hessian --- which  effectively depend on the sum of hidden-layer widths.} This stands opposed to the total number of parameters which are proportional to the sum of squared layer widths, thus implying a significant redundancy in the parameterization of neural networks (see Figure~\ref{fig:intro}).

The exact quantification of the Hessian rank gives a precise yet interpretable ballpark on the inherent complexity of neural networks since rank naturally measures the effective number of parameters. This relationship is further reinforced by connection to the classical complexity measure of~\citet{Gull1989,NIPS1991_c3c59e5f_mackay}, which is equivalent to rank for a sufficiently small constant controlling the prior. Therefore, this sheds a novel perspective on the nature and degree of overparameterization in neural networks, and opens up interesting avenues for future investigation. 

\vspace{2mm}

\paragraph{Contributions.} The main contributions of our paper can be summarized as follows:
\begin{itemize}
    \item Section~\ref{sec:hessianstruct}: We characterize the structure of the Hessian range by exhibiting it in the form of matrix derivatives.
    \item Section~\ref{sec:rank}: We prove tight upper bounds and provide exact formulas on the Hessian rank that are neatly interpretable. \textit{To the best of our knowledge, this is the very first time that such formulas and bounds are made available for neural networks. }
    \item Section~\ref{sec:empirical-nonlinear}: In the non-linear case, we show that our (linear) rank formulae faithfully capture the numerical rank.
    \item Section~\ref{sec:rank-evolution}: We demonstrate via experiments and theory that such rank bounds also hold throughout the course of training.
    \item Section~\ref{sec:relu-bound}: In the non-linear case, we also provide a pessimistic yet non-trivial bound, which provably establishes degeneracy of the Hessian at the minimum.
    \item Section~\ref{sec:bias}, \ref{sec:verifyBehaviour}: We extend our rank results to the case of bias and investigate the effects of architectural components (such as width, depth, bias) on rank.
    \item Appendix~\ref{supp:spectra}: As a by-product, our analysis also reveals interesting properties of the Hessian spectrum, and we prove the presence of additional redundancies due to repeated eigenvalue plateaus (e.g., see Fig.~\ref{fig:func-spectra}). \looseness=-1
\end{itemize}

\begin{figure}[h]
    \centering
    {\includegraphics[trim={0.75cm 0.3cm 0 2.75cm},clip,width=0.35\textwidth]{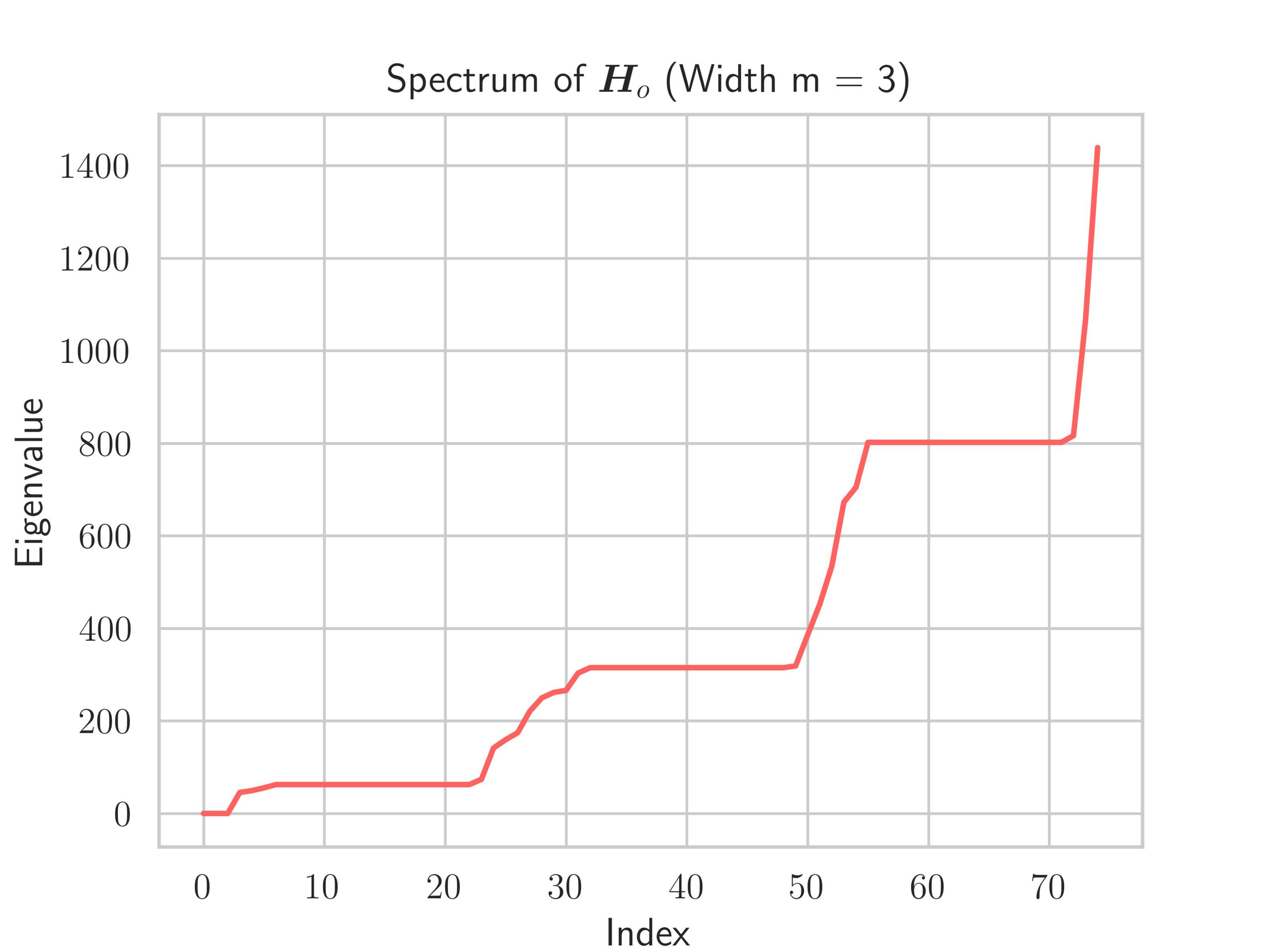}}
	{\includegraphics[trim={0.75cm 0.3cm 0 2.75cm},clip,width=0.35\textwidth]{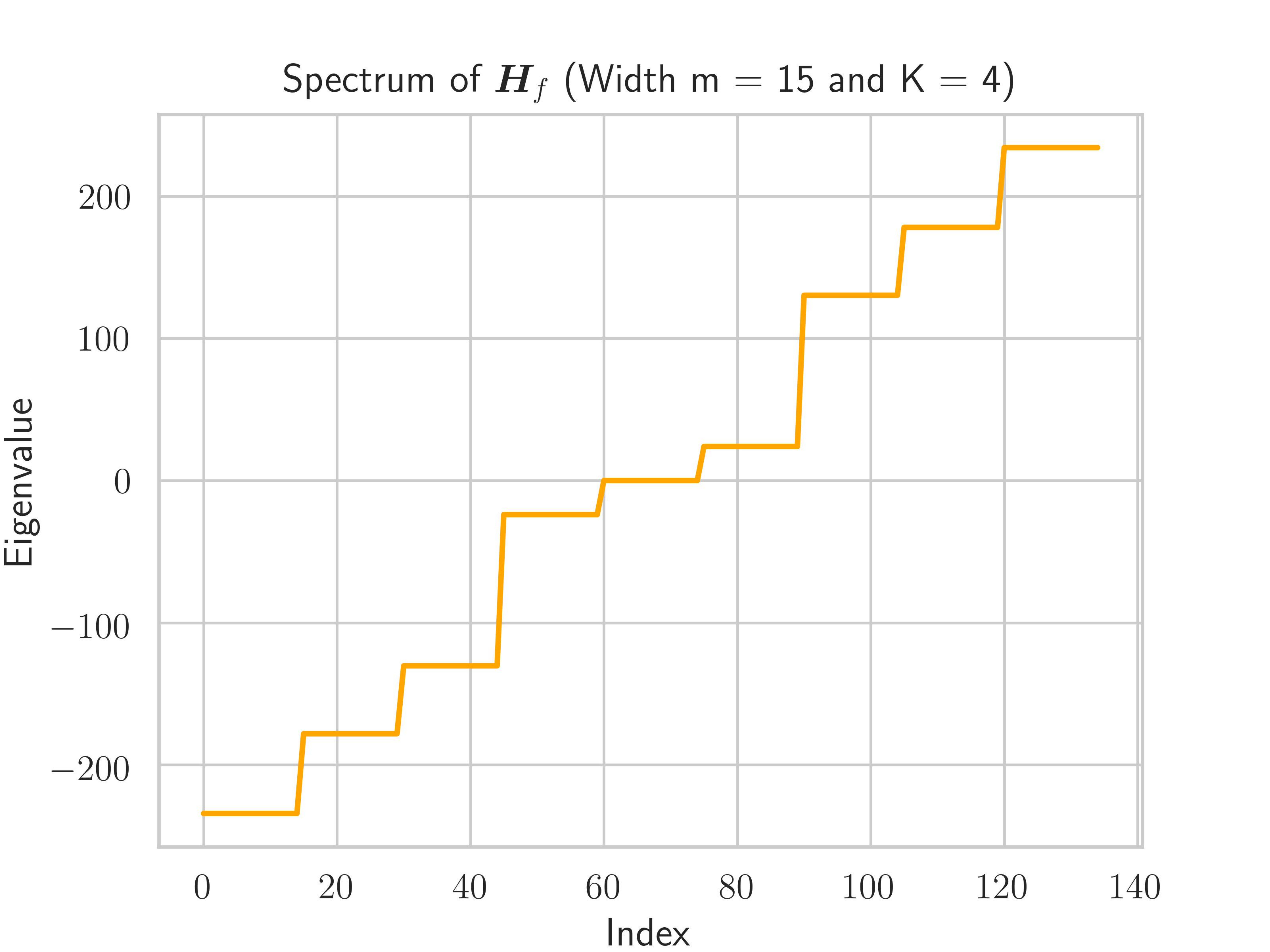}} 
    \caption{\small{Other kinds of redundancies in the Hessian structure due to the occurrence of repeated eigenvalues. In Appendix~\ref{supp:spectra}, we discuss the conditions when they arise and the reasons behind it. Additional such results can also be found therein.}}
    \label{fig:func-spectra}
\end{figure}

\paragraph{Related work.}
The study of the Hessian, in recent times, has been re-invigorated by the empirical observations of \citet{sagun2016eigenvalues,sagun2017empirical} who noted a high degree of degeneracy experimentally, and characterized the spectrum as being composed of a bulk around zero and few outlier eigenvalues. Since then, works such as ~\cite{ghorbani2019investigation,papyan2019measurements} have scaled the empirical analysis to bigger networks via efficient spectral density calculations and better explained the observations in \cite{sagun2017empirical}. However, a drawback is that due to their empirical nature, it remains hard to uncover the exact level of degeneracy via these methods~\cite{granziol2020towards}, and neither are the exact factors that affect the rank deficiency discerned. 
Another approach has been to fit models from random matrix theory to match the observed Hessian spectrum in neural networks~\cite{pmlr-v70-pennington17a,granziol2020towards}, however, such methods do not properly capture the sharp peak at zero observed in empirical spectra~\cite{Granziol2020BeyondRM}. Yet others have investigated the properties of the spectrum in the asymptotic regime ~\cite{karakida2019universal,jacot2019asymptotic}, but the bounds, if any, are quite coarse (see Fig.~\ref{fig:relu_reconstruction}). Moreover, a common issue underlying most of these approaches is that they 
are blind to the layerwise compositional structure of neural networks, and are often motivated by reference to a black-box decomposition of the Hessian. 

In regards to indicating the parameter redundancy in neural networks, we are far from being the first work. Prior empirical studies have long reported similar observations, like in the form of, post-training pruning~\cite{lecun1990optimal,hassibiStork,han2016deep,dong2017learning,singh2020woodfisher} or inherently contained sub-networks (Lottery Ticket Hypothesis~\cite{Frankle2019TheLT}). Recently,~\cite{parametercounting} have experimentally argued for the effective dimensionality from~\cite{Gull1989,NIPS1991_c3c59e5f_mackay} to be a good predictor of double descent~\cite{Belkin2019ReconcilingMM}. Nevertheless, determining the precise extent of redundancy and the structural reasons behind it, have remained illusory in such approaches.

\section{Setup and Formalism}

\paragraph{General notation.} We use the shorthand, $\wM{k:l}$, to refer to the matrix product chain $\wM k \cdots \wM l$, when $k>l$. When $k < l$, $\wM{k:l}$ will stand for the transposed product chain $\wMt{k} \cdots \wMt{l}$. 
Besides, $\kro$ denotes the Kronecker product of two matrices, $\vect_r$ indicates the row-wise vectorization of matrices, $\rank$ refers to the rank of a matrix, and $\Im_{k}$ is the identity matrix of size $k\times k$.

\paragraph{Deep Neural Networks (DNNs).} A feedforward DNN is a composition of maps, i.e., $F = \Fmap L \circ \dots \circ \Fmap 1$, where the $l$-th layer map  $\Fmap l: \Re^{M_{l-1}} \to \Re^{M_l}$, with  input dimension $d:=M_0$, output dimension $K:=M_L$, total number of hidden neurons $M:=\sum_{l=1}^{L-1} M_l$. Each layer map is parameterized 
by a weight matrix $\wM{l}$ and applies an elementwise activation function $\sigma^{l}$. So we have, \[\,\Fmap l = \sigma^{l} \circ \wM{l}\,\,\text{with}\,\, \wM l \in \Re^{M_{l} \times M_{l-1}}\,\,\text{and}\,\,\sigma^{l}: \Re \to \Re\,. \]
For the sake of tractability, we will often investigate linear DNNs where $\sigma^l = \text{id}$, and so $F(\x) = \wM{L:1}\x$. 
For compactness, we also represent the entire set of parameters by $\btheta:=\{\wM{1}, \cdots, \wM{L}\}$, and where emphasis requires, we will subscript the DNN map $F$ with it and write $\Fn$.

Next, assume that we are given a dataset $S=\{(\x_i,\y_i)\}_{i=1}^{N}$ of $N$ input-output pairs, drawn i.i.d from an underlying distribution $p_{\x, \y}$. 
Our focus will be on the squared loss (MSE),  $\ell_{\x,\,\y}(\btheta) = \frac 12 \|\y - \hat\y\|^2$ and its residual $ \pmb \delta_{\x,\,\y} :=\hat\y-\y = \jacobi{\ell_{\x,\,\y}}{\hat\y}$,
where $\hat\y=F_{\btheta}(\x)$ is the DNN prediction.
The population loss, $\mathcal{L}$ is:  ${\mathcal{L}}(\btheta) = \E_{p_{\x, \y}}\left[\ell_{\x,\,\y}(\btheta)\right]\,$.
Finally, we will analyze the Hessian matrix $\HL
= \dfrac{\partial^2 \mathcal{L}}{\partial  \btheta \, \partial  \btheta}$.
\paragraph{Backpropagation in matrix derivatives.} As all parameters are collected into matrices, we often work with matrix-matrix derivatives by vectorizing \textit{row-wise} in the numerator (Jacobian) layout, i.e., $\dfrac{\partial \mb Y}{\partial \mb X} := \dfrac{\partial \vect_r(\mb Y)}{\partial \vect_r(\mb X)^\top}\,$, see~\citep{magnus2019matrix} and  Appendix~\ref{supp:matrix-derivative}. Alongside this, we use the following  rule:
\begin{align}\label{eq:matrix-derivative}
 \frac{\partial \mb A \mb W \mb B}{\partial \mb W} = \mb A \otimes \; \mb B^\top, \quad \text{e.g.} \quad
\jacobi{F}{\wM \idx} &= 
\wM{L:\idx+1} \otimes \x^\top \wM{1:\idx-1} \,\, \in \Re^{K \times M_{\idx} M_{\idx-1}} \,.
\end{align}

 By the usual chain rule (backpropagation) one has for a linear DNN, at a sample $(\x, \y)$:
\begin{align}
\jacobi{\ell_{\x, \y}}{\wM \idx} 
& = \underbrace{\Big[ \wM{\idx+1:L}\pmb \delta_{\x, \y} \Big]}_{\text{backward } \in \Re^{M_{\idx}}} \,\cdot \, \underbrace{\Big[  \wM{{\idx}-1:1} \x\Big]^\top}_{\text{forward } \in \Re^{M_{{\idx}-1}}}
= \wM{\idx+1:L} \Big[  \wM{L:1}\,\x\,\x^\top - \y\,\x^\top \Big] \wM{1:\idx-1}\,.
\label{eq:dnn-grad-linear}
\end{align}
The above gradient with respect to $\wM \idx$ is of first order in $\wM \idx$ itself and second order in the other weight matrices. Lastly, let us setup the following shorthand, $\Om:=\E[\pmb \delta_{\x, \y} \, \x^\top]=\E[\wM{L:1} \x \x^\top - \y \x^\top]\,$, $\covx:=\E[\x\x^\top]\,$, and $\covyx:=\E[\y\x^\top]$, which we will use throughout the paper.

\section{Hessian Maps of Linear DNNs}\label{sec:hessianstruct}

\subsection{Hessian structure}
 The Hessian map has a natural block structure defined by the layers and their dimensionality. In order to leverage this structure, we directly take the derivative of the loss gradient in a matrix-by-matrix fashion. 
First, consider the $\idx$-th diagonal block of the Hessian, which is independent of $\y$ and is given by,  
\[
\HLsup{\idx\idx} := \frac{\partial^2 \mathcal{L}}{\partial \wM \idx \partial \wM \idx} \, \,
= \,\,\, \wM{\idx+1:L} \wM{L:\idx+1} \; \otimes \;
\wM{\idx-1:1} \,\covx\, \wM{1:\idx-1} \,.
\]

This follows from the matrix-derivative rule in Eq.~\eqref{eq:matrix-derivative} along with the Eq.~\eqref{eq:dnn-grad-linear} and taking expectation.
The calculation of the off-diagonal Hessian blocks ($kl$-th block of size $M_k \,M_{k-1}\times M_l \,M_{l-1}\,$) involves the product rule. Note that the two occurrences of a weight matrix $\wM{l}$, in Eq.~\eqref{eq:dnn-grad-linear}, are once non-transposed (in $\wM{L:1}\x \x^\top$) and once transposed (in $\wM{k+1:L}$ or $\wM{1:k-1}$ respectively for $k<l$ or $k>l$).  For simplicity, let us express these parts without adding them and directly write the other Hessian contribution with respect to the \textit{transposed matrix}, giving:

\begin{align}
\forall \,k \neq l, \quad \HLtildeSup{kl} := \frac{\partial^2 \Loss}{\partial \wM l \partial \wM k}  &= \wM{k+1:L}\wM{L:l+1} \; \otimes \; \wM{k-1:1} \,\covx\, \wM{1:l-1}\,.\label{eq:outer-kl}\\
\forall \,k<l \quad \widehat{\mb H}^{kl}_{\mathcal{L}} := \frac{\partial^2 \Loss}{\partial \wMt l {\partial \wM k}} & = \wM{k+1:l-1} \;\otimes\; \wM{k-1:1} \,{\Om}^\top \,\wM{L:l+1} \,.\label{eq:func-hess-eqn1}\\
\forall \,k>l \quad \widehat{\mb H}^{kl}_{\mathcal{L}} := \frac{\partial^2 \Loss}{\partial \wMt l {\partial \wM k}} & = \wM{k+1:L} \,\Om\, \wM{1:l-1} \;\otimes\;  \wM{k-1:l+1} \,.\label{eq:func-hess-eqn2}
\end{align}

\textbf{Equivalence to Gauss-Newton Decomposition.$\quad$} A common approach is to look at the Hessian map from the perspective of the Hessian chain rule, where we have that, $\HL=\HO + \HF$, with
\[
\HO = \E_{p_{\x, \y}}\left[\nabla_{\btheta} {F(\x)}^\top \; [\partial^2 \ell_{\x,\y} ]\; \nabla_{\btheta} F(\x)\right]\,, \text{\,and} \quad \HF=\E_{p_{\x, \y}}\left[\sum_{c=1}^{K} \,[\partial \ell_{\x, \y}]_{c}\; \nabla^2_{\btheta}\, F_c(\x)\right]\,.\]

For the MSE loss, the Gauss-Newton decomposition is in fact equivalent to what we discussed before (see details in Appendix~\ref{supp:backprop}), where $\HO$ contains the blocks $\HLsup{kk}$ and $\HLtildeSup{kl}$, while $\HF$ consists of $\HLhatSup{kl}$ (although with the non-transposed matrix). Henceforth, we will refer to the first term $\HO$ as the \textit{outer-product Hessian}, while we coin the second-term $\HF$ as the \textit{functional Hessian}.

\subsection{Range of the Hessian map}
Assuming a local Taylor-series approximation of the loss' gradient, we have that for $\|\Delta \btheta\|<\epsilon\,$, $\nabla_{\btheta + \Delta \btheta} \,\mathcal{L} \approx \nabla_{\btheta} \mathcal{L} + \HL \, \Delta \btheta$. This indicates how the gradients will change over any local perturbation $\Delta \btheta$. As a result, this also holds over successive iterations of an optimization algorithm such as gradient descent, and serves to show the significance of the Hessian range. Let us multiply the two parts of the Hessian with a vector $\Delta \btheta$, which we decompose as $\Delta \btheta = \left[\vect_r(\Delta \wM 1)^\top \,\, \dots \,\, \vect_r(\Delta \wM L)^\top\right]^\top \,$ to best reflect the layerwise structure.

\paragraph{Range of the outer-product $\HO$} The product of the $k$-th row-block $\HOsup{k\bullet}$, corresponding to the $k^{\text{th}}$ layer, with $\Delta \btheta$ can be written succinctly as, 
\[
\HOsup{k\bullet} \,\cdot\, \Delta \btheta
 = \vect_r\left(\wM{k+1:L} \cdot \bar \Delta \cdot \covx \wM{1:k-1}\right)\,, \,\,\text{where}\,\,
\bar\Delta  := \sum_{l=1}^L \wM{L:l+1} \,\,\Delta \wM  l\,\,
\wM{l-1:1} \,\in \Re^{K \times d}.\]

This essentially follows from the identity, $\vect_r \Am \Xm \Bm = (\Am \kro \Bm^\top) \vect_r\Xm$ (see proof in Appendix~\ref{supp:matrix-derivative}). 
Note that $\bar \Delta$ represents the net change on the prediction map induced by changes to the weight matrices in the forward pass. We see that: (1) $\Delta \btheta$ is linearly compressed into a highly interpretable matrix $\bar\Delta$ with $Kd$ entries. (2) The same compressed $\bar\Delta$ is shared across all result blocks as it is independent of the row block index $k$. Overall, this already hints that there is a  significant intrinsic structure in the Hessian that constrains its range.

\paragraph{Range of the functional Hessian $\HF$} 
Here we multiply with $\widehat{\Delta \btheta}$, which is similar to $\Delta \btheta$ except that we consider ${\Delta \W^l}^\top$ instead of $\Delta \W^l\,$.
Then the product corresponding to $k^{\text{th}}$ layer is,
\[
\HFsup{k\bullet} \cdot \widehat{\Delta \btheta}
= \vect_r \left(\wM{k+1:L} \,\Om\, [\Delta^{<k}]^\top
\,+\, [\Delta^{>k}]^\top\,\Om\,\wM{1:k-1}\right)\,,\]

where $\Delta^{<k}:=\sum_{l=1}^{k-1}
\wM{k-1:l+1}\,\,\Delta \wM{l}\,\, \wM{l-1:1} $ and $\Delta^{>k}:=\sum_{l=k+1}^L \wM{L:l+1} \,\,\Delta \wM{l}\,\, \wM{l-1:k+1} $.

Notice that the range of $\HF$ is also inherently constrained like in the case of $\HO$, but there are two important differences. First, the data dependent part is now the covariance $\Om=\E[\pmb \delta_{\x,\y}\, \x^\top]$. Clearly, if this matrix is low rank (say $s$), this will directly impact the rank of $\HF$. The other significant difference is that the weight matrix product is split at layer $k$. This reflects the fact that upstream and downstream layers have a different effect. We will see ahead that these factors will neatly gives rise to a dependence on the sum of hidden-layer widths.

\section{Main result: Analysis of the Hessian rank}\label{sec:rank}

\paragraph{Preliminaries.}
Let us denote the rank of the uncentered covariance $\covx = \E[\x \x^\top]$ by $r$. If $r<d$, then without loss of generality, consider $\covx := (\covx)_{\,r\times r}$,
which is always possible by pre-processing the input. Thus, $\covx \succ 0$, always. Also, in such a case, we 
take $\wM{1}\in\mathbb{R}^{M_1 \times r}$. Further, the only assumption we make in our analysis is \ref{assump:2}, which is in fact guaranteed at typical initialization with high probability (c.f. Appendix~\ref{supp:full_rank_weight}). In Section~\ref{sec:rank-evolution}, we see what happens while training. \looseness=-1

\vspace{1mm}
\begin{assump}{Maximal Rank:}\label{assump:2} $\,\forall \, l \in [L]$, $\wM{l} \in \mathbb{R}^{M_l \times M_{l-1}}$ has rank equal to $\min(M_l, M_{l-1})$. 
\end{assump}

Lastly, the omitted proofs in the coming subsections are located in Appendices~\ref{supp:tools},~\ref{supp:outer-product},~\ref{supp:functional} respectively.

\subsection{Analytical tool} The key idea of our analysis technique is to reduce the rank of involved matrices to the rank of a certain special kind of matrix (or its variant), $\Zm$, as shown below: 
\begin{align}
\Zm = \begin{pmatrix}\Im_q \otimes \Cm \\ \Dm \otimes \Im_n\end{pmatrix}\,,\quad \text{with} \,\, \Cm\in\mathbb{R}^{m\times n}\,,\,\,\Dm\in\mathbb{R}^{p\times q}\,. \label{eq:z-structure}
\end{align}
This row-partitioned matrix has a characteristic structure, where an identity matrix alternates between the two sides of the Kronecker product. Such matrices are in fact omnipresent in the Hessian structure, and importantly for our purpose, they possess additional properties on their rank. Inherent to these properties and our analysis, is the use of generalized inverse~\cite{rao1972generalized} and oblique (non-orthogonal) projector matrices. The following  Lemma~\ref{lemma:kronecker-block}  from~\cite{CHUAI2004129} details such a result:

\begin{lemma}\label{lemma:kronecker-block}
Let $\Zm$ be a matrix as in Eq.~\eqref{eq:z-structure}. Then,
$
\rank(\Zm)=q\, \rank(\Cm)+n\, \rank(\Dm)-\rank(\Cm)\, \rank(\Dm)\,.$
\end{lemma}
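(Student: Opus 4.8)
The plan is to strip away the Kronecker structure by passing to row spaces and a rank--nullity count, and then to compute the one nontrivial ingredient --- an intersection of two tensor-product subspaces. Write $\Am := \Im_q \otimes \Cm$ and $\Bm := \Dm \otimes \Im_n$, so that $\Zm$ is the vertical stack of $\Am$ over $\Bm$. From the standard identity $\rank(\Am \otimes \Bm) = \rank(\Am)\rank(\Bm)$ I immediately obtain $\rank(\Am) = q\,\rank(\Cm)$ and $\rank(\Bm) = n\,\rank(\Dm)$, which already supply the first two terms of the claimed formula.

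Next I would reduce the rank of the stack to the dimension of a sum of row spaces. A column vector lies in $\ker \Zm$ exactly when it lies in $\ker \Am \cap \ker \Bm = \big(\range(\Am^\top) + \range(\Bm^\top)\big)^\perp$; hence by rank--nullity $\rank(\Zm) = \dim\big(\range(\Am^\top) + \range(\Bm^\top)\big) = \rank(\Am) + \rank(\Bm) - \dim\big(\range(\Am^\top) \cap \range(\Bm^\top)\big)$. So the entire content of the lemma is the claim that this last intersection has dimension exactly $\rank(\Cm)\rank(\Dm)$, matching the term to be subtracted.

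To evaluate the intersection I would identify both row spaces as tensor-product subspaces of $\Re^{qn} \cong \Re^{q} \otimes \Re^{n}$. The columns of $\Am^\top = \Im_q \otimes \Cm^\top$ are the vectors $\e_i \otimes \mathbf{c}$ with $i \in [q]$ and $\mathbf{c}$ ranging over the columns of $\Cm^\top$, so $\range(\Am^\top) = \Re^{q} \otimes \range(\Cm^\top)$; symmetrically $\range(\Bm^\top) = \range(\Dm^\top) \otimes \Re^{n}$. The remaining step is the tensor intersection identity $\big(\Re^{q} \otimes \range(\Cm^\top)\big) \cap \big(\range(\Dm^\top) \otimes \Re^{n}\big) = \range(\Dm^\top) \otimes \range(\Cm^\top)$, whose dimension is $\rank(\Dm)\rank(\Cm)$ and closes the argument.

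The main obstacle is establishing this tensor intersection identity rigorously, since only the inclusion $\supseteq$ is obvious. For $\subseteq$ I would extend a basis of $\range(\Dm^\top)$ to a basis $\{\mathbf{u}_a\}$ of $\Re^{q}$ and a basis of $\range(\Cm^\top)$ to a basis $\{\mathbf{v}_b\}$ of $\Re^{n}$; then $\{\mathbf{u}_a \otimes \mathbf{v}_b\}$ is a basis of $\Re^{q} \otimes \Re^{n}$, and each of the two subspaces is the span of a coordinate-aligned sub-block of these basis vectors (all $b$ with $\mathbf{v}_b \in \range(\Cm^\top)$ for the first; all $a$ with $\mathbf{u}_a \in \range(\Dm^\top)$ for the second). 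Expanding a common element in this basis and matching supports forces both index restrictions simultaneously, yielding exactly $\range(\Dm^\top)\otimes\range(\Cm^\top)$. I note that this is also where an equivalent, coordinate-free route via generalized inverses and the associated oblique projectors (as flagged before the statement) would carry out the same bookkeeping; either way, the formula follows by substituting the three dimensions into the rank--nullity expression above.
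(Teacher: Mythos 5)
Your proof is correct, and it takes a genuinely different route from the one in the paper. The paper's proof (Lemma~\ref{lemma:supp-kronecker-block} in Appendix~\ref{supp:helper-lemmas}, following \citet{CHUAI2004129}) is a generalized-inverse computation: it starts from the Marsaglia--Styan rank formula $\rank\left[\begin{smallmatrix}\Xm\\ \Ym\end{smallmatrix}\right]=\rank(\Xm)+\rank\left(\Ym-\Ym\Xm^{-}\Xm\right)$ of \citet{doi:10.1080/03081087408817070}, uses ${(\Im_q\kro\Cm)}^{-}=\Im_q\kro\Cm^{-}$ to collapse the correction term to $\Dm\kro\left(\Im_n-\Cm^{-}\Cm\right)$, and finishes by showing $\rank\left(\Im_n-\Cm^{-}\Cm\right)=n-\rank(\Cm)$, i.e.\ by computing the rank of the oblique projector onto a complement of the row space of $\Cm$. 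You instead observe that $\rank(\Zm)$ equals $\dim\bigl(\range(\Am^\top)+\range(\Bm^\top)\bigr)$ for the two blocks $\Am=\Im_q\kro\Cm$ and $\Bm=\Dm\kro\Im_n$, so that the whole lemma reduces to showing $\dim\bigl(\range(\Am^\top)\cap\range(\Bm^\top)\bigr)=\rank(\Cm)\rank(\Dm)$; you then identify the two row spaces as the tensor-product subspaces $\Re^q\kro\range(\Cm^\top)$ and $\range(\Dm^\top)\kro\Re^n$ and prove the intersection identity $\bigl(\Re^q\kro\range(\Cm^\top)\bigr)\cap\bigl(\range(\Dm^\top)\kro\Re^n\bigr)=\range(\Dm^\top)\kro\range(\Cm^\top)$ by extending bases and matching supports --- all of which is sound, including the only delicate inclusion ($\subseteq$), since uniqueness of the expansion in the product basis forces both index restrictions simultaneously. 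What your route buys is self-containedness and transparency: no external rank formula and no generalized-inverse calculus are needed, and the subtracted term $\rank(\Cm)\rank(\Dm)$ acquires a concrete geometric meaning as the dimension of the overlap of the two row spaces, which is exactly the content hidden inside the paper's projector computation. What the paper's route buys is brevity and consistency with its broader toolkit: the generalized-inverse and projector machinery used there is the same machinery reused in its other arguments (e.g.\ Lemma~\ref{lemma:supp-kronecker-block-general}, Lemma~\ref{lemma:block-shared}, and the block row/column operations of Appendix~\ref{supp:row-col-ops}), so the lemma follows in a few lines from tools the paper must develop anyway.
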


\subsection{Rank of the outer-product Hessian}
Consider the following decomposition of $\HO$, which reveals its `outer-product' nature:

\begin{proposition}\label{eq:outer-decomp}
For a deep linear network, $\,\HO = \Am_o  \Bm_o {\Am_o}^\top\,$, where $\,\Bm_o = \Im_K \kro \covx\, \in \mathbb{R}^{Kd\times Kd}$,
\[\text{and}\,\,\,\Am_o^\top=\begin{pmatrix}
\wM{L:2} \kro \Im_d \quad \cdots \quad 
\wM{L:l+1} \kro \wM{1:l-1}
\quad \cdots \quad 
\Im_K \kro \wM{1:L-1}
\end{pmatrix}\,\, \in \mathbb{R}^{Kd\times p}\,,
\]
\end{proposition}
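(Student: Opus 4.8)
The plan is to avoid differentiating the gradient blocks a second time and instead read the result directly off the Gauss--Newton form of $\HO$ stated above. For the MSE loss the residual Hessian $\partial^2 \ell_{\x,\y}$ taken with respect to $\hat\y$ is exactly $\Im_K$, so that $\HO = \E_{p_{\x,\y}}\big[\nabla_{\btheta} F(\x)^\top\,\nabla_{\btheta} F(\x)\big]$, and the \emph{entire} randomness has been funnelled through the single vector $\x$. The one observation I would establish first is that the full Jacobian factors cleanly as $\nabla_{\btheta} F(\x) = (\Im_K \kro \x^\top)\,\Am_o^\top$, where $\Am_o^\top$ is precisely the deterministic block-row matrix appearing in the statement. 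Once this is in place, the proposition is immediate because $\Am_o$ carries no $\x$-dependence.

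To verify the factorization I would proceed block-column by block-column. By Eq.~\eqref{eq:matrix-derivative}, the $l$-th block column of $\nabla_{\btheta} F(\x)$ is $\jacobi{F}{\wM l} = \wM{L:l+1} \kro \x^\top\wM{1:l-1}$. On the other hand, the $l$-th block column of $(\Im_K \kro \x^\top)\Am_o^\top$ is $(\Im_K \kro \x^\top)(\wM{L:l+1}\kro\wM{1:l-1})$, which by the mixed-product property $(\Am\kro\Bm)(\Cm\kro\Dm) = \Am\Cm \kro \Bm\Dm$ collapses to $\wM{L:l+1}\kro\x^\top\wM{1:l-1}$. These agree for every $l$, proving the claim. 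The only subtlety here is dimensional bookkeeping under the transposed-chain convention, i.e.\ keeping $\wM{1:l-1} = \wMt{1}\cdots\wMt{l-1} \in \Re^{d\times M_{l-1}}$ so that each Kronecker factor lines up.

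With the factorization in hand, I substitute and pull the deterministic $\Am_o$ outside the expectation:
\[
\HO = \Am_o\,\E_{p_{\x,\y}}\!\big[(\Im_K\kro\x)(\Im_K\kro\x^\top)\big]\,\Am_o^\top = \Am_o\,\big(\Im_K \kro \E[\x\x^\top]\big)\,\Am_o^\top = \Am_o\,(\Im_K\kro\covx)\,\Am_o^\top\,,
\]
where the middle step again uses the mixed-product property $(\Im_K\kro\x)(\Im_K\kro\x^\top)=\Im_K\kro\x\x^\top$ together with linearity of expectation. Setting $\Bm_o=\Im_K\kro\covx$ concludes the argument.

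Since the computation is essentially algebraic, the main obstacle is notational rather than conceptual: keeping the transposed-product chains, the order of the Kronecker factors, and the block-row/block-column layout mutually consistent, and confirming that the blocks generated this way are the outer-product blocks and not the functional ones. As a sanity check I would expand $\Am_o\Bm_o\Am_o^\top$ blockwise: its $(k,l)$ block is $(\wM{k+1:L}\kro\wM{k-1:1})(\Im_K\kro\covx)(\wM{L:l+1}\kro\wM{1:l-1}) = \wM{k+1:L}\wM{L:l+1}\kro\wM{k-1:1}\covx\wM{1:l-1}$, which matches $\HLsup{kk}$ on the diagonal and $\HLtildeSup{kl}$ off it, exactly the blocks attributed to $\HO$.
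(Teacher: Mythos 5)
Your proof is correct, and it reaches the paper's decomposition by a slightly different route. The paper starts from the already-computed expected blocks $\HOsup{kl}$ (Eq.~\eqref{eq:outer-kl}) and factors each one via the mixed-product property as $\left(\wM{k+1:L}\kro\wM{k-1:1}\right)\left(\Im_K\kro\covx\right)\left(\wM{L:l+1}\kro\wM{1:l-1}\right)$, then reads off $\Am_o\Bm_o\Am_o^\top$ from the left/right factors --- which is exactly your closing ``sanity check,'' so in effect you reproduce the paper's proof as a consistency verification. Your main argument instead works one level earlier: you invoke the Gauss--Newton identity $\HO=\E\!\left[\nabla_\btheta F(\x)^\top\nabla_\btheta F(\x)\right]$ (valid since $\partial^2\ell=\Im_K$ for MSE, as the paper notes in Appendix~\ref{supp:backprop}), establish the per-sample Jacobian factorization $\nabla_{\btheta}F(\x)=(\Im_K\kro\x^\top)\,\Am_o^\top$ blockwise with the same mixed-product property, and then push the expectation through the data-independent $\Am_o$. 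Both proofs hinge on the identical algebraic step; what your organization buys is that it makes structurally transparent \emph{why} a single deterministic $\Am_o$ appears --- all data dependence of the Jacobian is funnelled through $\Im_K\kro\x^\top$ --- which is precisely the interpretation the paper develops separately when discussing the range of $\HO$. What the paper's version buys is brevity: given that the block formulas are already derived, the factorization is a one-line observation, with no need to re-derive the Jacobian or appeal to the Gauss--Newton equivalence. Your dimensional bookkeeping under the transposed-chain convention ($\wM{1:l-1}=\wMt{1}\cdots\wMt{l-1}\in\Re^{d\times M_{l-1}}$, with the edge cases $\wM{1:0}=\Im_d$ and $\wM{L:L+1}=\Im_K$) is handled consistently, so there is no gap.
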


where $p$ is the number of parameters. A straightforward consequence is that if there is no bottleneck in between, i.e., no hidden-layer with width $M_i<\min(K, d)$, then the matrix $\Bm_o$ will control the rank of $\HO$. 
Hence, as a first upper bound we get,
$\,\rank(\HO)\leq \min\left(\rank({\Am_o}), \rank(\Bm_o)\right) = \rank(\Bm_o) = \rank(\Im_K) \, \rank(\covx) = K r\,$. \looseness=-1

Such a decomposition is however not new (see~\cite{NEURIPS2018_7f018eb7}), but this only forms an initial step of our analysis and the current bound can be loose in the bottleneck case (e.g., an auto-encoder). Let us define the minimum dimension to be $q:=\min(r, M_1, \cdots, M_{L-1}, K)$. Our main theorem can then be stated as: \looseness=-1
\vspace{1mm}

\begin{mdframed}[leftmargin=1mm,
    skipabove=1mm, 
    skipbelow=-1mm, 
    backgroundcolor=gray!10,
    linewidth=0pt,
    leftmargin=-1mm,
    rightmargin=-1mm,
    innerleftmargin=2mm,
    innerrightmargin=2mm,
    innertopmargin=1mm,
innerbottommargin=1mm]
\begin{theorem}\label{theorem:ub-outer}
Consider the matrix $\Am_o$ mentioned in Proposition~\ref{eq:outer-decomp}. 
Under the assumption \ref{assump:2},
$
\rank(\Am_o) = 
r \rank(\wM{2:L}) + K \rank(\wM{L-1:1}) - \rank(\wM{2:L})\rank(\wM{L-1:1}) = q \,( r + K  - q)\,.
$
\end{theorem}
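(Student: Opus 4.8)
The plan is to compute $\rank(\Am_o) = \rank(\Am_o^\top)$ by describing the column space of $\Am_o^\top$ and then collapsing its $L$-block structure down to the two extreme blocks, at which point Lemma~\ref{lemma:kronecker-block} applies almost verbatim. Since the $l$-th block of $\Am_o^\top$ is $\wM{L:l+1}\otimes \wM{1:l-1}$ and the column space of a Kronecker product factorizes, I would first write
\[
\range(\Am_o^\top) = \sum_{l=1}^{L} \big(U_l \otimes V_l\big), \qquad U_l := \range(\wM{L:l+1}) \subseteq \Re^K,\qquad V_l := \range(\wM{1:l-1}) \subseteq \Re^r,
\]
so that $\rank(\Am_o)$ equals the dimension of this sum of tensor products, with $U_1 = \range(\wM{L:2})$ and $V_L = \range(\wM{1:L-1})$ the two ``extreme'' summands and $U_L = \Re^K$, $V_1 = \Re^r$.

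Next I would record the monotonicity of these two families. Because $\wM{L:l+1} = \wM{L:l+2}\,\wM{l+1}$, the forward ranges are nested increasingly, $U_1 \subseteq U_2 \subseteq \cdots \subseteq U_L = \Re^K$, and symmetrically the backward ranges are nested decreasingly, $\Re^r = V_1 \supseteq \cdots \supseteq V_L$. Choosing orthogonal complements $\Re^K = U_1 \oplus U_1^{\perp}$ and $\Re^r = V_L \oplus V_L^{\perp}$ and expanding the tensor product shows that $U_1\otimes \Re^r + \Re^K \otimes V_L$ is exactly the span of all four tensor blocks \emph{except} $U_1^{\perp}\otimes V_L^{\perp}$; consequently an intermediate summand $U_l \otimes V_l$ lies in this space if and only if its projection onto $U_1^{\perp}\otimes V_L^{\perp}$ vanishes, i.e. if and only if $U_l = U_1$ or $V_l = V_L$.

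The crux is therefore the saturation claim: under Assumption~\ref{assump:2} (which, as noted, holds at a generic maximal-rank configuration such as initialization), for every $l$ one has $U_l = U_1$ or $V_l = V_L$. Here I would use that maximal-rank factors generically make each chain attain the smallest dimension it traverses, so $\dim U_l = \min(M_l,\dots,M_{L-1},K)$ and $\dim V_l = \min(r,M_1,\dots,M_{l-1})$. If $\dim U_l > \dim U_1$, then the global minimum defining $\dim U_1$ must be attained strictly among $M_1,\dots,M_{l-1}$, whence $\dim U_1 = \min(M_1,\dots,M_{l-1})\ge \dim V_l$ while every $M_j$ with $j\ge l$ exceeds $\dim U_l$; adjoining these $M_j$ to the minimum for $V_L$ cannot decrease it, forcing $\dim V_L = \dim V_l$ and, by nesting, $V_l = V_L$. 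Otherwise $\dim U_l = \dim U_1$ gives $U_l = U_1$. This collapses the sum to $U_1\otimes \Re^r + \Re^K \otimes V_L$, so that $\rank(\Am_o)$ equals the rank of the vertically stacked matrix with blocks $\Im_K \otimes \wM{L-1:1}$ and $\wM{2:L}\otimes \Im_r$.

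Finally I would match this two-block matrix to the template $\Zm$ of Eq.~\eqref{eq:z-structure} with $\Cm = \wM{L-1:1}$, $q = K$, $\Dm = \wM{2:L}$, $n = r$, and invoke Lemma~\ref{lemma:kronecker-block} to obtain $\rank(\Am_o) = K\,\rank(\wM{L-1:1}) + r\,\rank(\wM{2:L}) - \rank(\wM{L-1:1})\,\rank(\wM{2:L})$, which is the first claimed identity. The second identity $q(r+K-q)$ then follows by a short case analysis on the position of the minimal hidden width $m := \min_{1\le l\le L-1} M_l$, substituting $\rank(\wM{2:L}) = \min(m,K)$ and $\rank(\wM{L-1:1}) = \min(m,r)$: in each of the four sign patterns of $m$ relative to $r$ and $K$, the bilinear expression reduces to $q(r+K-q)$ with $q=\min(r,m,K)$. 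The main obstacle is precisely the saturation claim, since it is the one place where the maximal-rank hypothesis (and the generic full-rank behavior of products) is indispensable---without it an intermediate block could genuinely enlarge the range and break the reduction to two blocks.
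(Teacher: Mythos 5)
Your proposal is correct, and it ends at the same place as the paper---a two-block matrix of the $\Zm$ form handled by Lemma~\ref{lemma:kronecker-block}---but the reduction to that matrix is genuinely different. The paper splits into cases up front (hidden-layer bottleneck at some $M_{\ell-1}$ versus $K$ or $r$ minimal) and performs explicit block row operations on $\Am_o$: factoring out left-invertible Kronecker factors via generalized inverses, then deleting redundant row blocks with invertible elementary block matrices, ending with the two blocks cut at the bottleneck, $\wM{\ell:L}\kro\Im_r$ and $\Im_K\kro\wM{\ell-1:1}$ (or the single block $\Im_K\kro\Im_r$ in the non-bottleneck case). You instead describe $\range(\Am_o^\top)=\sum_{l} U_l\otimes V_l$, exploit the nesting $U_1\subseteq\cdots\subseteq U_L=\Re^K$ and $\Re^r=V_1\supseteq\cdots\supseteq V_L$, and prove a saturation claim ($U_l=U_1$ or $V_l=V_L$ for every $l$) showing each intermediate summand is absorbed into $U_1\otimes\Re^r+\Re^K\otimes V_L$; this collapses the sum to the first and last blocks uniformly, yields the theorem's first identity in terms of $\rank(\wM{2:L})$ and $\rank(\wM{L-1:1})$ directly, and defers the case analysis to the closing arithmetic. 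Both arguments lean on the same strengthening of Assumption~\ref{assump:2}: product chains of weight matrices must attain the generic rank, i.e.\ the minimum of the traversed dimensions (your dimension counts $\dim U_l$, $\dim V_l$; the paper's left-invertibility of chains like $\wM{k:\ell}$), which the paper justifies by random initialization (Appendix~\ref{supp:full_rank_weight}); so your appeal to genericity is not an extra gap but exactly mirrors the paper. What your route buys is a coordinate-free, case-free characterization of the range itself, in the spirit of Section~\ref{sec:hessianstruct}; what the paper's route buys is a constructive matrix reduction whose block-operation machinery is reused nearly verbatim in the functional-Hessian proof (Theorem~\ref{theorem:func-hess-cols}).
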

\end{mdframed}

Now, from Theorem~\ref{theorem:ub-outer}, it is evident that we can simply upper bound the rank of $\HO$, by the rank of $\Am_o$. But actually, it is possible to show an equality (using Lemma~\ref{lemma:outer-equality}), as described below:

\begin{corollary} \label{corollary:outer-rank}
Under the setup of Theorem~\ref{theorem:ub-outer}, the rank of $\HO$ is given by $\rank(\HO) = q \,( r + K  - q)\,.$
\end{corollary}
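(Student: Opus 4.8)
The corollary states that for a deep linear network, $\rank(\HO) = q(r+K-q)$ where $q = \min(r, M_1, \ldots, M_{L-1}, K)$. We already have from Proposition~\ref{eq:outer-decomp} the decomposition $\HO = \Am_o \Bm_o \Am_o^\top$ with $\Bm_o = \Im_K \otimes \covx$, and from Theorem~\ref{theorem:ub-outer} that $\rank(\Am_o) = q(r+K-q)$. So the corollary is essentially saying that the "sandwich" with $\Bm_o$ in the middle does not drop the rank below $\rank(\Am_o)$.

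**The plan.** Since $\HO = \Am_o \Bm_o \Am_o^\top$, the trivial bound gives $\rank(\HO) \le \min(\rank(\Am_o), \rank(\Bm_o))$, and combined with Theorem~\ref{theorem:ub-outer} this yields $\rank(\HO) \le q(r+K-q)$. The real content is the matching lower bound. I would aim to show $\rank(\Am_o \Bm_o \Am_o^\top) = \rank(\Am_o)$. The key structural fact is that $\Bm_o = \Im_K \otimes \covx$ is symmetric positive definite (recall from the Preliminaries that we reduced to $\covx \succ 0$ of full rank $r$, so $\Bm_o \succ 0$). For a symmetric positive definite matrix $\Bm_o$, one can write $\Bm_o = \Cm \Cm^\top$ for an invertible $\Cm$ (e.g.\ a Cholesky or symmetric square-root factor), so that $\HO = (\Am_o \Cm)(\Am_o \Cm)^\top$. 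Then $\rank(\HO) = \rank(\Am_o \Cm)$, and since $\Cm$ is invertible, $\rank(\Am_o \Cm) = \rank(\Am_o)$. This is exactly the content I expect Lemma~\ref{lemma:outer-equality} to supply.

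**Sequencing.** First I would invoke positive-definiteness of $\Bm_o$: because $\covx \succ 0$ and the Kronecker product of positive definite matrices is positive definite, $\Bm_o \succ 0$. Second, I would factor $\Bm_o = \Cm \Cm^\top$ with $\Cm$ nonsingular, and rewrite $\HO = (\Am_o\Cm)(\Am_o\Cm)^\top$. Third, I would use the standard fact that for any real matrix $\Mm$, $\rank(\Mm \Mm^\top) = \rank(\Mm)$, applied to $\Mm = \Am_o \Cm$. Fourth, right-multiplication by the invertible $\Cm$ preserves rank, so $\rank(\Am_o \Cm) = \rank(\Am_o)$. Chaining these gives $\rank(\HO) = \rank(\Am_o)$, and then Theorem~\ref{theorem:ub-outer} immediately delivers $\rank(\HO) = q(r+K-q)$, completing the proof.

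**Main obstacle.** The steps above are all standard once positive-definiteness is in hand, so the only subtlety is ensuring $\Bm_o \succ 0$ rather than merely positive semidefinite—this is where the Preliminaries' reduction to a full-rank $\covx$ (restricting to the $r\times r$ block and taking $\wM{1}\in\Re^{M_1\times r}$) is essential. If instead $\covx$ were allowed to be rank-deficient, $\Bm_o$ would only be PSD and the equality could fail; the clean factorization $\Bm_o=\Cm\Cm^\top$ with invertible $\Cm$ would break down. Thus the genuine work, encapsulated in Lemma~\ref{lemma:outer-equality}, is verifying that the positive-definite middle factor cannot annihilate any part of the range of $\Am_o^\top$, which the square-root argument handles cleanly.
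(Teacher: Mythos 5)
Your proposal is correct and follows essentially the same route as the paper: the paper's proof of Corollary~\ref{corollary:outer-rank} likewise applies Lemma~\ref{lemma:outer-equality} to the decomposition $\HO = \Am_o(\Im_K \kro \covx)\Am_o^\top$ with $\Im_K \kro \covx \succ 0$, and that lemma is proved exactly via the square-root factorization and the fact $\rank(\Xm\Xm^\top)=\rank(\Xm)$ that you describe. Your remark about why positive definiteness (rather than mere semidefiniteness) of $\covx$ is essential matches the paper's reduction in the Preliminaries as well.
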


\textbf{Remark.$\,$} It should be emphasized that this analysis not only delivers the rank of the outer-product Hessian but also of Neural Tangent Kernel~\cite{jacot2018neural} (see Appendix~\ref{supp:ntk}), Fisher information matrix, network Jacobian, due to their underlying intimate relationship, unaltered through the lens of rank.

\vspace{-2mm}
\subsection{Rank of the functional Hessian}

For analyzing the rank of the functional Hessian $\HF$, we will continue operating by having one of the derivatives with respect to a transposed weight matrix (here, $\wMt{l}$), since rank does not change with column or row permutations. We denote this modification of the functional Hessian  by $\HFhat$. Then from Eqs. (\ref{eq:func-hess-eqn1},~\ref{eq:func-hess-eqn2}), we can observe that there is a common structure within the blocks contained in the column $l$. Namely, all the weight matrix-chains have either $l-1$ or $l+1$ as their right indices. So our approach will be to bound the rank of the individual column blocks, as formalized below:
\vspace{1mm}

\begin{mdframed}[leftmargin=1mm,
    skipabove=1mm, 
    skipbelow=-1mm, 
    backgroundcolor=gray!10,
    linewidth=0pt,
    leftmargin=-1mm,
    rightmargin=-1mm,
    innerleftmargin=2mm,
    innerrightmargin=2mm,
    innertopmargin=1mm,
innerbottommargin=1mm]
\begin{theorem} \label{theorem:func-hess-cols}
For a deep linear network, the rank of $l$-th column-block, $\HFhatsup{\bullet l}$, of the matrix $\HFhat$, under the assumption \ref{assump:2} is given as
 $\rank(\HFhatsup{\bullet l}) = q\, M_{l -1} + q\,M_{l}  - q^2\,,\,$ for $l \in [2, \cdots , L-1]$.
When $l=1$, we have $
\rank(\HFhatsup{\bullet 1}) =q\, M_{1} + q \, s - q^2\,.$ And, when $l=L$, we have $
\rank(\HFhatsup{\bullet L}) =q\, M_{L-1} + q \, s - q^2\,.$
Here, $q := \min(r, M_1, \cdots, M_{L-1}, K, s)$ and $s:=\rank(\Om)=\rank(\E[\pmb \delta_{\x,\y}\,\x^\top])$.
\end{theorem}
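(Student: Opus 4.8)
The plan is to compute the rank one column-block at a time, reducing each $\HFhatsup{\bullet l}$ to an instance of the canonical matrix $\Zm$ of Eq.~\eqref{eq:z-structure} and then invoking Lemma~\ref{lemma:kronecker-block}. First I would split the column into the two families of blocks, $k<l$ (governed by Eq.~\eqref{eq:func-hess-eqn1}) and $k>l$ (governed by Eq.~\eqref{eq:func-hess-eqn2}); the diagonal block $k=l$ is void for the functional part of a linear network, since the gradient is first order in $\wM{l}$ and differentiating once more with respect to $\wMt{l}$ annihilates it. Using the Kronecker mixed-product property, each $k<l$ block factors as $(\wM{k+1:l-1}\otimes\wM{k-1:1})(\Im_{M_{l-1}}\otimes\Om^\top\wM{L:l+1})$ and each $k>l$ block as $(\wM{k+1:L}\Om\otimes\wM{k-1:l+1})(\wM{1:l-1}\otimes\Im_{M_l})$. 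The two families thus share, respectively, the right factors $\Im_{M_{l-1}}\otimes\Cm$ with $\Cm=\Om^\top\wM{L:l+1}$ and $\Dm\otimes\Im_{M_l}$ with $\Dm=\wM{1:l-1}$, which is exactly the alternating-identity pattern of $\Zm$.

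Stacking the two shared right factors already produces a $\Zm$ matrix, and Lemma~\ref{lemma:kronecker-block} would return $M_{l-1}\rank(\Cm)+M_l\rank(\Dm)-\rank(\Cm)\rank(\Dm)$. The heart of the argument is that the \emph{correct} factors have rank exactly $q$: a naive reading would assign $\Cm$ and $\Dm$ the layer-local ranks $\rank(\Om^\top\wM{L:l+1})$ and $\rank(\wM{1:l-1})$, which only feel the widths on one side of layer $l$, whereas the claimed answer is controlled by the global bottleneck $q=\min(r,M_1,\dots,M_{L-1},K,s)$. The work is therefore to bring the $k$-dependent left factors $\wM{k+1:l-1}\otimes\wM{k-1:1}$ and $\wM{k+1:L}\Om\otimes\wM{k-1:l+1}$ into play across all $k$, so that the effective $\Cm$ and $\Dm$ each collapse to rank $q$. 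Intuitively, every hidden width $M_1,\dots,M_{L-1}$ as well as $r$, $K$, and $s=\rank(\Om)$ surfaces as the binding dimension of some chain somewhere in the family of blocks, so the global minimum $q$ governs and the formula becomes $q\,M_{l-1}+q\,M_l-q^2$. Under Assumption~\ref{assump:2} the chains attain maximal rank and behave generically, which is what licenses replacing each product's rank by the minimum of the participating dimensions; I would make this rigorous by reducing the \emph{entire} column block (not merely the two shared factors) to a $\Zm$ with $\rank(\Cm)=\rank(\Dm)=q$ via rank-preserving left/right multiplications, using the generalized-inverse and oblique-projector calculus that already underlies Lemma~\ref{lemma:kronecker-block}.

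The two boundary columns must be handled separately, because one family disappears. When $l=L$ there is no $k>l$ family, the chain $\wM{L:l+1}$ reduces to $\Im_K$, and the residual covariance $\Om$ itself terminates the surviving backward factor; its rank $s$ then plays the role the missing width would have, giving $q\,M_{L-1}+q\,s-q^2$, and symmetrically $l=1$ yields $q\,M_1+q\,s-q^2$ with $\Om$ terminating the forward factor. The $-q^2$ term persists in both cases as the overlap correction of Lemma~\ref{lemma:kronecker-block}. I expect the main obstacle to be exactly the collapse described above: proving that the global minimum $q$, rather than the one-sided local minima, controls the rank requires carefully showing that the left weight-chain factors neither inflate the rank nor fail to cut it, i.e.\ that combining all $k$-blocks is rank-equivalent to a single canonical $\Zm$ whose factors are pinned at $q$. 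Everything else --- the Kronecker factorizations, the degenerate end-ranges such as $\wM{l:l-1}=\Im_{M_{l-1}}$, and the boundary bookkeeping --- is routine once this reduction is secured.
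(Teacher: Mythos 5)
Your structural setup matches the paper's own strategy --- the vanishing diagonal block, the mixed-product factorization of each block in column $l$ into a $k$-dependent left factor times one of two shared right factors, and the observation that naively stacking the shared factors and applying Lemma~\ref{lemma:kronecker-block} returns the one-sided ranks $\rank(\Om^\top\wM{L:l+1})$ and $\rank(\wM{1:l-1})$ instead of the global $q$. But your proposal stops exactly where the proof has to begin: the sentence ``I would make this rigorous by reducing the entire column block to a $\Zm$ with $\rank(\Cm)=\rank(\Dm)=q$ via rank-preserving left/right multiplications'' \emph{is} the theorem, not an argument for it. The paper's execution of this collapse requires a case distinction that appears nowhere in your plan: (i) when $q=s$ (no hidden-layer bottleneck), one takes a rank factorization $\Om=\Cm\Dm$ with $\Cm\in\Re^{K\times s}$, $\Dm\in\Re^{s\times d}$, factors out the left-invertible matrices $\Im_{M_j}\kro\wM{j-1:1}\Dm^\top$ from the rows above the zero block and $\wM{k+1:L}\Cm\kro\Im_{M_{k-1}}$ from the rows below (the block operations of Section~\ref{supp:row-col-ops}), and then uses block-rows $l-1$ and $l+1$ to delete all remaining rows, arriving at $\left(\begin{smallmatrix}\Im_{M_{l-1}}\kro\,\Cm^\top\wM{L:l+1}\\ \Dm\wM{1:l-1}\kro\,\Im_{M_l}\end{smallmatrix}\right)$, whose factors visibly have rank $s=q$; (ii) when instead $q=M_k$ for a bottleneck layer $k$, the operations are genuinely different --- the weight chains must be split at layer $k$ (e.g.\ $\wM{i+1:L}=\wM{i+1:k}\wM{k+1:L}$ with $\wM{i+1:k}$ left-invertible), and the deletion pattern depends on where $k$ sits relative to $l$ (row $l-1$ eliminates the rows above \emph{and} rows $k+1,\dots,L$; row $l+1$ eliminates rows $l+2,\dots,k$). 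Without exhibiting these or equivalent operations, your claim that the left factors ``neither inflate the rank nor fail to cut it'' is an unproven assertion, and it is precisely the content of the result; Assumption~\ref{assump:2} and genericity do not deliver it in one stroke.

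There is a second, concrete point at which your plan would fail as stated. For the boundary columns in the bottleneck case, the reduction does \emph{not} terminate in the canonical form of Eq.~\eqref{eq:z-structure}: for $l=1$ one ends with $\left(\begin{smallmatrix}\wM{k+1:L}\Om\,\kro\,\Im_{M_1}\\ \Om\,\kro\,\wM{k:2}\end{smallmatrix}\right)$, where no Kronecker factor is a ``naked'' identity in the position Lemma~\ref{lemma:kronecker-block} requires. The paper must invoke the generalized Lemma~\ref{lemma:supp-kronecker-block-general} together with Lemma~\ref{lemma:block-shared} (which collapses $\rank\left(\begin{smallmatrix}\Om\\ \wM{k+1:L}\Om\end{smallmatrix}\right)$ to $s$) to obtain $qM_1+qs-q^2$ here, and symmetrically for $l=L$. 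Your sketch treats the boundary columns as ``routine bookkeeping'' resolved by the same lemma with $s$ substituted for the missing width; with only Lemma~\ref{lemma:kronecker-block} available, that step does not go through.
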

\end{mdframed}

The upper bound on the rank of $\HF$ follows by combining the above result over all the columns:
\begin{corollary} \label{corollary:functional-rank}
Under the setup of Theorem~\ref{theorem:func-hess-cols}, the rank of $\HF$ can be upper bounded as,
$\qquad 
\qquad\qquad\qquad\qquad\rank(\HF) \leq 2\, q\, M  + 2 \,q\, s - L\, q^2\,, \quad \text{where} \quad M=\sum_{i=1}^{L-1} M_i\,.$

\end{corollary}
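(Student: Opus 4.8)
The plan is to exploit the column-block decomposition of $\HFhat$ directly. Since $\HFhat$ is obtained from $\HF$ by transposing certain weight matrices---an operation that merely permutes rows and columns---we have $\rank(\HF)=\rank(\HFhat)$, so it suffices to bound $\rank(\HFhat)$. Writing $\HFhat=[\HFhatsup{\bullet 1}\mid\cdots\mid\HFhatsup{\bullet L}]$ as the horizontal concatenation of its $L$ column-blocks, subadditivity of rank under column concatenation immediately gives
\[
\rank(\HFhat)\;\le\;\sum_{l=1}^{L}\rank(\HFhatsup{\bullet l})\,,
\]
which is precisely where the inequality (rather than an equality) in the statement originates: we make no claim that the column-spaces of distinct blocks intersect only trivially.

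The second step is to substitute the per-block ranks from Theorem~\ref{theorem:func-hess-cols} and collect terms. The interior blocks ($2\le l\le L-1$) each contribute $q\,M_{l-1}+q\,M_l-q^2$, while the two boundary blocks $l=1$ and $l=L$ contribute $q\,M_1+q\,s-q^2$ and $q\,M_{L-1}+q\,s-q^2$, respectively. The key bookkeeping observation is that each hidden width $M_i$ (for $i\in[1,L-1]$) appears in exactly two adjacent blocks: once as the $M_l$-term of block $l=i$ and once as the $M_{l-1}$-term of block $l=i+1$. Consequently every $M_i$ accumulates the coefficient $2q$, so the total width contribution is $2q\sum_{i=1}^{L-1}M_i=2q\,M$.

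Finally, tallying the remaining pieces: the rank $s$ enters only through the two boundary blocks, contributing $2q\,s$, and each of the $L$ blocks supplies a single $-q^2$, for a total of $-L\,q^2$. Summing the three contributions yields the claimed bound $2q\,M+2q\,s-L\,q^2$. There is no genuine obstacle here---the result is essentially a careful summation---but the one point requiring care is the \emph{boundary accounting}: one must verify that $M_1$ and $M_{L-1}$ each still receive coefficient $2q$ (the boundary blocks supply one copy of each via their explicit $q\,M_1$ and $q\,M_{L-1}$ terms, with the second copy coming from the neighbouring interior block), and that the $s$-dependence remains correctly isolated to the two endpoints rather than leaking into the interior sum.
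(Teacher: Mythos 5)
Your proof is correct and follows essentially the same route as the paper: identify $\rank(\HF)=\rank(\HFhat)$, apply subadditivity of rank over the $L$ column-blocks, and sum the per-block ranks from Theorem~\ref{theorem:func-hess-cols}. Your explicit boundary bookkeeping (each $M_i$ receiving coefficient $2q$, the $s$-terms confined to blocks $1$ and $L$) matches the paper's telescoped summation exactly.
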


Although the above corollary states an upper bound, empirically we find that this is precisely the formula at initialization, and thus we have the \textit{tightest upper bound on the rank of the functional Hessian}. Also, we find that usually at initialization $s=\min(r, K)$, although this is not needed for the proof. \looseness=-1

\textbf{Block-column independence.$\quad$} A surprising element of the above analysis is that just adding the ranks of the block-columns, corresponding to the respective layers, gives the rank of the entire $\HF$ which is tight. 
This phenomenon is quite straightforward to see in a 2-layer network and there Corollary~\ref{corollary:functional-rank} is an equality. However, the more interesting observation is that this holds even for arbitrary depth and also extends to networks \textit{with non-linearities} such as ReLU and Tanh. This implies that the column spaces associated with the layerwise block-columns do not overlap. Thus, \textit{$\HF$ is similar to a block diagonal matrix}, and it should be possible to uncover the similarity transformation. But this is beyond the current scope, and we leave it as an open question. 

\vspace{-2mm}
\subsection{Overall bound on the Hessian Rank}
Finally, in order to get an upper bound on the rank of the entire Hessian, we simply use $\rank(\Am + \Bm) \leq \rank(\Am) + \rank(\Bm)$, along with the Corollary~\ref{corollary:outer-rank} and Corollary~\ref{corollary:functional-rank}, to obtain:
\[
    \rank(\HL) \leq \rank(\HO) + \rank(\HF) \,\leq\, 2\, q\, M  - L\, q^2  + 2 \,q\, s + q\,( r + K - q)\,.
\]

\begin{faact}\label{eq:rank-formula}
The following equality holds empirically:
    $\, \colorboxed{gray!50}{\rank(\HL) = 2q \, M - L\, q^2  + q(r + K)}\,.$
\end{faact}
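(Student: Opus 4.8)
The plan is to build on the two exact rank expressions already established and reduce the problem to understanding how much the two pieces of the Hessian overlap. Since $\HL = \HO + \HF$, subadditivity of rank together with Corollary~\ref{corollary:outer-rank} (giving $\rank(\HO) = q(r+K-q)$ exactly) and the tight form of Corollary~\ref{corollary:functional-rank} (for which one must first argue that $\rank(\HF) = 2qM + 2qs - Lq^2$ is \emph{attained} at initialization, not merely bounded) yields $\rank(\HL) \le 2qM - Lq^2 + 2qs + q(r+K-q)$. Comparing this with the claimed formula $2qM - Lq^2 + q(r+K)$, the gap is exactly $2qs - q^2$. Hence the entire content of the Fact is the assertion that $\dim\big(\range(\HO) \cap \range(\HF)\big) = 2qs - q^2$ together with the claim that forming the sum loses no further rank, i.e.\ $\range(\HL) = \range(\HO) + \range(\HF)$. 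I would therefore split the proof into (i) showing the overlap has dimension exactly $2qs - q^2$, and (ii) upgrading the resulting bound to an equality.

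For (i) I would work directly with the explicit range parametrizations of Section~\ref{sec:hessianstruct}: a generic element of $\range(\HO)$ has $k$-th block $\vect_r\big(\wM{k+1:L}\,\bar\Delta\,\covx\,\wM{1:k-1}\big)$ controlled by the single compressed matrix $\bar\Delta \in \Re^{K\times d}$, whereas a generic element of $\range(\HF)$ has $k$-th block $\vect_r\big(\wM{k+1:L}\,\Om\,[\Delta^{<k}]^\top + [\Delta^{>k}]^\top\,\Om\,\wM{1:k-1}\big)$. Setting these equal block-by-block gives a linear system whose solution space is the intersection. I expect the rank-$s$ factor $\Om$ to confine the matching elements to an $s$-dimensional subspace on each of the upstream and downstream sides (producing the two $qs$ contributions), while the shared $q$-dimensional alignment imposed by the weight chains removes an overcounted $q^2$; the bookkeeping should again reduce to the $\Zm$-matrix rank identity of Lemma~\ref{lemma:kronecker-block}. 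As a sanity check and likely template, the case $L=2$ is fully explicit and there Corollary~\ref{corollary:functional-rank} is already an equality, so I would first carry out the intersection count there and then propagate it across block-columns using the block-column independence observed for $\HF$.

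The main obstacle is a coordinate mismatch that makes step (i) delicate and step (ii) genuinely nontrivial. The clean Kronecker descriptions of $\range(\HO)$ and of $\range(\HF)$ are obtained under \emph{different} vectorization conventions: the functional blocks were transposed in $\wM{l}$ precisely to expose their Kronecker structure, whereas the outer-product blocks were not. Before the two ranges can be intersected they must be rewritten in one common convention, which destroys some of the structure that made each rank computation tractable. For (ii), subadditivity only gives an inequality, so one must additionally certify that no cancellation occurs when $\HO$ and $\HF$ are added; here I would try to exploit that $\HO \succeq 0$ while isolating the indefinite part of $\HF$, or alternatively exhibit explicit perturbation directions $\Delta\btheta$ whose images under $\HL$ are linearly independent and number exactly $2qM - Lq^2 + q(r+K)$. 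Controlling this interaction uniformly in the depth $L$ is, I expect, the crux, and is presumably why the statement is recorded as an empirically verified Fact rather than a fully proven theorem.
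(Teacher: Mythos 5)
Your proposal correctly reconstructs everything the paper actually proves about this statement, and correctly identifies what is missing --- but you should be aware that the paper itself does not prove this Fact at all. It is recorded as an \emph{empirical} equality: the paper's support for it consists of exact rank computations (full Hessian, \textsc{Float-64} SVD) across datasets, losses, widths and depths in Section~\ref{sec:verifyBehaviour}, and the conclusion explicitly lists ``obtaining an equality on the bounds'' as an open problem for future work. The only mathematical content the paper attaches to this Fact is exactly what you re-derive: $\rank(\HL) \leq \rank(\HO)+\rank(\HF) \leq 2qM - Lq^2 + 2qs + q(r+K-q)$ via Corollary~\ref{corollary:outer-rank}, Corollary~\ref{corollary:functional-rank} and subadditivity, followed by the observation that the empirically measured rank falls short of this bound by exactly $2qs - q^2$, which the paper (like you) interprets as the dimension of the overlap of the two column spaces. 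Note also that, as you yourself flag, the equality $\rank(\HF) = 2qM + 2qs - Lq^2$ is likewise only empirical in the paper --- Theorem~\ref{theorem:func-hess-cols} gives the block-column ranks, but tightness of their sum is observed, not proven --- so your program would have to establish that as well before step (i) even begins.

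As a proof, therefore, your proposal has a genuine gap: steps (i) and (ii) are research plans with honestly acknowledged obstacles (the vectorization mismatch between the two range parametrizations, the intersection count, and ruling out cancellation in the sum uniformly in the depth $L$), not completed arguments. But this is precisely the gap the authors could not close either, and your closing diagnosis is exactly right --- the statement is a Fact rather than a theorem because nobody has carried out what you outline. If you pursue it, your instincts are sound: starting from $L=2$ (where Corollary~\ref{corollary:functional-rank} is an equality), exploiting $\HO \succeq 0$ against the indefinite $\HF$, and leveraging the block-column independence of $\HF$ noted in the paper are all the natural structural inputs; just be clear that nothing in the paper certifies that either the intersection-dimension claim or the no-cancellation claim actually goes through.
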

This implies that our upper bound is off by just a constant additive factor of $2\,q\,s - q^2$, which is another startling finding. Thus, revealing that the \textit{intersection of the $\HO,\,\HF$ column spaces has a rather small dimension}. E.g., take the typical case of no bottlenecks, $q=s$, then our upper bound exceeds the true rank by a small constant $q^2$, i.e., minimum dimension squared. This suggests that the direct sum of their column spaces is not too far-fetched as an approximation to the column space of $\HL$.
Previously,~\cite{pmlr-v70-pennington17a} empirically noted a similar observation for 1-hidden layer networks and ~\cite{jacot2019asymptotic} showed a high degree of mutual orthogonality in the asymptotic regime. Our result shows a similar consequence in the finite case.\looseness=-1

\textbf{Alternative interpretation.$\quad$} Besides the above result, that the rank of the Hessian is proportional to the sum of widths, there is an alternate way of viewing this. Let us calculate the rank deficiency in the network when (uncentered) input-covariance has rank, i.e., $r=d$. Using Fact.~\eqref{eq:rank-formula} this comes out to 
\[\rankdef(\HL) = \sum\limits_{i=0}^{L-1} (M_i \,- \,q)(M_{i+1}\,-\,q)\,,\]
whereas the number of parameters is equal to $p=\sum_{i=0}^{L-1} M_i \, M_{i+1}$. Hence this lends an elegant interpretation to our formula, whereby the amount of rank deficiency is equal to the number of parameters of a hypothetical network whose all layer-widths have been subtracted by the minimum layer-width of the original network.

\section{Empirical Results}
\subsection{Verification of Rank formulas and their behaviour}\label{sec:verifyBehaviour}
\begin{figure}[!h]
    \centering
    \begin{subfigure}[t]{0.32\textwidth}
        \includegraphics[width=\textwidth]{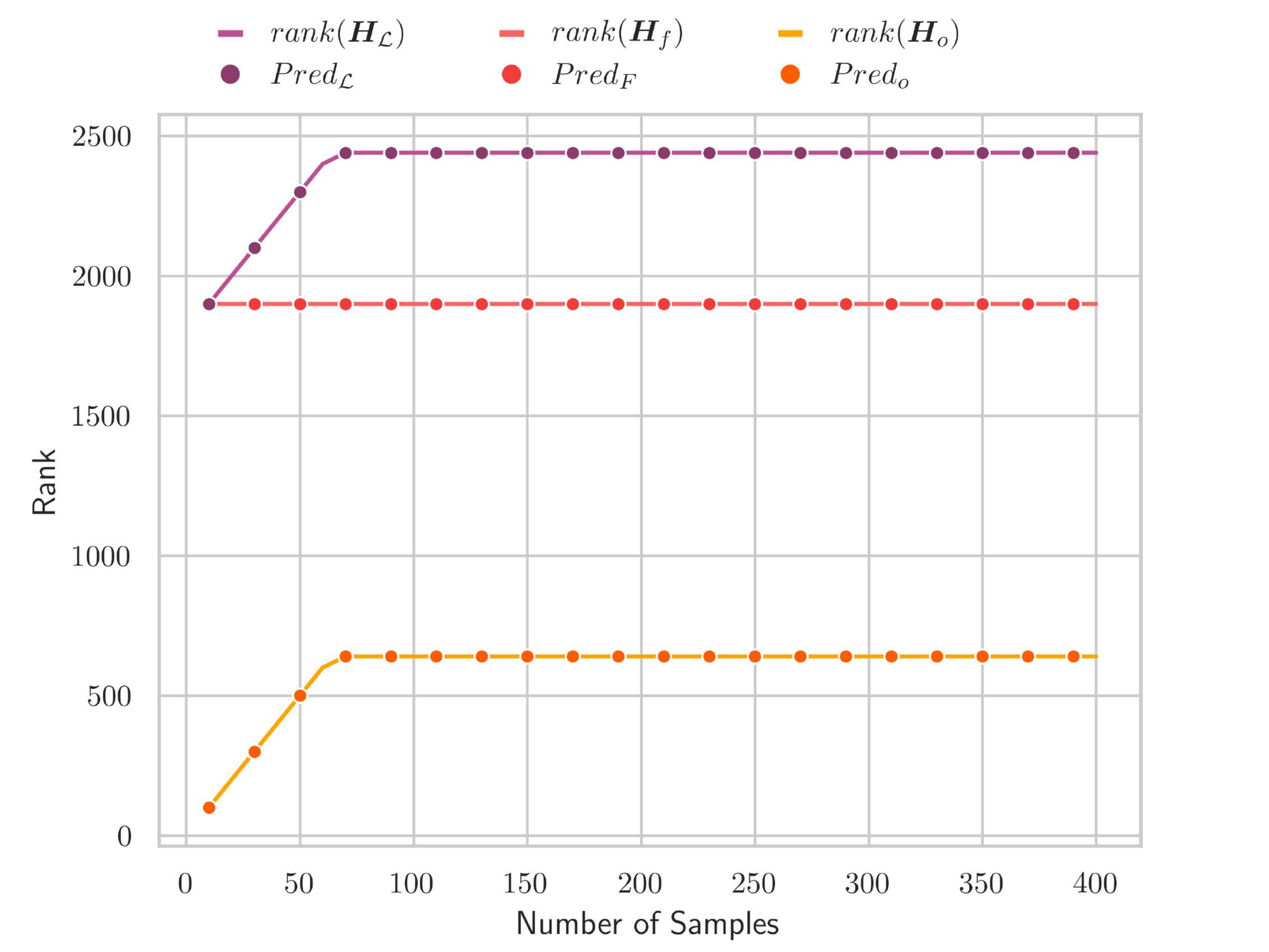}
        \caption{Rank vs \# samples $N$}
        \label{fig:cifar10-a1}
    \end{subfigure}
    \begin{subfigure}[t]{0.32\textwidth}
        \includegraphics[width=\textwidth]{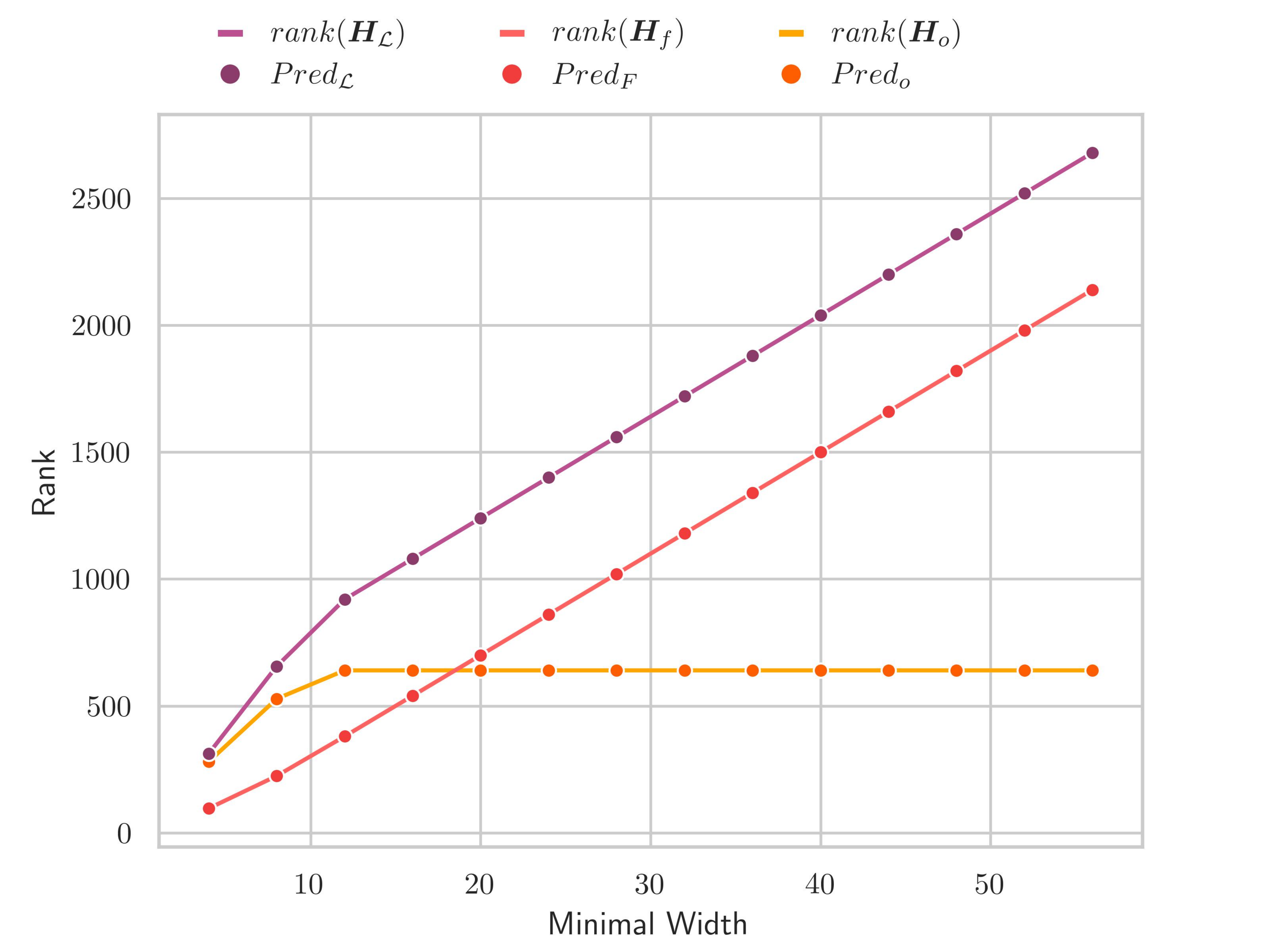}
        \caption{Rank vs minimal width $M_{*}$}
        \label{fig:cifar10-a2}
    \end{subfigure}
        \begin{subfigure}[t]{0.32\textwidth}
        \includegraphics[width=\textwidth]{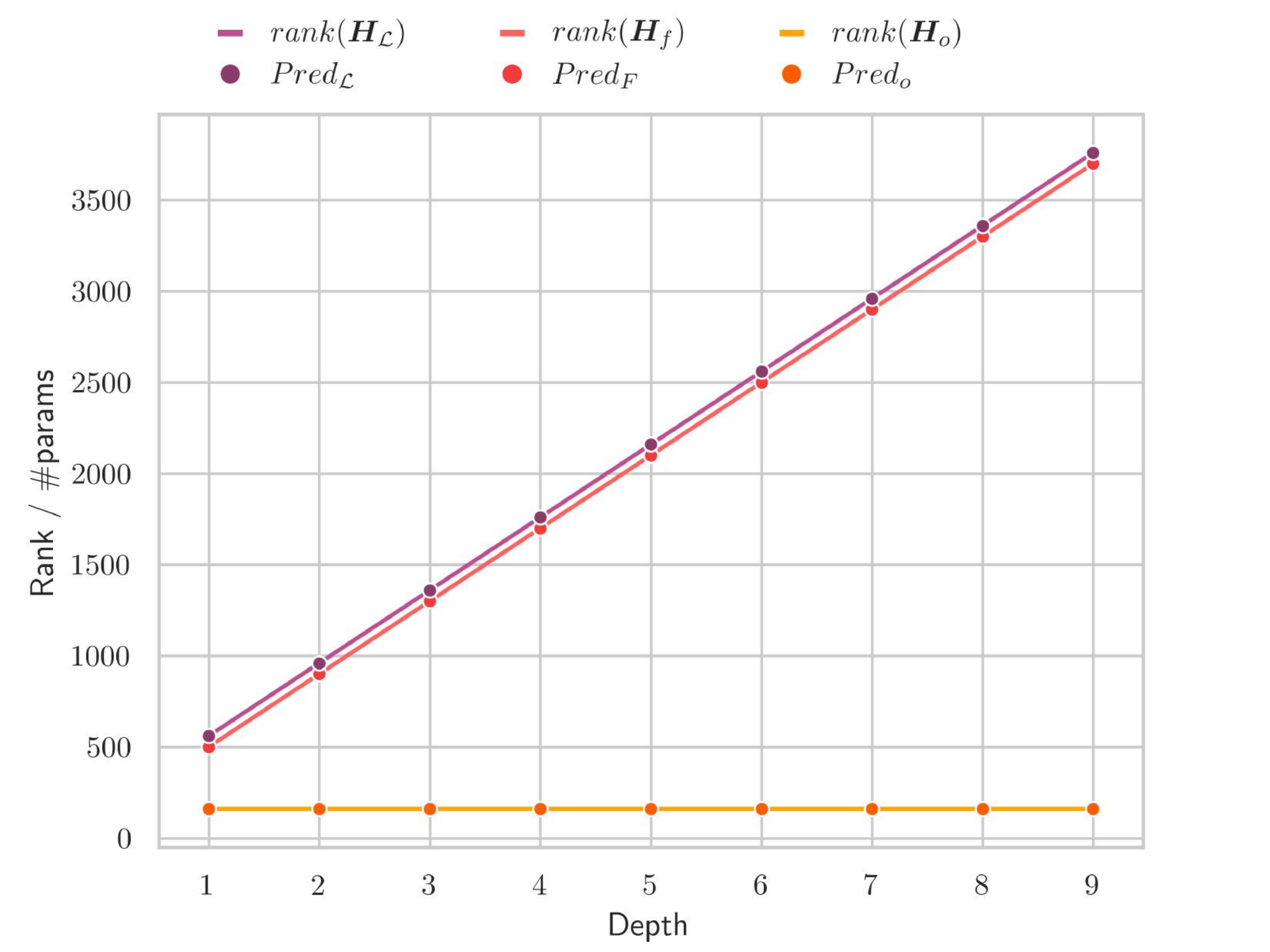}
        \caption{Rank vs depth $L$}
        \label{fig:cifar10-a3}
    \end{subfigure}
    \caption{\small{Behaviour of \textbf{rank} on CIFAR10 using MSE, with hidden layers: $50, 20, 20, 20$ (Fig.~\ref{fig:cifar10-a1}), $M_{*}, M_{*}$ (Fig.~\ref{fig:cifar10-a2}) and $L$ layers of width $M=25$ (Fig.~\ref{fig:cifar10-a3}). The lines indicate the true value and circles denote our formula predictions. }}
    \label{fig:cifar10_rank_behaviour_absolute}
\end{figure}

\textbf{Setup.$\quad$} We test our results on a \textit{variety of datasets}: MNIST \citep{lecun-mnisthandwrittendigit-2010}, FashionMNIST \citep{xiao2017fashionmnist}, CIFAR10 \citep{cifar10}; for \textit{various loss types}: MSE, cross-entropy, cosh; across \textit{several initialization schemes}: Glorot \citep{pmlr-v9-glorot10a}, uniform, orthogonal \citep{saxe2014exact}. Our theory extends to all these settings. However, due to space constraints we only show a subset of experiments here, but the rest can be found in the Appendix~\ref{supp:empirical}.

\textbf{Procedure.$\quad$} To verify the prediction of our theoretical results, we perform an exact calculation of the rank by computing the full Hessian and the corresponding singular value decomposition (SVD). The available iterative schemes for rank approximation (c.f. \citep{fastrank}), although more memory-efficient, are only effective for well-separated spectra and thus cannot provide useful approximations in the case of neural network Hessians. 
Besides the exact Hessian computation, we also utilize \textsc{Float-64} precision to ensure `true' rank calculation, resulting in increased memory costs. Hence, we downscale the image resolution to $d=64$ to test on more realistic networks.

\textbf{Results.$\quad$}We study how the rank varies as a function of the sample size $N$ and the network architecture (for varying widths). Fig.~\ref{fig:cifar10_rank_behaviour_absolute} shows this for a linear network on CIFAR10 with MSE loss. First, in Fig~\ref{fig:cifar10-a1}, we observe that our predictions match the true rank exactly across all sample sizes as the dependence on $N$ is only exhibited in the rank of the empirical covariance $\samplecovx$, confirming that rank is largely a distribution-independent quantity. So, for the rest of our experiments, we sufficiently subsample to ensure that the empirical and true covariance have the same rank. 
Next, in Fig.~\ref{fig:cifar10-a2} and~\ref{fig:cifar10-a3}, we see that \textit{our rank formulas hold for arbitrary-sized network architectures}, across varying width and depth. 

\begin{figure}[!h]
    \centering
    \begin{subfigure}[t]{0.35\textwidth}
    \includegraphics[width=\textwidth]{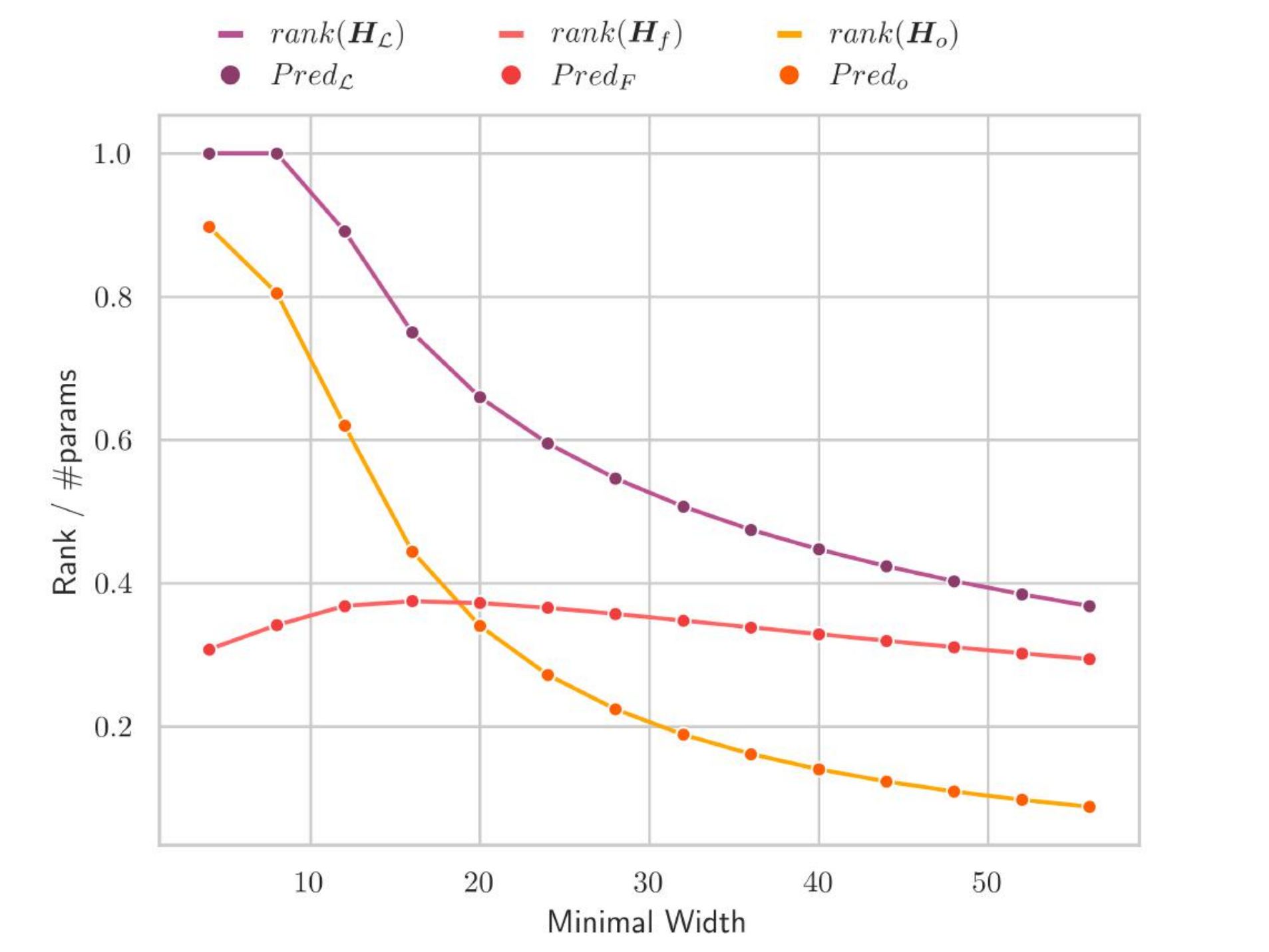}
    \caption{$\effparams$ vs minimal width $M_{*}$}
    \label{fig:cifar10-a4}
    \end{subfigure}
    \begin{subfigure}[t]{0.35\textwidth}
    \includegraphics[width=\textwidth]{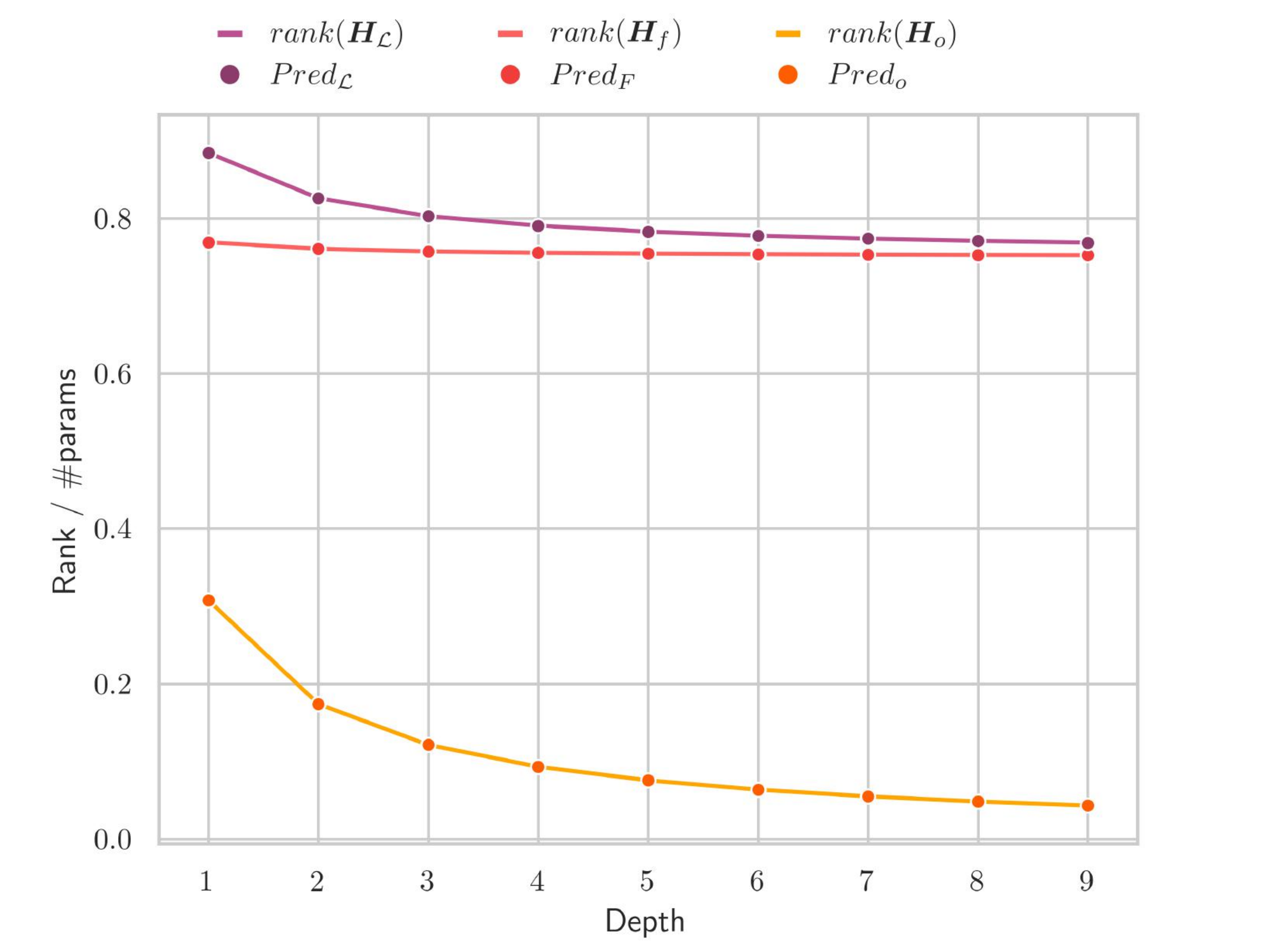}
    \caption{$\effparams$ vs depth $L$}
    \label{fig:cifar10-a5}
    \end{subfigure}
    
    \caption{\small{Behaviour of \textbf{rank/\#params} on CIFAR10 using MSE, with hidden layers: $M_{*}, M_{*}$ (Fig.~\ref{fig:cifar10-a4}) and $L$ layers of width $M=25$ (Fig.~\ref{fig:cifar10-a5}). The lines indicate the true value and circles denote our formula predictions. }}
    \label{fig:cifar10_rank_behaviour_normalized}
\end{figure}

To contextualize the growth of rank with increasing architecture sizes, in Fig.~\ref{fig:cifar10_rank_behaviour_normalized}, we normalize it by \# parameters $p$. We notice that, rank/\# params, which intuitively captures the fraction of effective parameters, saturates down to a small level --- \textit{signalling the extent of redundancy present in the network parameterization}.

\subsection{Simulation of rank formulas for large settings}\label{supp:simulation}
In the previous subsection, we have established how our formulas hold exactly in practice. An added benefit of these formulas is that they allow us to visualize how the rank will growth in relation to the number of parameters for bigger architectures --- \textit{without actually having to do the Hessian computations. }In Fig.~\ref{formulas} we show such a simulation for increasing width and depth.  The simulations make the limiting behaviour of the fraction $\frac{\text{rank}}{\# params}$ even more apparent, as the fraction decreases with more and more overparametrization (in terms of both depth and width), until it reaches a threshold. 

The left subfigure, which is the width-simulation plot, also shows an interesting behaviour. In the early phase, the outer-product Hessian $\HO$ dominates the functional Hessian $\HF$ in terms of $\frac{\text{rank}}{\# params}$, but after a certain width $\HF$ starts to dominate $\HO$ and continues to do so throughout. It would be of relevance for future work to further investigate the interaction between $\HO$ and $\HF$, and provide an understanding of these two phases.

\begin{figure}[!htb]
    \centering
    \includegraphics[width=0.4\textwidth]{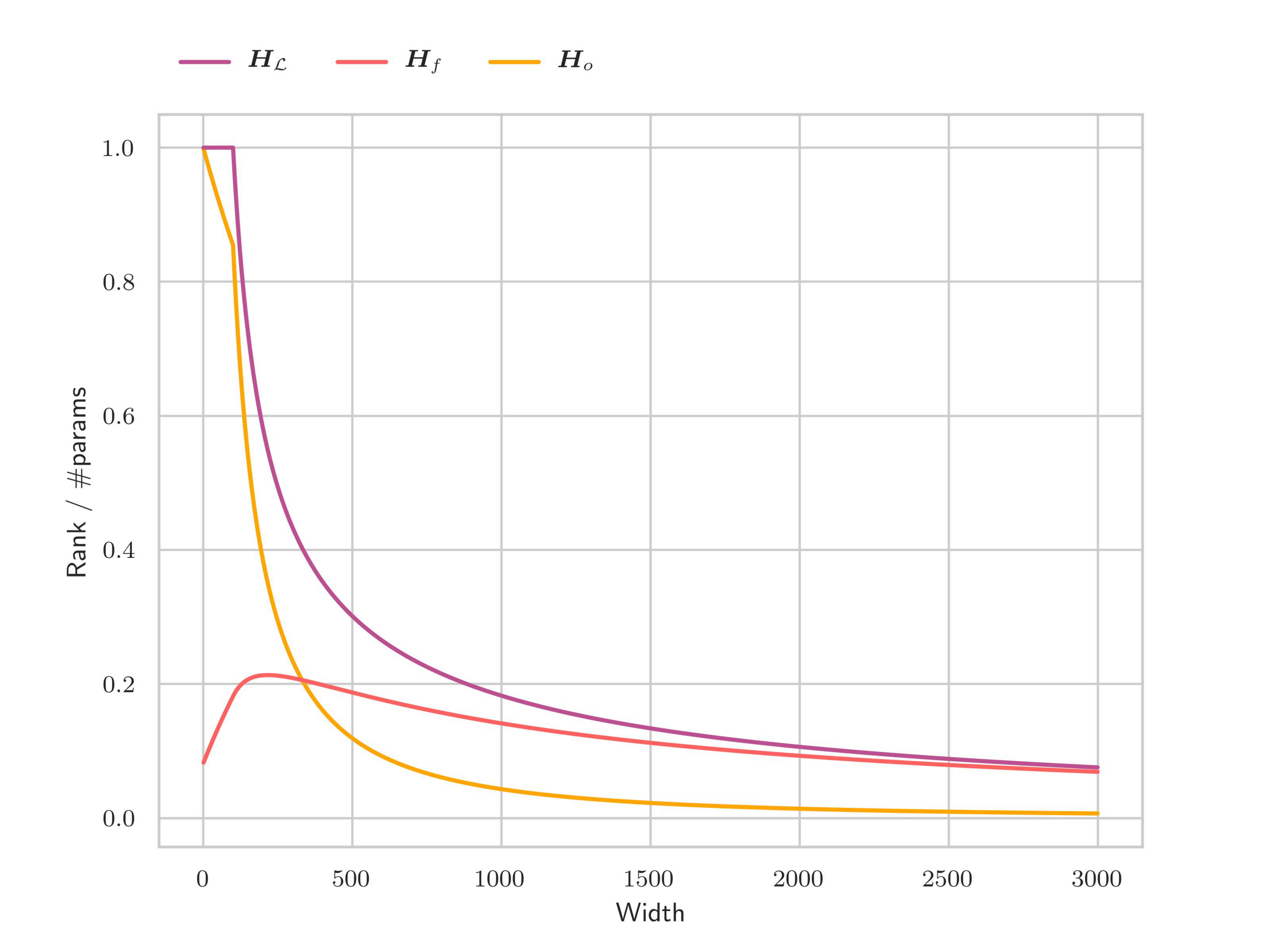}%
    \includegraphics[width=0.4\textwidth]{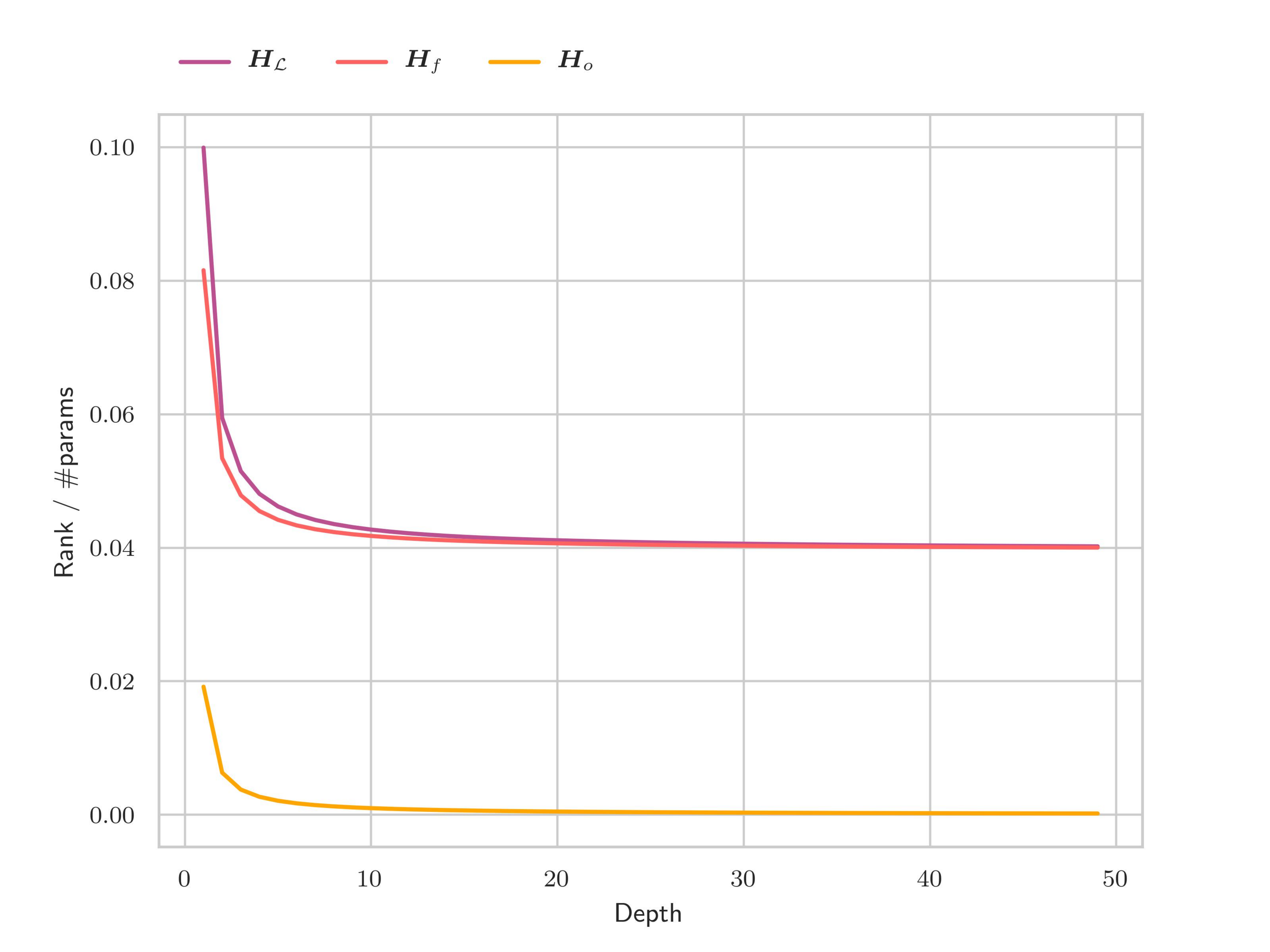}
    
    \caption{\small{Simulating \textbf{rank/\#params} over width (left) and depth (middle). For the width plot: $L=4, r=2352$, and for the depth plot: $r=2352$ and hidden layer size $5000$. }}
    \label{formulas}
\end{figure}

\subsection{The case of non-linearities}\label{sec:empirical-nonlinear}

Although the linear nature of the neural network was crucial to our analysis, in this section we show experimentally that our results also extend to the non-linear setting --- as numerical rank~\cite{numericalrank}. 

\begin{figure}[!htb]
    \centering
    \vspace{-1em}
    \includegraphics[trim={0.1cm 0 0 0.5cm},clip,width=0.3\textwidth]{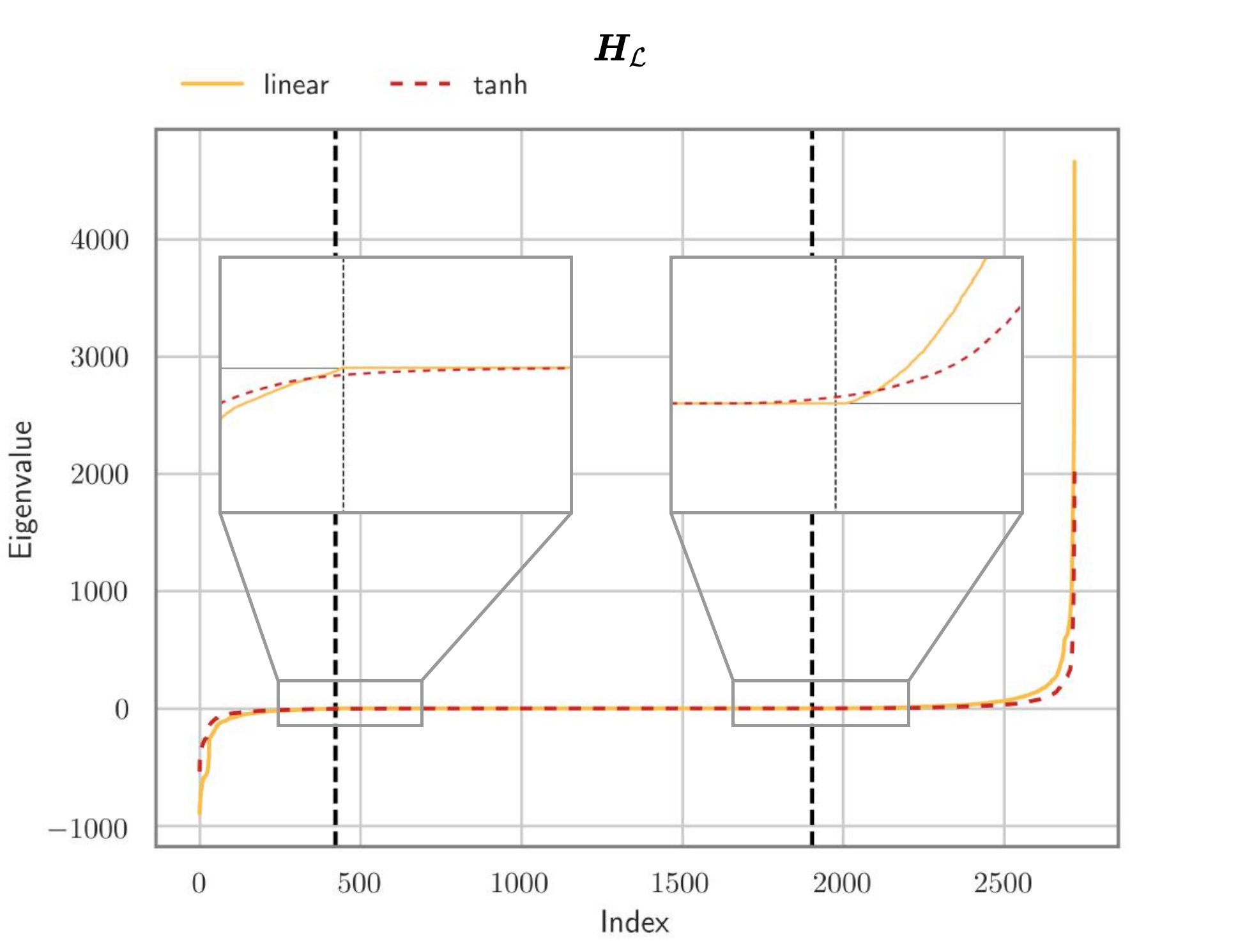}%
    \includegraphics[trim={0.1cm 0 0 0.5cm},clip,width=0.3\textwidth]{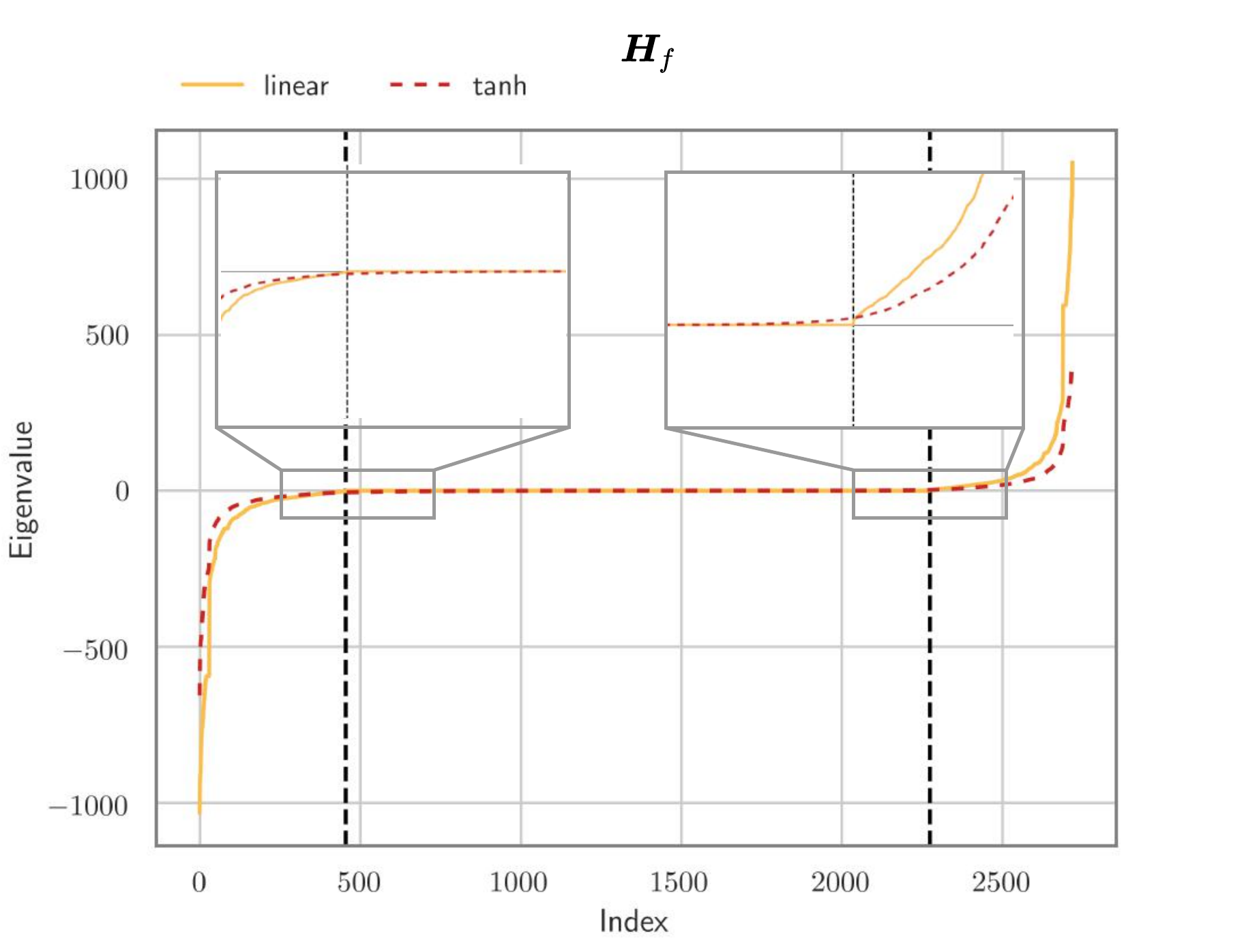}%
    \includegraphics[trim={0.25cm 0 0 0.5cm},clip,width=0.3\textwidth]{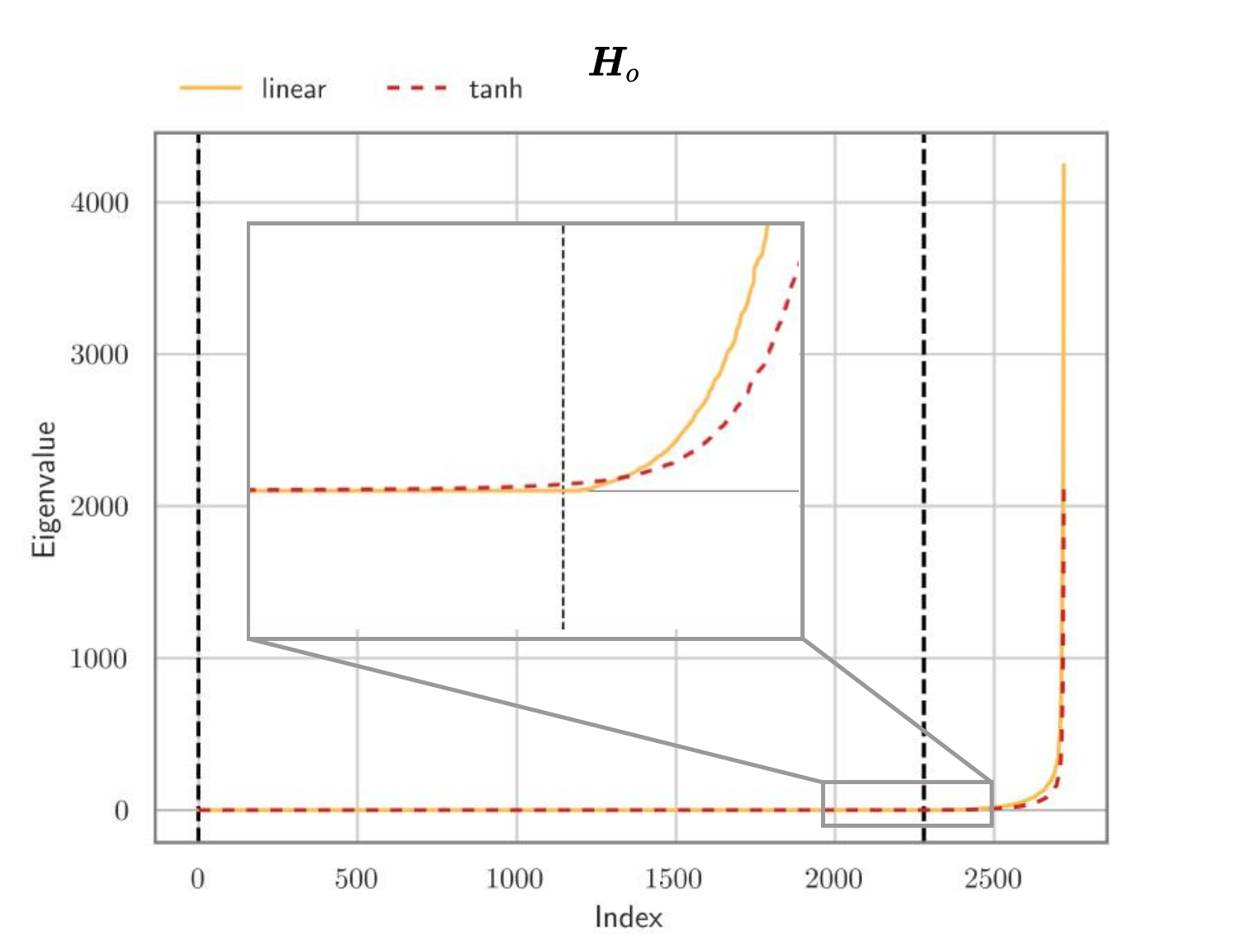} 
    \caption{\small{Spectrum of the loss Hessian $\HL$ (left), functional Hessian $\HF$ (middle)  and outer product $\HO$ (right), for \textcolor{orange}{\textbf{linear}} and \textcolor{red}{\textbf{non-linear}} networks. Black dashed lines are the predictions of the bulk size via our rank formulas. We use 2 hidden layers of size $30,20$ with Tanh activation on MNIST.}}
    \label{bulk_size}
\end{figure}

\textbf{Visual comparison of the Hessian spectra.$\quad$}
Let us first understand how non-linearities affect the Hessian spectrum relative to the spectrum of linear networks. Fig.~\ref{bulk_size} compares the spectra of $\HL$, $\HF$, $\HO$ in these two scenarios (linear vs Tanh), with a zoomed-in inset near the cut-off obtained from rank formulas corresponding to the linear case. We can observe the presence of numerous tiny, but not exactly zero, eigenvalues in the non-linear case. So, if we were to measure the rank with a threshold up to machine precision --- as we did in the linear case --- this would result in an inflation of the rank measurement for the non-linear scenario. From a practical point of view, tiny but non-zero eigenvalues hold little significance, so a more relevant quantity is the numerical rank~\cite{numericalrank} that uses a reasonable threshold to weed out such extraneous eigenvalues. The numerical rank, or alternatively the size of bulk around zero, for the non-linear case, indeed seems to be captured by our (linear) rank formulas to high fidelity, as evident from Fig.~\ref{bulk_size}. Similar results for other non-linearities, loss functions,  datasets can be found in the Appendix~\ref{spectral_app}.

\begin{figure}[!htb]
    \centering
    \includegraphics[width=0.3\textwidth]{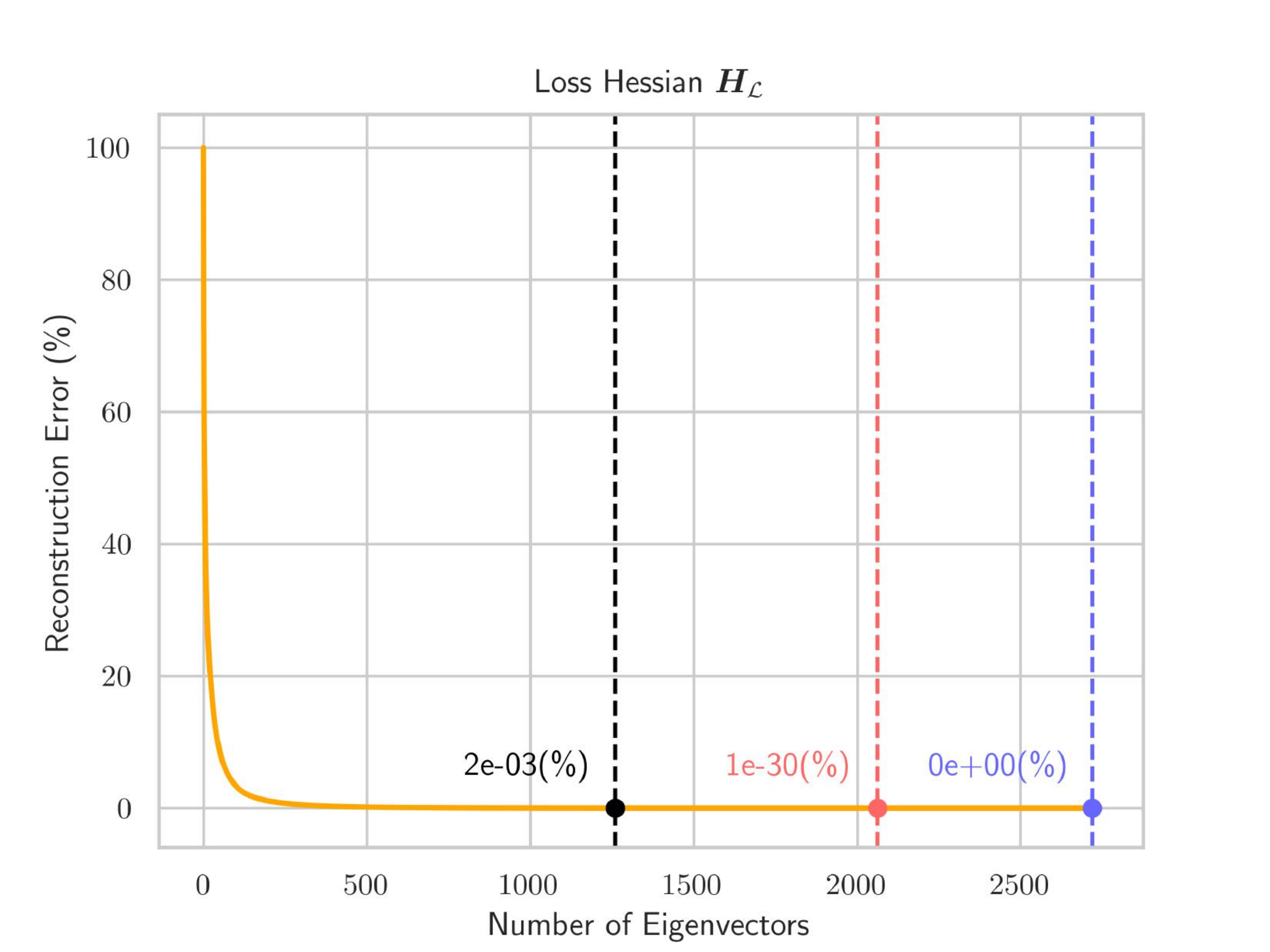}%
    \includegraphics[width=0.3\textwidth]{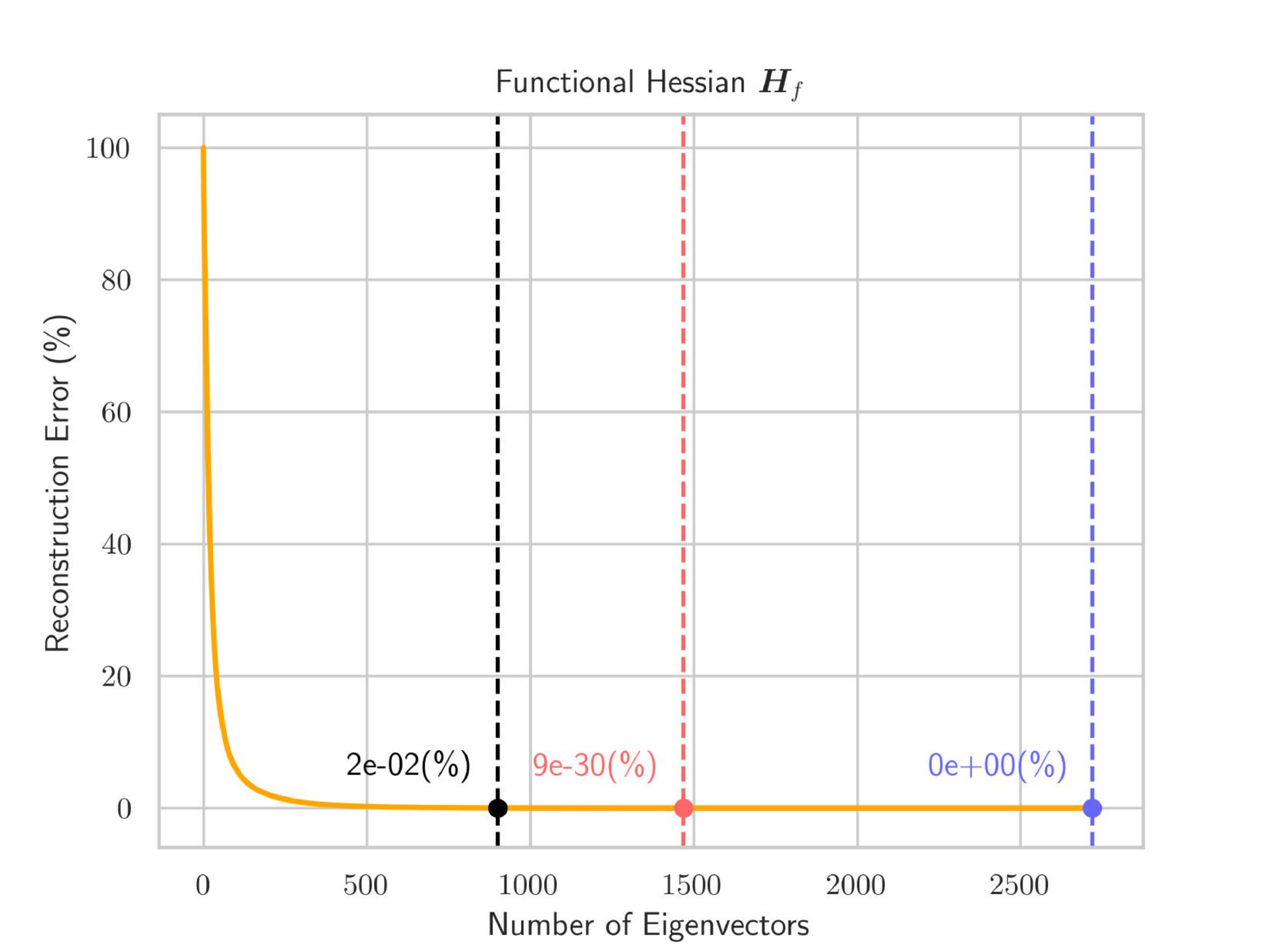}%
    \includegraphics[width=0.3\textwidth]{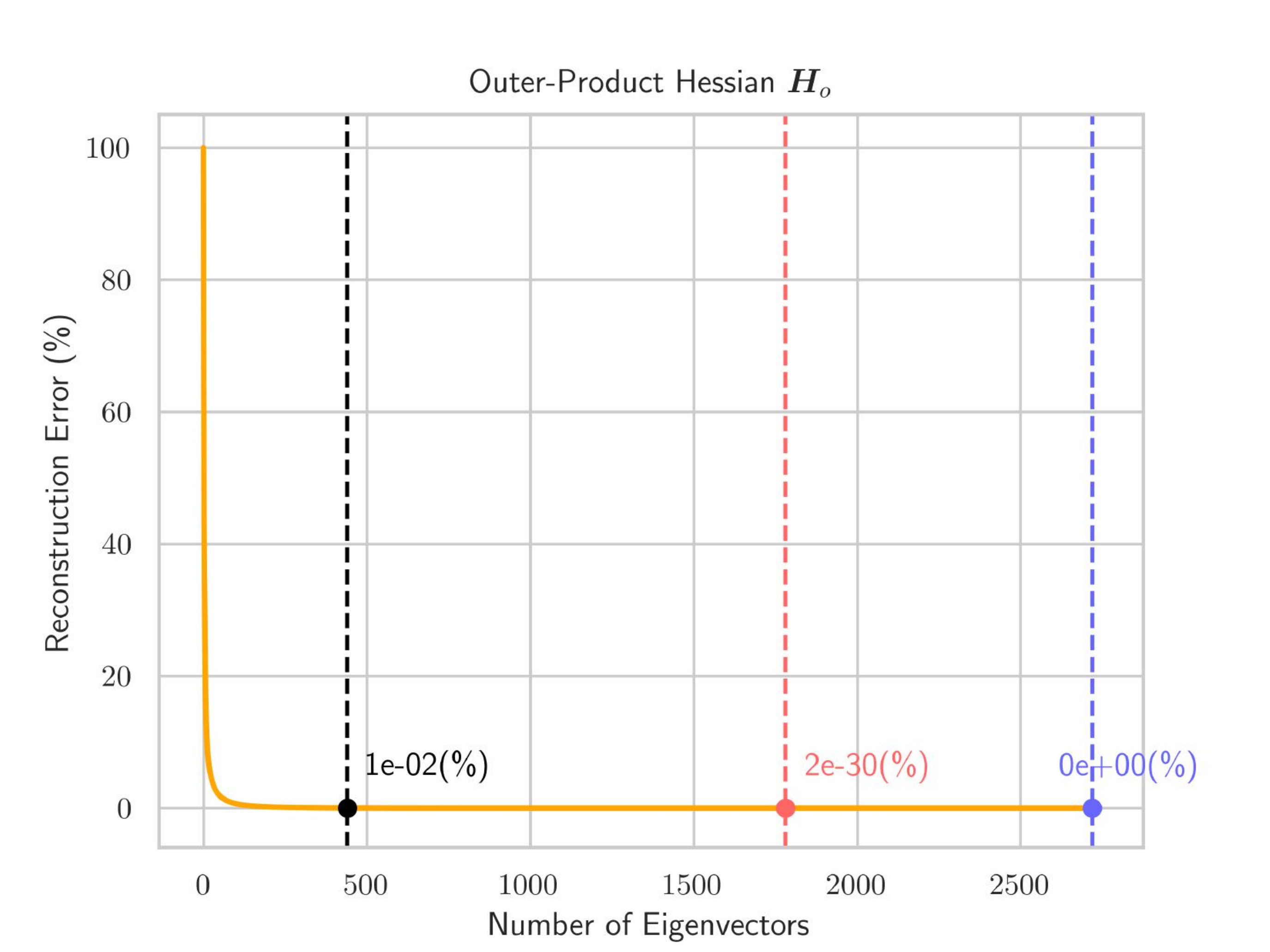}
    \caption{Hessian reconstruction error for a \textbf{ReLU} network of hidden layer sizes $30, 20$ as the rank of the approximation is increased. As before, the three sub-figures reflect this for the loss Hessian $\HL$ (left), functional Hessian $\HF$ (middle),  and outer product $\HO$ (right) respectively. The x-axis represents the number of top eigenvectors that form the low-rank approximation. 
    The dashed vertical lines indicate the cut-off at various values of the rank: \textbf{first line} at the prediction based on the linear model, \textcolor{red}{\textbf{second line}} at the empirical measurement of rank, and \textcolor{blue}{\textbf{third line}} based on upper bounds from \citep{jacot2019asymptotic}, which become too coarse to be of any use (actually even $>$ \# parameters). 
    }
    \label{fig:relu_reconstruction}
\end{figure}

\textbf{Quantitative measure of the fidelity of Rank formulas.$\quad$}
To complement the visual grounds presented above, we now quantitatively measure the (in)significance of such spuriously tiny eigenvalues, in terms of the reconstruction error incurred by excluding them. 
Hence, we perform a low-rank approximation via the SVD and measure the (relative) reconstruction error at the value of rank from our formulas of the linear case. Fig.~\ref{fig:relu_reconstruction} displays the reconstruction error as a function of the \# of top eigenvectors employed, for a ReLU network. As a reference, we consider the empirical rank measurement obtained at machine precision in this case.

\textit{We find that using the linear rank value provides an excellent reconstruction ($0.002\%$ error in case of $\HL$), hence demonstrating the fidelity of our rank formulas to serve as a measure of numerical rank in the non-linear case.} The same observation extends to other non-linearities and losses, which we highlight in the Appendix~\ref{recon_error_app}. 
Consequently, iterative Hessian estimation procedures, e.g. in second-order optimization methods~\cite{botev2017practical,lacotte2021adaptive}, could benefit from the linear rank as a guiding criterion for their design of Hessian approximation. Besides, these experiments also indicate that previous bounds, such as those by \citet{jacot2019asymptotic}, on the rank of outer-product Hessian and functional Hessian are quite coarse to be of much use. This is because these bounds have a linear dependence on the product of: \# of samples $N$ and \# of classes $K$. 
 For the same reason, other works~\cite{shen2016mathematical,botev2017practical} that bound the rank of the outer-product Hessian $\HO$, trivially, based on the \# of samples $N$ are of little use.

\vspace{-3mm}
\section{Evolution of Rank during training}\label{sec:rank-evolution}

The upper bounds on the Hessian rank detailed before inherently depend on the rank of the weight matrices. While initialization guarantees them to be of maximal rank, the rank of weight matrices might possibly decrease during training, thus bringing about a decrease in the Hessian rank. Under some additional assumptions, Lemma~\ref{lemma:evolve} shows that this does not happen and the rank remains constant.\looseness=-1

\begin{lemma}\label{lemma:evolve}
For a deep linear network, consider the gradient flow dynamics $\dot{\Wm}^{l}_t = -\eta \nabla_{\Wm^{l}}\mathcal{L}_S(\bm{\btheta})\big{|}_{\bm{\btheta} \,=\, \bm{\btheta}_t}$. Assume: (a) Centered classes: $ \frac{1}{N} \sum_{i:y_{ic} = 1}^{N}\x_i= \bm{0}, \hspace{2mm} \forall\, c \in [1,\dots,K]$. (b) Balancedness at initialization: $\Wm^{{k+1}^\top}_0\Wm_0^{k+1} = \Wm^{k}_0\Wm^{k^\top}_0$. (c) Square weight-matrices: $\Wm^{l} \in \mathbb{R}^{M \times M}\,,$ $\,\forall \,l\,$ and $\,K=d=M$. Then for all layers $l$,
$\,\, \rank(\Wm^{l}_t) = \rank(\Wm^{l}_0), \hspace{3mm} \,\forall \, t<\infty\,.$
\end{lemma}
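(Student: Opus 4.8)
The plan is to reduce the dynamics of each layer to a \emph{one-sided} linear matrix ODE of the form $\dot{\Wm}^{l}_t = \Wm^{l}_t\,\mb G_l(t)$ with a continuous coefficient $\mb G_l(t)$. Once this form is established, rank preservation is immediate: the solution is $\Wm^{l}_t = \Wm^{l}_0\,\pmb\Phi_l(t)$, where $\pmb\Phi_l$ is the fundamental matrix of $\dot{\pmb\Phi}_l = \pmb\Phi_l\,\mb G_l$ with $\pmb\Phi_l(0)=\mb I$, and $\pmb\Phi_l(t)$ is invertible for every finite $t$ because $\det\pmb\Phi_l(t)=\exp\!\big(\int_0^t \trace\mb G_l(s)\,ds\big)\neq 0$. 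Since right-multiplication by an invertible matrix leaves the rank unchanged, $\rank(\Wm^{l}_t)=\rank(\Wm^{l}_0)$ for all $t<\infty$. So the whole task is to bring the gradient flow into this shape, which is exactly where the three assumptions enter.

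First I would use the \emph{centering} assumption (a). Its $c$-th coordinate states that the class mean $\tfrac1N\sum_{i:y_{ic}=1}\x_i$ vanishes, i.e.\ every row of $\covyx=\tfrac1N\sum_i \y_i\x_i^\top$ is zero, so $\covyx=\mb 0$ and hence $\Om=\Wm^{L:1}\covx$. Substituting this into the gradient of Eq.~\eqref{eq:dnn-grad-linear} and expanding $\Wm^{L:1}=\Wm^{L:l+1}\Wm^{l}\Wm^{l-1:1}$ makes $\Wm^{l}$ appear \emph{in the middle} of a product:
\[
\nabla_{\Wm^{l}}\mathcal{L}_S = \Wm^{l+1:L}\,\Om\,\Wm^{1:l-1} = \underbrace{(\Wm^{L:l+1})^\top\Wm^{L:l+1}}_{\mb A_l}\;\Wm^{l}\;\underbrace{\Wm^{l-1:1}\,\covx\,(\Wm^{l-1:1})^\top}_{\tilde{\mb B}_l}\,,
\]
with $\mb A_l,\tilde{\mb B}_l\succeq\mb 0$. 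Without centering the stray $-\covyx$ term would break this clean sandwich, so assumption (a) is essential precisely here.

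Second, I would show that \emph{balancedness} is conserved along the flow. A short computation gives the identity $\nabla_{\Wm^{l}}\mathcal{L}_S\,(\Wm^{l})^\top = (\Wm^{l+1})^\top\nabla_{\Wm^{l+1}}\mathcal{L}_S$ (both sides equal $\Wm^{l+1:L}\Om\Wm^{1:l}$), together with its transpose; these immediately yield $\tfrac{d}{dt}\big((\Wm^{l+1})^\top\Wm^{l+1}-\Wm^{l}(\Wm^{l})^\top\big)=\mb 0$. Here assumption (c) guarantees both terms are equally-sized $M\times M$ matrices (so the relation even typechecks), and assumption (b) makes the difference vanish at $t=0$, hence for all $t$. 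The third step is the crucial one: using the conserved balancedness I would prove, by downward induction on $l$, the identity $\mb A_l\,\Wm^{l}=\Wm^{l}\big((\Wm^{l})^\top\Wm^{l}\big)^{L-l}$. The induction uses $\mb A_l=(\Wm^{l+1})^\top\mb A_{l+1}\Wm^{l+1}$, the inductive hypothesis, the balancedness relation $(\Wm^{l+1})^\top\Wm^{l+1}=\Wm^{l}(\Wm^{l})^\top$, and the push-through identity $(\mb M\mb M^\top)^{k}\mb M=\mb M(\mb M^\top\mb M)^{k}$. Plugging this into the factorised gradient turns the left factor $\mb A_l$ into a right factor, giving exactly $\dot{\Wm}^{l}_t=-\eta\,\Wm^{l}_t\big((\Wm^{l}_t)^\top\Wm^{l}_t\big)^{L-l}\tilde{\mb B}_l=:\Wm^{l}_t\,\mb G_l(t)$, with $\mb G_l(t)$ a polynomial-and-product expression in the (continuous) weights, hence continuous, closing the argument.

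The main obstacle is conceptual rather than computational: a \emph{generic} two-sided flow $\dot{\Wm}=\mb A\,\Wm\,\tilde{\mb B}$ does \emph{not} preserve rank, since the left action of $\mb A$ can rotate $\mathrm{col}(\Wm)$ out of itself. The entire proof therefore hinges on the balancedness identity of the third step, which is what allows the left factor to be rewritten as a right factor and reduces the dynamics to a rank-preserving one-sided ODE; getting this identity right (and seeing that centering is what enables the middle-factorisation in the first place) is the heart of the matter. A minor point worth flagging is the finite-time restriction in the statement: because the centred problem drives $\Wm^{L:1}\to\mb 0$, the active singular values decay toward zero, reaching it only as $t\to\infty$, which is exactly why the claim is asserted for $t<\infty$.
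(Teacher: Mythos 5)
Your proof is correct and follows essentially the same route as the paper's: centering kills $\covyx$ so that $\Om=\wM{L:1}\covx$ and the gradient becomes the sandwich $\Am_l\,\Wm^l\,\tilde{\Bm}_l$, conserved balancedness together with the push-through identity converts the left factor into $\Am_l=(\Wm^l{\Wm^l}^\top)^{L-l}$ acting as a right factor, and a Jacobi/Liouville determinant computation concludes. The only differences are that you prove balancedness conservation and the induction for $\Am_l$ explicitly (the paper cites Arora et al., Theorem 1, and leaves that induction implicit), and that your fundamental-matrix formulation $\Wm^l_t=\Wm^l_0\,\pmb\Phi_l(t)$ with $\det\pmb\Phi_l(t)\neq 0$ yields rank preservation even for rank-deficient balanced initializations, whereas the paper applies Jacobi's formula to $\det(\Wm^l_t)$ directly and therefore restricts to full-rank weights.
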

\label{dynamics_of_rank}

Balancedness is a common assumption that has been used in many previous works, like \citet{arora2019convergence}. Centered classes can easily be enforced via a pre-processing step, however empirically this is not required. While the proof (see Appendix~\ref{supp:evol-proof}) holds for square case, empirically we also find this to be true for non-square matrices and non-linearities as shown in Fig.~\ref{fig:weight-ranks} (and more in Appendix~\ref{supp:evol-proof}). 

\begin{figure}[h]
    \centering
    \includegraphics[trim={0.5cm 0 0 0.5cm},clip,width=0.37\textwidth]{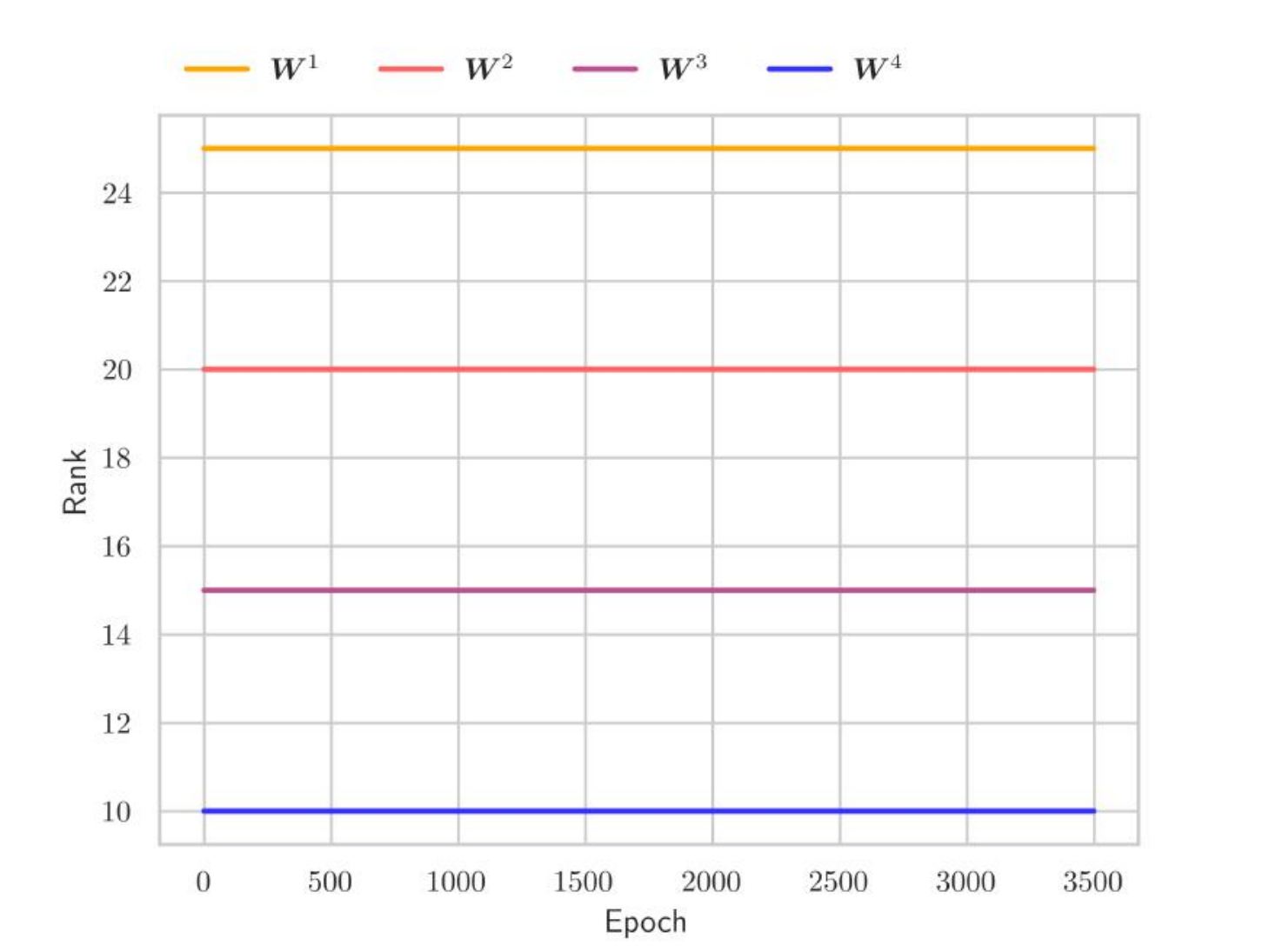}%
    \includegraphics[trim={0.5cm 0 0 0.5cm},clip,width=0.37\textwidth]{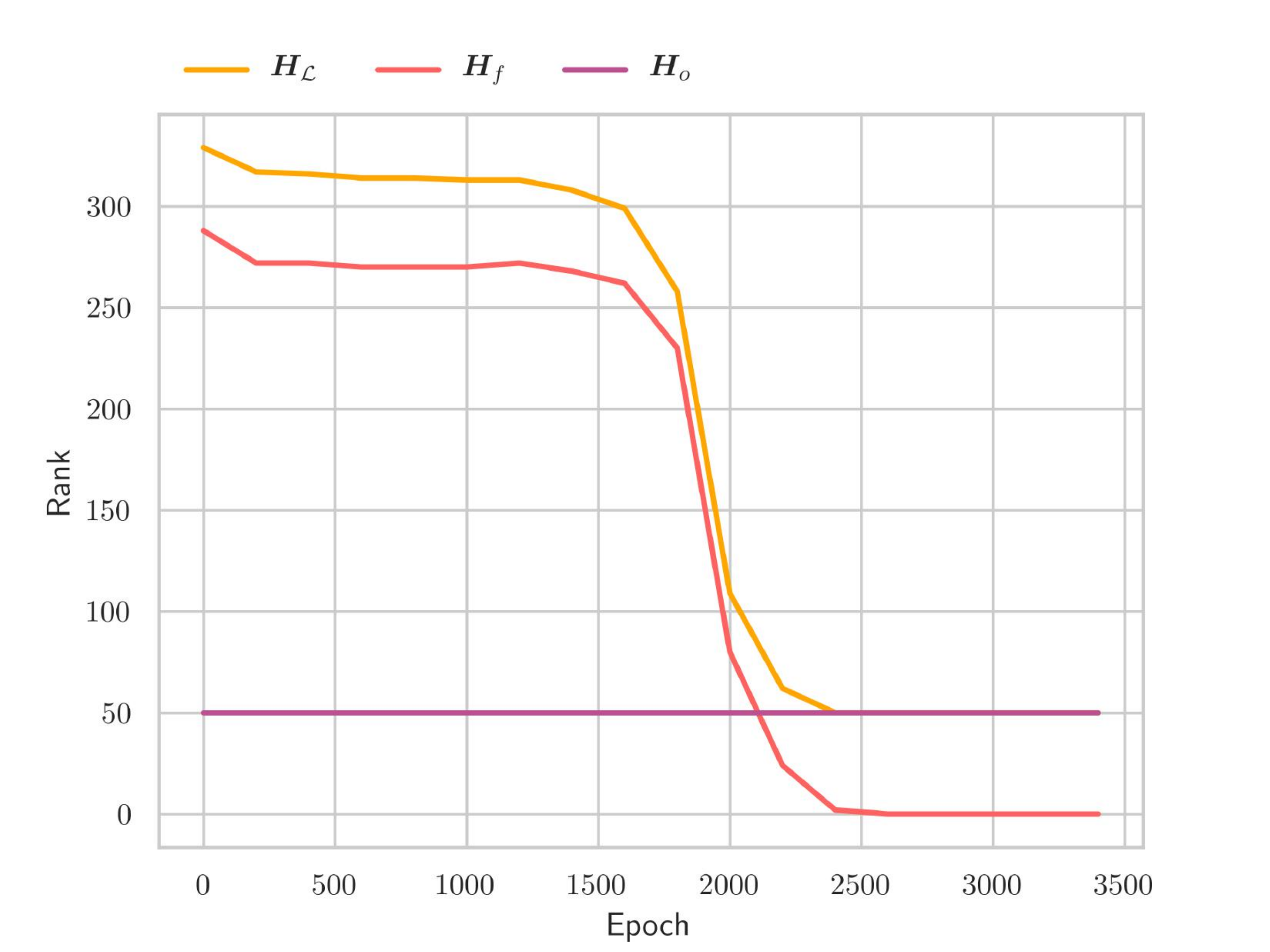}
    \caption{\small{Rank dynamics for a ReLU network with hidden layer sizes $25, 20, 15$, trained on a subset of MNIST. We show the evolution of the rank of the weights (left) with usual Glorot initialization (guaranteeing maximal rank) and the rank of the Hessians as a function of training time (right).}}
    \label{fig:weight-ranks}
\end{figure}

\paragraph{Consequence.} An implication of this result is that \textit{our upper bounds on the Hessian rank remain valid throughout the training.} Even if the rank of the weight matrices were to decrease, say in the rectangular case, the Hessian rank would only decrease and our bounds would still be valid. The other avenue for a decrease in rank is learning-driven, and as $\Om=\E[\pmb \delta_{\x, \y} \, \x^\top] \to 0$, the functional Hessian $\HF\to \mb 0$ (as it has an explicit dependence on the residual $\pmb \delta_{\x, \y}\,$) . So, if the learning architecture is powerful enough, the rank of $\HF$ will approach $0$, as well as leading to a decrease in the rank of $\HL$, and so $\rank(\HL) \to \rank(\HO)$. In other words, the loss Hessian is completely captured through the outer product Hessian at convergence. These observations also extend to the non-linear case, as shown in Fig.~\ref{fig:weight-ranks}.\looseness=-1

\section{Further results}

\subsection{Provable Hessian degeneracy with non-linearities}\label{sec:relu-bound} 

From Section~\ref{sec:empirical-nonlinear}, it is clear that our upper bounds faithfully estimate the numerical rank in the non-linear case, despite the numerous tiny eigenvalues which cause inflated rank measurements. This might raise the question if it is possible to establish, in this non-linear case, that the Hessian has provable degeneracy and not just approximate? The challenge is that here the data distribution manifests additionally via the activations at each layer, thus a theoretical analysis seems intractable without imposing strong assumptions. For a 1-hidden layer network, $F(\x) = \wM{2}\sigma(\wM{1}\,\x)$, we show it is still possible to get a pessimistic yet non-trivial upper bound 
with the following (mild) assumption:\looseness=-1 

\begin{assump}\label{assump:data}
For each active hidden neuron $i$, the weighted input covariance has the same rank as the overall input covariance, i.e.,  $\rank(\E[\alpha_{\x}\,\x \x^\top]) = \rank(\covx) = r$, with $\,\alpha_{\x} = {\sigma^\prime(\x^\top \, \wM{1}_{i\,\bullet})}^2$. \looseness=-1
\end{assump}
\begin{mdframed}[leftmargin=1mm,
    skipabove=1mm, 
    skipbelow=-1mm, 
    backgroundcolor=gray!10,
    linewidth=0pt,
    leftmargin=-1mm,
    rightmargin=-1mm,
    innerleftmargin=2mm,
    innerrightmargin=2mm,
    innertopmargin=1mm,
innerbottommargin=1mm]
\begin{theorem}\label{thm:relu-rank}
Consider a 1-hidden layer network with non-linearity $\sigma$ such that $\sigma(z) = \sigma^{\prime}(z) z$ and let $\widetilde{M}$ be the \# of active hidden neurons (i.e., probability of activation $>0$). Then, under assumption \ref{assump:2} and \ref{assump:data},  rank of $\HO$ is given as,
$\rank(\HO) \leq r\widetilde{M} \,+\, \widetilde{M} K \,-\, \widetilde{M}\,$.
\end{theorem}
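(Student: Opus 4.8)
The plan is to work with the Gauss--Newton form $\HO=\E[\mb J_{\x}^{\top}\mb J_{\x}]$, which is valid here because for the MSE loss $\partial^2\ell_{\x,\y}=\Im_K$; I write $\mb J_{\x}:=\nabla_{\btheta}F(\x)$. Since $\HO\succeq 0$, a parameter direction $\btheta$ lies in $\ker(\HO)$ exactly when $\E[\|\mb J_{\x}\btheta\|^2]=0$, i.e.\ when $\mb J_{\x}\btheta=\mb 0$ for almost every $\x$. Hence $\rank(\HO)=p-\dim\ker(\HO)$ with $p=M_1(d+K)$, and it suffices to exhibit enough independent kernel directions. First I would record the per-sample Jacobian blocks: with $\mb D_{\x}:=\diag(\sigma'(\wM 1\x))$, the rule in Eq.~\eqref{eq:matrix-derivative} gives $\jacobi{F}{\wM 1}=(\wM{2}\mb D_{\x})\kro\x^{\top}$ and $\jacobi{F}{\wM 2}=\Im_K\kro\ab_{\x}^{\top}$, where the homogeneity identity $\sigma(z)=\sigma'(z)z$ yields the crucial simplification $\ab_{\x}=\sigma(\wM 1\x)=\mb D_{\x}\wM 1\x$.

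Next I would peel off the inactive neurons: for such a neuron the relevant diagonal entry of $\mb D_{\x}$ and the activation $(\ab_{\x})_i$ vanish a.s., so its entire first-layer row and second-layer column lie in $\ker(\HO)$, giving $(M_1-\widetilde M)(d+K)$ trivial kernel dimensions. It then remains to produce $\widetilde M(d-r+1)$ further independent kernel directions on the active neurons, which would give $\rank(\HO)\le\widetilde M(d+K)-\widetilde M(d-r+1)=r\widetilde M+\widetilde M K-\widetilde M$. I build these from two families. (a) For each active neuron $i$, perturbing the row $\wM 1_{i\bullet}$ by any $\mb u$ orthogonal to $\mathrm{span}\{\sigma'(\wM 1_{i\bullet}\x)\,\x:\x\in\mathrm{supp}\}$ changes the output by $\wM{2}_{\bullet i}\,\sigma'(\wM 1_{i\bullet}\x)(\mb u^{\top}\x)=\mb 0$ for all $\x$; by Assumption~\ref{assump:data} this weighted-covariance span has dimension $r$, so its complement supplies $d-r$ directions per neuron, i.e.\ $\widetilde M(d-r)$ in all. (b) For each active neuron the rescaling symmetry $\delta_i=(\e_i\e_i^{\top}\wM 1,\,-\wM{2}\e_i\e_i^{\top})$ — scaling row $i$ of $\wM 1$ up and column $i$ of $\wM 2$ down — is in the kernel, since expanding gives $\mb J_{\x}\delta_i=\wM{2}_{\bullet i}\big(\sigma'(\wM 1_{i\bullet}\x)(\wM 1_{i\bullet}\x)-(\ab_{\x})_i\big)=\mb 0$, again by $\sigma(z)=\sigma'(z)z$. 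This contributes the remaining $\widetilde M$ directions.

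The final step is to check that these $\widetilde M(d-r+1)$ directions are linearly independent, and then count. The type-(a) directions occupy disjoint first-layer blocks with zero second-layer part; the type-(b) directions $\delta_i$ have second-layer part $-\wM{2}_{\bullet i}$ supported on distinct columns $i$, so they are mutually independent and independent of (a) whenever $\wM{2}_{\bullet i}\neq\mb 0$. In the degenerate case $\wM{2}_{\bullet i}=\mb 0$, $\delta_i$ reduces to the first-layer row $\wM 1_{i\bullet}$, which is \emph{not} orthogonal to the weighted-covariance span (otherwise $\sigma(\wM 1_{i\bullet}\x)\equiv 0$ and the neuron would be inactive), hence it remains independent of the type-(a) directions in block $i$. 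Substituting the kernel count then gives the stated bound. I expect the main obstacle to be exactly item (b) together with this independence bookkeeping: the homogeneity symmetry is what generates the extra $-\widetilde M$ term — the non-linear analogue of the $-q^2$ in the linear Corollary~\ref{corollary:outer-rank} — and one must argue cleanly that the rescaling directions are genuinely new for \emph{every} active neuron, including those with degenerate outgoing weights. A secondary technical point is the measure-theoretic equivalence $\E[\|\mb J_{\x}\btheta\|^2]=0\iff\mb J_{\x}\btheta=\mb 0$ a.s., and the identity $\rank(\E[\alpha_{\x}\x\x^{\top}])=\dim\mathrm{span}\{\sigma'(\wM 1_{i\bullet}\x)\x\}$, which is precisely what lets Assumption~\ref{assump:data} feed into the count.
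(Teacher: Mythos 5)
Your argument is correct, but it takes a genuinely different route from the paper's proof. The paper (Appendix~\ref{supp:relu-rank}) never touches the null space: it decomposes the network into unit-networks $F_{\theta_i}(\x)=\wM{2}_{\bullet i}\,\sigma({\wM{1}_{i\bullet}}^{\!\top}\x)$, one per hidden neuron, passes to the finite-sample Jacobian $\nabla_{\btheta}F_{\btheta}(\Xm)$, and bounds its rank by subadditivity over the sum $\sum_i\nabla_{\btheta}F_{\theta_i}(\Xm)$; each unit-network Jacobian is factored as a $\Zm$-structured matrix times $\left(\Xm\bm\Lambda^i\kro\Im_K\right)$, so that Lemma~\ref{lemma:kronecker-block} gives its rank \emph{exactly} as $r+K-1$ (Lemma~\ref{lemma:relu-jacobian-1}), with Assumption~\ref{assump:data} entering as right-invertibility of $\Xm\bm\Lambda^i\kro\Im_K$; finally, lower semi-continuity of rank transfers the finite-sample bound to the population $\HO$. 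You instead work directly at the population level and lower-bound $\dim\ker(\HO)$ by exhibiting explicit flat directions: the dead-neuron blocks, the $d-r$ input directions per active neuron annihilated by the weighted covariance (your use of Assumption~\ref{assump:data}), and the $\widetilde M$ rescaling symmetries $\delta_i$, whose membership in the kernel is the same use of homogeneity $\sigma(z)=\sigma'(z)z$ that the paper makes when it rewrites $\sigma(\wM{1}\x)=\bm\Lambda^{\x}\wM{1}\x$. The mechanisms producing the $-\widetilde M$ term differ: in the paper it is the $-\rank(\Cm)\rank(\Dm)$ correction of the $\Zm$-lemma applied to the rank-one pair formed by $\wM{2}_{\bullet i}$ and $\wM{1}_{i\bullet}$, while in yours it is the scaling symmetry of homogeneous units. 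Your route buys an interpretable basis of flat directions (dead neurons, unused input directions, scaling symmetries) and avoids the $N\to\infty$ limiting argument entirely; the paper's route buys an exact per-neuron rank (subadditivity is the only source of slack) and reuses the structural lemma that drives the whole linear analysis. Your independence bookkeeping is also sound, including the degenerate case $\wM{2}_{\bullet i}=\mb 0$, where the activity of neuron $i$ is precisely what prevents row $i$ of $\wM{1}$ from lying in the orthogonal complement used for your type-(a) directions.
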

\end{mdframed}

\textit{Assumption~\ref{assump:data} is rather mild}, i.e., in the finite-sample case, it holds as soon as the \# of samples $N> 2 d$, for typical initialization of parameter weights. Besides, the class of non-linearities which satisfy the above-mentioned condition includes e.g., \textit{ReLU, Leaky-ReLU.} 
Further, this result extends to $\HL$: \looseness=-1
\begin{corollary}\label{cor:relu-rank}
At convergence to the minimum, the rank of the loss Hessian $\HL$, for the same setup as Theorem~\ref{thm:relu-rank}, is upper bounded by: $\rank(\HL) \leq r\widetilde{M} \,+\, \widetilde{M} K \,-\, \widetilde{M}\,$.
\vspace{-0.5mm}
\end{corollary}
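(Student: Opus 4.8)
The plan is to reduce the corollary to Theorem~\ref{thm:relu-rank} by showing that, at the minimum, the functional Hessian contributes nothing to the loss Hessian. Recall the Gauss--Newton splitting $\HL = \HO + \HF$, where $\HF = \E_{p_{\x,\y}}\big[\sum_{c=1}^{K} [\partial\ell_{\x,\y}]_c \, \nabla^2_{\btheta} F_c(\x)\big]$ is \emph{linear} in the per-sample residual, since $[\partial\ell_{\x,\y}]_c = (\pmb\delta_{\x,\y})_c$ with $\pmb\delta_{\x,\y} = \hat\y - \y$. First I would argue that for a sufficiently expressive $1$-hidden layer network, convergence to the global minimum drives the training loss to zero, so that the residual entering $\HF$ satisfies $\pmb\delta_{\x,\y} = \mb 0$. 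Substituting this into the expression for $\HF$ makes every summand vanish, whence $\HF = \mb 0$ at the minimum, \emph{independently} of the (possibly non-zero) second derivatives $\nabla^2_{\btheta}F_c$.

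Given $\HF = \mb 0$, the loss Hessian collapses to its outer-product part, $\HL = \HO$, so that $\rank(\HL) = \rank(\HO)$. I would then simply invoke Theorem~\ref{thm:relu-rank}: under Assumptions~\ref{assump:2} and~\ref{assump:data}, and for a nonlinearity obeying $\sigma(z) = \sigma'(z)\,z$, the outer-product rank satisfies $\rank(\HO) \le r\widetilde{M} + \widetilde{M}K - \widetilde{M}$, which yields the claimed bound for $\HL$. One point to verify is that the hypotheses of Theorem~\ref{thm:relu-rank} still hold at the minimum rather than only at initialization. Since the statement is an inequality, it suffices that the maximal-rank condition either persists --- as suggested by Lemma~\ref{lemma:evolve} --- or degrades: any loss of rank in the weights only lowers $\rank(\HO)$ and hence keeps the upper bound valid.

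The main obstacle is justifying the vanishing of the residual, i.e.\ that ``convergence to the minimum'' really forces $\pmb\delta_{\x,\y} = \mb 0$ rather than merely a stationary point. The cleanest route is to take the minimum to be a global, zero-loss interpolating solution, which is precisely the regime signalled by the phrase ``if the learning architecture is powerful enough'' in Section~\ref{sec:rank-evolution}; then linearity of $\HF$ in $\pmb\delta_{\x,\y}$ gives $\HF = \mb 0$ immediately. An alternative, assumption-free route would exploit the stationarity conditions directly, tying the residual-weighted expectations in the gradient to those in $\HF$ and using Assumption~\ref{assump:2} (full-rank factors) to annihilate the functional-Hessian weighting. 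This latter argument is more delicate for ReLU, because the relevant weighting is $\E[(\pmb\delta_{\x,\y})_c\,\sigma'((\wM1\x)_j)\,\x_k]$ rather than a clean $\Om$, so for a short rigorous proof I would fall back on the zero-loss interpretation, which is exactly the sense in which ``the loss Hessian is completely captured through the outer product Hessian at convergence'' was already stated.
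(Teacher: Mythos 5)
Your proposal is correct and matches the paper's (largely implicit) argument: at the minimum the residual vanishes, so the functional Hessian $\HF$, being linear in the per-sample residual, is identically zero, whence $\HL = \HO$ and the bound of Theorem~\ref{thm:relu-rank} carries over verbatim. Your added care about the ReLU-specific weighting in $\HF$ and about falling back on the zero-loss interpretation of ``convergence to the minimum'' is exactly the sense intended in Section~\ref{sec:rank-evolution}.
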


Contrast this with the view from~\cite{NEURIPS2018_18bb68e2}, who claim the spectrum to be generically non-degenerate. Or, unlike~\cite{poggio2018theory}, we establish this without any assumptions on a particular kind of overparameterization.

\begin{faact}
For multiple hidden-layers, the following generalization of Theorem~\ref{thm:relu-rank} holds empirically, $\rank(\HO) \leq p - M_1 (d-r) - \sum_{i=1}^{L-1} M_i\,$, where $p$ is the \# of parameters and assuming no dead neurons. \looseness=-1
\end{faact}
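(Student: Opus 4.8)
The plan is to turn the claimed upper bound into a lower bound on the nullity of $\HO$. Because the loss is the MSE, $\partial^2 \ell_{\x,\y} = \mathbf I_K$ and the outer-product Hessian collapses to the Gram matrix $\HO = \E\big[\nabla_\btheta F(\x)^\top\,\nabla_\btheta F(\x)\big]$, which is positive semidefinite. Hence a direction $\Delta\btheta$ satisfies $\HO\,\Delta\btheta = 0$ iff $\Delta\btheta^\top\HO\,\Delta\btheta = \E\big[\|\nabla_\btheta F(\x)\,\Delta\btheta\|^2\big] = 0$, i.e.\ iff $\nabla_\btheta F(\x)\,\Delta\btheta = 0$ for almost every $\x$ in the support. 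Thus $\rank(\HO) = p - \dim\ker(\HO)$, and it suffices to exhibit $M_1(d-r) + \sum_{i=1}^{L-1} M_i$ linearly independent directions annihilated by the per-sample Jacobian for all $\x$. Here I keep the un-reduced first layer $\wM{1} \in \Re^{M_1 \times d}$ with $\rank(\covx) = r \le d$, so that the input-deficiency term is visible.

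Next I would recall the gated-linear form of the Jacobian blocks, $\jacobi{F(\x)}{\wM{l}} = \mb B^l_\x \otimes (\z^{l-1}_\x)^\top$, where $\z^{l-1}_\x$ is the layer-$(l-1)$ activation and $\mb B^l_\x$ is the product of the upper weight matrices interleaved with the diagonal gate matrices $\mb D^j_\x = \diag(\sigma^\prime(\mb h^j_\x))$. I then build two families of null directions. (i) \emph{Input-deficiency directions}: for each row of $\wM{1}$ choose a perturbation lying in $\ker(\covx)$; since $\x \in \range(\covx)$ almost surely, the first pre-activation is unperturbed and $F$ does not move, giving $M_1(d-r)$ directions. (ii) \emph{Neuron-rescaling directions}: for every hidden neuron $(l,i)$ with $l \in [1,L-1]$, set $\Delta\wM{l}_{i\bullet} = \wM{l}_{i\bullet}$ and $\Delta\wM{l+1}_{\bullet i} = -\wM{l+1}_{\bullet i}$, all other blocks zero. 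Using the hypothesis $\sigma(z) = \sigma^\prime(z)\,z$, the first-order change of neuron $i$'s activation is exactly $\delta z^l_i = \sigma^\prime(h^l_i)\,h^l_i = z^l_i$, and its effect on $\mb h^{l+1}$, namely $\wM{l+1}_{\bullet i}\,\delta z^l_i$, is cancelled precisely by the $-\wM{l+1}_{\bullet i}$ perturbation; every downstream quantity is therefore unchanged. Crucially this cancellation is \emph{pointwise in} $\x$ and oblivious to the gate pattern $\mb D^\bullet_\x$, so each such direction lies in $\ker(\HO)$, contributing $\sum_{i=1}^{L-1} M_i$ further directions.

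The decisive step is the linear independence of all $M_1(d-r) + \sum_{i=1}^{L-1} M_i$ directions, which I would establish by peeling layer by layer. In a hypothetical vanishing combination, the $\wM{1}$-block receives only the input-deficiency perturbations (which lie in $\ker(\covx)$) and the layer-$1$ rescalings (whose row-$i$ entry is $\wM{1}_{i\bullet}$, with non-zero projection onto $\range(\covx)$ precisely because neuron $i$ is not dead). Orthogonality of $\range(\covx)$ and $\ker(\covx)$ forces all layer-$1$ and all input-deficiency coefficients to vanish. Once the layer-$1$ rescaling coefficients are gone, the $\wM{2}$-block retains only the layer-$2$ row contributions $\wM{2}_{i\bullet}$, which are non-zero under Assumption~\ref{assump:2} together with the no-dead-neuron hypothesis, forcing the layer-$2$ coefficients to vanish; iterating up to layer $L-1$ exhausts every coefficient. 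This yields $\dim\ker(\HO) \ge M_1(d-r) + \sum_{i=1}^{L-1} M_i$, hence the claimed bound. As a sanity check, for $L=2$ the count reduces to $r\widetilde M + \widetilde M K - \widetilde M$, recovering Theorem~\ref{thm:relu-rank}.

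The main obstacle I anticipate is exactly this independence argument: guaranteeing that every relevant weight row is non-zero (and, in the first layer, has a component outside $\ker(\covx)$) so that the peeling never stalls, and checking that the shared $\wM{l+1}$-block — carrying row contributions from layer $l+1$ and column contributions from layer $l$ — cannot produce a spurious dependency once the lower-layer coefficients have been eliminated. A secondary point is that the construction only certifies membership in $\ker(\HO)$, so it delivers the stated inequality rather than the empirically observed equality; proving tightness would additionally require showing that these directions span the entire kernel.
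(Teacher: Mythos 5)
Your proposal cannot be checked against a proof in the paper, because the paper does not contain one: this statement is deliberately labelled a Fact, i.e.\ an empirical observation, and the only rigorous result in its vicinity is Theorem~\ref{thm:relu-rank}, which covers the single-hidden-layer case by an entirely different method (decomposing $F$ into unit-networks, one per hidden neuron, bounding each unit-network Jacobian's rank by $r+K-1$ via Lemma~\ref{lemma:kronecker-block}, and invoking subadditivity of rank). Your argument is a genuine, and essentially correct, proof of the multi-layer inequality itself: for MSE the outer-product Hessian is $\E[\nabla_{\btheta}F^\top\nabla_{\btheta}F]\succeq 0$, so its kernel consists exactly of the directions annihilated by the per-sample Jacobian almost surely; your two families --- $\ker(\covx)$-perturbations of first-layer rows, and the first-order rescaling symmetries $(\Delta\wM{l}_{i\bullet},\,\Delta\wM{l+1}_{\bullet i})=(\wM{l}_{i\bullet},\,-\wM{l+1}_{\bullet i})$ of a homogeneous activation --- do lie in this kernel pointwise in $\x$, and your bottom-up peeling does establish their independence, since the no-dead-neuron hypothesis guarantees $\wM{l}_{i\bullet}\neq 0$ for $l\geq 2$ and, for $l=1$, that $\wM{1}_{i\bullet}$ has a non-zero component in $\range(\covx)$ (a row lying in $\ker(\covx)$ would make that neuron's pre-activation vanish almost surely, hence dead). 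Note also that the peeling never needs non-zero \emph{columns} of $\wM{l+1}$: by the time a column contribution from a layer-$l$ rescaling could matter, its coefficient has already been forced to zero in the $\wM{l}$ block, so the obstacle you anticipate does not actually arise. What your route buys: a proof, valid for arbitrary depth, of a bound the authors only verify numerically, together with a structural explanation of the degeneracy (network symmetries plus unseen input directions) rather than rank bookkeeping. What the paper's route buys in the $L=2$ case: per-neuron granularity --- their bound $r\widetilde{M}+\widetilde{M}K-\widetilde{M}$ counts only active neurons, whereas your construction as written needs all neurons active, consistent with the Fact's hypothesis. The only points to tighten are cosmetic: the non-differentiability of ReLU at $0$ (a measure-zero issue the paper also ignores), and stating explicitly that the Fact inherits the hypothesis $\sigma(z)=\sigma^{\prime}(z)\,z$ from Theorem~\ref{thm:relu-rank}.
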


While these bounds are likely to be quite loose as noticeable from Section~\ref{sec:empirical-nonlinear}, \textit{but more importantly they help establish provable degeneracy of the Hessian at the minimum, with the number of `absolutely-flat' directions (i.e., those in the Hessian null space) in proportion to the sum of hidden-layer sizes.}\looseness=-1

\vspace{-2mm}
\subsection{Effect of bias on the rank of Hessian}\label{sec:bias} 
Now, we see how the Hessian rank changes when bias is enabled throughout a deep linear network.
We make the following simplifying assumption, which is actually a standard convention in practice. \looseness=-1 
\begin{assump}\label{assump:zero-mean}
The input data has zero mean, i.e., $\x \sim p_{\x}$ is such that $\E[\x]=0$. 
\end{assump}

\begin{mdframed}[leftmargin=1mm,
    skipabove=1mm, 
    skipbelow=-1mm, 
    backgroundcolor=gray!10,
    linewidth=0pt,
    leftmargin=-1mm,
    rightmargin=-1mm,
    innerleftmargin=2mm,
    innerrightmargin=2mm,
    innertopmargin=1mm,
innerbottommargin=1mm]
\begin{theorem}\label{theorem:bias}
Under the assumption \ref{assump:2} and \ref{assump:zero-mean},  for a deep linear network with bias, the rank of $\HO$ is upper bounded as,
$\rank(\HO) \leq q(r+K-q) + K\,$, where $q:= \min(r, M_1, \cdots, M_{L-1}, K)$.

\end{theorem}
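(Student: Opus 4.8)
The plan is to extend the outer-product decomposition of Proposition~\ref{eq:outer-decomp} to the bias-augmented network and then charge the extra rank coming from the bias parameters to a single $K$-dimensional subspace. Writing the forward activations as $\z^{k-1}(\x) = \wM{k-1:1}\,\x + \mathbf{c}^{k-1}$, where $\mathbf{c}^{k-1} = \sum_{j<k}\wM{k-1:j+1}\,\bias{j}$ is the ($\x$-independent) accumulated bias, the per-layer Jacobian blocks of $F$ become $\jacobi{F}{\wM{k}} = \wM{L:k+1}\kro (\z^{k-1})^\top$ for the weights and $\jacobi{F}{\bias{k}} = \wM{L:k+1}$ for the biases. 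Since the MSE loss gives $\partial^2\ell = \Im_K$, we have $\HO = \E_\x[\nabla_\btheta F^\top\,\nabla_\btheta F]$, and I would analyze its range as the span of these Jacobian columns after factoring out the core second-moment matrix.

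First I would record a factorization $\HO = \Am\,\Bm\,\Am^\top$ in the style of Proposition~\ref{eq:outer-decomp}, where now the core $\Bm$ is built from the augmented second moment $\E[\tilde\x\,\tilde\x^\top]$ with $\tilde\x = [\x;\,1]$. This is where Assumption~\ref{assump:zero-mean} enters crucially: zero mean makes this matrix block diagonal, $\diag(\covx,\,1)$, so that the constant (bias) channel is orthogonal to the $\x$-channel and the accumulated shifts $\mathbf{c}^{k-1}$ do not couple the two. This orthogonal splitting lets me write $\Am = [\,\Am_{\mathrm{wt}}\;\;\Am_{\mathrm{bias}}\,]$, where $\Am_{\mathrm{wt}}$ is exactly the matrix $\Am_o$ of Proposition~\ref{eq:outer-decomp} (the affine shift in $\z^{k-1}$ being absorbed into the bias channel), and $\Am_{\mathrm{bias}}$ collects the blocks $\wM{L:k+1}$.

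The conclusion then follows from subadditivity of rank, $\rank(\HO) \le \rank(\Am_{\mathrm{wt}}\,\Bm_{\mathrm{wt}}\,\Am_{\mathrm{wt}}^\top) + \rank(\Am_{\mathrm{bias}}\,\Bm_{\mathrm{bias}}\,\Am_{\mathrm{bias}}^\top)$. The weight term is precisely the no-bias outer-product Hessian, so Corollary~\ref{corollary:outer-rank} (through Theorem~\ref{theorem:ub-outer} and Lemma~\ref{lemma:kronecker-block}) contributes $q(r+K-q)$ with $q=\min(r,M_1,\dots,M_{L-1},K)$. The bias term is governed by the combined column space $\sum_{k=1}^{L}\range(\wM{L:k+1})\subseteq\Re^K$, which has dimension at most $K$; equivalently, every bias block is a constant ($\x$-independent) map into $\Re^K$, so jointly they can add at most $K$ independent directions. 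Summing yields $\rank(\HO)\le q(r+K-q)+K$.

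The main obstacle is the bookkeeping of the middle step: showing cleanly that introducing biases does not inflate the weight part beyond its no-bias value $q(r+K-q)$. The danger is that the affine shifts $\mathbf{c}^{k-1}$ inside $\z^{k-1}$ could enlarge the effective feature space; the resolution is that, modulo the constant function already supplied by the bias channel, $\z^{k-1}$ reduces to the purely linear map $\wM{k-1:1}\,\x$, so the weight part collapses exactly onto $\Am_o$. Assumption~\ref{assump:zero-mean} is what guarantees this reduction is an orthogonal (hence rank-preserving) decomposition rather than an entangled one, and it is the only place the zero-mean hypothesis is genuinely required.
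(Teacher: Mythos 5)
Your proposal is correct and takes essentially the same route as the paper: both decompose $\HO$ (using Assumption~\ref{assump:zero-mean} to kill all weight--bias cross terms) into the bias-free outer-product Hessian, whose rank is $q(r+K-q)$ by Corollary~\ref{corollary:outer-rank}, plus a bias-induced part that factors through a matrix with only $K$ columns, and conclude by subadditivity of rank. Your homogeneous-coordinate packaging via $\tilde\x=[\x;1]$ and $\E[\tilde\x\tilde\x^\top]=\diag(\covx,1)$ is merely a tidier organization of the paper's block-by-block computation --- your $\Am_{\mathrm{bias}}$ (constant-channel columns, containing the accumulated shifts $\wM{k+1:L}\kro\mathbf{c}^{k-1}$ and the blocks $\wM{k+1:L}$) is exactly the paper's matrix $\Bm_o$, whose rank the paper additionally pins down as equal to $K$ rather than only $\leq K$.
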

\end{mdframed}

The proof can be found in the Appendix~\ref{supp:bias}. Empirically, we do not require the input to be mean zero and our upper bound actually holds with equality. Also, we list rank formulas for functional Hessian and the overall loss Hessian in Appendix~\ref{supp:bias}, in the non-bottleneck case. E.g., 

\begin{faact}
$
\rank\left(\HL\right) =
     2q^\prime M \,+\, {q^\prime}(r + K) \,-\, L {q^\prime}^2 \,+\, L{q^\prime} 
\,$, where $q^\prime := \min(r+1, M_1, \cdots, M_{L-1}, K)$.
\end{faact}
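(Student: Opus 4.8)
The plan is to reduce the bias case to the bias-free analysis of Sections~\ref{sec:hessianstruct}--\ref{sec:rank} via an input-augmentation trick, and then re-run the rank bookkeeping while tracking the constrained parameters. Under Assumption~\ref{assump:zero-mean} I would append a constant channel to the input, $\tilde{\x} = (\x^\top, 1)^\top$, and fold each bias into an augmented weight matrix, so that a deep linear network with bias becomes an ordinary (bias-free) deep linear network acting on $\tilde{\x}$. The hidden-layer weights become $\widetilde{\W}^l = \left(\begin{smallmatrix} \wM{l} & \bias{l}\\ \mathbf 0^\top & 1\end{smallmatrix}\right)$ of size $(M_l+1)\times(M_{l-1}+1)$ for $l \le L-1$, carrying the constant channel forward through the fixed last row $(\mathbf 0^\top, 1)$, while the output layer is the unconstrained $\widetilde{\W}^L = (\wM{L} \mid \bias{L})$. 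Crucially, zero mean makes the augmented covariance block diagonal, $\E[\tilde{\x}\tilde{\x}^\top] = \diag(\covx, 1)$, so its rank is exactly $r+1$; this is the source of the $r \mapsto r+1$ upgrade embedded in $q^\prime$.

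Next I would lift the structural results to the augmented network. The loss Hessian of the bias model is exactly the principal submatrix of the augmented-network loss Hessian obtained by deleting the coordinates of the fixed last rows of $\widetilde{\W}^1,\dots,\widetilde{\W}^{L-1}$, which are not trainable; fixing those rows to their constant values reproduces the bias model exactly, so the second derivatives agree on the free coordinates. I would therefore re-derive the decomposition $\HO = \Am_o \Bm_o \Am_o^\top$ of Proposition~\ref{eq:outer-decomp} and the column-block form of $\HFhat$ from Eqs.~\eqref{eq:func-hess-eqn1}--\eqref{eq:func-hess-eqn2} for the augmented chains, and apply Theorem~\ref{theorem:ub-outer}, Theorem~\ref{theorem:func-hess-cols} and the Kronecker-block Lemma~\ref{lemma:kronecker-block} with $\covx$ replaced by $\diag(\covx, 1)$. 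The augmentation raises the input rank to $r+1$, but the fixed last row of each hidden $\widetilde{\W}^l$ removes one free output direction per hidden layer, so the effective minimal dimension is $q^\prime = \min(r+1, M_1, \dots, M_{L-1}, K)$ --- note that the hidden widths enter as $M_i$, not $M_i+1$, precisely because the constant channel is not a free parameter.

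Finally I would assemble the two pieces exactly as in the bias-free case. Summing $\rank(\HO)$ (now the bias bound of Theorem~\ref{theorem:bias}) with the layerwise column-block ranks of $\HF$ from Corollary~\ref{corollary:functional-rank} and subtracting the shared intersection --- whose dimension remains $q^{\prime 2}$, as in the argument behind Fact~\ref{eq:rank-formula} --- collapses the count to $2q^\prime M + q^\prime(r+K) - L q^{\prime 2} + L q^\prime$; the additive $+Lq^\prime$ reflects the $L$ bias vectors, each surviving as a $q^\prime$-dimensional contribution after the block-column overlaps are removed. The main obstacle is the constrained-parameter bookkeeping: I must show that restricting the Kronecker-block rank identities (which rest on oblique projectors and generalized inverses, Lemma~\ref{lemma:kronecker-block}) to the free-parameter subspace deletes exactly the fixed-row contributions and nothing more, so that the net corrections are precisely $q \mapsto q^\prime$ together with $+Lq^\prime$. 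Verifying that the $\HO$/$\HF$ column-space intersection is still of dimension $q^{\prime 2}$ once the constant channel is accounted for is the delicate step, and is what keeps the exact equality at the level of an empirically confirmed \emph{Fact} while the inequality version follows rigorously.
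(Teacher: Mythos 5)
First, a framing point: the paper never proves this statement. It is stated as a \emph{Fact}, and Appendix~\ref{formulas_bias} introduces it explicitly as one of the formulas that ``seem to hold \emph{empirically} for the non-bottleneck case''. The only rigorously established result with bias is the upper bound on $\rank(\HO)$ of Theorem~\ref{theorem:bias}, whose proof (Appendix~\ref{supp:bias-outer}) does not use your augmentation trick at all: it expands the Jacobian with explicit bias blocks, uses Assumption~\ref{assump:zero-mean} to split each block of $\HO$ into the bias-free expression plus new bias-dependent terms, bounds the bias part by $K$ via Lemma~\ref{lemma:block-shared}, and finishes with subadditivity of rank. So your proposal attempts strictly more than the paper establishes, and by a different reduction.

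Second, the gaps in your argument are real and fatal to it as a proof. The central one is the frozen-row bookkeeping you yourself flag: all the rank machinery you invoke (Theorem~\ref{theorem:ub-outer}, Theorem~\ref{theorem:func-hess-cols}, Fact~\ref{eq:rank-formula}, Lemma~\ref{lemma:kronecker-block}) applies to the Hessian of a network whose weight entries are all free, whereas the bias-network Hessian is only a principal submatrix of the augmented-network Hessian (the coordinates of the frozen rows $(\mathbf 0^\top,1)$ deleted). The rank of a principal submatrix is not determined by the rank of the full matrix, so none of those results transfer as stated. Quantitatively: applying Fact~\ref{eq:rank-formula} naively to the augmented network, with $\tilde r = r+1$, $\tilde M = M + L - 1$, and $\tilde q = q^\prime$ in the non-bottleneck regime, gives $2q^\prime M + q^\prime(r+K) - L{q^\prime}^2 + (2L-1)q^\prime$, which exceeds the target by exactly $(L-1)q^\prime$ --- one $q^\prime$ per frozen hidden row. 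Showing that each deletion removes exactly $q^\prime$ of rank is precisely the content of the statement, and your proposal asserts this (``deletes exactly the fixed-row contributions and nothing more'') rather than proves it. Your fallback claim that at least ``the inequality version follows rigorously'' is also not right in the stated form: the rigorous chain is $\rank(\HL^{\mathrm{bias}}) \le \rank(\HL^{\mathrm{aug}}) \le$ the bound from Corollary~\ref{corollary:outer-rank} plus Corollary~\ref{corollary:functional-rank} applied to the augmented network, and that right-hand side equals $2q^\prime M + q^\prime(r+K) - L{q^\prime}^2 + {q^\prime}^2 + (2L-1)q^\prime$, strictly larger than the Fact's expression. Finally, the exact equality would in any case rest on the bias-free equality Fact~\ref{eq:rank-formula}, which is itself only empirical in the paper, so no chain of the paper's theorems can yield it. Your augmentation reduction and the observation $\E[\tilde\x\tilde\x^\top]=\diag(\covx,1)$ are a sensible heuristic for where the $r \mapsto r+1$ upgrade comes from, but as a proof the proposal does not close.
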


Here as well, rank deficiency has a cleaner interpretation of being equal to the \# of parameters in a hypothetical network with bias enabled, albeit with the minimum dimension $q^\prime$ (that reflects the homogeneous coordinate at input) subtracted:
\[\quad\rankdef(\HL) = \sum\limits_{i=0}^{L-1} (M_i + 1 \,- \,q^\prime)(M_{i+1}\,-\,q^\prime)\,.\]

\paragraph{Empirical verification.} We briefly discuss some empirical experiments that verify the accuracy of our Hessian rank formulas in case of bias. Fig.~\ref{fig:bias-a1} illustrates this for network architectures of arbitrary depth using the \textsc{CIFAR10} dataset and MSE loss. Thus, it demonstrates that our rank formulas --- also in the case of bias --- exactly predict the empirically observed Hessian rank. Further results, e.g., across arbitrary width as well as number of samples are located in the Appendix~\ref{sec:app-bias-exp}.

\begin{figure}[!htb]
    \centering
    \begin{subfigure}[t]{0.4\textwidth}
    \includegraphics[width=\textwidth]{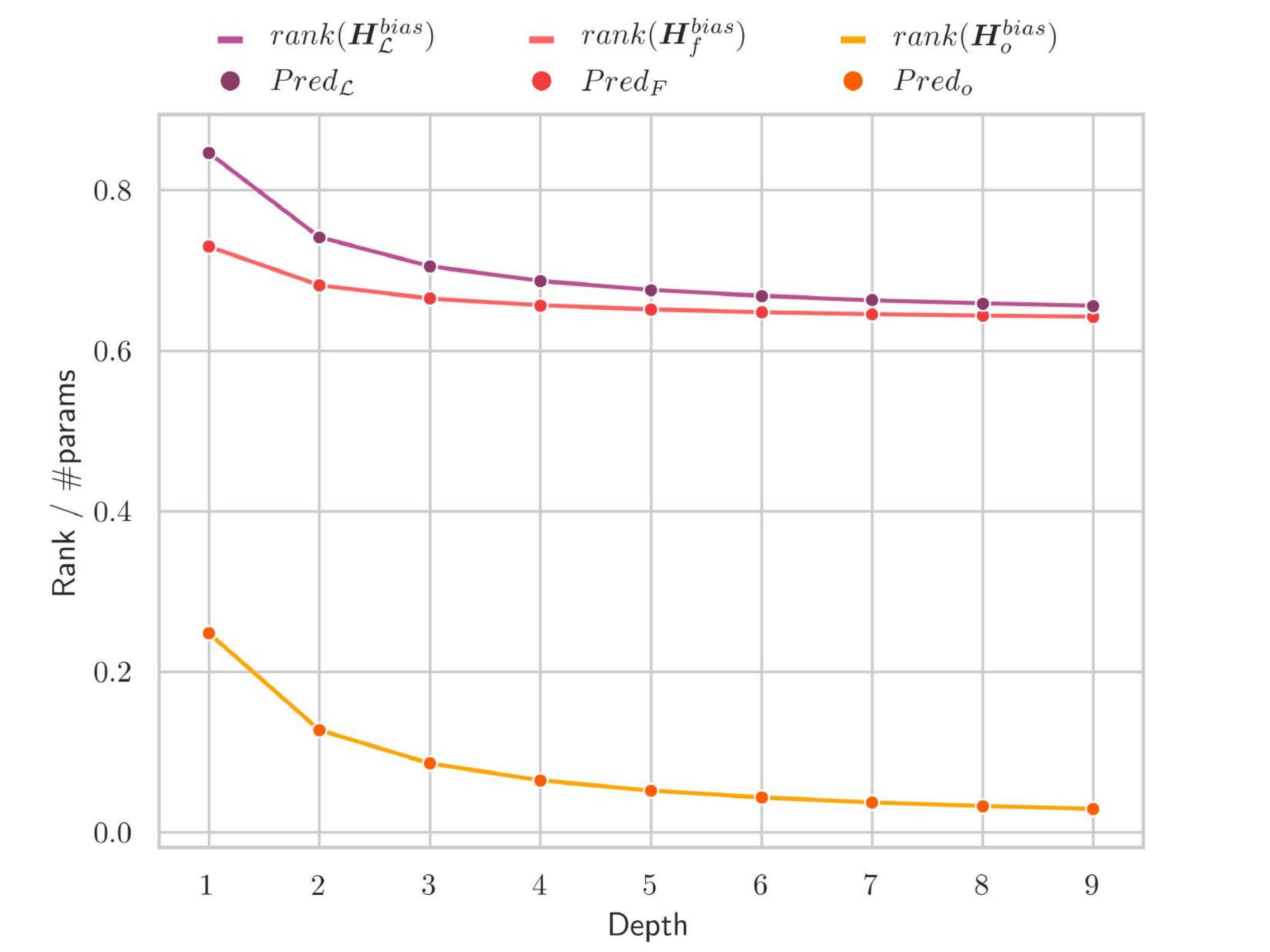}
    \caption{$\effparams$ vs depth $L$}
    \label{fig:bias-a1}
    \end{subfigure}
    \begin{subfigure}[t]{0.4\textwidth}
    \includegraphics[width=\textwidth]{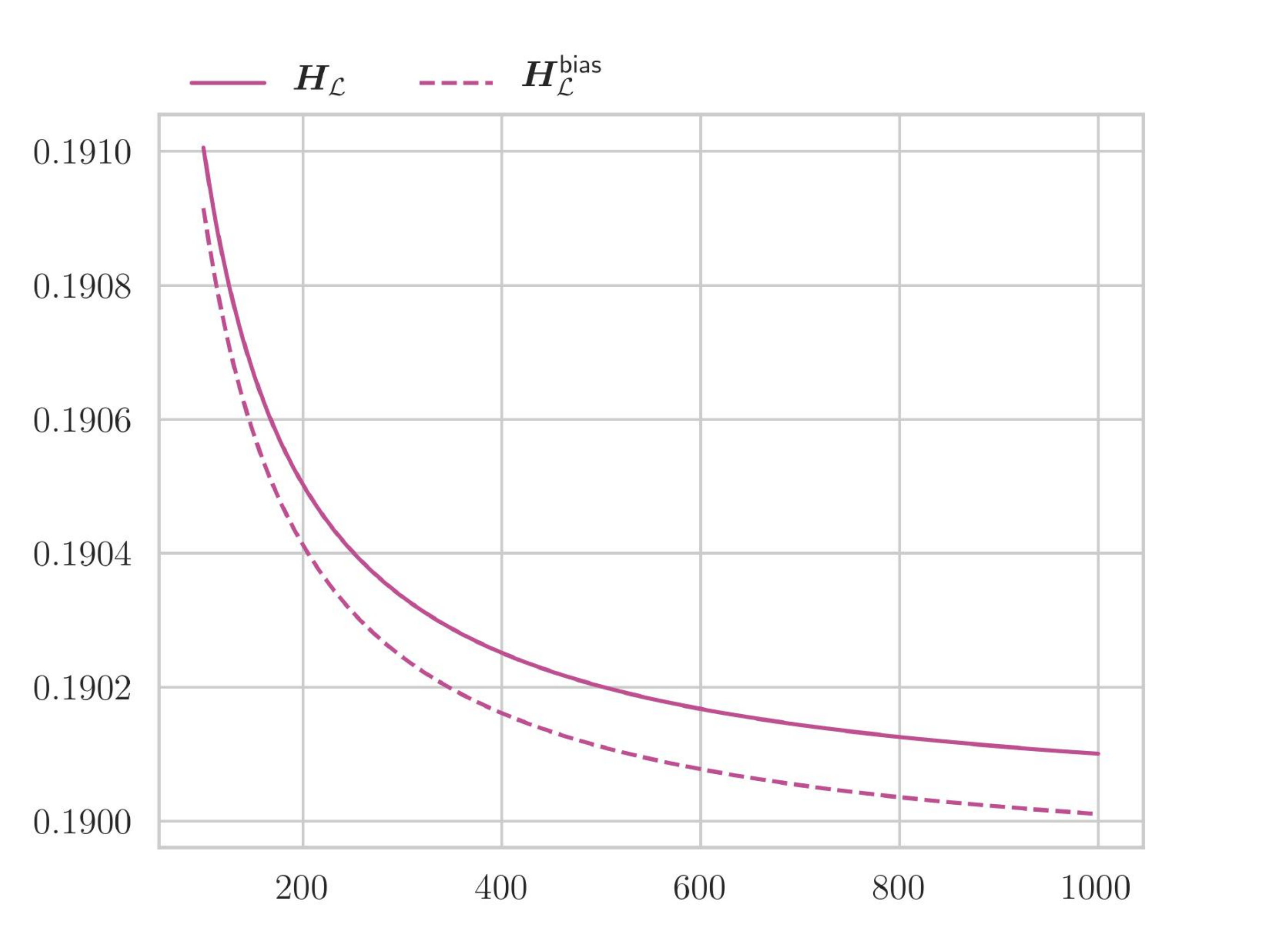}
    \caption{Effect of bias on $\effparams$ for varying depth}
    \label{fig:bias-a2}
    \end{subfigure}
    \caption{The fraction of effective parameters, i.e., \textbf{rank/\#params} as a function of depth $L$ for MSE loss on down-scaled \textsc{CIFAR10}, \textit{using bias}. For the left Fig.~\ref{fig:bias-a1}, we use network architectures with $L$ hidden layers each of width $M=25$. In the case of Fig.~\ref{fig:bias-a2} on the right, we simulating this ratio over depth for $\HL, \HLsup{\text{bias}}$ --- in other words, with bias disabled and bias enabled. For this, we use $M=1000$, $d=r=784$ and $K=100$.}
    \label{fig:bias_formulas}
\end{figure}

Next, in Fig.~\ref{fig:bias-a2} we showcase the resulting effect of enabling bias in the network on the Hessian rank, by simulating the ratio $\frac{\text{rank}}{\# params}$ across increasing depth, for the loss Hessian $\HL$ with and without bias. We find that in both cases the $\frac{\text{rank}}{\# params}$ curve saturates to a small threshold. But interestingly, we see that enabling bias further results in a decrease in this ratio.

\section{Conclusion}

\textbf{Summary.$\quad$} Our paper provides a precise understanding of how the neural network structure constrains the Hessian range and the resulting rank deficiency. In contrast to the number of parameters which are proportional to layer-widths squared, we obtain that rank is proportional to layer-width. The proof strategy relies on bounding the rank of the two parts of the Hessian separately, i.e., the outer-product Hessian $\HO$ and the functional Hessian $\HF$, both of which are replete with the special $\Zm$-like structure. The analysis also reveals several striking properties of the Hessian, such as surprisingly small overlap in the column spaces of $\HO$ and $\HF$, and independence of the layer-wise column blocks in $\HF$.  While our results were derived assuming linear activations, we demonstrate that, even with non-linearities, our formulas faithfully capture the numerical rank.  All in all, our work delivers important insights into the nature and degree of parameterization contained in a neural network as a result of its compositional structure.

\textbf{Discussion.$\quad$} Our results merit discussion on some of the fundamental aspects of deep learning:

\textit{(i) Overparameterization:} Modern DNNs, with billions of parameters, are in stark contrast to the traditional statistical viewpoint of having \# of parameters approximately equal to the \# of samples. While several works have argued for measuring model complexity instead through weight norms ~\cite{pmlr-v40-Neyshabur15}, margins~\cite{bartlett2017spectrallynormalized}, compressibility~\cite{arora2018stronger}, yet it remains hard to get an interpretable ballpark on the model complexity of neural networks. Since rank intuitively captures the notion of effective parameters, it could be a \textit{possible alternative to benchmark overparameterization}, e.g. for double descent~\cite{Belkin2019ReconcilingMM}. 

\textit{(ii) Flatness:}
A growing number of works~\cite{keskar2016large,jastrzebski2018finding,baldassi2020shaping} correlate the choice of regularizers, optimizers, or hyperparameters, with the additional flatness brought about by them at the minimum. However, the significant rank degeneracy of the Hessian, which we have provably established, also points to \textit{another source of flatness} --- that exists as a virtue of the compositional model structure ---from the initialization itself. Thus, a prospective avenue of future work would be to compare different architectures based on this inherent kind of flatness.

\textit{(iii) Generalization:} An interesting observation available from our work is that factors such as width, depth, enabling bias --- commonly observed to improve generalization ---\textit{ also result in decreasing the rank/\# parameters ratio}, see Fig.~\ref{fig:cifar10-a4},~\ref{fig:cifar10-a5},~\ref{fig:bias-a2}. In a similar vein, recent work of~\cite{seroussi2021lower} has provided a lower bound to the generalization error of statistical estimators in terms of the rank of the Fisher (which is intimately related to the Hessian) divided by \# of parameters. Practically, one could use a further relaxation of rank as nuclear norm normalized by the spectral norm, in scenarios with spurious rank inflation. Overall, this suggests the relevance of studying the link between  rank and generalization. \looseness=-1

Besides the above aspects, there are many other interesting questions that naturally arise. These include, to list a few: (a) finding the Hessian rank for convolutional networks, (b) using the rank formulae to build informed models of the Hessian spectrum~\cite{pmlr-v70-pennington17a,granziol2020towards}, (c) obtaining an equality on the bounds --- which is what we observe empirically at initialization, (d) better understanding the interaction between $\HO$ and $\HF$. To conclude, by providing fundamental insights into the range of the Hessian map, our work paves the way to exciting avenues for future research.

\begin{ack}
We would like to thank Nicol\`{o} Ruggeri for reviewing a first draft of the paper. Further, we would like to thank the members of DA lab and Bernhard Sch\"{o}lkopf for useful comments. Sidak Pal Singh would also like to acknowledge the financial support from Max Planck ETH Center for Learning Systems.
\end{ack}

\begin{small}
\bibliographystyle{unsrtnat}
\bibliography{references}
\end{small}


\clearpage
\appendix
\renewcommand{\thesection}{S\arabic{section}}
\renewcommand{\thetable}{S\arabic{table}}
\renewcommand{\thefigure}{S\arabic{figure}}
\renewcommand{\thefootnote}{S\arabic{footnote}}
\setcounter{figure}{0}
\setcounter{table}{0}
\setcounter{footnote}{0}

\hypersetup{linkcolor=black}

\addcontentsline{toc}{section}{Appendix} 
\vspace{-2em}

\part{Supplementary Material} 
\vspace{2em}
\parttoc 


\clearpage

\hypersetup{linkcolor=red}

\section{Backpropagation in matrix-derivatives for the general case}\label{supp:backprop}
In the general case, we can represent the gradient in analogy as to Eq.~\eqref{eq:matrix-derivative}, 
\begin{align}
\jacobi{F}{\wM{k}}= \Jmap{L:k+1} \Lmap{k}\kro \Big[ \Fmap{k-1:1}(\x) \Big]^\top,\,\, \text{where}\,\, \Fmap{k-1:1} = \Fmap{k-1} \circ \cdots \circ \Fmap{1}\,.
\end{align}
And, further $\mb J$ denotes the Jacobian map across the indexed layers, which is itself a composition of elementary Jacobians.
\begin{align}
\Jmap{k} & = \Lmap{k} \wM{k}, \quad \Lmap{k} = \diag(\dot \sigma^{(k)}), \quad  \Jmap{k+1:L} = {\Jmap{L:k+1}}^\top \,.
\end{align}
The Jacobian maps depend on functions of the input as each $\Lmap{k}$ depends on the pre-activation $\wM{k}\, \x_{k-1}$. By the usual chain rule (backpropagation) one has for a linear DNN, at a sample $(\x, \y)$:
\begin{align}
\jacobi{\ell}{\wM k} := \jacobi{\ell_{\x, \y}}{\wM k} 
& = \underbrace{\Big[ \wM{k+1:L}\pmb \delta_{\x, \y} \Big]}_{\text{backward } \in \Re^{M_k}} \;\cdot \; \underbrace{\Big[  \wM{k-1:1} \x\Big]^\top}_{\text{forward } \in \Re^{M_{k-1}}} 
= \wM{k+1:L} [\wM{L:1}\x \x^\top - \y \x^\top] \wM{1:k-1}\,.
\label{eq:supp-dnn-grad-linear}
\end{align}
The gradient with regard to $\wM k$ is first order in $\wM k$ and second order in the other matrices. In the general case, we get 
\begin{align}
\jacobi{\ell}{\wM k} 
& = \Big[\Jmap{k+1:L}\pmb \delta_{\x, \y}\Big] \cdot \Big[ \Fmap{k-1:1}(\x) \Big]^\top
\label{eq:dnn-grad}
\end{align}
Clearly, the partial forward maps are non-linear, whereas the backward maps are linearized at an argument determined by the current input.  

\subsection{Equivalence with Gauss-Newton decomposition}

Remember, $\btheta \in \mathbb{R}^p$ denotes the (vectorized) parameters of the neural network map $F$, which then feeds into the loss $\ell$. Then, the Hessian of the composition of $\ell$ and $F$ with respect to $\btheta$ (computed over a sample $(\x, \y)$, but we omit specifying it for brevity) is, 
\vspace{-0.5em}
\[
\small
\nabla^2_{\btheta} (\ell \circ F) = \nabla_{\btheta} F^\top \; [\partial^2 \ell ]\; \nabla_{\btheta} F \;+\; \sum_{c=1}^{K} \, [\partial \ell]_{c}\; \nabla^2_{\btheta}\, F_c  \,.
\]
where $\partial \ell$ and $\partial^2 \ell$ are respectively the gradient and Hessian  of the loss $\ell$ with respect to the network function, $F$. Also, $\nabla_{\btheta} F \in \mathbb{R}^{K\times p}$ is the Jacobian map of the network function $F(\x)$ with respect to the parameters $\btheta$. Let us employ the shorthand $\nabla_{k} F$ to denote the Jacobian of the network function with respect to the weight matrix $\wM{k}$ in the numerator-layout style as mentioned earlier. Similarly, let $\nabla^2_{kl} \,F_c$ be the Hessian of $c$-th component of the network function with respect to weight matrices $\wM{k}, \wM{l}$. Hence, we obtain, \[\partial^2 \ell = \Im_K\,, \quad \nabla_{\idx} F =  \wM{L:\idx+1} \,\otimes \, \x^\top \wM{1:\idx-1}\,,\partial^2 \ell = \Im_K\,,\]
\begin{align*}
\nabla_{\idx} F &=  \wM{L:\idx+1} \,\otimes \, \x^\top \wM{1:\idx-1}\,,\\[1mm]
\nabla_{k} F^\top  \mb I_{K}
\nabla_{l} F & =\wM{k+1:L} \wM{L:l+1} \, \otimes \, \wM{k-1:1} \x\x^\top \wM{1:l-1}\,.
\end{align*}

Now, the other term has a reduction (or contraction) with the components of the residual, i.e., $\partial \ell = \pmb \delta_{\x, \y}$. Importantly, we notice that it has a block-hollow structure since, 
$\nabla^2_{\idx \idx}\, F_c  = \mb 0, \,\, \forall c \in [1\cdots K]$.
Thus all the diagonal blocks come from the first term in the Gauss-Newton decomposition. Now, comparing the above expressions with the Eqns. (\ref{eq:outer-kl},~\ref{eq:func-hess-eqn1},~\ref{eq:func-hess-eqn2}), it is evident that the two approaches yield the same structure of the Hessian.

As a side-remark, note that in Eqns. (\ref{eq:outer-kl},~\ref{eq:func-hess-eqn1},~\ref{eq:func-hess-eqn2}) we express the $kl$-th block as $\dfrac{\partial^2 \Loss}{\partial \wM l \partial \wM k}$ 
instead of $\dfrac{\partial^2 \Loss}{\partial \wM k \partial \wM l}$,
only to ensure consistent shape as per matrix derivative convention (but corresponding entries are ofcourse equal). 

\clearpage

\section{Tools for the analysis}\label{supp:tools}

\subsection{General notation}
We employ the shorthand notation, $\wM{k:l}$, to refer to the matrix product chain $\wM k \cdots \wM l$, when $k>l$. When $k < l$, $\wM{k:l}$ will stand for the transposed product chain $\wMt{k} \cdots \wMt{l}$. 
In the edge case $k=l$, this will imply either $\wM{k}$ or $\wMt{k}$ depending on the context. Although, we will make the notation explicit on occasions where it might not be evident. 
Besides, we use the $\kro$ to denote the Kronecker product of two matrices, $\vect_c$ and $\vect_r$ to denote the column-wise and row-wise vectorization of matrices respectively. $\Im_{k}$ denotes the identity matrix of size $k$, while $\mathds{1}_{k}$ denotes the all ones vector of length $k$, $\mb 0_{k}$ denotes an all-zeros matrix of size $k$. The generalized inverse~\citep{rao1972generalized} \footnote{This is like a general version of pseudoinverse which only satisfies the first Moore-Penrose condition.} of a matrix $\Am$ is given by a matrix $\Am^{-}$ which obeys $\Am \Am^{-} \Am = \Am$. The notation $\Am^{\bullet i}$ or $\Am_{\bullet i}$ denotes the $i$-th column of the matrix $\Am$, while $\Am^{i \bullet}$ or $\Am_{i \bullet}$ refers to its $i$-th row. We place the column and row indices into the subscript or superscript depending on the context they are used.

\subsection{Helper Lemmas}\label{supp:helper-lemmas}

\begin{lemma}
\label{lemma:supp-kronecker-block}

Let $\Am\in\mathbb{R}^{m\times n} $ and $\Bm\in\mathbb{R}^{p\times q}$. Then the row-partitioned matrix $\left[\begin{array}{c}\Im_q \otimes \Am \\ \Bm \otimes \Im_n\end{array}\right]$ has the rank, 
\[
\rank\left[\begin{array}{c}\Im_q \otimes \Am \\ \Bm \otimes \Im_n\end{array}\right]=q \rank(\Am)+n \rank(\Bm)-\rank(\Am) \rank(\Bm)
\]
\begin{proof}\label{supp:proof-kronecker-block}
The proof relies on the following rank formula due to~\cite{doi:10.1080/03081087408817070}, and are based on using the generalized inverse of a matrix. Besides, the following proof closely follows~\citet{CHUAI2004129}.

$$\rank\left[\begin{array}{c}\Am \\ \Cm\end{array}\right]=\rank(\Am)+\rank\left(\Cm-\Cm \Am^{-} \Am\right)=\rank(\Cm)+\rank\left(\Am-\Am \Cm^{-} \Cm\right)$$

Here, $ \Cm^{-}$ denotes the weak (generalized) inverse of $\Cm$, i.e., any solution such that $\Cm  \Cm^{-} \Cm = \Cm$. Then, we have that, 

\begin{align*}
\rank\left[\begin{array}{c}\Im_q \otimes \Am \\ \Bm \otimes \Im_n\end{array}\right]&= \rank(\Im_q \otimes \Am)+ \rank\big(\left(\Bm \otimes \Im_n\right) - \left(\Bm \otimes \Im_n\right){(\Im_q \otimes \Am)}^{-} \left(\Im_q \otimes \Am\right) \big) \\
&= \rank(\Im_q) \rank(\Am)+ \rank\big(\left(\Bm \otimes \Im_n\right) - \left(\Bm \otimes \Im_n\right){(\Im_q \otimes \Am^{-})} \left(\Im_q \otimes \Am\right) \big) \\
&= q \rank(\Am)+ \rank\big(\left(\Bm \otimes \Im_n\right) - \left(\Bm \otimes  A^{-} A\right) \big)\\
&= q \rank(\Am)+ \rank\big(\Bm \otimes \left(\Im_n -  A^{-} A\right) \big) \\
&= q \rank(\Am)+ \rank(\Bm)\rank\left(\Im_n -  A^{-} A\right) \\
&\overset{(a)}{=} q \rank(\Am)+ \rank(\Bm)(n-\rank(\Am)) \\
&= q \rank(\Am)+ n\rank(\Bm) - \rank(\Am)\rank(\Bm)
\end{align*}

where, in (a) we have used that $\rank\left(\Im_n -  A^{-} A\right)= n - \rank(\Am)$, which follows from the fact that column space of the matrix $\Im_n -  A^{-} A$, satisfies $\mathcal{C}\left(\Im_n -  A^{-} A\right)  \subset \mathcal{N}(A)$ (because the null space $\mathcal{N}(A)$ is the set of column vectors $\alpha$ for which $A\alpha=0$ and since any $x = \left(\Im_n -  A^{-} A\right) y \, \implies Ax=0 $) which means $\rank\left(\Im_n- A^{-} A\right)\leq \operatorname{dim} \mathcal{N}(A)$ and $\rank\left(\Im_n -  A^{-} A\right) \geq \rank\left(\Im_n\right)- \rank\left(A^{-} A\right) = n - \rank\left( A\right) = \operatorname{dim}\mathcal{N}(A)$.
\end{proof}
\end{lemma}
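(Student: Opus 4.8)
The plan is to reduce the rank of the stacked matrix to two much simpler quantities via a generalized-inverse identity for row-partitioned matrices. Specifically, I would invoke the partitioned-rank formula (e.g.\ \cite{doi:10.1080/03081087408817070})
\[
\rank\begin{pmatrix}\Am \\ \Cm\end{pmatrix} = \rank(\Am) + \rank\!\left(\Cm - \Cm\,\Am^{-}\Am\right),
\]
valid for any weak generalized inverse $\Am^{-}$ (satisfying $\Am\Am^{-}\Am=\Am$). Applying it with top block $\Im_q \kro \Am$ and bottom block $\Bm \kro \Im_n$ immediately isolates the contribution $\rank(\Im_q \kro \Am) = q\,\rank(\Am)$ and leaves a Schur-complement-type residual, so the whole task collapses to computing the rank of that residual term.

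For the residual I would exploit Kronecker algebra. A generalized inverse of $\Im_q \kro \Am$ is $\Im_q \kro \Am^{-}$, which needs only the mixed-product rule and the defining identity; hence $(\Im_q \kro \Am)^{-}(\Im_q \kro \Am) = \Im_q \kro (\Am^{-}\Am)$. Substituting and again using the mixed-product rule gives
\[
(\Bm \kro \Im_n) - (\Bm \kro \Im_n)(\Im_q \kro \Am^{-}\Am) = \Bm \kro \left(\Im_n - \Am^{-}\Am\right).
\]
Since the rank of a Kronecker product factorizes, this reduces everything to the single product $\rank(\Bm)\cdot\rank(\Im_n - \Am^{-}\Am)$.

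The crux — and the step I expect to be the main obstacle — is establishing $\rank(\Im_n - \Am^{-}\Am) = n - \rank(\Am)$ for an \emph{arbitrary} weak generalized inverse, not just the Moore--Penrose pseudoinverse, since $\Am^{-}$ is far from unique and need not be a clean orthogonal projector. The clean route is to note that $\Am^{-}\Am$ is idempotent (the defining identity gives $(\Am^{-}\Am)^2 = \Am^{-}(\Am\Am^{-}\Am) = \Am^{-}\Am$), so $\Im_n - \Am^{-}\Am$ is idempotent too, and complementary idempotents have complementary ranks. Combined with $\rank(\Am^{-}\Am) = \rank(\Am)$ — which follows because $\Am = \Am(\Am^{-}\Am)$ forces $\rank(\Am)\le\rank(\Am^{-}\Am)\le\rank(\Am)$ — this yields $\rank(\Im_n - \Am^{-}\Am) = n - \rank(\Am)$. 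Assembling the pieces produces $q\,\rank(\Am) + \rank(\Bm)\,(n-\rank(\Am))$, which rearranges to the claimed formula.

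As a fallback that avoids generalized inverses entirely, I would instead put $\Am$ and $\Bm$ into rank normal form via invertible row/column transformations (rank being invariant under these), permute the block structure of the resulting stacked Kronecker matrix into a block-diagonal shape, and count the independent rows directly. This is more combinatorial but sidesteps any subtlety about which generalized inverse is chosen, and serves as a useful sanity check on the constant $-\rank(\Am)\rank(\Bm)$ appearing in the final expression.
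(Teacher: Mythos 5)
Your proposal is correct and follows essentially the same route as the paper's proof: the partitioned-rank formula of Marsaglia--Styan with the generalized inverse $\Im_q \kro \Am^{-}$, the mixed-product reduction to $\Bm \kro (\Im_n - \Am^{-}\Am)$, and the rank factorization of the Kronecker product. The only (immaterial) difference is in justifying $\rank(\Im_n - \Am^{-}\Am) = n - \rank(\Am)$, where you use idempotency of $\Am^{-}\Am$ and complementary ranks, while the paper uses the inclusion $\mathcal{C}(\Im_n - \Am^{-}\Am) \subset \mathcal{N}(\Am)$ together with rank subadditivity; both are valid.
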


\begin{lemma}\label{lemma:supp-kronecker-block-general}
Let $\Am_1\in\mathbb{R}^{m_1\times n_1} $, $\Am_2\in\mathbb{R}^{m_2\times n_2}$, $\Bm\in\mathbb{R}^{p\times n_1}$. Then the column block matrix $\left[\begin{array}{c} \Am_1\otimes \Am_2 \\ \Bm \otimes \Im_{n_2}\end{array}\right]$ has the rank, 

$$
\rank\left[\begin{array}{c}\Am_1\otimes \Am_2 \\ \Bm \otimes \Im_{n_2}\end{array}\right]= \rank(\Am_2) 
\left(
    \rank\left[\begin{array}{c}\Am_1 \\ \Bm\end{array}\right] - \rank(\Bm) 
\right) 
+ n_2 \, \rank(\Bm)
$$
\begin{proof}

\begin{align*}
    \rank\left[\begin{array}{c}\Am_1\otimes \Am_2 \\ \Bm \otimes I_{n_2}\end{array}\right] &\overset{(a)}{=} \rank(\Bm \kro \Im_{n_2}) + \rank\left((\Am_1\otimes \Am_2) - (\Am_1\otimes \Am_2) (\Bm \kro \Im_{n_2})^{-}(\Bm \kro \Im_{n_2})\right) \\
    &= n_2 \, \rank(\Bm) + \rank\left((\Am_1\otimes \Am_2) - (\Am_1\otimes \Am_2) (\Bm^{-}\Bm \kro \Im_{n_2})\right)\\
    &= n_2 \, \rank(\Bm) + \rank\left((\Am_1 - \Am_1 \Bm^{-}\Bm)\otimes \Am_2)\right)\\
    &= n_2 \, \rank(\Bm) + \rank(\Am_1 - \Am_1 \Bm^{-}\Bm)\rank(\Am_2) \\
    &\overset{(b)}{=} n_2 \, \rank(\Bm) +\left( \rank\left[\begin{array}{c}\Am_1 \\ \Bm \end{array}\right] - \rank(\Bm)\right)\rank(\Am_2) \\[1mm]
    &=\rank(\Am_2) 
\left(
    \rank\left[\begin{array}{c}\Am_1 \\ \Bm\end{array}\right] - \rank(\Bm) 
\right) 
+ n_2 \, \rank(\Bm)
\end{align*}
Step (a) and (b) are due to the rank formula for the column block matrix~\cite{doi:10.1080/03081087408817070}.

$$\rank\left[\begin{array}{c}\Xm \\ \Ym\end{array}\right]=\rank(\Ym)+\rank\left(\Xm- \Xm\Ym^{-} \Ym\right)$$

And, the other steps follow from the basic properties of Kronecker product.

\end{proof}
\end{lemma}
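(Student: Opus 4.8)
The plan is to reduce the problem to the Marsaglia--Styan rank decomposition for a row-partitioned (vertically stacked) matrix, namely $\rank\left[\begin{smallmatrix}\Xm\\\Ym\end{smallmatrix}\right]=\rank(\Ym)+\rank(\Xm-\Xm\Ym^{-}\Ym)$ --- the same generalized-inverse identity from \cite{doi:10.1080/03081087408817070} already used in Lemma~\ref{lemma:supp-kronecker-block} --- applied with the \emph{bottom} block $\Ym=\Bm\kro\Im_{n_2}$ and top block $\Xm=\Am_1\kro\Am_2$. First I would record $\rank(\Ym)=\rank(\Bm\kro\Im_{n_2})=n_2\rank(\Bm)$ by multiplicativity of rank under the Kronecker product. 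This already produces the additive term $n_2\rank(\Bm)$ in the claim, so the entire remaining content is to evaluate the residual rank $\rank(\Xm-\Xm\Ym^{-}\Ym)$.

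The key step is to supply a convenient generalized inverse of $\Ym$. I would take $\Ym^{-}=\Bm^{-}\kro\Im_{n_2}$ and verify the defining identity $\Ym\Ym^{-}\Ym=(\Bm\Bm^{-}\Bm)\kro\Im_{n_2}=\Bm\kro\Im_{n_2}=\Ym$ via the mixed-product rule $(\Am\kro\Bm)(\Cm\kro\Dm)=(\Am\Cm)\kro(\Bm\Dm)$; this is exactly what lets the Kronecker structure survive the projection. With this choice, the same rule gives $\Xm\Ym^{-}\Ym=(\Am_1\kro\Am_2)(\Bm^{-}\Bm\kro\Im_{n_2})=(\Am_1\Bm^{-}\Bm)\kro\Am_2$, so that $\Xm-\Xm\Ym^{-}\Ym=(\Am_1-\Am_1\Bm^{-}\Bm)\kro\Am_2$ and hence, by Kronecker rank multiplicativity, $\rank(\Xm-\Xm\Ym^{-}\Ym)=\rank(\Am_1-\Am_1\Bm^{-}\Bm)\,\rank(\Am_2)$.

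To finish I would reinterpret the leftover factor $\rank(\Am_1-\Am_1\Bm^{-}\Bm)$ by invoking the row-partitioned rank formula a second time, now for the stack $\left[\begin{smallmatrix}\Am_1\\\Bm\end{smallmatrix}\right]$ with bottom block $\Bm$: this yields $\rank\left[\begin{smallmatrix}\Am_1\\\Bm\end{smallmatrix}\right]=\rank(\Bm)+\rank(\Am_1-\Am_1\Bm^{-}\Bm)$, i.e. $\rank(\Am_1-\Am_1\Bm^{-}\Bm)=\rank\left[\begin{smallmatrix}\Am_1\\\Bm\end{smallmatrix}\right]-\rank(\Bm)$, and substituting back assembles the claimed identity. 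The place demanding the most care --- the main obstacle --- is the middle step: justifying that $\Bm^{-}\kro\Im_{n_2}$ is a legitimate generalized inverse and then tracking the Kronecker factors and their dimensions so the mixed-product rule applies exactly (note $\Bm^{-}\Bm\in\mathbb{R}^{n_1\times n_1}$ matches the column count of $\Am_1$, and $\Am_2\Im_{n_2}=\Am_2$). A secondary subtlety worth flagging is that $\rank(\Xm-\Xm\Ym^{-}\Ym)$ is invariant under the choice of $\Ym^{-}$ in the Marsaglia--Styan formula, which is precisely what licenses using the convenient Kronecker inverse without loss of generality.
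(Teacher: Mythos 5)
Your proposal is correct and follows essentially the same route as the paper's proof: both apply the Marsaglia--Styan generalized-inverse rank formula with bottom block $\Bm\kro\Im_{n_2}$, use the Kronecker generalized inverse $\Bm^{-}\kro\Im_{n_2}$ and the mixed-product rule to collapse the residual to $(\Am_1-\Am_1\Bm^{-}\Bm)\kro\Am_2$, and then invoke the same formula a second time on $\left[\begin{smallmatrix}\Am_1\\ \Bm\end{smallmatrix}\right]$ to finish. Your explicit verification that $\Bm^{-}\kro\Im_{n_2}$ is a legitimate generalized inverse, and your remark that the residual rank is independent of the choice of generalized inverse, are points the paper leaves implicit.
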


\begin{lemma}\label{lemma:block-shared}
Let $\Am \in\mathbb{R}^{p\times q} $, $\Bm\in\mathbb{R}^{m\times p}$, then we have that:
$$
\rank\left[\begin{array}{c} \Am\\ \Bm\Am\end{array}\right]=\rank(\Am)
$$
\begin{proof}

This follows simply from the definition of generalized inverse ($\Am \Am^{-} \Am = \Am$) and the rank formula for column-block matrix. 
\begin{align*}
    \rank\left[\begin{array}{c}\Am \\ \Bm\Am\end{array}\right]&=\rank(\Am)+\rank\left((\Bm\Am)- (\Bm\Am) \Am^{-} \Am\right)\\
    &=\rank(\Am)+\rank(\Bm\Am- \Bm\Am) = \rank(\Am)
\end{align*}
\end{proof}
\end{lemma}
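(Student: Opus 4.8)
The plan is to observe that the bottom block $\Bm\Am$ contributes no new row directions beyond those already present in $\Am$, so that stacking it below $\Am$ leaves the row space---and hence the rank---unchanged. Concretely, every row of $\Bm\Am$ is a linear combination of the rows of $\Am$ (left-multiplication by $\Bm$ can only recombine rows, never enlarge their span), so the row space of $\Bm\Am$ is contained in the row space of $\Am$. Consequently the row space of $\left[\begin{array}{c}\Am\\ \Bm\Am\end{array}\right]$ equals that of $\Am$ alone, and since rank is the dimension of the row space, the identity $\rank\left[\begin{array}{c}\Am\\ \Bm\Am\end{array}\right] = \rank(\Am)$ follows at once.

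To keep the argument uniform with the generalized-inverse toolkit developed in this appendix (which I expect is the route taken here), I would instead invoke the column-block rank formula from~\cite{doi:10.1080/03081087408817070}, $\rank\left[\begin{array}{c}\Xm\\ \Ym\end{array}\right] = \rank(\Xm) + \rank(\Ym - \Ym\,\Xm^{-}\Xm)$, specialized to $\Xm = \Am$ and $\Ym = \Bm\Am$. This yields $\rank\left[\begin{array}{c}\Am\\ \Bm\Am\end{array}\right] = \rank(\Am) + \rank(\Bm\Am - \Bm\Am\,\Am^{-}\Am)$, and the defining property of the generalized inverse, $\Am\,\Am^{-}\Am = \Am$, immediately gives $\Bm\Am\,\Am^{-}\Am = \Bm\Am$, so the residual block is the zero matrix and its rank drops out.

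There is essentially no obstacle here; the lemma is elementary. The only point requiring a moment's care is the role-assignment in the column-block formula: the residual $\Ym - \Ym\,\Xm^{-}\Xm$ is measured against the \emph{top} block, so one must place $\Am$ on top and $\Bm\Am$ on the bottom for the cancellation $\Am\,\Am^{-}\Am=\Am$ to propagate through the left factor $\Bm$. With that bookkeeping fixed, the vanishing of the residual is automatic. This \emph{shared-block} reduction is exactly what lets one collapse repeated weight-product chains appearing in the Hessian blocks down to a single representative factor in the rank computations elsewhere in the analysis.
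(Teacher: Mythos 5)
Your proposal is correct and its main argument is exactly the paper's proof: specialize the column-block rank formula of~\cite{doi:10.1080/03081087408817070} to $\Xm=\Am$, $\Ym=\Bm\Am$, and use $\Am\Am^{-}\Am=\Am$ to kill the residual block. The preliminary row-space observation you give is a fine (even more elementary) alternative, but it adds nothing beyond what the generalized-inverse computation already establishes.
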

\begin{corollary}
Let $\Bm\in\mathbb{R}^{m\times p}$, then we get the elementary identity:
$$
\rank\left[\begin{array}{c} \Im_p\\ \Bm\end{array}\right]=\rank(\Im_p) = p\,.
$$
\end{corollary}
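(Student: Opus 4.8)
The plan is to read off this identity as an immediate specialization of Lemma~\ref{lemma:block-shared}. That lemma asserts, for any $\Am \in \mathbb{R}^{p \times q}$ and $\Bm \in \mathbb{R}^{m \times p}$, that $\rank\left[\begin{smallmatrix} \Am \\ \Bm\Am \end{smallmatrix}\right] = \rank(\Am)$, the mechanism being that the bottom block $\Bm\Am$ lies entirely in the row space of the top block $\Am$ and hence contributes nothing new to the rank. My first step is to instantiate $\Am := \Im_p$, which is legitimate since $\Im_p \in \mathbb{R}^{p \times p}$ meets the shape requirement (taking $q = p$) and leaves $\Bm \in \mathbb{R}^{m \times p}$ exactly as in the corollary's hypothesis.

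The second step is the trivial simplification $\Bm \Am = \Bm \Im_p = \Bm$, so the row-partitioned matrix appearing in Lemma~\ref{lemma:block-shared} becomes precisely $\left[\begin{smallmatrix} \Im_p \\ \Bm \end{smallmatrix}\right]$, which is exactly the object of the corollary. Applying the lemma then yields $\rank\left[\begin{smallmatrix} \Im_p \\ \Bm \end{smallmatrix}\right] = \rank(\Im_p)$, and since the identity matrix is full rank we conclude the value equals $p$, completing the claim.

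There is essentially no obstacle here; the statement is a direct corollary, and no new machinery (generalized inverses, the column-block rank formula, or Kronecker structure) needs to be re-invoked beyond what Lemma~\ref{lemma:block-shared} already packages. If one preferred a self-contained argument that sidesteps even the lemma, the only facts to verify are that the top identity block already spans a $p$-dimensional row space, giving the lower bound $\rank \geq p$, while the matrix has only $p$ columns, giving the upper bound $\rank \leq p$; these two bounds pin the value at $p$. Invoking Lemma~\ref{lemma:block-shared} is, however, the cleanest route and the one I would present.
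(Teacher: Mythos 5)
Your proposal is correct and matches the paper's intent exactly: the corollary is stated in the paper as an immediate consequence of Lemma~\ref{lemma:block-shared}, obtained precisely by instantiating $\Am = \Im_p$ so that $\Bm\Am = \Bm$ and $\rank(\Im_p) = p$. Your added remark that the result also follows elementarily (rank $\geq p$ from the identity block, rank $\leq p$ from the column count) is a fine sanity check but not needed.
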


\vspace{1em}
\begin{lemma}\label{lemma:outer-equality}
Given a matrix $\mb M:= \mb A \mb B \mb A^T$, with $\Bm \succ 0$ symmetric, then $\rank(\Mm) = \rank(\Am)$. 
\end{lemma}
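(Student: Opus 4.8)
The plan is to reduce $\Mm$ to a Gram matrix and then exploit the strict positive-definiteness of $\Bm$. Since $\Bm \succ 0$ is symmetric, its spectral decomposition yields a symmetric positive-definite square root $\Bm^{1/2}$, which is in particular \emph{invertible}. I would then write
\[
\Mm = \Am \Bm \Am^\top = \left(\Am \Bm^{1/2}\right)\left(\Am \Bm^{1/2}\right)^\top =: \Cm \Cm^\top,
\]
so that the problem is split into two independent, routine steps: (i) showing that a Gram matrix has the same rank as its factor, i.e.\ $\rank(\Cm \Cm^\top) = \rank(\Cm)$; and (ii) showing that $\rank(\Cm) = \rank(\Am)$.

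For step (i), the key observation is that $\Cm \Cm^\top$ and $\Cm^\top$ share the same null space. One inclusion is immediate: $\Cm^\top \xb = \mathbf 0$ implies $\Cm \Cm^\top \xb = \mathbf 0$. For the converse, if $\Cm \Cm^\top \xb = \mathbf 0$ then $\xb^\top \Cm \Cm^\top \xb = \norm{\Cm^\top \xb}^2 = 0$, forcing $\Cm^\top \xb = \mathbf 0$. Hence $\mathcal{N}(\Cm \Cm^\top) = \mathcal{N}(\Cm^\top)$, and by rank--nullity $\rank(\Cm \Cm^\top) = \rank(\Cm^\top) = \rank(\Cm)$. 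For step (ii), I would use that $\Bm^{1/2}$ is invertible, so right-multiplication by it preserves rank, giving $\rank(\Cm) = \rank(\Am \Bm^{1/2}) = \rank(\Am)$. Chaining the equalities then yields $\rank(\Mm) = \rank(\Am)$, as claimed.

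There is no serious obstacle here; the result is essentially a standard fact, and the only point requiring care is where the hypotheses are actually used. The strictness $\Bm \succ 0$ (as opposed to merely $\Bm \succeq 0$) is precisely what guarantees that $\Bm^{1/2}$ is invertible, which is what makes step (ii) an equality rather than just the inequality $\rank(\Am\Bm^{1/2}) \le \rank(\Am)$; with a rank-deficient $\Bm$ the statement can genuinely fail. One could alternatively avoid introducing $\Bm^{1/2}$ and argue directly with the two-sided rank inequalities $\rank(\Am\Bm\Am^\top)\le \rank(\Am)$ and $\rank(\Am\Bm\Am^\top) \ge \rank(\Am\Bm) \ge \rank(\Am) + \rank(\Bm) - (\text{inner dimension})$, but the square-root factorization is cleaner and keeps the two steps maximally decoupled, so that is the route I would take.
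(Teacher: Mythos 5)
Your proposal is correct and follows essentially the same route as the paper's proof: factor $\Mm = (\Am\Bm^{1/2})(\Am\Bm^{1/2})^\top$ via the positive-definite square root, use the Gram-matrix identity $\mathcal{N}(\Cm\Cm^\top)=\mathcal{N}(\Cm^\top)$ to get $\rank(\Mm)=\rank(\Am\Bm^{1/2})$, and conclude $\rank(\Am\Bm^{1/2})=\rank(\Am)$ from invertibility of $\Bm^{1/2}$. Your write-up is, if anything, slightly more explicit than the paper's in spelling out the null-space argument and in flagging where strict positive-definiteness is needed.
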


\begin{proof}
Since $\Bm\succ 0$, we can write $\Mm = (\Am \Bm^{\frac{1}{2}}) (\Am \Bm^{\frac{1}{2}})^\top$, where $\Bm^{\frac{1}{2}}$ denotes the matrix square root of $\Bm$. This implies $\rank(\Mm) = \rank(\Am\Bm^{\frac{1}{2}})$, since the null space of $\Xm^\top \Xm$ is the same as the null space of any arbitrary matrix $\Xm$, i.e.,  $\mathcal{N}(\Xm^\top\Xm) =\mathcal{N}(\Xm)$, and additionally using $\rank(\Xm^\top)=\rank(\Xm)$. Next, as $\Bm \succ 0$, we have $\Bm^{\frac{1}{2}}\succ 0$, which further implies $\Bm^{\frac{1}{2}}$ is full rank. Hence $\rank(\Am\Bm^{\frac{1}{2}}) = \rank(\Am)$, which at last gives, $\rank(\Mm) = \rank(\Am)$.
\end{proof}

\subsection{\{Left, Right, Pseudo\}- Inverses}
For a matrix $\Am$ with full column rank, the left inverse is defined to be a matrix $\Am^{-L}$ such that $\Am^{-L} \Am = \Im$. Likewise, when the matrix $\Am$ has full rank, we can define a right inverse which is a matrix $\Am^{-R}$ such that $\Am \Am^{-R} = \Im$. The left and right inverses need not be unique. But often a nice or convenient choice for the left inverse is $\Am^{-L} = {(\Am^\top \Am)}^{-1} \Am^\top$, while that for the right inverse is $\Am^{-R} = \Am^\top {(\Am \Am^\top)}^{-1}$. 

The (Moore-Penrose) pseudoinverse $\Am^{\dagger}$ of a matrix $\Am \in \mathbb{R}^{m\times n}$ is a unique matrix that satisfies the following properties:

\begin{align*}
    \Am \Am^\dagger \Am &= \Am \\
    \Am^\dagger \Am \Am^\dagger &= \Am^\dagger \\
    {(\Am \Am^\dagger)}^\top &= \Am \Am^\dagger \\
    {(\Am^\dagger \Am)}^\top &= \Am^\dagger \Am  \\
\end{align*}

When the matrix has full column rank or full row rank, then the pseudoinverse agrees with the particular choice of left and right inverse we mentioned above. In such scenarios of full column or row rank, when we want to refer to this choice of left or right inverse, we will simply denote them by $\Am^{\dagger}$.

\subsection{Block row and column operations}\label{supp:row-col-ops}

In our proofs, we make use of row and column operations, taken altogether on blocks of matrices rather than just individual rows or columns. So, here we clarify what we actually mean by such block row and column operations, and how their usage does not affect the rank of matrix to which these are applied. Essentially, we will look at the corresponding ``elementary matrices'' that get formed and argue that they multiplying with them does not change rank.

Let us consider that we have with us the following matrix, $\Pm$, with its row blocks labeled as $R_1, \cdots, R_L$.
\begin{equation}
\Pm = 
\begin{pNiceArray}[first-row,first-col,nullify-dots]{c}
        \\[2mm]
R_1& \Om^1\kro\Nm^1   \\[1mm]
\Vdots & \vdots \\[1mm]
R_{i} & \Om^i\kro\Nm^i  \\[2mm]
\Vdots & \vdots \\[1mm]
R_{L}  &   \Om^L\kro\Nm^L  \\
\end{pNiceArray}.
\end{equation}\\

\subsubsection{Factoring-out block operations}
Assume that the matrix $\Om^i$ has full column rank and then $\Om^i\kro \Im$ also has full column rank and is left-invertible. That means we can write the following decomposition $\Pm = \Pm_1 \Pm_2$. 

\begin{equation}
\Pm = \underbrace{
\begin{pNiceArray}[first-row,nullify-dots]{ccccc}
        & \\[2mm]
\Im & \cdots & \bm 0 & \cdots& \bm 0   \\[1mm]
\vdots &\ddots &\vdots & \ddots& \vdots\\[1mm]
\bm 0 & \cdots& \Om^i \kro \Im& \cdots & \bm 0 \\[2mm]
\vdots & \ddots & \vdots& \ddots & \vdots\\[1mm]
\bm 0 &\cdots & \bm 0 &\cdots & \Im \\
\end{pNiceArray}}_{\Pm_1}
\underbrace{
\begin{pNiceArray}[first-row,nullify-dots]{c}
     \\[2mm]
\Om^1\kro\Nm^1  \\[1mm]
\vdots \\[1mm]
\Im \kro\Nm^i \\[2mm]
\vdots \\[1mm]
\Om^L\kro\Nm^L   \\
\end{pNiceArray}}_{\Pm_2}\,.
\label{eq:blockop-example}
\end{equation}

Since even $\Pm_1$ is left-invertible (identity matrix in all diagonals except for $\Om^i\kro\Im$ but which is left-invertible), we have that, 
\[\rank(\Pm) = \rank(\Pm_1 \Pm_2) = \rank(\Pm_2)\,.\]

This is what we actually mean when applying the row operation, \[R_i \leftarrow {(\Om^i\kro \Im)}^{\,\dagger} R_i\,.\] 

Subsequently, we start working with the matrix $\Pm_2$, although we may not explicitly update the name of the matrix.

\paragraph{Remark.} Likewise, we could have instead assumed $N^i$ to be left invertible, and factored out a corresponding $\Pm_1$ matrix with $\Im\kro N^i$ as one of its diagonal blocks. Going further, in a similar manner, we can define the column operation analogue of this by factoring out from the right a matrix which is right-invertible, and will thus preserve rank.

\subsubsection{Deletion block operations}
The block operations in this section are nothing but the analogue of the usual row operations, except they are carried out at the level of blocks.
Say that we have done the above factoring-out operation in Eq.~\eqref{eq:blockop-example}. Now, we are given that $N^1=N^i$, i.e, we have the matrix:

\begin{equation}
\Pm =
\begin{pNiceArray}[first-row,nullify-dots]{c}
     \\[2mm]
\Om^1\kro\Nm^1  \\[1mm]
\vdots \\[1mm]
\Im \kro\Nm^1 \\[2mm]
\vdots \\[1mm]
\Om^L\kro\Nm^L   \\
\end{pNiceArray}\,.
\end{equation}

Now, consider we multiply from the left with the matrix $\Qm$ to yield $\Pm^\prime$:

\begin{equation}
\Pm^\prime = \underbrace{
\begin{pNiceArray}[first-row,nullify-dots]{ccccc}
        & \\[2mm]
\Im & \cdots & - \Om^1 \kro \Im & \cdots& \bm 0   \\[1mm]
\vdots &\ddots &\vdots & \ddots& \vdots\\[1mm]
\bm 0 & \cdots&  \Im& \cdots & \bm 0 \\[2mm]
\vdots & \ddots & \vdots& \ddots & \vdots\\[1mm]
\bm 0 &\cdots & \bm 0 &\cdots & \Im \\
\end{pNiceArray}}_{\Qm}
\begin{pNiceArray}[first-row,nullify-dots]{c}
     \\[2mm]
\Om^1\kro\Nm^1  \\[1mm]
\vdots \\[1mm]
\Im \kro\Nm^1 \\[2mm]
\vdots \\[1mm]
\Om^L\kro\Nm^L   \\
\end{pNiceArray} \quad = \quad 
\begin{pNiceArray}[first-row,nullify-dots]{c}
     \\[2mm]
\bm 0 \\[1mm]
\vdots \\[1mm]
\Im \kro\Nm^1 \\[2mm]
\vdots \\[1mm]
\Om^L\kro\Nm^L   \\
\end{pNiceArray}\,.
\label{eq:blockop-example}
\end{equation}

Since $\Qm$ is a upper-triangular matrix with ones on the diagonal, it is invertible. As a result, rank of $\Pm$ does not change when multiplied by $\Qm$ from the left. 

This is what we actually mean when applying the row operation, \[R_1 \leftarrow R_1 - {(\Om^1\kro \Im)} R_i\,.\] 

Subsequently, we start working with the matrix $\Pm^\prime$, although we may not explicitly update the name of the matrix.

\paragraph{Remark.} Notice, we could have done a similar thing on other side of Kronecker factors. Going further, in a similar manner, we can define the column operation analogue of this by multiplying on the right such a matrix which is invertible, and which will thus preserve rank.

\paragraph{Aliter.} One can perhaps intuit these block operations from the point of view of inclusion of subspaces, but here we are being a bit pedantic.

\subsection{Matrix derivatives}\label{supp:matrix-derivative}
Let us start by discussing some simple facts on vectorization. Consider a matrix $\Am \in \mathbb{R}^{m\times n}$, and recall that $\vect_r$ and $\vect_c$ denote rowwise and columnwise vectorization respectively. Then firstly we have the following simple relation between them:
\begin{equation}\label{eq:vects}
    \vect_c(\Am) = \vect_r(\Am^{\top})\,.
\end{equation}

Now, we give the proof of the commonly-used identity, $\vect_c(\Am\Xm\Bm) = \left(\Bm^{\top} \kro \Am\right) \vect_c(\Xm)$, where $\Am \in \mathbb{R}^{m\times n}\,,\, \Xm \in \mathbb{R}^{n\times p}\,,\, \Bm \in \mathbb{R}^{p\times q}$. For more details on this, refer to ~\cite{magnus2019matrix,10.5555/59921}.

The main idea is to write $\Xm$ in the form of canonical basis vectors $\e_i$, i.e., $\Xm = \sum_{i=1}^{p} \Xm_{\bullet i} \,\e_{i}^{\top}$, where $\Xm_{\bullet i}$ denotes the $i$-th column of $\Xm$.

\begin{align*}
    \vect_c\left(\Am  \sum_{i=1}^{p} \Xm_{\bullet i} \,\e_{i}^{\top} \Bm \right) &= \vect_c\left(\sum_{i=1}^{p} \left(\Am \Xm_{\bullet i}\right) \,\left(\Bm^{\top} \e_{i}\right) \right)\\
    &\overset{(a)}{=} \sum_{i=1}^{p} \left(\Bm^{\top} \e_{i}\right) \kro \left(\Am \Xm_{\bullet i}\right) \\
    &\overset{(b)}{=} \left(\Bm^{\top} \kro \Am\right) \, \sum_{i=1}^p \e_i \kro \Xm_{\bullet i} = \left(\Bm^{\top} \kro \Am\right) \vect_c(\Xm) \quad \QEDB
\end{align*}

In step (a), we have used that for two vectors $\ab, \bb \,$, the following basic fact $\vect_c(\ab \bb^{\top}) = \bb \kro \ab$ holds. And, in (b) we have employed the mixed-product property of Kronecker products.

Since, we utilize row-wise vectorization in our paper, let use find the equivalent relation in terms of that:
\begin{align}\label{eq:axb-kro}
    \vect_r\left(\Am\Xm \Bm\right) \overset{\textrm{Eq.~\eqref{eq:vects}}}{=} \vect_c\left(\Bm^{\top} \Xm^\top \Am^\top\right) = \left(\Am\kro \Bm^\top\right) \vect_c\left(\Xm^\top\right) \overset{\textrm{Eq.~\eqref{eq:vects}}}{=} \left(\Am\kro \Bm^\top\right) \vect_r(\Xm)\,.
\end{align}

Recall, we use the numerator (Jacobian) layout to express matrix-by-matrix derivatives, i.e., \[\dfrac{\partial \mb Y}{\partial \mb X} := \dfrac{\partial \vect_r(\mb Y)}{\partial \vect_r(\mb X)^\top}\,.\]

Thus, when $\mb Y = \Am \Xm \Bm$, we use the above property along the first identification theorem of vector calculus~\cite{magnus2019matrix} as mentioned below:

\begin{theorem}{(first identification theorem):} 
$$
\mathrm{d} f=A(\x) \,\mathrm{d} \x \Longleftrightarrow \frac{\partial f(\x)}{\partial \x^{\top}}=A(\x)\,,
$$

where, $\mathrm{d}$ denotes the differential. 
\end{theorem}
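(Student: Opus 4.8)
The plan is to prove both directions by unwinding the definition of the total differential as the \emph{unique} linear approximation to $f$ at $\x$, and then matching that linear map entry-by-entry against the Jacobian $\partial f(\x)/\partial \x^\top$, whose $(i,j)$ entry is by definition $\partial f_i/\partial x_j$.

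First I would recall that, for $f:\Re^n\to\Re^m$ differentiable at $\x$, the differential is the linear map $\mathrm{d}\x\mapsto\mathrm{d}f$ determined by the first-order expansion
\[
 f(\x+\mathbf u)-f(\x)=A(\x)\,\mathbf u + r(\x,\mathbf u),\qquad \lim_{\mathbf u\to\mathbf 0}\frac{\|r(\x,\mathbf u)\|}{\|\mathbf u\|}=0,
\]
for some $A(\x)\in\Re^{m\times n}$; one then writes $\mathrm{d}f=A(\x)\,\mathrm{d}\x$ after identifying the increment $\mathbf u$ with $\mathrm{d}\x$. The hypothesis $\mathrm{d}f=A(\x)\,\mathrm{d}\x$ is read as holding for every admissible increment.

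The crux is a uniqueness argument. Suppose two matrices $A$ and $B$ both realize the expansion (with remainders $r_A,r_B$). Subtracting gives $(A-B)\mathbf u=r_B-r_A$, so the left-hand side is $o(\|\mathbf u\|)$. Evaluating along the ray $\mathbf u=t\,\e_j$, dividing by $t$, and letting $t\to 0$ forces $(A-B)\e_j=\mathbf 0$ for every standard basis vector $\e_j$; hence $A=B$. I would then identify this unique matrix with the Jacobian: the directional derivative of $f$ along $\e_j$ (set $\mathbf u=t\,\e_j$ and differentiate at $t=0$) returns the $j$-th column of $A(\x)$ and simultaneously equals $(\partial f_i/\partial x_j)_i$, which is exactly the $j$-th column of $\partial f(\x)/\partial\x^\top$. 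Collecting columns yields $A(\x)=\partial f(\x)/\partial\x^\top$, which is the forward implication. The converse is immediate: if $\partial f(\x)/\partial\x^\top=A(\x)$, the first-order Taylor expansion of $f$ about $\x$ reads $f(\x+\mathbf u)-f(\x)=A(\x)\,\mathbf u+o(\|\mathbf u\|)$, i.e.\ $\mathrm{d}f=A(\x)\,\mathrm{d}\x$.

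The hard part will not be the algebra but the analytic bookkeeping of the remainder: one must ensure that the $o(\|\mathbf u\|)$ control is strong enough to justify the column-wise limit, and that genuine differentiability (not merely existence of the partials $\partial f_i/\partial x_j$) is assumed, so that the linear map $A(\x)$ exists in the first place. Once uniqueness of the linear approximation is secured, both the identification with the Jacobian and the converse direction are routine.
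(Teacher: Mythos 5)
Your proposal is correct. Note, however, that the paper does not prove this statement at all: it invokes the first identification theorem as a black-box standard result of matrix differential calculus, citing Magnus and Neudecker, and immediately applies it to read off $\partial(\Am\Xm\Bm)/\partial\Xm = \Am\kro\Bm^\top$ from the differential identity. Your argument — defining $\mathrm{d}f$ via the first-order expansion $f(\x+\mathbf u)-f(\x)=A(\x)\mathbf u+o(\|\mathbf u\|)$, proving uniqueness of the representing matrix by evaluating $(\Am-\Bm)\mathbf u=o(\|\mathbf u\|)$ along rays $\mathbf u=t\,\e_j$, identifying the $j$-th column of $A(\x)$ with the vector of partials $(\partial f_i/\partial x_j)_i$, and obtaining the converse from the Taylor expansion — is exactly the standard textbook proof of this result, so it fills in what the paper delegates to the reference rather than diverging from it. The one hypothesis you must keep explicit (and you do, in your closing paragraph) is that $f$ is genuinely differentiable at $\x$: the uniqueness and ray-limit steps are valid under the $o(\|\mathbf u\|)$ control, but the converse direction would fail if one only assumed existence of the partial derivatives, since then no linear approximation need exist and $\mathrm{d}f$ would be undefined. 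Within the convention of the identification theorem, where the differential is presumed to exist, your proof is complete.
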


Finally, this yields that:
\[\dfrac{\partial \Am\Xm\Bm}{\partial \mb X} = \Am \kro \Bm^\top \,.\]

\paragraph{Remark.} If we were using the column vectorization, we would have instead obtained $\Bm^\top \kro \Am$.

\subsection{Rank of weight matrices at initialization}\label{supp:full_rank_weight}
Here we study the rank of random matrices to understand how the weight matrices of a neural network at initialization influence the rank.
\begin{lemma}
\label{full_rank_weight}
Consider a random matrix $\Wm \in \mathbb{R}^{m \times n}$ for $m,n \in \mathbb{N}$ where each entry is sampled i.i.d. w.r.t. to some continuous (i.e. not discrete) probability distribution $p$, i.e. $W_{ij} \sim p $. Then it holds that
\begin{equation*}
    \operatorname{rank}(\Wm) = \operatorname{min}(m, n) \hspace{2mm} \text{a.s.}    
\end{equation*}

\end{lemma}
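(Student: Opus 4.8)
The plan is to reduce the claim to the almost-sure linear independence of the rows of $\Wm$, and to prove that independence by an induction resting on a single elementary fact about non-atomic distributions. First I would assume without loss of generality that $m \le n$, so that $\min(m,n) = m$; this is harmless, since $\rank(\Wm) = \rank(\Wm^\top)$ and $\Wm^\top$ again has i.i.d. entries drawn from $p$. Writing the rows of $\Wm$ as $r_1, \dots, r_m \in \Re^n$, the goal becomes to show that $r_1, \dots, r_m$ are linearly independent almost surely, which is equivalent to $\rank(\Wm) = m$.

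The key lemma I would isolate is the following: for \emph{any} fixed proper linear subspace $V \subsetneq \Re^n$ and any random vector $v \in \Re^n$ whose coordinates are i.i.d.\ with the continuous (non-atomic) law $p$, one has $P(v \in V) = 0$. To prove it, note that a proper subspace is contained in some hyperplane $\{x : a^\top x = 0\}$ with $a \neq \mb 0$, so it suffices to bound $P(a^\top v = 0)$. Choosing an index $j_0$ with $a_{j_0} \neq 0$ and conditioning on all coordinates $v_j$ for $j \neq j_0$, the event $a^\top v = 0$ forces $v_{j_0}$ to take the single deterministic value $-a_{j_0}^{-1}\sum_{j \neq j_0} a_j v_j$. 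Since $v_{j_0}$ has a non-atomic distribution, this conditional event has probability zero, and integrating over the remaining coordinates (tower property) yields $P(a^\top v = 0) = 0$, hence $P(v \in V) = 0$.

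With this lemma in hand, I would run an induction on $k \in \{1, \dots, m\}$ to show that $r_1, \dots, r_k$ are independent almost surely. The base case $k = 1$ is the lemma applied to $V = \{\mb 0\}$, giving $r_1 \neq \mb 0$ a.s. For the inductive step, observe that $\dim \operatorname{span}(r_1, \dots, r_{k-1}) \le k-1 \le m-1 < n$ (here is where $m \le n$ is used), so on the probability-one event that $r_1, \dots, r_{k-1}$ are independent, their span $V_{k-1}$ is, after conditioning on those rows, a fixed proper subspace of $\Re^n$. Since $r_k$ is independent of $r_1, \dots, r_{k-1}$ and has i.i.d.\ continuous entries, the key lemma gives $P(r_k \in V_{k-1} \mid r_1, \dots, r_{k-1}) = 0$, so $r_1, \dots, r_k$ remain independent a.s. After $m$ steps, $\Wm$ has full row rank $m$ almost surely, completing the proof.

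The main obstacle is getting the result under the weakest sensible reading of ``continuous,'' namely merely non-atomic rather than absolutely continuous. The tempting shortcut — representing $\rank(\Wm) < m$ via the vanishing of a fixed nonzero $m \times m$ minor, a polynomial in the entries, and invoking ``the zero set of a nonzero polynomial is Lebesgue-null'' — requires the joint law of the entries to be absolutely continuous with respect to Lebesgue measure, which a product of merely non-atomic measures need not be. The conditioning reduction above sidesteps this entirely by collapsing the question to a single non-atomic coordinate avoiding one point; this is precisely the crux of the argument. (If one does assume a density, the polynomial route gives an equally valid and arguably slicker alternative, with cofactor expansion along the last row playing the role of the hyperplane reduction.)
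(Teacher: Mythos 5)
Your proof is correct and follows the same overall skeleton as the paper's: reduce by transposition to one orientation (you take $m \le n$ and work with rows; the paper takes $n \le m$ and works with columns, a mirror image), then induct, conditioning on the previously drawn vectors and arguing that the next one almost surely avoids their span. The genuine difference is in how the crucial step --- that a random vector with i.i.d.\ continuous coordinates lies in a fixed proper subspace with probability zero --- is justified. The paper argues that the span of the previous vectors is a lower-dimensional subspace and hence has Lebesgue measure zero in $\Re^m$; this is correct and quick when the coordinate law has a density, but, as you point out, it implicitly uses absolute continuity of the joint law, which a product of merely non-atomic measures need not possess (e.g.\ coordinates with a Cantor-type law). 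Your key lemma --- embed the proper subspace in a hyperplane $\{x : a^\top x = 0\}$, condition on all coordinates except one index $j_0$ with $a_{j_0} \neq 0$, and use non-atomicity to conclude that the remaining coordinate avoids the single forced value --- establishes the same probability-zero statement under the weakest literal reading of the hypothesis (``continuous, i.e.\ not discrete''). So your route buys robustness: it makes the lemma valid exactly as stated, whereas the paper's argument is airtight only under the additional (standard, and satisfied by all initializations the paper considers) assumption that $p$ admits a density; the two proofs coincide in structure everywhere else.
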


\begin{proof}

Assume w.l.o.g. that $n\leq m$ (otherwise consider the transposed matrix) and enumerate the columns of 
$\Wm$ as $\Wm_1, \dots, \Wm_{n} \in \mathbb{R}^{m}$.
We need to show the linear independence of $\{\Wm_1, \dots,
\Wm_n\}$ over $\mathbb{R}^{m}$. 
Let us show this inductively.
First $\Wm_1 \not= \bm{0}$ with probability $1$ since $p$ is continuous. Consider now $\Wm_2$. 
Conditioned on the previously sampled column, $\Wm_1$, due to the continuous nature of the distribution, 
the probability of $\Wm_2$ being contained in the span of $\Wm_1$ is zero:
\begin{equation*}
    \mathbb{P}\left(\Wm_{2} \in \operatorname{span}(\Wm_1 )\right) = \int_{\mathbb{R}}\underbrace{\mathbb{P}\left(\Wm_{2} \in \operatorname{span}(\Wm_1 )|\Wm_1\right)}_{=0}p(\Wm_1)d\Wm_1  = 0
\end{equation*}

Finally, consider $\Wm_m$ and condition on the previously sampled vectors $\Wm_1, \dots, \Wm_{m-1}$. By the induction hypotheses, they span an $m-1$-dimensional space. Again, due to the independence of $\Wm_m$ from the previous vectors and the fact that an $m-1$-dimensional subspace has Lebesgue measure $0$ in $\mathbb{R}^{m}$, it holds in a similar fashion that
\begin{equation}
    \mathbb{P}\left(\Wm_{m} \in \operatorname{span}(\Wm_1, \dots, \Wm_{m-1})\right) = 0
\end{equation}
and the matrix hence has full rank.
\end{proof}
It turns out that we can apply a similar argument for the case of the product of two random matrices:
\begin{lemma}
\label{two_matrix}
Consider random matrices $\Vm \in \mathbb{R}^{m \times n}$ and $\Wm \in \mathbb{R}^{n \times k}$, both drawn with i.i.d. entries according to some continuous probability distribution $p$. Define $\bm{Z} = V W \in \mathbb{R}^{m \times k}$. Then it holds that
\begin{equation*}
    \operatorname{rank}(\bm{Z}) = \operatorname{min}(m, n, k)
\end{equation*}

\end{lemma}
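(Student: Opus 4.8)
The plan is to combine sub-multiplicativity of rank for the upper bound with a polynomial-vanishing (algebraic variety) argument for the lower bound. First I would record the inequality $\rank(\bm Z) = \rank(\Vm\Wm) \le \min(\rank(\Vm), \rank(\Wm)) \le \min(m,n,k) =: t$, which holds for \emph{every} realization of the entries. Thus it only remains to establish $\rank(\bm Z) \ge t$ almost surely, after which equality follows.

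For the lower bound, the idea is that each entry of $\bm Z = \Vm\Wm$ is a bilinear polynomial in the $mn + nk$ entries of $(\Vm,\Wm)$, so any fixed $t\times t$ minor of $\bm Z$ is a polynomial $P$ in these entries. The event $\{\rank(\bm Z) < t\}$ is contained in the simultaneous zero set of all $t\times t$ minors, so in particular in the zero set of $P$; hence it suffices to find \emph{one} such minor that is not identically zero. The key step is therefore to exhibit a single witness pair $(\Vm_0,\Wm_0)$ with $\rank(\Vm_0\Wm_0) = t$. Such a witness is immediate: take $\Vm_0$ and $\Wm_0$ to each carry an $\Im_t$ block in their top-left corner and zeros elsewhere; then $\Vm_0\Wm_0$ has $\Im_t$ in its top-left $t\times t$ block, so choosing $P$ to be exactly the top-left $t\times t$ minor of $\bm Z$ gives $P(\Vm_0,\Wm_0) = 1 \neq 0$, certifying $P \not\equiv 0$.

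Finally I would invoke the standard fact that the zero set of a nonzero polynomial on $\mathbb{R}^{mn+nk}$ has Lebesgue measure zero. Since the entries of $\Vm$ and $\Wm$ are independent and each is drawn from a continuous distribution with density $p$, the joint law of $(\Vm,\Wm)$ is absolutely continuous with respect to Lebesgue measure, so every Lebesgue-null set is also null for this law. Hence $\mathbb{P}(P(\Vm,\Wm) = 0) = 0$, which yields $\rank(\bm Z) \ge t$ almost surely; combined with the upper bound, $\rank(\bm Z) = t = \min(m,n,k)$ a.s.

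I expect the only point demanding care to be the measure-theoretic passage from ``Lebesgue-null'' to ``probability zero'': this needs the product law of the entries to be absolutely continuous, which is exactly what the density assumption on $p$ together with independence supplies, mirroring the reasoning already used in Lemma~\ref{full_rank_weight}. A secondary point worth stating explicitly is that producing a \emph{single} non-vanishing minor suffices, since one nonzero $t\times t$ minor already forces $\rank(\bm Z)\ge t$, so I need not analyze the full collection of minors.
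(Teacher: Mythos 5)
Your proof is correct, but it takes a genuinely different route from the paper's. The paper first reduces to the case where the contracted dimension $n$ is the largest (the other cases follow directly from Lemma~\ref{full_rank_weight} and standard rank identities), and then argues column by column: writing $\bm{z}_i = \Vm\Wm_i$, it conditions on the previously built columns and on $\Vm$, and uses the fact that their span is a proper subspace of $\operatorname{Im}(\Vm)$, hence Lebesgue-null, so each new column avoids it almost surely; induction then gives $\min(m,k)$ independent columns. Your argument replaces this case analysis and induction with a single algebraic step: the pointwise bound $\rank(\bm Z)\le t:=\min(m,n,k)$, plus the observation that $\{\rank(\bm Z)<t\}$ lies in the zero set of the top-left $t\times t$ minor $P$ of $\Vm\Wm$, which your explicit witness $(\Vm_0,\Wm_0)$ with $\Im_t$ blocks shows is not the zero polynomial; the standard fact that nonzero polynomials vanish on Lebesgue-null sets finishes the proof. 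What your approach buys is brevity and generality: it needs no appeal to Lemma~\ref{full_rank_weight}, no splitting into cases, and it extends verbatim to products of arbitrarily many matrices (which the paper instead handles by recursive application in its Corollary) or to any rank pattern expressible by non-vanishing of some minor. What the paper's approach buys is slightly weaker hypotheses in spirit: your null-set transfer explicitly uses absolute continuity of the joint law (existence of the density $p$), whereas the conditioning argument is phrased only in terms of non-atomicity — although the paper's own manipulations also integrate against a density, so in practice both proofs rest on the same assumption. One could also patch your argument to avoid densities entirely by proving the polynomial-null-set fact for products of non-atomic laws via a Fubini-type induction, but as written your reliance on the density matches the paper's usage and is not a gap.
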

\begin{proof}

First, notice that by Lemma \ref{full_rank_weight}, both matrices have full rank, i.e. $\operatorname{rank}(\Vm) = \operatorname{min}(m,n)$ and $\operatorname{rank}(\Wm) = \operatorname{min}(n, k)$. 
By standard linear algebra results (not involving the fact that we have random matrices), we get that for $n \leq m$, $\operatorname{rank}(\bm{Z}) = \operatorname{rank}(\Wm) = \operatorname{min}(m, k)$ and for $n \leq k$, $\operatorname{rank}(\bm{Z}) = \operatorname{rank}(\Vm) = \operatorname{min}(m, n)$. 

Thus it remains to show the case where $n \geq k, m$, i.e. the contracting dimension is the biggest. Assume w.l.o.g. that $k \leq m$ (otherwise study the transposed matrix). The columns of $\bm{Z}$ are given by $\bm{z}_i = \Vm\Wm_i$ for $i=1, \dots, k$. It thus suffices to show the linear independence of  $\{\bm{z}_1, \dots, \bm{z}_k\}$. 

Notice that $\operatorname{rank}(\Vm) = m$ from the assumptions, thus $\{\Vm\x: \x \in \mathbb{R}^{n}\}$ is a $m$-dimensional subspace of $\mathbb{R}^{n}$. We will apply  a similar argument as in Lemma \ref{full_rank_weight}. Consider $\bm{z}_1 = \Vm\Wm_1$. Due to the independence and the fact that $\operatorname{rank}(\Vm)=m>0$, $\bm{z}_1 \not= 0$ a.s. Assume that $\bm{z}_1, \dots, \bm{z}_{k-1}$ are linearly independent, they thus span a $k-1$ dimensional space. 

Conditioned on $\bm{z}_1, , \dots, \bm{z}_{k-1}$, $\bm{z}_k = \Vm\Wm_k$ is a random vector in $\operatorname{Im}(\Vm)$ (think of $\Vm$ as a fixed linear map since we are conditioning on it). Since $\operatorname{span}(\bm{z}_1, \dots, \bm{z}_{m-1})$ forms a $k-1$ dimensional subspace of $\operatorname{Im}(\Vm)$, which has dimension $m$, the subspace again has Lebesgue measure zero and we conclude that 
\begin{equation*}
    \mathbb{P}\left(\bm{z}_k \in \operatorname{span}(\bm{z}_1, \dots, \bm{z}_{k-1})\right) = \int_{\mathbb{R}}\underbrace{\mathbb{P}\left(\bm{z}_k \in \operatorname{span}(\bm{z}_1, \dots, \bm{z}_{k-1})|\bm{z}_1, \dots, \bm{z}_{k-1}\right)}_{=0} p(\bm{z}_k)d\bm{z}_k
\end{equation*}

\end{proof}

We can now easily use this result for an arbitrary sized matrix product: 
\begin{corollary}
Consider random matrices $\Wm^{i} \in \mathbb{R}^{m_i \times m_{i+1}}$ for $i=1, \dots, n$ where each entry is initialized i.i.d. w.r.t. a continuous distribution $p$. Define the product matrix $\Wm = \Wm^1\dots \Wm^{n}$. Then it holds that
\begin{equation*}
    \operatorname{rank}(\Wm) = \operatorname{min}(m_1, \dots, m_n)
\end{equation*}

\end{corollary}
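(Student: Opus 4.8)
The plan is to argue by induction on the number of factors $n$, peeling off one random matrix at a time and using the two-factor result as the engine. The base cases $n=1$ and $n=2$ are exactly Lemma~\ref{full_rank_weight} and Lemma~\ref{two_matrix}. For the inductive step I would set $\Pm := \Wm^1 \cdots \Wm^{n-1} \in \mathbb{R}^{m_1 \times m_n}$, so that $\Wm = \Pm\,\Wm^n$, and invoke the induction hypothesis to obtain $\operatorname{rank}(\Pm) = \min(m_1, \dots, m_n) =: r$ almost surely. The goal then reduces to showing $\operatorname{rank}(\Pm\Wm^n) = \min(r, m_{n+1}) = \min(m_1, \dots, m_{n+1})$, the minimum over all the layer dimensions, where the last equality uses $r \le m_n$.

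The upper bound is immediate from submultiplicativity of rank: $\operatorname{rank}(\Wm) \le \min\big(\operatorname{rank}(\Pm), \operatorname{rank}(\Wm^n)\big) = \min(r, m_n, m_{n+1}) = \min(r, m_{n+1})$, where $\operatorname{rank}(\Wm^n)=\min(m_n,m_{n+1})$ by Lemma~\ref{full_rank_weight}. For the matching lower bound I would condition on $\Pm$ (which is independent of $\Wm^n$) and treat it as a fixed matrix of rank $r$, then run the column-by-column independence argument of Lemma~\ref{two_matrix}. Writing $\bm{z}_j = \Pm\,\Wm^{n}_{\bullet j}$ for the columns of $\Wm$, I would show inductively that $\bm{z}_1, \dots, \bm{z}_{\min(r, m_{n+1})}$ are linearly independent almost surely: assuming $\bm{z}_1, \dots, \bm{z}_{j-1}$ span a $(j-1)$-dimensional subspace $V_{j-1} \subset \operatorname{Im}(\Pm)$ with $j-1 < r$, the event $\bm{z}_j \in V_{j-1}$ coincides with $\{\Wm^{n}_{\bullet j} \in \Pm^{-1}(V_{j-1})\}$, after which I integrate over the earlier columns and over $\Pm$ by Fubini, exactly as in the two lemmas.

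The crux, and the one place where Lemma~\ref{two_matrix} cannot be quoted verbatim, is that the partial product $\Pm$ need not have full row rank: whenever an interior dimension is the bottleneck (a narrow middle layer), $\Pm$ is genuinely rank-deficient, whereas the hard case of Lemma~\ref{two_matrix} assumes the left factor has full rank. I would resolve this by pulling the event back to the domain $\mathbb{R}^{m_n}$ where the fresh i.i.d. column $\Wm^{n}_{\bullet j}$ lives: since $\Pm$ has rank $r$ and $V_{j-1}$ has dimension $j-1$, the preimage $\Pm^{-1}(V_{j-1})$ is a subspace of $\mathbb{R}^{m_n}$ of dimension $(m_n - r) + (j-1)$, which is strictly less than $m_n$ precisely when $j \le r$. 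A proper subspace has Lebesgue measure zero, so a column drawn from a continuous distribution lands in it with probability zero, yielding $\operatorname{rank}(\Wm) \ge \min(r, m_{n+1})$ and closing the induction. The main obstacle is therefore conceptual rather than computational: one must upgrade the two-factor statement to allow a fixed, possibly rank-deficient left factor, and handle the randomness of $\Pm$ cleanly through its independence from $\Wm^n$ and nested conditioning.
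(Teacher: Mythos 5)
Your proof is correct and follows the same overall strategy as the paper --- peel off one factor at a time and rerun the column-by-column, measure-zero independence argument --- but it is decisively more careful at exactly the point where the paper is glib. The paper's entire proof reads ``apply Lemma~\ref{two_matrix} recursively,'' which, taken literally, does not type-check: after one multiplication the remaining factor (the paper's $\Wm^{2:n}$, your $\Pm$) is no longer a matrix with i.i.d.\ continuous entries, and when an interior dimension is the bottleneck it is genuinely rank-deficient, a situation the hard case of Lemma~\ref{two_matrix} never confronts (there both factors are full rank almost surely). Your preimage dimension count, $\dim \Pm^{-1}(V_{j-1}) = (m_n - r) + (j-1) < m_n$ exactly when $j \le r$, is precisely the missing ingredient that upgrades the two-factor lemma to a fixed, possibly rank-deficient left factor that is independent of the fresh column; combined with the submultiplicativity upper bound and Fubini over the conditioning, this closes the induction rigorously. (You also implicitly read the corollary's conclusion as $\min(m_1,\dots,m_{n+1})$, which is the correct statement --- the paper's indexing drops the last dimension.) In short: same architecture as the paper, but your version supplies the real content that the one-line recursion omits, at the modest cost of carrying the conditioning bookkeeping explicitly.
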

\begin{proof}
Apply Lemma~\ref{two_matrix} recursively, i.e. for $\Wm = \Wm^{1}\Wm^{2:n}$, then for $\Wm^{2:n} = \Wm^{2}\Wm^{3:n}$ up until $\Wm^{n}$.
\end{proof}

\clearpage

\section{Rank of the outer-product term}\label{supp:outer-product}
We begin by discussing the proof of Proposition~\ref{eq:outer-decomp}. Then, we briefly discuss the example of two-layer networks to motivate the proof, and after that we discuss the proof of the Theorem~\ref{theorem:ub-outer}. Subsequently, we present the proof of Corollary~\ref{corollary:outer-rank}.

\subsection{Proof of Proposition~\ref{eq:outer-decomp}}

\begin{repproposition}{eq:outer-decomp}
For a deep linear network, $\,\HO = \Am_o  \Bm_o {\Am_o}^\top\,$, where $\,\Bm_o = \Im_K \kro \covx\, \in \mathbb{R}^{Kd\times Kd}$,
\begin{equation*}
\text{and}\,\,\,\Am_o^\top=\begin{pmatrix}
\wM{L:2} \kro \Im_d \quad \cdots \quad 
\wM{L:l+1} \kro \wM{1:l-1}
\quad \cdots \quad 
\Im_K \kro \wM{1:L-1}
\end{pmatrix}\,\, \in \mathbb{R}^{Kd\times p}\,,
\end{equation*}
\begin{proof}
From Eq.~\eqref{eq:outer-kl} we can notice that any block, say $kl$-th, of $\HO$ can be re-written as, 
\begin{align*}
\HOsup{kl} =\left(\wM{k+1:L} \kro \wM{k-1:1} \right) \; \left(\Im_K \kro \covx\right) \; {\left(\wM{L:l+1} \otimes \wM{1:l-1}  \right)}\,,
\end{align*}

where, we have used the mixed-product property of Kronecker products i.e., $\Am\Bm\kro\Cm\Dm=(\Am\kro\Cm)(\Bm\kro\Dm)$. Now, it is clear from looking at the terms which are on the left and right of $\Im_K \kro \covx$, that we get the required decomposition.

\end{proof}
\end{repproposition}

\begin{remark}\label{remark:cov}
As mentioned in the preliminaries, we consider, without loss of generality, that when the (uncentered) input covariance $\covx$ has rank $r<d$, then we take it to be 

\[\covx = \begin{pmatrix}
     (\covx)_{\,r\times r}& \mb{0}_{\,r\times (d-r)} \\[2mm]
     \mb{0}_{\,(d-r)\times r}& \mb{0}_{\,(d-r)\times (d-r)}
\end{pmatrix}\,.\]

which is always possible by pre-processing the input (although this is not needed in practice). Thus, in such a scenario we can equivalently work with $\covx := (\covx)_{\,r\times r}$ and just consider the first $r$ columns of $\wM{1}$ (whose shape will then be $\mathbb{R}^{M_1\times r}$). Otherwise, if $r=d$, then we just continue with $\covx$ and $\wM{1}\in \mathbb{R}^{M_1\times d}$ as usual. To simplify our discussion ahead, we will always write $r$ in place of $d$, however the meaning of it should be clear from this remark.

\end{remark} 

Thus, in our presentation of the proof of Theorem~\ref{theorem:ub-outer}, we will make a similar adaptation to Proposition~\ref{eq:outer-decomp}, and so
\begin{equation*}
\Am_o^\top=\begin{pmatrix}
\wM{L:2} \kro \Im_r & \cdots \quad 
\wM{L:l+1} \kro \wM{1:l-1}
\quad \cdots &
\Im_K \kro \wM{1:L-1}
\end{pmatrix}\,\, \in \mathbb{R}^{Kr\,\times \, p^\prime}\,,
\end{equation*}

where $p^{\prime} = p+ K(r-d)$.

\subsection{Example for two-layer networks}
Let us first illustrate Theorem~\ref{theorem:ub-outer} via the example of $L=2$, i.e., we have a 2-layer network $\Fn(\x)= \wMb \wMa \x$, with weight matrices $\wMb \in \mathbb{R}^{K\times M_1}$ and $\wMa \in \mathbb{R}^{M_1\times r}$. Applying Proposition~\eqref{eq:outer-decomp}, we obtain $\Am_o$ with the familiar structure:
\[\Am_o = \begin{pNiceArray}{c}
\mb W^{2^\top}  \kro \Im_r \\[1mm]
\Im_K \,\kro \wMa \\[1mm]
\end{pNiceArray}\]

Applying Lemma~\ref{lemma:kronecker-block} on $\Am_o\,$ thus yields:
$\rank(\Am_o) = r \, \rank(\mb W^{2^\top} ) \,+\, K \, \rank({\wMa})\, - \,\rank(\mb W^{2^\top} )  \rank({\wMa})$. If we assume that the hidden layer is the bottleneck, i.e., $q:=\min(r, M_1, K)=M_1$, then we get $\rank(\Am_o) = r \, M_1 + K \, M_1 - {M_1}^2$, keeping in mind the Assumption \ref{assump:2}. 

While here the special $\Zm$-like structure is apparent at the outset, the general case of $L$-layers is more involved and requires additional work to reduce to this structure, as illustrated in our proof ahead.

\subsection{Proof of Theorem~\ref{theorem:ub-outer}}
Let us restate the Theorem~\ref{theorem:ub-outer} from the main text,
\begin{reptheorem}{theorem:ub-outer}
Consider the matrix $\Am_o$ mentioned in Proposition~\ref{eq:outer-decomp}. 
Under the assumption \ref{assump:2},
\[
\rank(\Am_o) = 
r \rank(\wM{2:L}) + K \rank(\wM{L-1:1}) - \rank(\wM{2:L})\rank(\wM{L-1:1}) = q \,( r + K  - q)\,.
\]
\end{reptheorem}

\begin{proof}

The proof is divided into two parts: (1) Bottleneck case and (2) Non-bottleneck case. For more details about the block-row operations that we employ here, please refer to the Section~\ref{supp:row-col-ops}.

\paragraph{Part 1: Bottleneck case.}
We assume without loss of generality that the layer $\ell-1$ has the minimum layer width out of all hidden layers, besides what is known that $M_{\ell-1} < \min(d, K)$. We will therefore have that $\Wm^{\ell}\in \mathbb{R}^{M_{\ell}\times M_{\ell-1}}$ will have full column rank and will be left-invertible. Due to random initialization of weight matrices (see Section~\ref{supp:full_rank_weight}) we will also have $\wM{k:\ell}, \, k\geq \ell$ to also have a left inverse. 

Then let us write the block matrix $\Am_o$ in the column manner and label the row blocks corresponding to layer $\ell$ as $R_\ell$:
\NiceMatrixOptions{code-for-first-row = \color{blue},
                   code-for-first-col = \color{blue},}
\begin{equation}\label{eq:startA}
\Am_o = 
\begin{pNiceArray}[first-row,first-col,nullify-dots]{c}
        & \\[2mm]
R_1& \wM{2:L}\kro  \Im_r   \\[1mm]
\Vdots & \vdots \\[1mm]
R_{\ell-1} & \wM{\ell:L} \kro  \wM{\ell-2:1} \\[2mm]
R_{\ell} & \wM{\ell+1:L} \kro  \wM{\ell-1:1} \\[2mm]
R_{\ell+1} & \wM{\ell+2:L} \kro  \wM{\ell:1} \\[2mm]
\Vdots & \vdots \\[1mm]
R_{L}  &   \mathbf{I}_K \kro \wM{L-1:1}  \\
\end{pNiceArray}.
\end{equation}

Consider the following (block) row operations: 
$$
R_{k} \leftarrow \left(\bm I_{M_{k}} \kro {\wM{k-1:\ell}}\right)^{\,\dagger} R_{k}, \quad \forall k \in {\ell +1, \cdots, L}
$$

These row-operations are valid as the pre-factor has full-column rank, and are rank preserving, as discussed in Section~\ref{supp:row-col-ops}. In this way, we have 
\begin{equation}
\Am_o = 
\begin{pNiceArray}[first-row,first-col,nullify-dots]{c}
        & \\[2mm]
R_1& \wM{2:L}\kro  \Im_r   \\[1mm]
\Vdots & \vdots \\[1mm]
R_{\ell-2} & \wM{\ell-1:L} \kro  \wM{\ell-3:1} \\[2mm]
R_{\ell-1} & \wM{\ell:L} \kro  \wM{\ell-2:1} \\[2mm]
R_{\ell} & \wM{\ell+1:L} \kro  \wM{\ell-1:1} \\[2mm]
R_{\ell+1} & \wM{\ell+2:L} \kro  \wM{\ell-1:1} \\[2mm]
\Vdots & \vdots \\[1mm]
R_{L}  &   \mathbf{I}_K \kro \wM{\ell-1:1}  \\
\end{pNiceArray}.
\end{equation}

Similarly, $\wM{{\ell-1}^\top} \in \mathbb{R}^{M_{\ell-2} \times M_{\ell-1}}$ as well as $\wM{k\,:\,\ell-1}$ for  $k\leq \ell-1$ is also full-column rank and thus left-invertible. Then apply the following row operations, 

$$
R_{k} \leftarrow \left(\wM{k +1\,:\, \ell-1} \kro I_{M_{k-1}}\right)^{\,\dagger} R_{k} , \quad \forall k \in {\ell -2, \cdots, 1}
$$

And we get, 
\begin{equation}
\Am_o = 
\begin{pNiceArray}[first-row,first-col,nullify-dots]{c}[code-before =\rowcolor{blue!15}{1,9}]
        & \\[2mm]
R_1& \wM{\ell:L}\kro  \Im_r   \\[1mm]
\Vdots & \vdots \\[1mm]
R_{\ell-2} & \wM{\ell:L} \kro  \wM{\ell-3:1} \\[2mm]
R_{\ell-1} & \wM{\ell:L} \kro  \wM{\ell-2:1} \\[2mm]
\hline\\
R_{\ell} & \wM{\ell+1:L} \kro  \wM{\ell-1:1} \\[2mm]
R_{\ell+1} & \wM{\ell+2:L} \kro  \wM{\ell-1:1} \\[2mm]
\Vdots & \vdots \\[1mm]
R_{L}  &   \mathbf{I}_K \kro \wM{\ell-1:1}  \\
\end{pNiceArray}.
\end{equation}

Now using $R_1$, we can apply the deletion block operations to remove $\{R_2, \cdots, R_{\ell-1}\}$ as the left term in the Kronecker is identical. Next, via $R_{L}$ we can apply the deletion block operations to get rid of $\{R_\ell, \cdots, R_{L}\}$ as now the right term in the Kronecker is identical.  We are left with:
\begin{equation}
\Am_o = 
\begin{pNiceArray}[first-row,first-col,nullify-dots]{c}
        & \\[2mm]
R_1& \wM{\ell:L}\kro  \Im_r   \\[2mm]
R_{L}  &   \mathbf{I}_K \kro \wM{\ell-1:1}  \\
\end{pNiceArray}.
\end{equation}

Finally, we can apply Lemma~\ref{lemma:kronecker-block} to obtain that, when $\Wm^{\ell}$ has full column rank or ${M_{\ell-1}}$ is the minimum layer-width (i.e., the bottleneck dimension):

\begin{align*}
\rank(\Am_o) 
&= r \rank(\wM{\ell:L}) + K \rank(\wM{\ell-1:1}) - \rank(\wM{\ell:L})\rank(\wM{\ell-1:1}) \\
&= r {M_{\ell-1}} + K {M_{\ell-1}} - {M_{\ell-1}}^2 \\
&=  q \,( r + K  - q)\,,
\end{align*}

where, in the last step, we have used the definition of $q:=\min(r, M_1, \cdots, M_{L-1}, K)=M_{\ell-1}$ which gives rise to the equivalent expression.

\paragraph{Part 2: Non-bottleneck case.} 

This part is very similar and we will see that it uses just one set of row operations (like for the layers $> \ell$ and $< \ell$). In the non-bottleneck case, there are further two possibilities: 

\textbf{When $K$ is the minimum:} This means that $\wMt{L}$ has full column-rank as $M_{L} = K$ is the minimum of all layer widths and input dimensionality. We start from the same $\Am_o$ matrix as in Eq.~\eqref{eq:startA}. Now consider the following factoring-out operations:

$$
R_{k} \leftarrow \left({\wM{k+1:L}} \kro \bm I_{M_{k-1}} \right)^{\,\dagger} R_{k}  \quad \forall k \in {1, \cdots, L-1}
$$

These row-operations are valid (i.e., rank-preserving) as the pre-factor of (block-)row $R_k$ has full column rank, as discussed in Section~\ref{supp:row-col-ops}. This results in,
\begin{equation}
\Am_o = 
\begin{pNiceArray}[first-row,first-col,nullify-dots]{c}[code-before =\rowcolor{blue!15}{1}]
        & \\[2mm]
R_1&\mathbf{I}_K\kro  \Im_r   \\[1mm]
\Vdots & \vdots \\[1mm]
R_{\ell-1} & \mathbf{I}_K\kro  \wM{\ell-2:1} \\[2mm]
R_{\ell} & \mathbf{I}_K \kro  \wM{\ell-1:1} \\[2mm]
R_{\ell+1} & \mathbf{I}_K \kro  \wM{\ell:1} \\[2mm]
\Vdots & \vdots \\[1mm]
R_{L}  &   \mathbf{I}_K \kro \wM{L:1}  \\
\end{pNiceArray}.
\end{equation}\\

This is then followed by,
$$
R_{k} \leftarrow R_{k} - \left( \bm I_{K}  \kro {\wM{k-1:1}} \right) R_{1}, \quad \forall \, k \in {2, \cdots, L}\,.
$$
This results in,
\begin{equation}
\Am_o = 
\begin{pNiceArray}[first-row,first-col,nullify-dots]{c}
        & \\
R_1& \Im_K \kro  \Im_r   \\[1mm]
\end{pNiceArray}.
\end{equation}

\textbf{When $r$ is the minimum:} This means that $\wM{1}$ has full column-rank as $M_{0} = r$ is the minimum of all layer widths, input and output dimensionality.  We start from the same $\Am_o$ matrix as in Eq.~\eqref{eq:startA}. Now consider the following factoring-out operations:

$$
R_{k} \leftarrow \left(\bm I_{M_{k}} \kro {\wM{k-1:1}}\right)^{\,\dagger} R_{k} \quad \forall \, k \in {2, \cdots, L}
$$

As before, these row-operations are valid (i.e., rank-preserving) as the pre-factor of (block-)row $R_k$ has full column rank, as discussed in Section~\ref{supp:row-col-ops}. This results in,

\begin{equation}
\Am_o = 
\begin{pNiceArray}[first-row,first-col,nullify-dots]{c}[code-before =\rowcolor{blue!15}{7}]
        & \\[2mm]
R_1& \wM{2:L}\kro  \Im_r   \\[1mm]
\Vdots & \vdots \\[1mm]
R_{\ell-1} & \wM{\ell:L} \kro   \Im_r \\[2mm]
R_{\ell} & \wM{\ell+1:L} \kro   \Im_r \\[2mm]
R_{\ell+1} & \wM{\ell+2:L} \kro   \Im_r \\[2mm]
\Vdots & \vdots \\[1mm]
R_{L}  &   \mathbf{I}_K \kro  \Im_r  \\
\end{pNiceArray}.
\end{equation}

This is then followed by,
$$
R_{k} \leftarrow  R_{k} - \left({\wM{k+1:L}}  \kro \bm I_{r} \right) R_{L}, \quad \forall k \in {1, \cdots, L-1}\,.
$$

This results in,
\begin{equation}
\Am_o = 
\begin{pNiceArray}[first-row,first-col,nullify-dots]{c}
        & \\
R_L& \Im_K \kro  \Im_r   \\[1mm]
\end{pNiceArray}.
\end{equation}

\textbf{Resulting rank: } Thus, for either scenario of $K$ or $r$ being the minimum we get, $$\rank(\Am_o) = \rank(\bm I_{K} \kro  \Im_r) = K r. $$

\paragraph{Final note.} It is easy to check that for both the first and second case,  we can summarize the obtained rank in the form of the following equality:

$$\rank(\Am_o) = r \rank(\wM{L:2}) + K \rank(\wM{L-1:1}) - \rank(\wM{L:2})\rank(\wM{L-1:1}) =  q \,( r + K  - q).$$

\end{proof}

\subsection{Proof of Corollary~\ref{corollary:outer-rank}}
Let us first recall the Corollary,

\begin{repcorollary}{corollary:outer-rank}
Under the setup of Theorem~\ref{theorem:ub-outer}, the rank of $\HO$ is given by \[\rank(\HO) = q \,( r + K  - q)\,.\]
\end{repcorollary}
\begin{proof}
It is quite evident that we can use the rank of $\Am_o$ to bound the rank of $\HO$ due to the decomposition from Proposition~\ref{eq:outer-decomp}. However, as mentioned in the main text, we can show an equality using the Lemma~\ref{lemma:outer-equality} since $\HO$ is also of the form $\Am \Bm \Am^{\top}$ with $\Bm = (\Im_K \kro \covx) \succ \mb{0}$.

\begin{align*}
    \rank(\HO) \overset{\textrm{Prop.~\ref{eq:outer-decomp}}}{=} \rank(\Am_o\Bm \Am_o^{\top}) = &\overset{\textrm{Lemma~\ref{lemma:outer-equality}}}{=} \rank(\Am_o)\\ &\overset{\textrm{Thm.~\ref{theorem:ub-outer}}}{=} 
    q \,( r + K  - q)\,.
\end{align*}
\end{proof}

\clearpage
\section{Rank of the functional Hessian term}\label{supp:functional}

\subsection{Proof of Theorem~\ref{theorem:func-hess-cols}}
Similar to the outer-product case, here as well the proof is divided into two cases. However, there is a subtle difference in that the input-residual covariance matrix ($\Om=\E[\pmb \delta_{\x, \y} \, \x^\top]$) can also dictate the rank, besides the weight matrix with minimum dimension. Especially since during training, as the residual approaches zero, the matrix $\Om\rightarrow0$. To abstract this, we update our definition of $q$ to include $s=\rank(\Om)$, and thus we get
\[q=\min(r, M_1, \cdots, M_{L-1}, K, s)\,.\]

Notice the additional $s$ as the last argument in the minimum. Further, since $\Om\in \mathbb{R}^{K\times r}$, we have that $s\leq\min(K, r)$. 

Given these considerations, we split our analysis to the case where $q=s$ (referred to as the non-bottleneck case) and where $q=\min(M_1, \cdots, M_{L-1})$ (referred to as the bottleneck case). Note, if $q=r$ or $q=K$, then these cases are already subsumed by $q=s$ case, since $s\leq\min(K,r)$. 

Further, in each of the two cases, we will analyse the rank of the block-columns of the functional Hessian formed with respect to a transposed weight matrix, i.e., $\HFhatsup{\bullet \ell}$. This makes it easier to deal with underlying structure of the actual block-columns $\HFcol{\ell}$, and does not affect the rank as it is invariant to row or column permutations. Finally, before we proceed into the details of the two parts, let us recollect the theorem statement,

\begin{reptheorem}{theorem:func-hess-cols}
For a deep linear network, the rank of $\ell$-th column-block, $\HFhatsup{\bullet \ell}$, of the matrix $\HFhat$, under the assumption \ref{assump:2} is given as
 $\rank(\HFhatsup{\bullet \ell}) = q\, M_{\ell -1} + q\,M_{\ell}  - q^2\,,\,$ for $\ell \in [2, \cdots , L-1]$.
When $\ell=1$, we have $
\rank(\HFhatsup{\bullet 1}) =q\, M_{1} + q \, s - q^2\,.$ And, when $\ell=L$, we have $
\rank(\HFhatsup{\bullet L}) =q\, M_{L-1} + q \, s - q^2\,.$
Here, $q := \min(r, M_1, \cdots, M_{L-1}, K, s)$ and $s:=\rank(\Om)=\rank(\E[\pmb \delta_{\x,\y}\,\x^\top])$.
\end{reptheorem}

\subsubsection{Non-bottleneck case ($q = s$)}

In this case, the Theorem boils down to showing the following:
\begin{enumerate}
\item The rank of the $\ell^\text{th}$ column ($\ell \in [2, \cdots , L-1]$) of the functional Hessian, i.e. $\HFhatsup{\bullet \ell}$, is given by:
\[
\rank(\HFhatsup{\bullet \ell}) = s\, M_{\ell-1} + s\,M_{\ell}  - s^2\]

\item The rank of the first column of the functional Hessian, i.e. $\HFhatsup{\bullet 1}$, is given by:
\[
\rank(\HFhatsup{\bullet 1}) = s\, M_{1} 
\]

\item The rank of the last column of the functional Hessian, i.e. $\HFhatsup{\bullet L}$, is given by:
\[
\rank(\HFhatsup{\bullet L}) = s\, M_{L-1}
\]
\end{enumerate}
\begin{proof}
In this case, $\Om$, which shows up in every single block will dictate the rank. As before, for more details about the factoring-out and deletion block-row operations that we employ here, please refer to the Section~\ref{supp:row-col-ops}. Let us now look at each of the parts:

\paragraph{Part 1.} Let us first consider the case of the inner-columns, with $\ell>1$ and $\ell<L$. Now, the expression for $\HFhatsup{\bullet \ell}$ is given by,

\NiceMatrixOptions{code-for-first-row = \color{blue},
                   code-for-first-col = \color{blue},}
$$
\small
\HFhatsup{\bullet \ell}= \begin{pNiceArray}[first-row,first-col,nullify-dots]{c}
       & \vect_{r}({\wMt{\ell}}) \\[2mm]
\vect_{r}(\wM{1}) & \wM{2:\ell-1} \kro \Om^\top  \wM{L:\ell+1} \\[1mm]
\Vdots & \vdots \\[1mm]
\vect_{r}(\wM{j}) & \wM{j +1:\ell-1} \kro \wM{j-1:1} \Om^\top \wM{L:\ell+1}  \\[1mm]
\Vdots & \vdots \\[1mm]
\vect_{r}(\wM{\ell-1}) & \Im_{M_{\ell-1}}\kro \wM{\ell-2:1} \Om^\top  \wM{L:\ell+1}  \\[2mm]
\hline\\
\vect_{r}(\wM{\ell}) & \bm{0}  \\[2mm]
\hline\\
\vect_{r}(\wM{\ell+1}) &  \wM{\ell+2:L} \Om \wM{1:\ell-1} \kro \Im_{M_{\ell}} \\[1mm]
\Vdots & \vdots \\[1mm]
\vect_{r}(\wM{k}) & \wM{k+1:L} \Om \wM{1:\ell-1} \kro \wM{k-1:\ell+1} \\[1mm]
\Vdots & \vdots \\[1mm]
\vect_{r}(\wM{L}) & \Om \wM{1:\ell-1} \kro \wM{L-1:\ell +1} \\[1mm]

\end{pNiceArray}
$$

Now, since $\rank(\Om) = s$, we can express it as $\Om = \Cm \Dm$, where $\Cm \in \mathbb{R}^{K\times s}$ and $\Dm \in \mathbb{R}^{s \times d}$. Then consider the following factoring-out block operations: 

$$
R_{j} \leftarrow  \left(\Im_{m_j} \kro \wM{j-1:1}\Dm^\top \right)^\dagger R_{j}, \quad \forall j \in {1, \cdots, \ell-1}
$$

where, $^\dagger$ denotes the left-inverse. When $j=1$, $\wM{j-1:1}:=\Im_r$. These row-operations are valid (i.e., rank-preserving) as $\wM{j-1}\Dm^\top$ has full column rank and is left-invertible. Similarly, consider the following factoring-out block operations: 

$$
R_{k} \leftarrow  \left(\wM{k+1:L} \Cm \kro \Im_{M_{k-1}} \right)^\dagger R_{k}, \quad \forall k \in {\ell+1, \cdots, L}
$$

Note, when $k=L$, $\wM{k+1:L}:=\Im_K$. These row-operations are also valid (i.e., rank-preserving) as $\wM{k+1:L}\Cm$ has full column rank and is left-invertible. Then we can express the $\HFhatsup{\bullet \ell}$ as follows:
$$
\small
\HFhatsup{\bullet \ell}= \begin{pNiceArray}[first-row,first-col,nullify-dots]{c}
       & \vect_{r}({\wMt{\ell}}) \\[2mm]
\vect_{r}(\wM{1}) & \wM{2:\ell-1} \kro \Cm^\top  \wM{L:\ell+1} \\[1mm]
\Vdots & \vdots \\[1mm]
\vect_{r}(\wM{j}) & \wM{j +1:\ell-1} \kro \Cm^\top \wM{L:\ell+1}  \\[1mm]
\Vdots & \vdots \\[1mm]
\vect_{r}(\wM{\ell-1}) & \Im_{M_{\ell-1}}\kro \Cm^\top  \wM{L:\ell+1}  \\[2mm]
\hline\\
\vect_{r}(\wM{\ell}) & \bm{0}  \\[2mm]
\hline\\
\vect_{r}(\wM{\ell+1}) & \Dm \wM{1:\ell-1} \kro \Im_{M_{\ell}} \\[1mm]
\Vdots & \vdots \\[1mm]
\vect_{r}(\wM{k}) & \Dm \wM{1:\ell-1} \kro \wM{k-1:\ell+1} \\[1mm]
\Vdots & \vdots \\[1mm]
\vect_{r}(\wM{L}) & \Dm \wM{1:\ell-1} \kro \wM{L-1:\ell +1} \\[1mm]

\end{pNiceArray}
$$

Now, it is clear that we can use row block $\ell-1$ to eliminate all row blocks prior to it and row block $\ell+1$ to eliminate all row blocks after it by the following deletion block operations:

$$
R_{j} \leftarrow R_{j} - \left(\wM{j +1:\ell-1} \kro \Im_{q}\right) R_{\ell-1}, \quad \forall j \in {1, \cdots, \ell-1}
$$
$$
R_{k} \leftarrow R_{k} - \left(\Im_{q} \kro \wM{k-1:\ell+1}\right) R_{\ell+1}, \quad \forall k \in {\ell+1, \cdots, L}
$$

Hence, we have that \begin{align*}
\rank(\HFhatsup{\bullet \ell}) = \rank\left(\begin{array}{c}
      \Im_{M_{\ell-1}}\kro \Cm^\top  \wM{L:\ell+1} \\[1mm]
      \Dm \wM{1:\ell-1} \kro \Im_{M_{\ell}}
\end{array}\right)  =  s\, M_{\ell-1} + s\,M_{\ell}  - s^2 \end{align*}
where, we used the Lemma from~\cite{CHUAI2004129} in the last step. 

\paragraph{Part 2.} The procedure for this part will follow Part 1 procedure for blocks \textbf{after} the zero block.
\begin{align*}
\rank(\HFhatsup{\bullet \ell}) = \min(s\, M_1,  d\, M_1) = s\, M_1
\end{align*}
\paragraph{Part 3.} The procedure for this part will follow Part 1 procedure for blocks \textbf{before} the zero block.

\begin{align*}
\rank(\HFhatsup{\bullet \ell}) 
=\min(s\, M_{L-1},  K\, M_{L-1}) = s\, M_{L-1}
\end{align*}

\end{proof}

\subsubsection{Bottleneck case ($q \neq s$)}
Here, we need to prove the following:

\begin{enumerate}
\item The rank of the $\ell^\text{th}$ column ($\ell \in [2, \cdots , L-1]$) of the functional Hessian, i.e. $\HFhatsup{\bullet \ell}$, is given by:

$$
\rank(\HFhatsup{\bullet \ell}) =q\, M_{\ell-1} + q\,M_{\ell}  - q^2$$

\item The rank of the first column of the functional Hessian, i.e. $\HFhatsup{\bullet 1}$, is given by:

$$
\rank(\HFhatsup{\bullet 1}) = q\, M_{1} + q \, s - q^2
$$

\item The rank of the last column of the functional Hessian, i.e. $\HFhatsup{\bullet L}$, is given by:

$$
\rank(\HFhatsup{\bullet L}) =q\, M_{L-1} + q \, s - q^2
$$
\end{enumerate}

\begin{proof}
Let us assume that $M_k$ is the bottleneck width, so it will ``dictate" the rank now. Like in previous parts, for more details about the block-row operations that we employ here, please refer to the Section~\ref{supp:row-col-ops}. Let us now look at each of the parts:

\paragraph{Part 1.} Let us first consider the case of the inner-columns, with $\ell>1$ and $\ell<L$. Further, let us take $k>\ell$, and the procedure in the other scenario of $k<\ell\,$ is similar. 

We have that $\HFhatsup{\bullet \ell}$ is as follows:

\NiceMatrixOptions{code-for-first-row = \color{blue},
                   code-for-first-col = \color{blue},}
$$
\small
\HFhatsup{\bullet \ell}= \begin{pNiceArray}[first-row,first-col,nullify-dots]{c}
       & \vect_{r}({\wMt{\ell}}) \\[2mm]
\vect_{r}(\wM{1}) & \wM{2:\ell-1} \kro \Om^\top  \wM{L:\ell+1} \\[1mm]
\Vdots & \vdots \\[1mm]
\vect_{r}(\wM{j}) & \wM{j +1:\ell-1} \kro \wM{j-1:1} \Om^\top \wM{L:\ell+1}  \\[1mm]
\Vdots & \vdots \\[1mm]
\vect_{r}(\wM{\ell-1}) & \Im_{M_{\ell-1}}\kro \wM{\ell-2:1} \Om^\top  \wM{L:\ell+1}  \\[2mm]
\hline\\
\vect_{r}(\wM{\ell}) & \bm{0}  \\[2mm]
\hline\\
\vect_{r}(\wM{\ell+1}) &  \wM{\ell+2:L} \Om \wM{1:\ell-1} \kro \Im_{M_{\ell}} \\[1mm]
\Vdots & \vdots \\[1mm]
\vect_{r}(\wM{k}) & \wM{k+1:L} \Om \wM{1:\ell-1} \kro \wM{k-1:\ell+1} \\[1mm]
\Vdots & \vdots \\[1mm]
\vect_{r}(\wM{L}) & \Om \wM{1:\ell-1} \kro \wM{L-1:\ell +1} \\[1mm]

\end{pNiceArray}
$$

Notice, we can write $\wM{i+1:L} = \wM{i+1:k} \wM{k+1:L}, \, \forall i \leq k -1 $. The matrix $\wM{i+1:k}$ is left-invertible as $M_k$ is the bottleneck width. Then, we consider the following factoring-out block operations (see Section~\ref{supp:row-col-ops}) for the below zero part: 

$$
R_{i} \leftarrow  \left(\wM{i+1:k} \kro \Im_{M_{i-1}} \right)^\dagger R_{i}, \quad \forall i \in {\ell +1, \cdots, k-1}
$$

where, $^\dagger$ denotes the left-inverse. Now, for a layer $i$ between $\{k+2, \cdots,L\}$, we notice that $\wM{i-1:\ell+1} = \wM{i-1:k+1} \wM{k:\ell+1}$ with $\wM{i-1:k+1}$ being left-invertible. Hence consider the factoring-out operations, 

$$
R_{i} \leftarrow  \left(\Im_{M_{i}} \kro \wM{i-1:k+1} \right)^\dagger R_{i}, \quad \forall i \in {k +2, \cdots, L}
$$

Now, for the row-blocks in the part above zero, we have that $\wM{j-1:1}\Om^\top\wM{L:\ell+1} = \wM{j-1:1}\Om^\top\wM{L:k+1}\wM{k:\ell+1}, \, \forall j \leq \ell-1$. Notice, that $\wM{j-1:1}\Om^\top\wM{L:k+1}$ is left invertible due to the bottleneck $M_k$. Hence, consider the following block-row operations: 

$$
R_{j} \leftarrow  \left(\Im_{m_j}\kro \wM{j-1:1}\Om^\top\wM{L:k+1} \right)^\dagger R_{j}, \quad \forall j \in {1, \cdots, \ell-1}
$$

Here, it does not matter to us what the actual value of $s$ is, since we know that $M_k$ is the minimum width which guarantees that the above expression containing $\Om^\top$ is left invertible.

\noindent Overall, we can thus express the $\HFhatsup{\bullet \ell}$ as follows:
$$
\small
\HFhatsup{\bullet \ell}= \begin{pNiceArray}[first-row,first-col,nullify-dots]{c}
       & \vect_{r}({\wMt{\ell}}) \\[2mm]
\vect_{r}(\wM{1}) & \wM{2:\ell-1} \kro \wM{k:\ell+1} \\[1mm]
\Vdots & \vdots \\[1mm]
\vect_{r}(\wM{j}) & \wM{j +1:\ell-1} \kro \wM{k:\ell+1}  \\[1mm]
\Vdots & \vdots \\[1mm]
\vect_{r}(\wM{\ell-1}) & \Im_{M_{\ell-1}}\kro \wM{k:\ell+1}  \\[2mm]
\hline\\
\vect_{r}(\wM{\ell}) & \bm{0}  \\[2mm]
\hline\\
\vect_{r}(\wM{\ell+1}) & \wM{k+1:L}\Om \wM{1:\ell-1} \kro \Im_{M_{\ell}} \\[1mm]
\Vdots & \vdots \\[1mm]
\vect_{r}(\wM{k}) & \wM{k+1:L} \Om \wM{1:\ell-1} \kro \wM{k-1:\ell+1} \\[1mm]
\Vdots & \vdots \\[1mm]
\vect_{r}(\wM{L}) & \Om \wM{1:\ell-1} \kro \wM{k:\ell +1} \\[1mm]

\end{pNiceArray}
$$

Now, it is clear that we can use row $\ell-1$ to eliminate all rows prior to it as well as the rows from $k+1$ to $L$. While the row $\ell+1$ can be used to eliminate all rows from $\ell+2$ until $k$. Hence, we have that, 

\begin{align*}
\rank(\HFhatsup{\bullet \ell}) = \rank\left(\begin{array}{c}
      \Im_{M_{\ell-1}}\kro \wM{k:\ell+1} \\[2mm]
      \wM{k+1:L} \Om \wM{1:\ell-1} \kro \Im_{M_{\ell}}
\end{array}\right)  =  q\, M_{\ell-1} + q\,M_{\ell}  - q^2 \end{align*}
where, we used the Lemma from~\cite{CHUAI2004129} in the last step. 

\paragraph{Part 2.} Now we deal with the column block corresponding to first layer, which is:
$$
\small
\HFhatsup{1}= \begin{pNiceArray}[first-row,first-col,nullify-dots]{c}
       & \vect_{r}({\wM{1}}^\top) \\[2mm]
\vect_{r}(\wM{1}) & \bm{0}  \\[2mm]
\vect_{r}(\wM{2}) &  \wM{3:L} \Om  \kro \Im_{M_1}  \\[1mm]
\Vdots & \vdots \\[1mm]
\vect_{r}(\wM{k-1}) &  \wM{k:L} \Om \kro \wM{k-2:2}  \\[2mm]
\vect_{r}(\wM{k}) & \wM{k+1:L} \Om \kro \wM{k-1:2} \\[2mm]
\vect_{r}(\wM{k+1}) &  \wM{k+2:L} \Om \kro \wM{k:2}  \\[1mm]
\Vdots & \vdots \\[1mm]
\vect_{r}(\wM{L}) & \Om \kro \wM{L-1:2} \\[1mm]

\end{pNiceArray}
$$

Basically, we have to follow the same procedure for row blocks before $k$ and for the ones after $k$ as done in the part 1.
In other words, for the row blocks prior to $k$, we can write the $\wM{i+1:L} = \wM{i+1:k}\wM{k+1:L}$. Since $\wM{i+1:k}$ is left-invertible due to the bottleneck, we consider the following factoring-out operations:

$$
R_{i} \leftarrow  \left(\wM{i+1:k} \kro \Im_{M_{i-1}} \right)^\dagger R_{i}, \quad \forall i \in {2, \cdots, k-1}
$$

Now, for a layer $i$ between $\{k+2, \cdots,L\}$, we notice that $\wM{i-1:2} = \wM{i-1:k+1} \wM{k:2}$ with $\wM{i-1:k+1}$ being left-invertible. Hence the factoring-out operations will be, 

$$
R_{i} \leftarrow  \left(\Im_{M_{i}} \kro \wM{i-1:k+1} \right)^\dagger R_{i}, \quad \forall i \in {k +2, \cdots, L}
$$

This resulting $\HFhatsup{1}$ is as follows:

$$
\small
\HFhatsup{1}= \begin{pNiceArray}[first-row,first-col,nullify-dots]{c}
       & \vect_{r}({\wM{1}}^\top) \\[2mm]
\vect_{r}(\wM{1}) & \bm{0}  \\[2mm]
\vect_{r}(\wM{2}) &  \wM{k+1:L} \Om  \kro \Im_{M_1}  \\[1mm]
\Vdots & \vdots \\[1mm]
\vect_{r}(\wM{k-1}) &  \wM{k+1:L} \Om \kro \wM{k-2:2}  \\[2mm]
\vect_{r}(\wM{k}) & \wM{k+1:L} \Om \kro \wM{k-1:2} \\[2mm]
\vect_{r}(\wM{k+1}) &  \wM{k+2:L} \Om \kro \wM{k:2}  \\[1mm]
\Vdots & \vdots \\[1mm]
\vect_{r}(\wM{L}) & \Om \kro \wM{k:2} \\[1mm]

\end{pNiceArray}
$$

Now, we can use row $2$ to eliminate rows $3$ until $k$, and similarly we can use row $L$ to eliminate rows $k+1$ to $L-1$. Thus, we have that:
\vspace{-1mm}
\begin{align*}
\rank(\HFhatsup{1}) =  \rank\left(\begin{array}{c}
       \wM{k+1:L} \Om  \kro \Im_{M_1} \\ [1mm]
       \Om \kro \wM{k:2} 
\end{array}\right)  = q\, M_1 + q\, s - q^2\end{align*}

Different from the previous analysis, in the above matrix we did not have a ``naked" Identity matrix on the left, so we could not directly use the Lemma~\ref{lemma:kronecker-block}. But, we used its generalized version contained in Lemma~\ref{lemma:supp-kronecker-block-general} along with the Lemma~\ref{lemma:block-shared} to obtain the rank in the final step. 

\paragraph{Part 3.} Finally, we have the last column:

$$
\small
\HFhatsup{\bullet L}= \begin{pNiceArray}[first-row,first-col,nullify-dots]{c}
       & \vect_{r}({\wM{L}}^\top) \\[2mm]
\vect_{r}(\wM{1}) & \wM{2:L-1} \kro \Om^\top \\[1mm]
\Vdots & \vdots \\[1mm]
\vect_{r}(\wM{k-1}) & \wM{k:L-1} \kro \wM{k-2:1} \Om^\top  \\[1mm]
\vect_{r}(\wM{k}) & \wM{k +1:L-1}\kro \wM{k-1:1} \Om^\top   \\[2mm]
\vect_{r}(\wM{k+1}) & \wM{k +2:L-1}\kro \wM{k:1} \Om^\top   \\[1mm]
\Vdots & \vdots \\[1mm]
\vect_{r}(\wM{L-1}) & \Im_{M_{L-1}} \kro \wM{L-2:1} \Om^\top  \\[1mm]
\vect_{r}(\wM{L}) & \bm{0}  \\[2mm]
\end{pNiceArray}
$$

We can follow a similar strategy as carried out in Part 2 to get, 

\begin{align*}
\rank(\HFhatsup{\bullet L}) = \rank\left(\begin{array}{c}
       \wM{k+1:L-1}\kro \Om^\top \\ [1mm]
       \Im_{M_{L-1}} \kro \wM{k:1} \Om^\top
\end{array}\right)  = q\, M_{L-1} + q\, s - q^2\end{align*}

\end{proof}

\subsection{Proof of Corollary~\ref{corollary:functional-rank}}

Let us remember the Corollary from the main text, 
\begin{repcorollary}{corollary:functional-rank}
Under the setup of Theorem~\ref{theorem:func-hess-cols}, the rank of $\HF$ can be upper bounded as,
\[\rank(\HF) \leq 2\, q\, M  + 2 \,q\, s - L\, q^2\,, \quad \text{where} \quad M=\sum_{\ell=1}^{L-1} M_{\ell}\,.\]
\end{repcorollary}

\begin{proof}

By using the above Theorem~\ref{theorem:func-hess-cols} and applying the fact that $\rank(\left[\Am \,\, \Bm\right])\leq \rank(\Am) + \rank(\Bm)$ on the column-blocks of $\HFhat$, we get the desired upper bound on the rank of entire functional Hessian.
, 
\begin{align*}
    \rank(\HF) &= \rank(\HFhat) \,\,\leq \,\, \sum\limits_{\ell=1}^L \rank(\HFhatsup{\bullet \ell}) \\
    &= q\, M_1 + q\, s - q^2 + \sum\limits_{\ell=2}^{L-1} \left(q\, M_{\ell -1} + q\,M_{\ell}  - q^2\right) + q\, M_{L-1} + q\, s - q^2 \\
    &= 2\, q\, \left(\sum\limits_{\ell=1}^{L-1}M_{\ell}\right)  + 2 \,q\, s - L\, q^2 \,\,=\,\, 2\, q\, M  + 2 \,q\, s - L\, q^2
\end{align*}
\end{proof}

\clearpage

\section{Evolution of Rank during training}
\label{supp:evol-proof}
Here we prove Lemma \ref{dynamics_of_rank}, stating that the rank of the individual weights remains invariant under gradient flow dynamics.
For sake of readability, let us restate Lemma \ref{lemma:evolve}:
\begin{replemma}{lemma:evolve}
For a deep linear network, consider the gradient flow dynamics $\dot{\Wm}^{l}_t = -\eta \nabla_{\Wm^{l}}\mathcal{L}_S(\bm{\btheta})\big{|}_{\bm{\btheta} \,=\, \bm{\btheta}_t}$. Assume: (a) Centered classes: $ \frac{1}{N} \sum_{i:y_{ic} = 1}^{N}\x_i= \bm{0}, \hspace{2mm} \forall\, c \in [1,\dots,K]$. (b) Balancedness at initialization: ${\Wm^{{k+1}^{\top} }_0}\Wm_0^{k+1} = \Wm^{k}_0\\Wm^{k^{\top}}_0$. (c) Square weight-matrices: $\Wm^{l} \in \mathbb{R}^{M \times M}\,,$ $\,\forall \,l\,$ and $\,K=d=M$. Then for all layers $l$,
$\,\, \rank(\Wm^{l}_t) = \rank(\Wm^{l}_0), \hspace{3mm} \,\forall \, t<\infty\,.$
\end{replemma}
To prove Lemma \ref{lemma:evolve}, we first need some helper lemmas. We start with a well-known result:
\begin{lemma}
Under assumption b) in Lemma \ref{lemma:evolve} and gradient flow dynamics, it holds that
$$\left(\Wm^{k+1}_t\right)^{T}\Wm_t^{k+1} = \Wm^{k}_t\left(\Wm^{k}_t\right)^{T}$$
\end{lemma}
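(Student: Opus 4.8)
The plan is to prove the stronger statement that the \emph{balancedness gap} $\Dm^{k}_t := (\Wm^{k+1}_t)^\top\Wm^{k+1}_t - \Wm^{k}_t(\Wm^{k}_t)^\top$ is \emph{conserved} along the flow, i.e. $\tfrac{d}{dt}\Dm^k_t = \mathbf{0}$ for all $t$. Since assumption (b) gives $\Dm^k_0 = \mathbf{0}$, conservation immediately yields $\Dm^k_t = \mathbf{0}$ for all finite $t$, which is exactly the claim. This is the standard conservation-of-balancedness argument (cf.~\citet{arora2019convergence}); the only inputs are the gradient-flow dynamics and the layered structure of the linear network, so I would note at the outset that neither the centering assumption (a) nor the squareness assumption (c) of Lemma~\ref{lemma:evolve} is used here — they enter only in the subsequent step that converts preserved balancedness into preserved rank.

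First I would record the factored form of the layer gradients. For a linear network the loss depends on the parameters only through the end-to-end matrix, so taking the population expectation of Eq.~\eqref{eq:dnn-grad-linear} gives $\nabla_{\Wm^l}\mathcal{L} = \Wm^{l+1:L}\,\Om\,\Wm^{1:l-1}$, where the residual matrix $\Om=\E[\pmb\delta_{\x,\y}\,\x^\top]$ is \emph{independent of the layer index} $l$. Writing $P_l := \Wm^{L}\cdots\Wm^{l+1}$ and $Q_l := \Wm^{l-1}\cdots\Wm^{1}$ (so that, under the paper's transposed-chain convention, $\Wm^{l+1:L}=P_l^\top$ and $\Wm^{1:l-1}=Q_l^\top$), this reads $\nabla_{\Wm^l}\mathcal{L} = P_l^\top\,\Om\,Q_l^\top$. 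The two telescoping identities $P_k = P_{k+1}\Wm^{k+1}$ and $Q_{k+1} = \Wm^{k}Q_k$ are the crux of the argument.

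Next I would differentiate both terms of $\Dm^k_t$ in time and substitute $\dot{\Wm}^l = -\eta\,P_l^\top\Om Q_l^\top$. A direct computation gives
\begin{align*}
\frac{d}{dt}\left[(\Wm^{k+1})^\top\Wm^{k+1}\right] &= -\eta\left[\,Q_{k+1}\Om^\top P_{k+1}\Wm^{k+1} + (\Wm^{k+1})^\top P_{k+1}^\top\Om\,Q_{k+1}^\top\,\right],\\
\frac{d}{dt}\left[\Wm^{k}(\Wm^{k})^\top\right] &= -\eta\left[\,P_k^\top\Om\,Q_k^\top(\Wm^{k})^\top + \Wm^{k}Q_k\Om^\top P_k\,\right].
\end{align*}
Substituting $P_k = P_{k+1}\Wm^{k+1}$ and $Q_k^\top(\Wm^k)^\top = (\Wm^k Q_k)^\top = Q_{k+1}^\top$ (together with its transpose $\Wm^k Q_k = Q_{k+1}$) into the second line renders it term-by-term identical to the first. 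Hence $\tfrac{d}{dt}\Dm^k_t = \mathbf{0}$, the difference stays at its initial value, and $\Dm^k_0=\mathbf{0}$ closes the proof.

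The computation itself is routine; the only genuine care needed is the index bookkeeping of the transposed product-chain notation $\Wm^{k:l}$ — specifically, verifying that the \emph{same} residual matrix $\Om$ factors out of every layer gradient, and that the upstream/downstream products $P_l,Q_l$ telescope correctly so that the two time-derivatives coincide after the substitution. This bookkeeping is the main (and essentially only) obstacle to writing the argument out cleanly.
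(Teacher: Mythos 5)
Your proposal is correct, and it is essentially the same argument as the paper's: the paper does not write out a proof of this lemma at all, but simply defers to \citet{arora2018optimization} (Theorem~1), and the conservation computation you give --- differentiating the gap $(\Wm^{k+1}_t)^\top\Wm^{k+1}_t - \Wm^{k}_t(\Wm^{k}_t)^\top$, substituting the factored gradient $\nabla_{\Wm^{l}}\mathcal{L} = P_l^\top\,\Om\,Q_l^\top$, and using the telescoping identities $P_k = P_{k+1}\Wm^{k+1}$ and $Q_{k+1} = \Wm^{k}Q_k$ to see that the two time-derivatives coincide --- is precisely the proof in that reference, so your write-up is a valid self-contained replacement for the citation. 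Your side remark is also accurate: only assumption (b) is needed here, while the centering and squareness assumptions of Lemma~\ref{lemma:evolve} enter only in the subsequent determinant/rank argument.
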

\begin{proof}
We refer to \citep{arora2018optimization} (Theorem 1) for a proof.
\end{proof}
\noindent
This essentially guarantees that balancedness is preserved throughout training if we guarantee it at initialization. Let us assume that the weights $\Wm^{l}$ are full rank, which for the squared matrix case is equivalent to $\operatorname{det}\left(\Wm^{l}\right) \not = 0$. The Jacobi formula allows us to extend the dynamics of $\Wm^{l}$ to the determinant:
\begin{lemma}
\label{det_dynamics}
Given a dynamic matrix $t \mapsto \Am(t)$, the determinant follows:
$$\frac{d}{dt}\operatorname{det}\left(\Am(t)\right) = \operatorname{tr}\left(\operatorname{adj}\left(\Am(t)\right)\frac{d \Am(t)}{dt}\right)$$
where $\operatorname{adj}\left(\Am(t)\right)$ is the adjugate matrix satisfying $$\Am(t) \operatorname{adj}\left(\Am(t)\right) =  \operatorname{adj}\left(\Am(t)\right)\Am(t)= \operatorname{det}(\Am(t)) \bm{I}$$
\end{lemma}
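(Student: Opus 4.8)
The plan is to treat $\operatorname{det}$ as a smooth (in fact polynomial) function of the $n^2$ matrix entries and to differentiate the composition $t \mapsto \operatorname{det}(\Am(t))$ with the multivariate chain rule. This reduces the whole claim to computing the partial derivatives $\partial\operatorname{det}(\Am)/\partial A_{ij}$ and then recognizing the resulting double sum as a trace.

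The first step is to establish $\partial\operatorname{det}(\Am)/\partial A_{ij} = \operatorname{adj}(\Am)_{ji}$. I would fix a row index $i$ and invoke the Laplace (cofactor) expansion $\operatorname{det}(\Am) = \sum_{k=1}^{n} A_{ik} C_{ik}$, where $C_{ik}$ denotes the $(i,k)$ cofactor. The crucial observation is that each cofactor $C_{ik}$ is the signed determinant of the submatrix obtained by deleting row $i$ and column $k$, so it does not involve any entry of row $i$ and in particular is independent of $A_{ij}$. Differentiating the expansion with respect to $A_{ij}$ therefore leaves exactly the single surviving term $C_{ij}$, and since the adjugate is the transpose of the cofactor matrix, $C_{ij} = \operatorname{adj}(\Am)_{ji}$.

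The second step assembles the pieces. By the chain rule,
\[
\frac{d}{dt}\operatorname{det}(\Am(t)) = \sum_{i,j} \frac{\partial\operatorname{det}(\Am)}{\partial A_{ij}}\,\frac{dA_{ij}(t)}{dt} = \sum_{i,j} \operatorname{adj}(\Am)_{ji}\,\frac{dA_{ij}}{dt}.
\]
Recognizing $\sum_j \operatorname{adj}(\Am)_{ji}\,\dot A_{ij} = \bigl(\operatorname{adj}(\Am)\,\dot{\Am}\bigr)_{ii}$ and summing over $i$ yields $\operatorname{tr}\bigl(\operatorname{adj}(\Am)\,\tfrac{d\Am}{dt}\bigr)$, which is precisely the asserted formula.

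There is no genuine obstacle here, as this is a classical identity; the only point requiring care is the first step, namely justifying that the cofactor $C_{ij}$ carries no dependence on the entry $A_{ij}$ being differentiated. I would stress that this route uses only multilinearity of the determinant and imposes \emph{no} invertibility assumption on $\Am$, which matters because the formula is to be applied exactly to track whether $\operatorname{det}(\Wm^l_t)$ can reach zero. An alternative derivation via $\frac{d}{dt}\operatorname{det}(\Am) = \operatorname{det}(\Am)\,\operatorname{tr}(\Am^{-1}\dot{\Am})$ together with $\operatorname{adj}(\Am) = \operatorname{det}(\Am)\,\Am^{-1}$ would only be valid on the open set where $\Am$ is invertible, so the cofactor route is the safer one to present.
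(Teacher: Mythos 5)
Your proof is correct, and it differs from the paper in the most basic way possible: the paper does not prove this lemma at all, its ``proof'' being a one-line deferral to the textbook of Magnus and Neudecker. Your argument --- Laplace expansion along row $i$ to get $\partial \det(\Am)/\partial A_{ij} = C_{ij} = \operatorname{adj}(\Am)_{ji}$ (legitimate because the cofactors $C_{ik}$ involve no entry of row $i$), followed by the multivariate chain rule and identification of the double sum as a trace --- is the standard self-contained derivation of Jacobi's formula, and it is complete. Your closing remark is also the substantively right one for this paper: the lemma is applied in Lemma~\ref{lemma:evolve} precisely to monitor whether $\det(\Wm^l_t)$ can hit zero, so a derivation that presupposes invertibility (via $\operatorname{adj}(\Am) = \det(\Am)\,\Am^{-1}$) would be valid only on the open set one is trying to show the trajectory never leaves, whereas the cofactor route needs no such hypothesis. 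One cosmetic slip: entrywise, $\sum_j \operatorname{adj}(\Am)_{ji}\,\dot A_{ij}$ is the $i$-th diagonal entry of $\dot\Am\,\operatorname{adj}(\Am)$, not of $\operatorname{adj}(\Am)\,\dot\Am$; after summing over $i$ this is immaterial, since $\operatorname{tr}(\Bm\Cm)=\operatorname{tr}(\Cm\Bm)$, so the stated conclusion is unaffected.
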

\begin{proof}
For a proof of this standard result we refer to \citet{magnus99} for instance.
\end{proof}

\vspace{2mm}
\noindent
We are now ready to proof the main claim:\\
\textbf{Proof of Lemma \ref{lemma:evolve}:}
Let us first simplify the right-hand side of the gradient flow equation using the assumptions, before applying Lemma \ref{det_dynamics} :
\begin{equation*}
    \begin{split}
        \nabla_{\Wm^{l}}\mathcal{L}(\Wm)\big{|}_{\Wm = -\eta \Wm_t} &= -\eta\Wm_t^{l+1:L}\bm{\Omega}\Wm_t^{1:l-1} \\&= -\eta\Wm_t^{l+1:L}\Wm_t^{L:l+1}\Wm_t^{l}\Wm_t^{l-1 : 1}\bm{\Sigma}\Wm_t^{1:l-1} \\
        &\stackrel{L1}{=} -\eta\left(\Wm_t^{l}\Wm_t^{l  T}\right)^{L-l}\Wm_t^{l}\Wm_t^{l-1 : 1}\bm{\Sigma}\Wm_t^{1:l-1}
    \end{split}
\end{equation*}
Applying Lemma \ref{det_dynamics}, gives
\begin{equation*}
    \begin{split}
        \frac{d}{dt}\operatorname{det}\left(\Wm^{l}_t\right) &= \operatorname{tr}\left(\operatorname{adj}\left(\Wm^{l}_t\right)\dot{\Wm}^{l}_t\right) \\
        &= -\eta\operatorname{tr}\left(\operatorname{adj}\left(\Wm^{l}_t\right)\left(\Wm_t^{l}\Wm_t^{l  T}\right)^{L-l}\Wm_t^{l}\Wm_t^{l-1 : 1}\bm{\Sigma}\Wm_t^{1:l-1}\right) \\
        &= -\eta\operatorname{tr}\left(\operatorname{det}\left(\Wm^{l}_t\right)\left(\Wm_t^{l T}\Wm_t^{l  }\right)^{L-l}\Wm_t^{l-1 : 1}\bm{\Sigma}\Wm_t^{1:l-1}\right) \\
        &= -\eta\operatorname{det}\left(\Wm^{l }_t\right)\operatorname{tr}\left(\left(\Wm_t^{l T}\Wm_t^{l }\right)^{L-l}\Wm_t^{l-1 : 1}\bm{\Sigma}\Wm_t^{1:l-1}\right) \\
        &= -\eta\operatorname{det}\left(\Wm^{l}_t\right) \operatorname{tr}(\Bm(t))
    \end{split}
\end{equation*}
We can write the solution of this differential equation as 
$$\operatorname{det}\left(\Wm^{l }_t\right) = e^{-\eta \int_{0}^{t} \operatorname{tr}(\Bm(s)) ds}\operatorname{det}\left(\Wm^{l}_0\right)$$
Of course, we have no idea how to solve the integral in the exponential, but the exact solution does not matter as it is always positive and thus not zero. Thus, as long as $\operatorname{det}\left(\Wm^{l}_0\right) \not= 0$, also $\operatorname{det}\left(\Wm^{l}_t\right) \not = 0$ holds, at least for a finite time horizon $t < \infty$. \\[3mm]
We illustrated this theoretical finding through non-linear networks. To complete the picture we also show a linear network with its corresponding rank dynamics for weights and Hessians in Figure \ref{mnist_linear}. We use a linear teacher setting, i.e. $\y = \Wm \x$ where $\x \sim \mathcal{N}(\bm{0}, \Im)$ and $\Wm \in \mathbb{R}^{d \times K}$, allowing the model thus to achieve a perfect training error. We train the model for $2000$ epochs with SGD. Again we observe how the weights remain full-rank throughout training and as a consequence, the rank of the Hessians remains constant initially. Due to the linear nature of the teacher, an exact training error of zero is achievable. As soon as the error reaches a certain threshold (around $10^{-30}$), the contribution of the functional Hessian starts to vanish as its eigenvalues, one-by-one become too small to count towards the rank. As a consequence, the rank of the loss Hessian also decreases as it is composed of both the functional and the outer Hessian. Also observe how the outer Hessian remains constant throughout the entire optimization, as it does not depend on error. As expected, at the end of training, the loss Hessian collapses onto the outer Hessian. 
\begin{figure}
    \centering
    \includegraphics[width=0.45\textwidth]{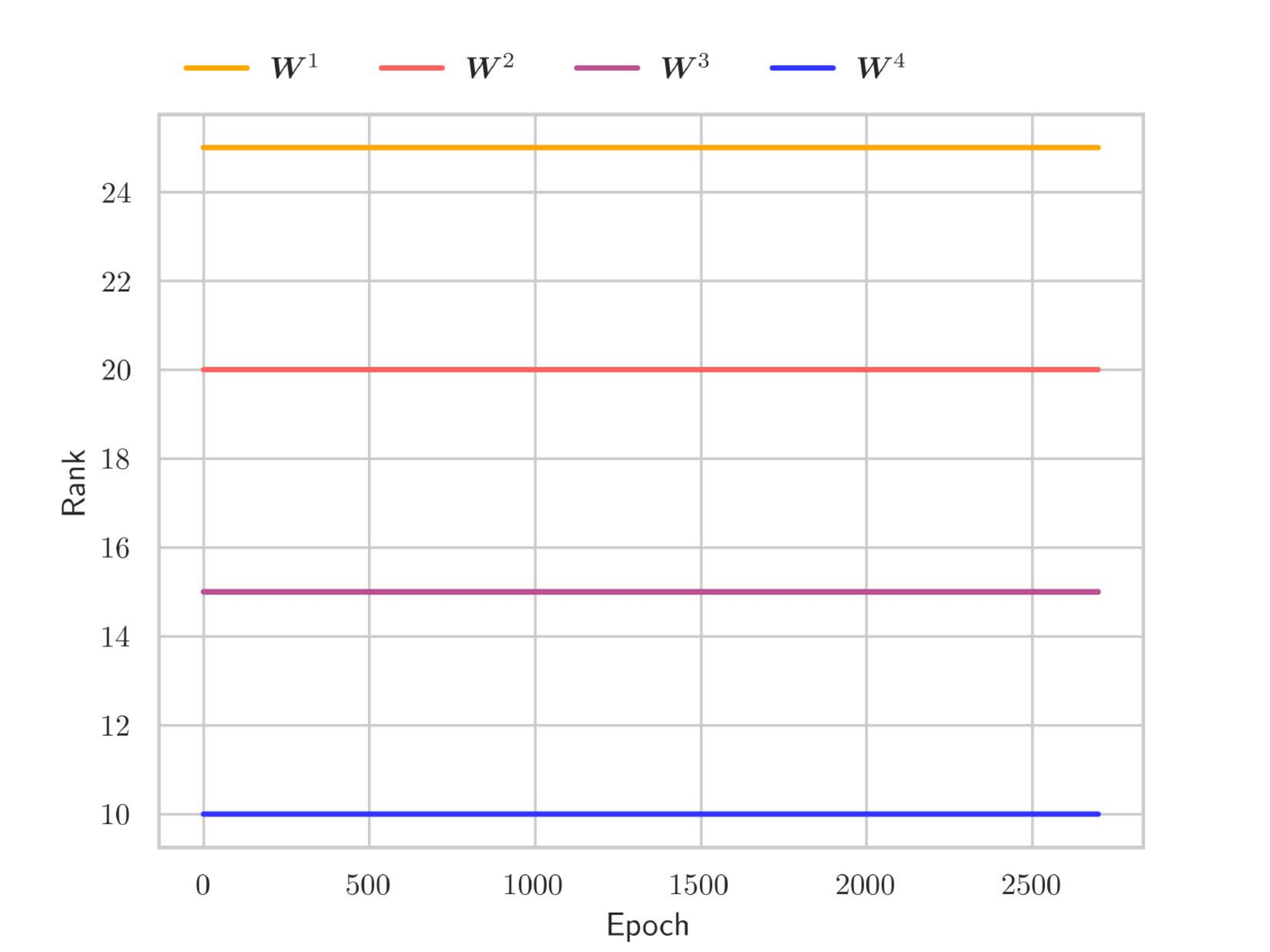}%
    \includegraphics[width=0.45\textwidth]{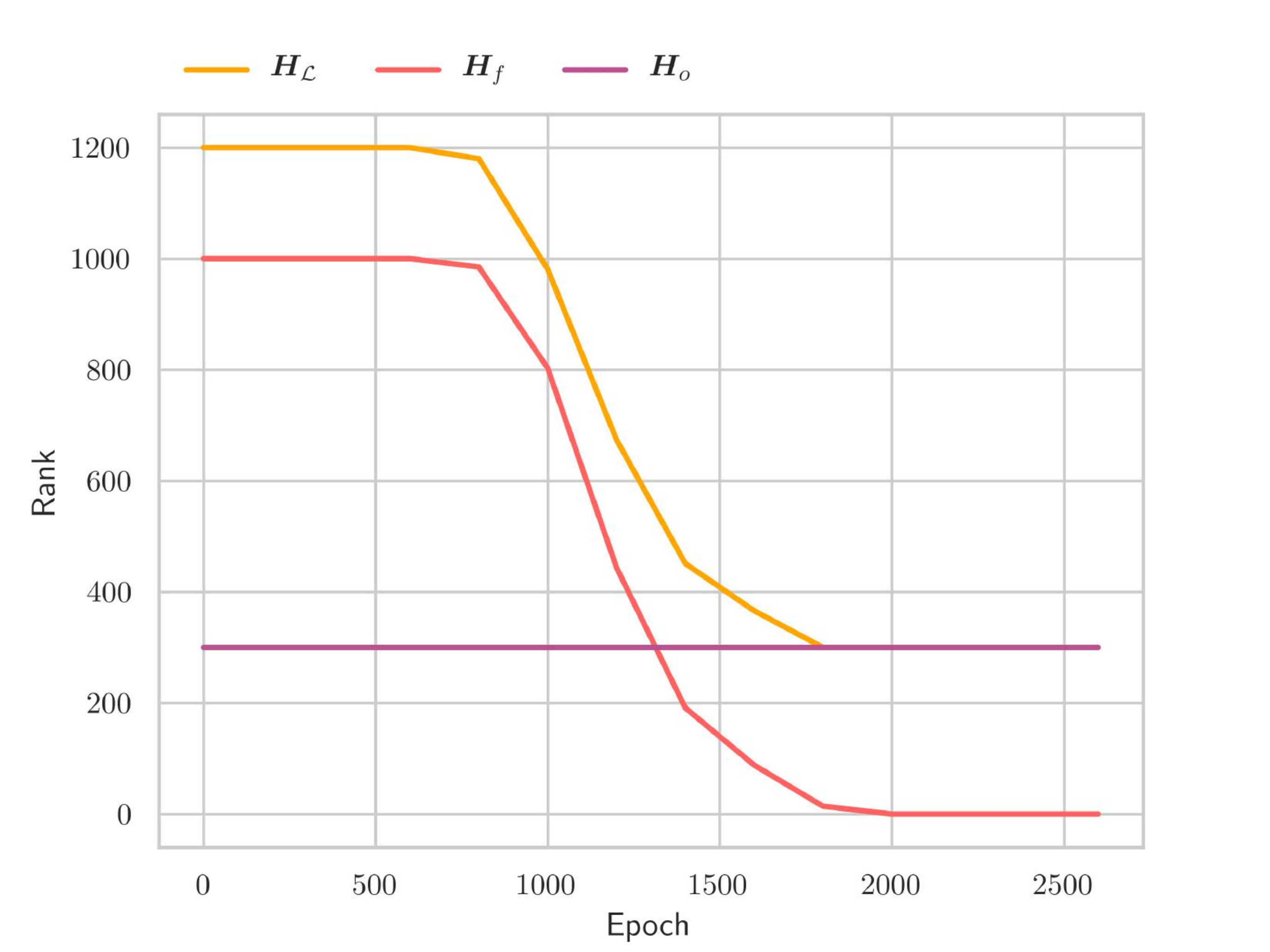}%
    \caption{\textbf{Evolution of Rank}:  We display the rank of the weights $\wM{i}$ (left) and the rank of the Hessians (right) throughout training. We use a $3$ hidden-layer linear network trained on Gaussian data which is labeled by a linear teacher.}
    \label{mnist_linear}
\end{figure}
\clearpage

\section{Pessimistic bounds in the non-linear case}\label{supp:relu-rank}

\subsection{Proof strategy for the $\HO$ upper bound}
The key idea behind our upcoming proof strategy is to do the analysis à la super-position of ``unit-networks'', i.e., networks with one-hidden neuron. This is because we can reformulate the network function $F_{\btheta}$ as sum of network functions of $M$ unit-networks, one per each hidden neuron. Mathematically, 

\begin{equation*}
F_{\btheta}(\x) = \sum\limits_{i=1}^M F_{\theta_i}(\x)\,,\quad \text{where,}\quad F_{\theta_i}(\x) = \wMb_{\bullet \, i} \, \sigma(\mb W^{1^\top}_{i \,\bullet} \, \x). 
\end{equation*}

where, $\wMa_{i \,\bullet} \in \mathbb{R}^{r}$ is the $i$-th row of $\wMa$ and $\wMb_{\bullet \, i}\in \mathbb{R}^{K}$ is the $i$-th column of $\wMb$. 
To ease notation, we will henceforth use the shorthand: $\wMasupp:=\wMa$, and $\wMbsupp:=\wMb$. The column and row indexing follow the convention as before.

Our approach will be to analyze the rank of the outer-product Hessian $\HO$ in the case of empirical loss, or in other words with finitely many samples. Then, we will do a limiting argument to extend it to case of population loss. The benefit of such an approach is that we can bound the rank of $\HO$ via the rank of the Jacobian of the function, $\nabla_{\btheta} F_{\btheta}(\Xm)$ formed over the entire data matrix $\Xm\in \mathbb{R}^{d\times N}$. We will, in turn, bound the rank of such a Jacobian matrix $\nabla_{\btheta} F_{\btheta}(\Xm)$ via the ranks of the Jacobian matrices for the unit-network functions $\nabla_{\btheta} F_{\theta_i}(\Xm)$.

\begin{remark}\label{remark:data}
Similar to Remark~\ref{remark:cov}, when the rank $r$ of the data matrix (or alternatively the empirical (uncentered) input covariance) is less than $d$, we will consider without loss of generality that $\Xm \in \mathbb{R}^{r\times N}$. 
\end{remark}
In other words, for the sake of analysis, we consider that the data matrix has been pre-processed to take into account rank lower than input dimension. 

\subsection{Lemmas on the structure and rank of Jacobian matrix of the unit-network}

As a first step, the following Lemma~\ref{lemma:relu-jacobian-struc} shows the structure of the Jacobian matrix for the unit-network, formed over the entire data matrix.

\begin{lemma}\label{lemma:relu-jacobian-struc}
Consider the unit-network $\Fni{i}(\x) = \wMbsupp_{\bullet \, i} \, \sigma(\wMasupp^\top_{i \,\bullet} \, \x)$ corresponding to i-th neuron, with the non-linearity $\sigma$ such that $\sigma(z) = \sigma^\prime(z) z$. Let the data-matrix be denoted by $\Xm \in \mathbb{R}^{r \times N}$. Further, let us denote the matrix which has on its diagonal --- the activation derivatives $\sigma^\prime(\wMasupp_{i \,\bullet}^\top \, \x)$ over all the samples $\x$, and zero elsewhere, by $\bm \Lambda^i \in \mathbb{R}^{N\times N}$.  Then the Jacobian matrix $\nabla_{\btheta} \Fni{i}(\Xm)$ is given (in transposed form) by, 

\NiceMatrixOptions{code-for-first-row = \color{blue},
                   code-for-first-col = \color{blue},}
$$\nabla_{\btheta} F_{\theta_i}(\Xm)^\top = \hspace{-1cm}\begin{pNiceArray}[first-row,first-col,nullify-dots]{c}
       & \\
\Vdots &        \bm{0} \\[1mm]
\wMasupp_{i \,\bullet}     & \Xm \, \bm \Lambda^i \kro \wMbsupp^\top_{\bullet \,i}  \\[2mm]
\wMbsupp_{\bullet \,i}    &  \wMasupp_{i \,\bullet}^\top  \Xm \, \bm \Lambda^i \kro \Im_{K}  \\[1mm]
\Vdots &        \bm{0} \\
\end{pNiceArray}
$$
\end{lemma}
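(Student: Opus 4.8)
\begin{prfsketch}
The plan is to differentiate the reformulated unit-network map directly, using the identity $\sigma(z)=\sigma'(z)\,z$ to linearise the activation. Over the full data matrix the $i$-th unit-network computes the $K\times N$ matrix
\[
\Fni{i}(\Xm)=\wMbsupp_{\bullet\,i}\,\sigma\!\left(\wMasupp_{i\,\bullet}^\top \Xm\right),
\]
where $\wMasupp_{i\,\bullet}^\top\Xm\in\mathbb{R}^{1\times N}$ is the row of pre-activations over the $N$ samples (here $\wMasupp=\wMa$, $\wMbsupp=\wMb$). Since $\Fni{i}$ depends on $\btheta$ only through the row $\wMasupp_{i\,\bullet}$ and the column $\wMbsupp_{\bullet\,i}$, every other parameter block contributes a zero row to $\nabla_{\btheta}\Fni{i}(\Xm)$; this immediately accounts for the $\bm 0$ blocks in the claimed form, and I am left to compute only the two non-trivial blocks.

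I would first handle the block for $\wMbsupp_{\bullet\,i}$. Here $\Fni{i}(\Xm)=\Im_K\,\wMbsupp_{\bullet\,i}\,\sigma(\wMasupp_{i\,\bullet}^\top\Xm)$ is \emph{linear} in $\wMbsupp_{\bullet\,i}$, so the matrix-derivative rule of Eq.~\eqref{eq:matrix-derivative} applies with $\Am=\Im_K$ and the right factor $\Bm=\sigma(\wMasupp_{i\,\bullet}^\top\Xm)$. The key simplification is to rewrite this activation row entrywise via $\sigma(z)=\sigma'(z)\,z$: the $j$-th entry $\sigma(\wMasupp_{i\,\bullet}^\top\x_j)$ equals $(\wMasupp_{i\,\bullet}^\top\x_j)\,\sigma'(\wMasupp_{i\,\bullet}^\top\x_j)$, so the whole row collapses to $\wMasupp_{i\,\bullet}^\top\Xm\,\bm{\Lambda}^i$, with $\bm{\Lambda}^i=\diag\big(\sigma'(\wMasupp_{i\,\bullet}^\top\x_j)\big)_{j=1}^N$. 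This yields the Kronecker factor $\wMasupp_{i\,\bullet}^\top\Xm\,\bm{\Lambda}^i\kro\Im_K$ in the $\wMbsupp_{\bullet\,i}$ row.

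For the block of $\wMasupp_{i\,\bullet}$ the dependence runs through $\sigma$, so I would apply the chain rule per sample, $\partial_{\wMasupp_{i\,\bullet}}\big[\wMbsupp_{\bullet\,i}\,\sigma(\wMasupp_{i\,\bullet}^\top\x_j)\big]=\sigma'(\wMasupp_{i\,\bullet}^\top\x_j)\,\wMbsupp_{\bullet\,i}\,\x_j^\top$, and then collect these derivatives across the $N$ samples. This places the activation derivatives on the diagonal of $\bm{\Lambda}^i$ and groups the inputs into $\Xm$, producing the shared factor $\Xm\,\bm{\Lambda}^i$ and hence the block $\Xm\,\bm{\Lambda}^i\kro\wMbsupp_{\bullet\,i}^\top$. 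Assembling the two non-zero blocks together with the zero rows reproduces exactly the displayed $\nabla_{\btheta}\Fni{i}(\Xm)^\top$.

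The computation is essentially routine; the only points that require care are (i) the consistent use of the row-wise vectorization convention, which fixes the order of the Kronecker factors, noting that an alternative convention merely permutes rows and columns and therefore preserves the rank that matters downstream, and (ii) the invocation of $\sigma(z)=\sigma'(z)\,z$, which is precisely what lets both blocks be expressed through the \emph{single} common factor $\Xm\,\bm{\Lambda}^i$ (respectively its contraction $\wMasupp_{i\,\bullet}^\top\Xm\,\bm{\Lambda}^i$) rather than through $\sigma(\wMasupp_{i\,\bullet}^\top\Xm)$ on its own. I expect this shared-factor bookkeeping to be the main obstacle, as it is what the subsequent rank analysis of $\nabla_{\btheta}\Fni{i}(\Xm)$ and the super-position over neurons will exploit.
\end{prfsketch}
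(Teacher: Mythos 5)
Your proposal is correct and follows essentially the same route as the paper: both arguments observe that only the two parameter blocks $\wMasupp_{i\,\bullet}$ and $\wMbsupp_{\bullet\,i}$ contribute, use the identity $\sigma(z)=\sigma'(z)\,z$ to pull the activation into the diagonal factor $\bm\Lambda^i$ so that both blocks share the common factor $\Xm\,\bm\Lambda^i$, and then assemble across samples. The paper differs only cosmetically, in that it first writes out the full-network single-sample Jacobian entrywise and then extracts the unit-network rows, whereas you differentiate the unit-network directly.

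One point needs tightening. For the $\wMbsupp_{\bullet\,i}$ block you invoke Eq.~\eqref{eq:matrix-derivative}, but that rule is stated for \emph{row-wise} vectorization; applied to the whole $K\times N$ output matrix it yields, in transposed form, $\Im_K \kro \wMasupp_{i\,\bullet}^\top\Xm\,\bm\Lambda^i$ (output-major column ordering), not the stated $\wMasupp_{i\,\bullet}^\top\Xm\,\bm\Lambda^i \kro \Im_K$, which corresponds to the sample-major ordering implicit in the lemma --- and which your \emph{first} block, built by concatenating per-sample derivatives, already uses. The two orderings differ only by a fixed permutation of the $KN$ columns, which, as you note, is immaterial for the downstream rank arguments, so this is a convention slip rather than a mathematical gap. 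To obtain the lemma verbatim (and keep the two blocks in one consistent ordering), differentiate per sample, where the factor $\wMasupp_{i\,\bullet}^\top\big(\bm\Lambda^{\x}_{ii}\,\x\big)$ is a scalar and hence commutes across the Kronecker product --- exactly the step the paper takes --- and then concatenate the per-sample blocks horizontally via $\Am\kro\Bm = \left[\Am_{\bullet\,1}\kro\Bm,\,\cdots,\,\Am_{\bullet\,N}\kro\Bm\right]$.
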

\begin{proof}

Let us start simple by computing the gradient with respect to $k$-th component of the function, i.e. $F^k$, which comes out as follows:

\[
\frac{\partial F^k}{\partial w_{ij}} = \mathds{1}\lbrace k=i\rbrace \, {\sigma(\Vm\x)}_j
\]
\[
\frac{\partial F^k}{\partial v_{ij}} = w_{ki} \,\, \sigma^\prime(\wMasupp_{i \,\bullet}^\top\,\x)\,\, x_j
\]

 where ${\sigma(\Vm\x)}_j = \sigma( \Vm_{j \,\bullet}^\top \,\x)$, with $\Vm_{j \,\bullet}$ being the $j$-th row of $\Vm$, \textit{in a column vector format} as mentioned above. And, for example in case of ReLU, $\sigma^\prime(z) = \partial\, \sigma(z)/\partial z = \mathds{1}\lbrace z>0\rbrace$.

For a fixed sample $\x$, consider that the activation derivatives for all hidden-neurons are stored in a diagonal matrix $\bm\Lambda^{\x}\in\mathbb{R}^{M\times M}$, i.e., $\bm \Lambda^{\x}_{jj} =  \sigma^\prime( \Vm_{j \,\bullet}^\top \,  \x)$. Then we can rewrite $\sigma(\Vm \x) = \bm \Lambda^{\x} \, \Vm \x$ for all non-linearities that satisfy $\sigma(z) = \sigma^\prime(z) z$. So, we have that the Jacobian of the function with respect to the parameters comes out to be,

\NiceMatrixOptions{code-for-first-row = \color{blue},
                   code-for-first-col = \color{blue},}
\[\nabla_{\bm\theta} F_{\btheta}(\x)^\top= \hspace{-1cm}\begin{pNiceArray}[first-row,first-col,nullify-dots]{ccccc}
       & \bm{F}^{1} & \cdots &  \bm{F}^{k} &  \cdots &  \bm{F}^{K}\\[2mm]
\wMasupp_{1 \bullet}   & w_{11} \, \bm \Lambda^{\x}_{11} \,\, \x & \cdots &  w_{k1}  \, \bm \Lambda^{\x}_{11} \,\, \x & \cdots &  w_{K1}  \, \bm \Lambda^{\x}_{11} \,\, \x\\
\Vdots    & \vdots &  & \vdots &  & \vdots \\[1mm]
\wMasupp_{i \bullet}   & w_{1i} \,  \bm \Lambda^{\x}_{ii}  \,\, \x & \cdots &  w_{ki} \,   \bm \Lambda^{\x}_{ii}  \,\, \x & \cdots &  w_{Ki}  \,  \bm \Lambda^{\x}_{ii}  \,\, \x\\
\Vdots    & \vdots &  & \vdots &  & \vdots \\[1mm]
\wMasupp_{M \bullet}   & w_{1M} \,   \bm \Lambda^{\x}_{MM} \,\, \x & \cdots &  w_{kM} \,   \bm \Lambda^{\x}_{MM} \,\, \x& \cdots &  w_{KM}  \,  \bm \Lambda^{\x}_{MM} \,\, \x\\[2mm]
\hline 
& & & &  \\[-1mm]
\wMbsupp_{1 \bullet}   &  \bm \Lambda^{\x} \, \Vm \x & \cdots & \bm{0} & \cdots &  \bm{0}\\
\Vdots    & \vdots & \ddots &  &  \ddots& \vdots \\[1mm]
\wMbsupp_{k \bullet}    &  \bm{0}&  &  \bm \Lambda^{\x} \, \Vm \x & &  \bm{0}\\
\Vdots    & \vdots &  \ddots &  &  \ddots & \vdots \\[1mm]
\wMbsupp_{K \bullet}   &  \bm{0}& \cdots & \bm{0} & \cdots &  \bm \Lambda^{\x} \, \Vm \x \\
\end{pNiceArray}\]

Hence, from above we can we write the Jacobian of the $i$-th unit-network function $\wMbsupp_{\bullet \, i} \, \sigma(\wMasupp^\top_{i \,\bullet} \, \x)$ with respect to the entire set of parameters and at a given input $\x$, as follows:
\NiceMatrixOptions{code-for-first-row = \color{blue},
                   code-for-first-col = \color{blue},}
\[\nabla_{\btheta} \, F_{\theta_i}(\x)^\top= \hspace{-1cm}\begin{pNiceArray}[first-row,first-col,nullify-dots]{c}
       & \\
\Vdots &        \bm{0} \\[1mm]
\wMasupp_{i \,\bullet}   & \wMbsupp^\top_{\bullet \,i} \kro \bm \Lambda^{\x}_{ii}  \,\, \x\\[2mm]
\wMbsupp_{\bullet \,i}    &  \Im_{K}\kro  \wMasupp^\top_{i \,\bullet}  \big(\bm \Lambda^{\x}_{ii}  \,\x\big) \\[1mm]
\Vdots &        \bm{0} \\
\end{pNiceArray} \,\,\,\,\, \overset{(a)}{=} \hspace{-6mm}\begin{pNiceArray}[first-row,first-col,nullify-dots]{c}
       & \\
\Vdots &        \bm{0} \\[1mm]
\wMasupp_{i \,\bullet}   &  \bm \Lambda^{\x}_{ii}  \,\, \x \kro \wMbsupp^\top_{\bullet \,i}\\[2mm]
\wMbsupp_{\bullet \,i}    & \wMasupp^\top_{i \,\bullet}  \big(\bm \Lambda^{\x}_{ii}  \,\x\big)  \kro  \Im_{K}\\[1mm]
\Vdots &        \bm{0} \\
\end{pNiceArray}\,,\]
where, in (a) we have used the fact that for vectors $\ab,\,\bb$ we have that $\ab^\top \,\kro\, \bb = \bb \,\kro \,\ab^\top = \bb\, \ab^\top$, as well as the fact that $\wMasupp^\top_{i \,\bullet} \big(\bm \Lambda^{\x}_{ii}  \,\x\big)$ is a scalar which allows us to commute the factors in the corresponding Kronecker product.

Finally, we can express the above Jacobian across all the samples in the data matrix, as stated in the lemma:
\NiceMatrixOptions{code-for-first-row = \color{blue},
                   code-for-first-col = \color{blue},}
\[\nabla_{\btheta} F_{\theta_i}(\Xm)^\top = \hspace{-1cm}\begin{pNiceArray}[first-row,first-col,nullify-dots]{c}
       & \\
\Vdots &        \bm{0} \\[1mm]
\wMasupp_{i \,\bullet}     & \Xm \, \bm \Lambda^i \kro \wMbsupp^\top_{\bullet \,i}  \\[2mm]
\wMbsupp_{\bullet \,i}    &  \wMasupp_{i \,\bullet}^\top  \Xm \, \bm \Lambda^i \kro \Im_{K}  \\[1mm]
\Vdots &        \bm{0} \\
\end{pNiceArray}\,.
\]
Here, we utilized that $\Am \kro \Bm = \left[\Am_{\bullet\, 1} \kro \Bm,\, \cdots,\, \Am_{\bullet\, n} \kro \Bm\right]$ for some arbitrary matrix $\Am$ containing $n$ columns. Besides, we have collected the activation derivatives for the $i$-th neuron, i.e., $\bm \Lambda^\x_{ii}\,=\sigma^\prime( \Vm_{i \,\bullet}^\top \,  \x)\,$, across all samples $\x$, into the diagonal matrix $\bm \Lambda^i \in \mathbb{R}^{N\times N}$. 
\end{proof}

From the above Lemma, we can also see that the benefit of analyzing via the unit-networks is that we only have to deal with the activation derivatives of a single neuron at a time. Besides, now that we know the structure of the unit-network Jacobian, we will  analyze its rank. But before, let's recall the assumption~\ref{assump:data} from the main text, in our current notation:
\begin{repassump}{assump:data}
For each active hidden neuron $i$, the weighted input covariance has the same rank as the overall input covariance, i.e.,  $\rank(\E[\alpha_{\x}\,\x \x^\top]) = \rank(\covx) = r$, with $\,\alpha_{\x} = {\sigma^\prime(\x^\top \, \wMasupp_{i\,\bullet})}^2$.
\end{repassump}

This assumption can be translated into finite-sample case as follows. First, note that the (uncentered) input covariance $\covx$ corresponds to $\frac{1}{N} \Xm \Xm^\top$, while the weighted covariance $\E[\alpha_{\x}\,\x \x^\top]$ corresponds to the matrix $\frac{1}{N}\Xm\bm\Lambda^i \bm\Lambda^i \Xm^\top$. This is straightforward to check, and notice $\alpha_\x = (\bm \Lambda^\x_{ii})^2$. Then, the equivalent assumption is to require $\rank\left(\Xm\bm\Lambda^i \bm\Lambda^i \Xm^\top\right) = \rank\left(\Xm\Xm^\top\right) = r$, ignoring the constant $\frac{1}{N}$ which does not affect rank. Further, since for any arbitrary matrix $\Am$, we have that $\rank(\Am\Am^\top) =\rank(\Am)$. Thus, our equivalent assumption can be simplified to as follows:
\begin{assumpprime}{assump:data}{(finite-sample equivalent)}\label{assump:data-equiv}
For each active hidden neuron $i$, assume that $\rank\left(\Xm\bm \Lambda^i\right)=\rank\left(\Xm\right)=r$\,, where $\bm \Lambda^i$, as detailed before, contains the activation derivatives across all samples for this neuron $i$.
\end{assumpprime}

\begin{lemma}\label{lemma:relu-jacobian-1}
Under the same setup as Lemma~\ref{lemma:relu-jacobian-struc} and Assumptions~\ref{assump:2},~\ref{assump:data} (or equivalently~\ref{assump:data-equiv}), the rank of the Jacobian matrix, $\nabla_{\btheta} \Fni{i}(\Xm)$, of the $i$-th unit-network is given by:
\[\rank({\nabla_{\btheta} \Fni{i}(\Xm)}) = r + K - 1\,.\]

\begin{proof}
From Lemma~\ref{lemma:relu-jacobian-struc}, the Jacobian matrix is given by (ignoring the zero blocks which do not matter for the analysis of rank), 

$$\nabla_{\btheta} \Fni{i}(\Xm)^\top = \begin{pmatrix}
\Xm \, \bm \Lambda^i\kro \wMbsupp^\top_{\bullet \,i}  \\[2mm]
\wMasupp^\top_{i \,\bullet}  \Xm \, \bm \Lambda^i \kro \Im_{K} \\[1mm]
\end{pmatrix} = \underbrace{\begin{pmatrix}
\Im_{r} \kro \wMbsupp^\top_{\bullet \,i}\\[2mm]
\wMasupp^\top_{i \,\bullet} \kro \Im_{K}  \\[1mm]
\end{pmatrix}}_{\Am_i \,\in\, \mathbb{R}^{(r+K)\times Kr}} \underbrace{\left(\Xm \, \bm \Lambda^i\kro \Im_{K} \right)}_{\in \, \mathbb{R}^{Kr\times KN}}
$$

Now this factorization reveals the familiar $\Zm$-like structure, and so the matrix labelled $\Am_i$ in the above factorization has rank equal to $r + K - 1$ by Lemma~\ref{lemma:kronecker-block}. And, $\rank(\Xm \, \bm \Lambda^i\kro \Im_{K}) = K \rank(\Xm \, \bm \Lambda^i)= Kr$, by employing assumption~\ref{assump:data-equiv}. Thus, this matrix $\Xm \, \bm \Lambda^i\kro \Im_{K}$ is right invertible. Hence, we have:
\[\rank({\nabla_{\btheta} \Fni{i}(\Xm)}) =\rank({\nabla_{\btheta} \Fni{i}(\Xm)}^\top) = \rank(\Am_i) = r + K-1\,.\]
\end{proof}
\end{lemma}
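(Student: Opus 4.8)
The plan is to begin from the explicit transposed Jacobian supplied by Lemma~\ref{lemma:relu-jacobian-struc}. After discarding the zero blocks, which cannot affect the rank, what remains is a two-row-block matrix whose blocks are $\Xm\,\bm\Lambda^i \kro \wMbsupp^\top_{\bullet i}$ and $\wMasupp^\top_{i\bullet}\,\Xm\,\bm\Lambda^i \kro \Im_K$. My first move would be to isolate the shared data-dependent factor by invoking the mixed-product property $(\Am\Cm)\kro(\Bm\Dm) = (\Am\kro\Bm)(\Cm\kro\Dm)$. Writing $\Xm\,\bm\Lambda^i = \Im_r \cdot \Xm\,\bm\Lambda^i$ in the top block and factoring $\wMasupp^\top_{i\bullet}$ out of the bottom block, both blocks split as a purely structural matrix times the common right factor $\Xm\,\bm\Lambda^i \kro \Im_K$. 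This yields the factorization $\nabla_{\btheta}\Fni{i}(\Xm)^\top = \Am_i\,(\Xm\,\bm\Lambda^i\kro\Im_K)$, where $\Am_i$ stacks $\Im_r\kro\wMbsupp^\top_{\bullet i}$ on top of $\wMasupp^\top_{i\bullet}\kro\Im_K$.

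The next step is to recognize that $\Am_i$ is exactly of the special $\Zm$-form in Eq.~\eqref{eq:z-structure}, with $\Cm = \wMbsupp^\top_{\bullet i}\in\mathbb{R}^{1\times K}$ and $\Dm = \wMasupp^\top_{i\bullet}\in\mathbb{R}^{1\times r}$, so that $q=r$ and $n=K$. Under Assumption~\ref{assump:2} both weight vectors are nonzero, whence $\rank(\Cm)=\rank(\Dm)=1$, and Lemma~\ref{lemma:kronecker-block} immediately gives $\rank(\Am_i) = r\cdot 1 + K\cdot 1 - 1\cdot 1 = r+K-1$.

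It then remains to transfer this rank through the right factor. The key observation is that $\Xm\,\bm\Lambda^i\kro\Im_K$ has full row rank: by Assumption~\ref{assump:data} (equivalently Assumption~\ref{assump:data-equiv}) we have $\rank(\Xm\,\bm\Lambda^i)=r$, so $\rank(\Xm\,\bm\Lambda^i\kro\Im_K) = K\,\rank(\Xm\,\bm\Lambda^i) = Kr$, which equals its number of rows; hence this factor is right-invertible. Multiplying $\Am_i$ on the right by a right-invertible matrix preserves rank, so $\rank(\nabla_{\btheta}\Fni{i}(\Xm)^\top)=\rank(\Am_i)=r+K-1$, and the claim follows since rank is invariant under transposition.

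I expect the main obstacle to lie in the final step rather than in the algebra: one must establish that the data factor is \emph{genuinely} right-invertible, so that the identity $\rank(\Am_i\,(\Xm\,\bm\Lambda^i\kro\Im_K)) = \rank(\Am_i)$ is an equality and not merely the upper bound $\rank(\Am_i)$. This is precisely where Assumption~\ref{assump:data} is indispensable, as it prevents the per-neuron activation pattern $\bm\Lambda^i$ from annihilating data directions and collapsing $\rank(\Xm\,\bm\Lambda^i)$ below $r$. The only other point needing care is arranging the mixed-product factorization so that the two blocks share one and the same right factor.
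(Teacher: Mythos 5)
Your proposal is correct and follows essentially the same route as the paper: the identical factorization $\nabla_{\btheta}\Fni{i}(\Xm)^\top = \Am_i\,(\Xm\,\bm\Lambda^i\kro\Im_K)$ via the mixed-product property, the application of Lemma~\ref{lemma:kronecker-block} to the $\Zm$-structured factor $\Am_i$ (with the rank-one blocks giving $r+K-1$), and the use of Assumption~\ref{assump:data-equiv} to certify that the right factor has full row rank $Kr$ and is therefore right-invertible, so that multiplication by it preserves rank. Your closing remark correctly pinpoints the right-invertibility step as the place where the assumption does real work, which is exactly the role it plays in the paper's argument.
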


\subsection{Proof of Theorem~\ref{thm:relu-rank}}
Now, that we are equipped to prove the Theorem, and let us recall its statement from the main text:
\begin{reptheorem}{thm:relu-rank}
Consider a 1-hidden layer network with non-linearity $\sigma$ such that $\sigma(z) = \sigma^{\prime}(z) z$ and let $\widetilde{M}$ be the \# of active hidden neurons (i.e., probability of activation $>0$). Then, under assumption \ref{assump:2} and \ref{assump:data},  rank of $\HO$ is given as,
$\rank(\HO) \leq r\widetilde{M} \,+\, \widetilde{M} K \,-\, \widetilde{M}\,$.
\end{reptheorem}

\begin{proof}

In the case of empirical loss (i.e., finite-sample case), we can express the outer-product Hessian as $\HO^N = \frac{1}{N} \nabla_{\btheta} \Fn(\Xm)^\top \,\, \nabla_{\btheta} \Fn(\Xm)=\frac{1}{N}\sum_{i=1}^N \nabla_{\btheta} \Fn(\x^i)^\top \,\, \nabla_{\btheta} \Fn(\x^i)$. It is clear that rank of $\HO^N$ is the same as the rank of $\nabla_{\btheta} \Fn(\Xm)$ as $\rank(\Am^\top\Am)=\rank(\Am)$ for any arbitrary matrix $\Am$. Thus we have that, 
\begin{align*}
\rank(\HO^N) =\rank(\nabla_{\btheta} \Fn(\Xm)) &= \rank\big(\sum\limits_{i=1}^M \nabla_{\btheta} \Fni{i}(\Xm)\big) 
\leq \sum\limits_{i=1}^M \rank(\nabla_{\btheta} \Fni{i}(\Xm)) \\
&\overset{\small{\text{Lemma~\ref{lemma:relu-jacobian-1}}}}{\leq} \sum\limits_{i=1}^{\widetilde{M}}  r + K -1 
= r\widetilde{M} \,+\, \widetilde{M} K \,-\, \widetilde{M}\,.
\end{align*}

The first inequality is because of subadditivity of rank, i.e., $\rank(\Am+\Bm)\leq \rank(\Am) + \rank(\Bm)$.
Next, here we only sum over the active hidden neurons, whose count is $\widetilde{M}$. Because, for dead neurons $\nabla_{\btheta} \Fn(\Xm)=\bm 0$, and thus rank of Jacobian for dead unit-networks will be zero.

Now, in order to extend this to case of population loss, we essentially have to consider the limit of $N\rightarrow\infty$. As we can see from the analysis so far, the rank of the outer-product Hessian $\HO^N$ is always bounded by $r\widetilde{M} \,+\, \widetilde{M} K \,-\, \widetilde{M}$ for any finite $N\geq N_0$, where $N_0$ is the minimum number of samples that are needed for the assumption~\ref{assump:data} to hold. 

Thus, we have a sequence of matrices $\lbrace\HO^N\rbrace_{N\geq N_0}$, each of which has rank bounded above by $r\widetilde{M} \,+\, \widetilde{M} K \,-\, \widetilde{M}$. Because, matrix rank is a lower semi-continuous function, the above sequence will converge to a matrix, $\HO$ of the population loss, with rank at most $r\widetilde{M} \,+\, \widetilde{M} K \,-\, \widetilde{M}$. Therefore,
\[\rank(\HO) \leq r\widetilde{M} \,+\, \widetilde{M} K \,-\, \widetilde{M}\,.\]

\end{proof}

\subsection{Note on the assumption}
The assumption~\ref{assump:data-equiv} that $\rank(\Xm\bm\Lambda^i)=r$, holds as soon as $\rank(\bm\Lambda^i)\geq r$ in expectation. This is something that depends on the data distribution but only mildly. For instance, one such scenario is when we use the typical form of initialization $v_{ij} \overset{i.i.d.}{\sim} \mathcal{N}(0, 1)$, then conditioned on a fixed example $\x$, we have $\wMasupp^\top_{i \,\bullet} \,\x \sim \mathcal{N}(0, \|\x\|^2)$. To further consolidate this point, let us consider $\sigma(z) = \operatorname{ReLU}(z) = \operatorname{max}(z, 0)$. Then, for instance if the underlying \textit{data distribution is symmetric}, the entries of $\bm \Lambda^i$ --- which are nothing but $\sigma^\prime(\wMasupp^\top_{i \,\bullet} \,\x) = \mathds{1}\lbrace\wMasupp^\top_{i \,\bullet} \,\x > 0\rbrace$ --- will be non-zero with probability $\frac{1}{2}$. The rank of the diagonal matrix $\bm \Lambda^i$ just amounts to the number of non-zero entries. Hence, in expectation, as soon as we have at least $2 r$ examples, or more simply $2d$ examples since $d\geq r$, we should be fine.

\clearpage
\section{Rank of the Hessian with bias}\label{supp:bias}
\subsection{Proof of Theorem~\ref{theorem:bias}}\label{supp:bias-outer}
We consider the case where each layer implements an affine mapping instead of a linear. So now we have additional parameter vectors for these bias terms, $\bias{1}, \cdots, \bias{L}$, and we can write the network function as:
\[
F(\mb x) =  \wM{L}\left(\cdots\left(\wM{2}\left(\wM{1} \mb{x} + \bias{1}\right) + \bias{2}\right)\cdots\right) + \bias{L}
\]

In terms of a recursive expansion, it can also be written in the following manner: 
\begin{align}\label{eq:bias-fn}
    F(\x) := \Fmap{L:1}(\x) = \wM{L} \Fmap{L-1:1}(\x) + \bias{L}\,, \quad \text{where} \quad \Fmap{0}(\x) = \x\,.
\end{align}
We will also use the notation $\Fmap{1:l}$ to mean $\Fmapt{l:1}$. Let us recall the assumption and the theorem stated in the main text:

\begin{repassump}{assump:zero-mean}
The input data has zero mean, i.e., $\x \sim p_{\x}$ is such that $\E[\x]=0$. 
\end{repassump}
In other words, we assume that the input data has zero mean, which is actually a standard practical convention.

\begin{reptheorem}{theorem:bias}
Under the assumption \ref{assump:2} and \ref{assump:zero-mean},  for a deep linear network with bias, the rank of $\HO$ is upper bounded as,
$\rank(\HO) \leq q(r+K-q) + K\,$, where $q:= \min(r, M_1, \cdots, M_{L-1}, K)$.
\end{reptheorem}

\begin{proof}

Since the above function, Eq.~\eqref{eq:bias-fn}, is of a similar form as the one in Eq.~\eqref{eq:matrix-derivative}, we use the matrix-derivative rule in order to obtain the following expression of the \textit{(transposed)} Jacobian at a point $(\x, \y)$:

\NiceMatrixOptions{code-for-first-row = \color{blue},
                   code-for-first-col = \color{blue},}
\begin{align}\label{eq:bias-jacobian}
\small
\nabla F(\x)^\top = \begin{pNiceArray}[first-row,first-col,nullify-dots]{c}
       & \\[2mm]
\vect_{r}(\wM{1}) & \wM{2:L} \kro \Fmap{0}(\x)\\[1mm]
\Vdots & \vdots \\[1mm]
\vect_{r}(\wM{l}) & \wM{l+1:L} \kro \Fmap{l-1:1}(\x)\\[1mm]
\Vdots & \vdots \\[2mm]
\vect_{r}(\wM{L}) & \Im_{K}\kro \Fmap{L-1:1}(\x)\\[4mm]
\bias{1}\hspace{2em} & \wM{2:L} \\[1mm]
\Vdots \hspace{2em}& \vdots \\[1mm]
\bias{l}\hspace{2em} & \wM{l+1:L} \\[1mm]
\Vdots \hspace{2em}& \vdots \\[1mm]
\hspace{2em}\bias{L}\hspace{2em} & \Im_{K}\\[2mm]
\end{pNiceArray}
\end{align}

\textit{Comment about the Hessian indexing: }We will assume that the blocks from $[1,\cdots,L]$ index the weight matrices and those from $[L+1,\cdots, 2L]$ index the bias parameters.

Recall the outer-product Hessian $\HO$ in the case of mean-squared loss is given by \[\HO =\E\left[\nabla F(\x)^\top \nabla F(\x) \right]\,.\]

Let us look at the expression for the $kl$-th block, for $k, l \in [L]$ (i.e., from the sub-matrix corresponding to \textit{weight-weight Hessian}):

\begin{align}
\HOsup{kl} &= \E\left[\wM{k+1:L} \wM{L:l+1} \,\,\kro\,\, \Fmap{k-1:1}(\x) \Fmap{1:l-1}(\x) \right]\\
&=\wM{k+1:L} \wM{L:l+1} \,\,\kro\,\, \E\left[\Fmap{k-1:1}(\x) \Fmap{1:l-1}(\x) \right] \label{eq:bias-outer-kl}
\end{align}

Now, let us make use of the assumption~\ref{assump:zero-mean}. Once we have applied this, the dependence on input is only via the uncentered covariance of input (or the second moment matrix). Alongside we have terms corresponding to $F^{l-1:1}(\mb 0)$, which is the output of the network when $\mb 0$ is passed as the input. Overall, using the zero-mean assumption in Eq.~\eqref{eq:bias-outer-kl} yields:

\begin{align}
    \HOsup{kl} &=\underbrace{\wM{k+1:L} \wM{L:l+1} \,\,\kro\,\,\wM{k-1:1} \covx \wM{1:l-1}}_{\text{Expression in the linear, non-bias, case}} \,+\, \underbrace{\wM{k+1:L} \wM{L:l+1} \,\,\kro \,\, \Fmap{k-1:1}(\mb 0) \Fmap{1:l-1}(\mb 0)}_{\text{New terms containing bias}} \label{eq:bias-outer-kl-final}
\end{align}

We see that first part of the expression is identical to the linear case without bias, and it is only the second part that contains the bias terms. 

Similarly, for the \textit{bias-bias Hessian} blocks $\HOsup{kl}$ such that $k, l \in [L\, \cdots \, 2L]$, there is no dependence on input at all and contains only bias terms. Likewise, the \textit{weight-bias Hessian} blocks has no dependence on the input.

Hence, it seems quite natural to separately analyze the terms without bias and with bias. So, the rank of the first non-bias part comes directly from our previous analysis of Theorem~\ref{theorem:ub-outer} and is equal to $q (r+K-q)$. 

The analysis for the left-over bias part is not too hard either. This can be simply decomposed as the product $\Bm_o\Bm_o^\top$, where $\Bm_o$ is given by:

\NiceMatrixOptions{code-for-first-row = \color{blue},
                   code-for-first-col = \color{blue},}
\begin{align}\label{eq:bias-B}
\small
\Bm_o = \begin{pNiceArray}[first-row,first-col,nullify-dots]{c}
       & \\[2mm]
\vect_{r}(\wM{2}) & \wM{3:L} \kro \Fmap{1}(\mb 0)\\[1mm]
\Vdots & \vdots \\[1mm]
\vect_{r}(\wM{l}) & \wM{l+1:L} \kro \Fmap{l-1:1}(\mb 0)\\[1mm]
\Vdots & \vdots \\[2mm]
\vect_{r}(\wM{L}) & \Im_{K}\kro \Fmap{L-1:1}(\mb 0)\\[4mm]
\bias{1}\hspace{2em} & \wM{2:L} \\[1mm]
\Vdots \hspace{2em}& \vdots \\[1mm]
\bias{l}\hspace{2em} & \wM{l+1:L} \\[1mm]
\Vdots \hspace{2em}& \vdots \\[1mm]
\hspace{2em}\bias{L}\hspace{2em} & \Im_{K}\\[2mm]
\end{pNiceArray}
\end{align}

If we compare this expression to that in Eq.~\ref{eq:bias-jacobian}, we see that there is no block corresponding to the first row there, as $F^0(\mb 0) = \mb 0$. Then, one simply has to notice that the last block in $\Bm_o$, which essentially corresponds to the parameter $\bias{L}$, is the $K\times K$ identity matrix $\Im_K$. Hence, the matrix $\Bm_o$ which itself has $K$ columns, has rank equal to $K$, using Lemma~\ref{lemma:block-shared}. Then, the rank of the bias part is equal to that of $\rank(\Bm_o)=K$, since we know that $\rank(\Am \Am^\top) = \rank(\Am)$.

Finally, we use the subadditivity of rank, i.e., $\rank(\Am+\Bm)\leq \rank(\Am) + \rank(\Bm)$, on this decomposition of the outer-product Hessian into outer-product Hessian for non-bias and the new terms containing the bias parameters. Thus, we obtain that:

\[\rank(\HO) \leq q(r+K-q) + K\,. \]

\end{proof}
\subsection{Formulas for two layer networks }

For two layer (1-hidden layer) networks with linear activation and $M_1$ hidden units, $d$ dimensional input and $K$ classes, empirical evidence seems to suggest the following. Define $s = \min(r, K)$ and $q = \min(r, M_1, K)$. Let us define $s^\prime:= \min(r+1, K)$. Then we find:
\begin{itemize}
    \item $\rank(\HO) = q \, (r + K - q) + K\quad$
    \item $\rank\left(\HF\right) = 2sM_1 + \mathds{1}\{K>r\} \, 2 M_1 = 2 \min(K, r+1) \, M_1 = 2s^\prime M_1$
    \item $\rank\left(\HL\right) = 2 \min(K, r+1) \, (M_1 - q) + q\, (K + r + 1) + K = 2 s^\prime(M_1 - q) \,+\, q\,(K+r+1) \,+\, K$
    
\end{itemize} 

If we compare the upper-bounds for the scenario without bias to the one with bias, we find that change in the rank of $\HF$ is due to changing $r \rightarrow r+1$ in the formula, which makes sense as bias can be understood as adding a homogeneous coordinate in the input. For $\HO$, the rank formula now includes an additive term of $K$. And both these changes together affect the change in rank for $\HL$.

\subsection{Formulas for $L$-layer networks }
\label{formulas_bias}
The upper-bound for $\HO$ that we noted in the previous section also holds for the general case, as evident from our proof in Section~\ref{supp:bias-outer}. Empirically as well, we obtain $ \rank(\HO) = q \, (r + K - q) + K\quad$ as the exact formula.

For the functional and overall Hessian, we list formulas that seem to hold \textit{empirically} for the non-bottleneck case. Here, the input size has to take into account the homogeneous coordinate, so by non-bottleneck it is meant that $M_i \geq \min(r+1, K), \,\forall \, i \in [1,\cdots, L-1]$). 

Define $q^\prime = \min(r+1, M_1, \cdots, M_{L-1}, K) = \min(r+1, K)$, which because of our non-bottleneck assumption comes out to be same as the $s^\prime$ in the previous section.

\begin{itemize}
    \item $\rank\left(\HF\right) = 2q^\prime\left(\sum_{i=1}^{L-1}M_i\right) + 2q^\prime s^\prime - L {q^{\prime}}^2 + (L-2) q^\prime$
    \item $\rank\left(\HL\right) =%
     2q^\prime\left(\sum_{i=1}^{L-1}M_i\right) + {q^\prime}(r + K) - L {q^\prime}^2 + L{q^\prime} = 2q^\prime M + {q^\prime}(r + K) - L {q^\prime}^2 + L{q^\prime} $ 
\end{itemize}

Let us compare the above bound to the rank of Hessian $\HL$ in the linear case with bias by assuming that the output layer has the smallest size, i.e., $q^\prime=K$.

Then for linear case \textbf{without bias}: 
\[\rank(\HL) = 2K \, M - L\, K^2  + K(r + K)\]

While for linear case \textbf{with bias}: 
\[\rank(\HL) = 2K \, M - L\, K^2  + K(r + K) + L K\]

Basically, we just have an additional term of $LK$ in the rank, whereas the additional number of parameters are,\[\sum_{i=1}^{L} M_i \geq LK\,.\]

Hence under the considered scenario, we find that the ratio of rank to number of parameters also decreases when all layers have bias enabled. For an empirical simulation on the growth of rank relative to number of parameters in these two cases, please refer to Fig.~\ref{fig:bias_formulas}.

\clearpage
\section{Properties of the Hessian Spectrum}\label{supp:spectra}

\subsection{Spectrum of outer-product Hessian}\label{supp:outer-spectra}

\begin{figure}[!h]
    \centering
    \includegraphics[width=0.3\textwidth]{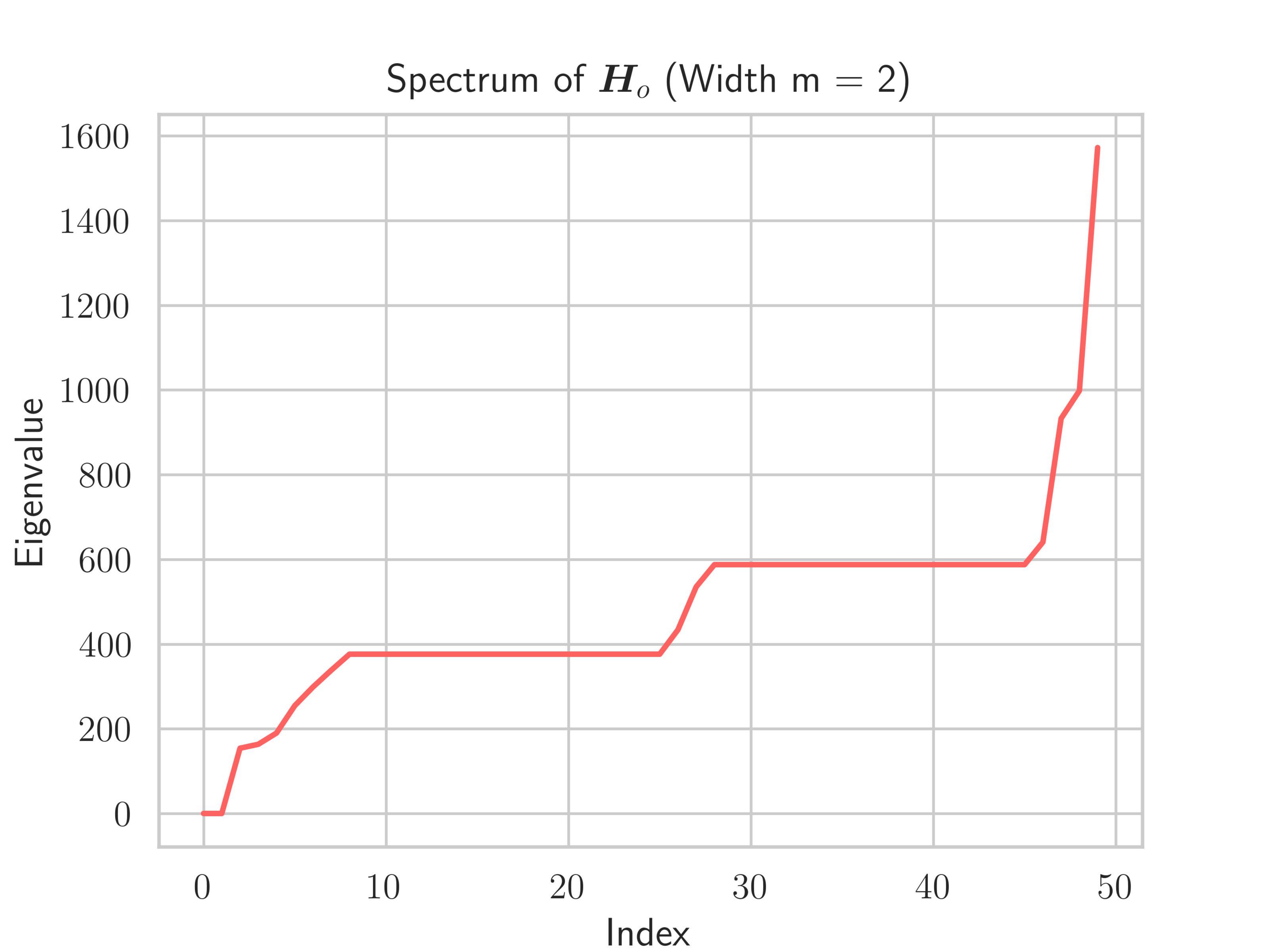}%
    \includegraphics[width=0.3\textwidth]{pdfpics/relu_outer_3.pdf}%
    \includegraphics[width=0.3\textwidth]{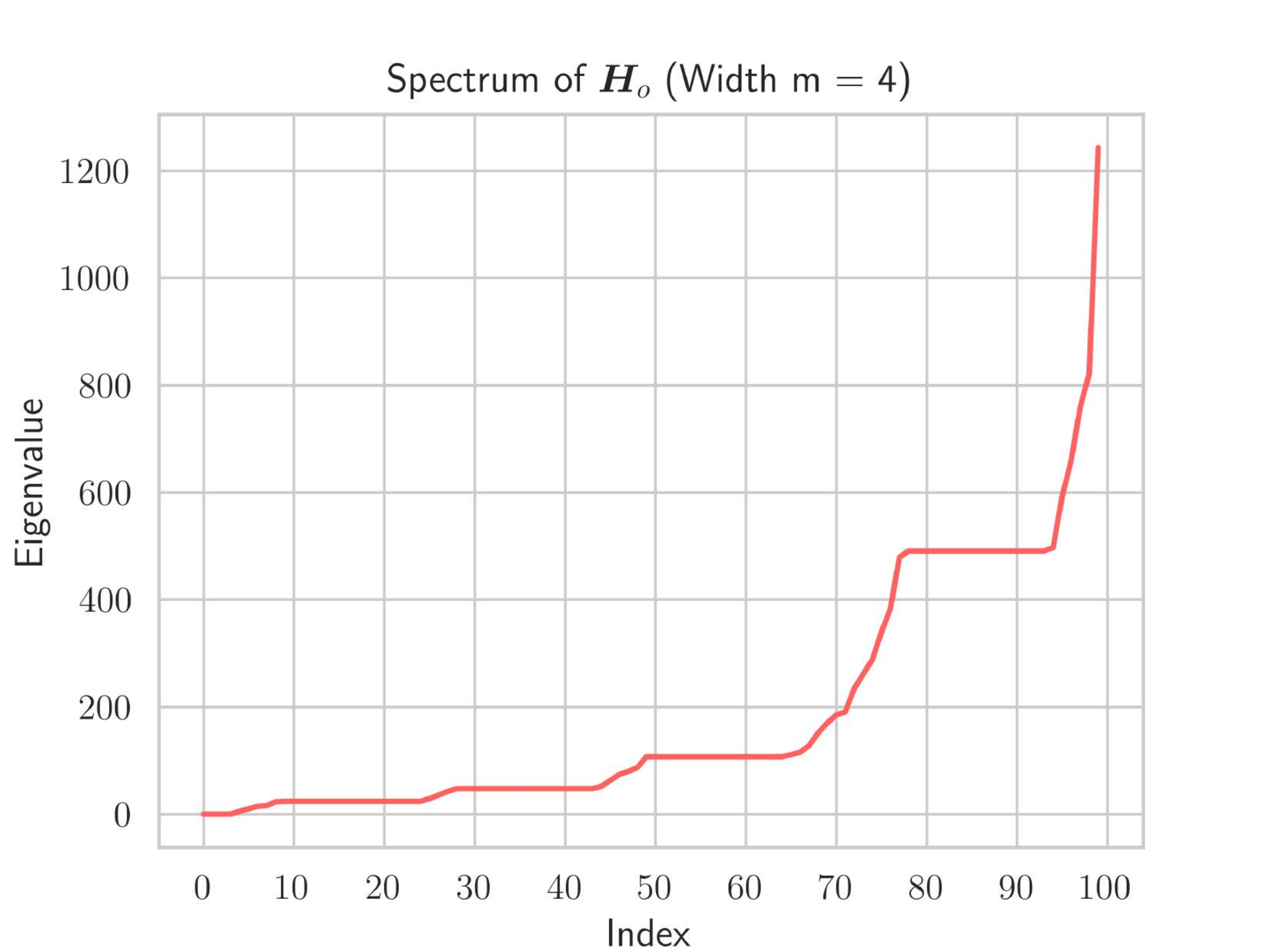}
    \caption{\textit{Spectrum of outer product.} $\HO$ spectrum has $q$ plateaus of size $K-M_{L-1}$ located at the eigenvalues of $\mathbf{E}\left[\Fmap{L-1:1}(\x)\, {\Fmap{1:L-1}}(\x)\right]$, even with non-linearities and for any $L$. Here, $K=20$, and $q=M_{L-1}=2, 3, 4$ in each of the sub-figures respectively. We use Gaussian mixture data of dimension $5$.}
    \label{fig:spectrum_relu_outer}
\end{figure}
\begin{figure}[!h]
    \centering
    \includegraphics[width=0.3\textwidth]{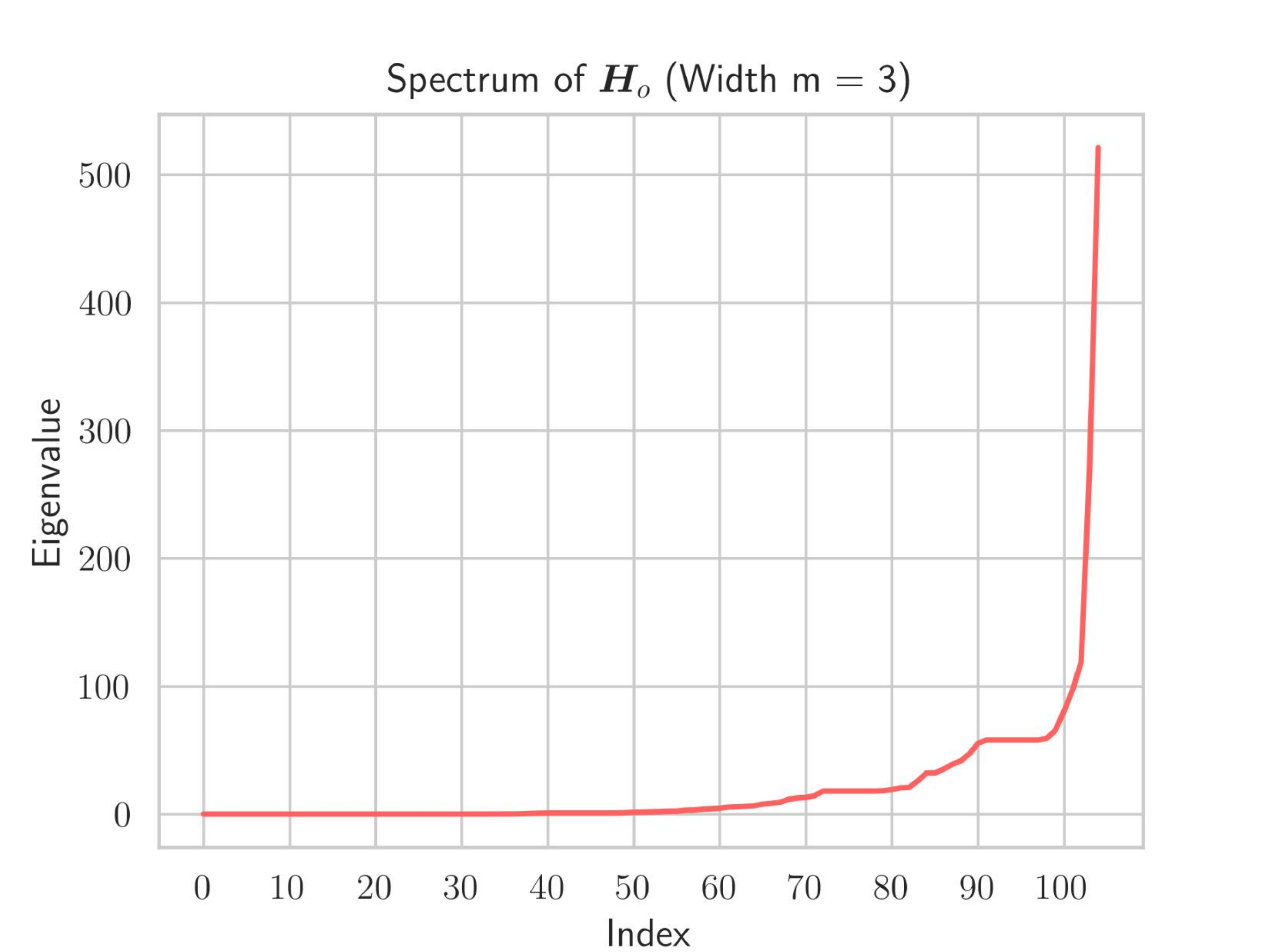}%
    \includegraphics[width=0.3\textwidth]{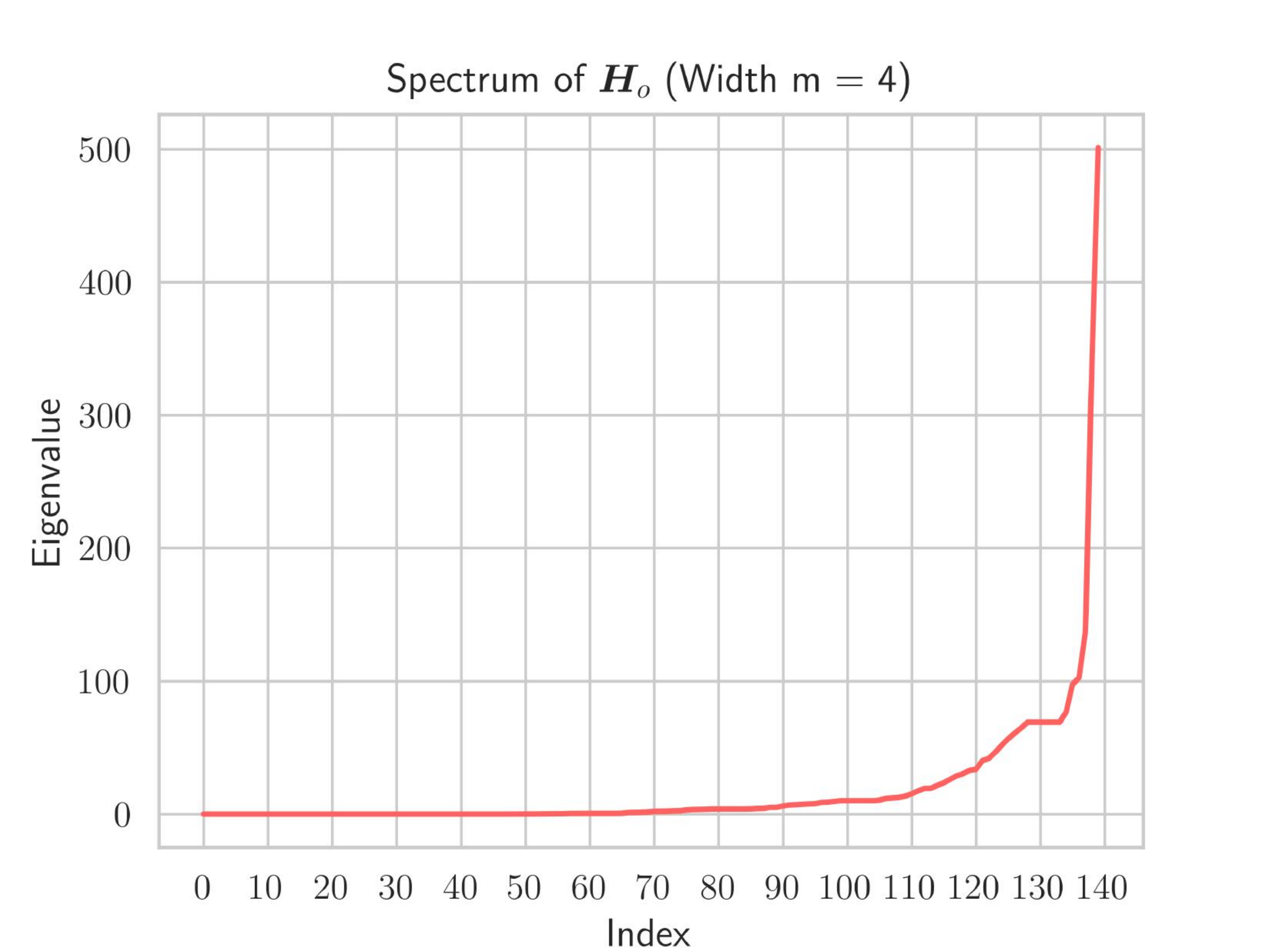}%
    \includegraphics[width=0.3\textwidth]{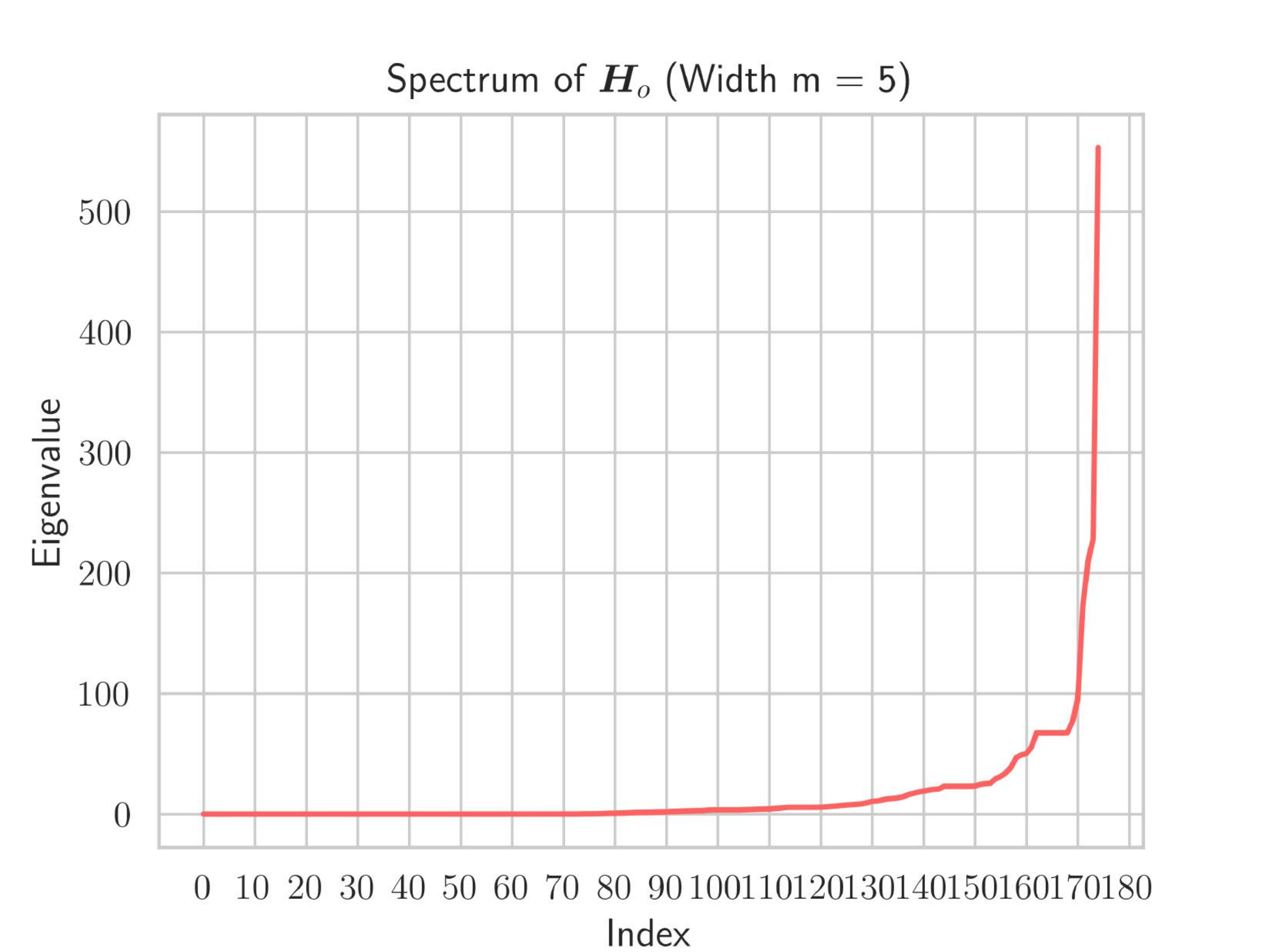}
    \caption{\textit{Spectrum of outer product.} $\HO$ spectrum has $q$ plateaus of size $K-M_{L-1}$ located at the eigenvalues of $\mathbf{E}\left[\Fmap{L-1:1}(\x)\, {\Fmap{1:L-1}}(\x)\right]$, even with non-linearities and for any $L$. Here, $K=10$, and $q=M_{L-1}=3,4,5$  in each of the sub-figures respectively. We use down scaled \textsc{MNIST}  $d=25$.}
    \label{fig:spectrum_relu_outer_mnist}
\end{figure}

The eigenvalues of the outer-product term of the Hessian, which is the one that dominates the spectrum near the end of training, can be written in closed-form for fully-connected neural networks, with linear activations. 

Recall from Proposition.~\eqref{eq:outer-decomp} that the outer-product Hessian can be decomposed as follows, $\HO = \Am_o (\Im_K \kro \covx) \Am_o^\top$, where $\Am_o \in \mathbb{R}^{p\times Kd}$ is as follows:

$$
\Am_o=\begin{pmatrix}
\wM{2:L} \kro \Im_d\\
\vdots \\
\wM{\ell+1:L} \kro \wM{\ell-1:1}\\
\vdots \\
\Im_K \kro \wM{L-1:1}\\
\end{pmatrix}
$$

Since $\Am\Bm$ and $\Bm\Am$ have the same non-zero eigenvalues, we have that eigenvalues of $\HO$ are the same as $\HOtilde = \Am_o^\top \Am_o (\Im_K \kro \covx)$, and notice $\Am_o^\top \Am_o \in \mathbb{R}^{Kd\times Kd}$ and comes out to be, 

$$
\Am_o^\top \Am_o  = \sum\limits_{\ell=1}^{L} \wM{L:\ell+1}\wM{\ell+1:L} \kro \wM{1:\ell-1} \wM{\ell-1:1}
$$

This is nothing but the diagonal-blocks of the Hessian-outer product added in the ``transposed'' fashion. Hence we have the result on the eigenvalues ($\evals$) that, 

$$
\evals(\HO) = \evals(\Am_o^\top\Am_o\left(\Im_K\kro\covx\right)) = \evals \left(\sum\limits_{\ell=1}^{L} \wM{L:\ell+1}\wM{\ell+1:L} \kro \wM{1:\ell-1} \wM{\ell-1:1} \covx \right)
$$

\paragraph{Repeated eigenvalues.} A consequence of this is that a plateau of repeated eigenvalue exists, whenever the last layer is strictly bigger than the penultimate layer. In fact, this plateau phenomenon also holds for non-linear networks, since even for such networks the last layer is not usually followed by non-linearities.

Notice, that when $K>M_{L-1}$, the matrix $\wM{L:\ell+1}\wM{\ell+1:L}$ in the left part of the Kronecker product will have rank at most $M_{L-1}$, except for the case when $\ell=L$. There, for $\ell=L$, we obtain a identity matrix, $\Im_K$, in the left part of the Kronecker product, whose rank is of course $K$. Thus when all terms are added up together, $K-M_{L-1}$ times the eigenvalues of $\wM{1:L-1} \wM{L-1:1}\covx$ will show up for the overall Hessian $\HO$ as well. Obviously, since Kronecker product with identity implies eigenvalues of the other matrix are multiplied by 1. This results in the plateaued behaviour of the eigenvalue spectrum. 
We illustrate this finding in Figure \ref{fig:spectrum_relu_outer} for a ReLU network on Gaussian mixture data. We also show the results for \textsc{MNIST} in Figure \ref{fig:spectrum_relu_outer_mnist}. Due to the spectrum being not as cleanly separated as for the Gaussian case, the results are not as clearly visible but still hold exactly as verified experimentally.

\subsection{Spectrum of Functional Hessian}\label{supp:func-spectra}

\begin{figure}[!h]
    \centering
    \includegraphics[width=0.3\textwidth]{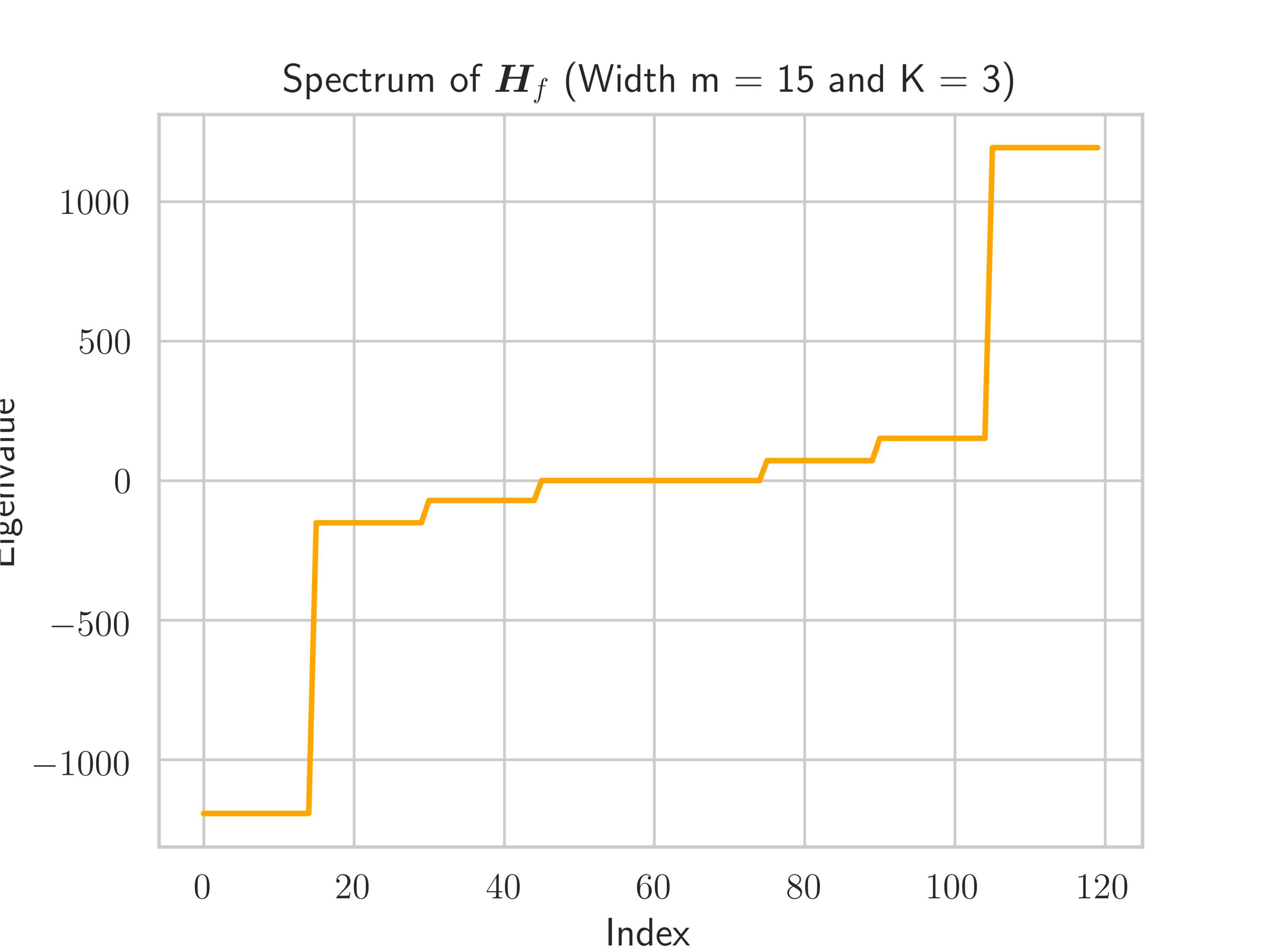}%
    \includegraphics[width=0.3\textwidth]{pdf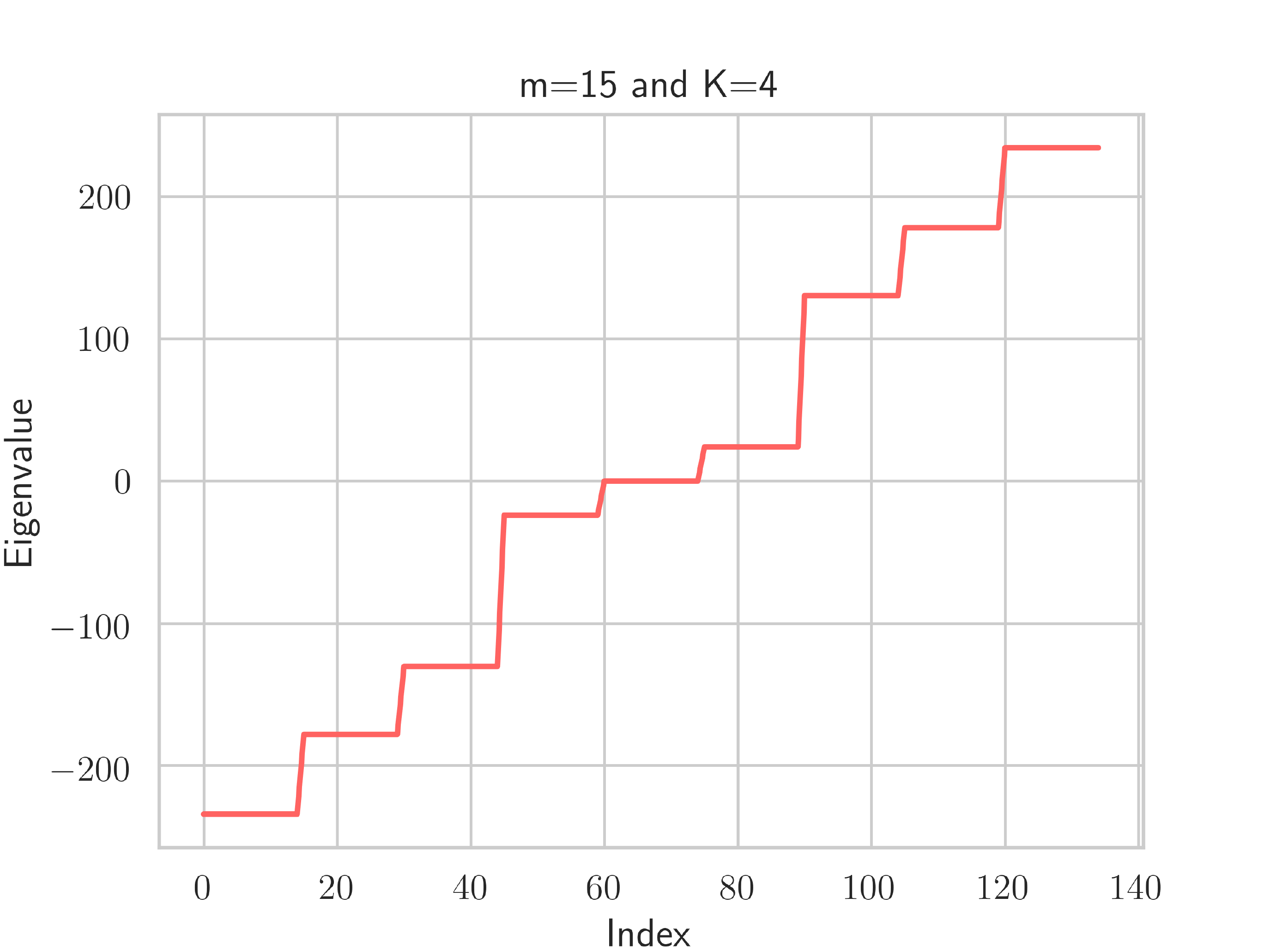}%
    \includegraphics[width=0.3\textwidth]{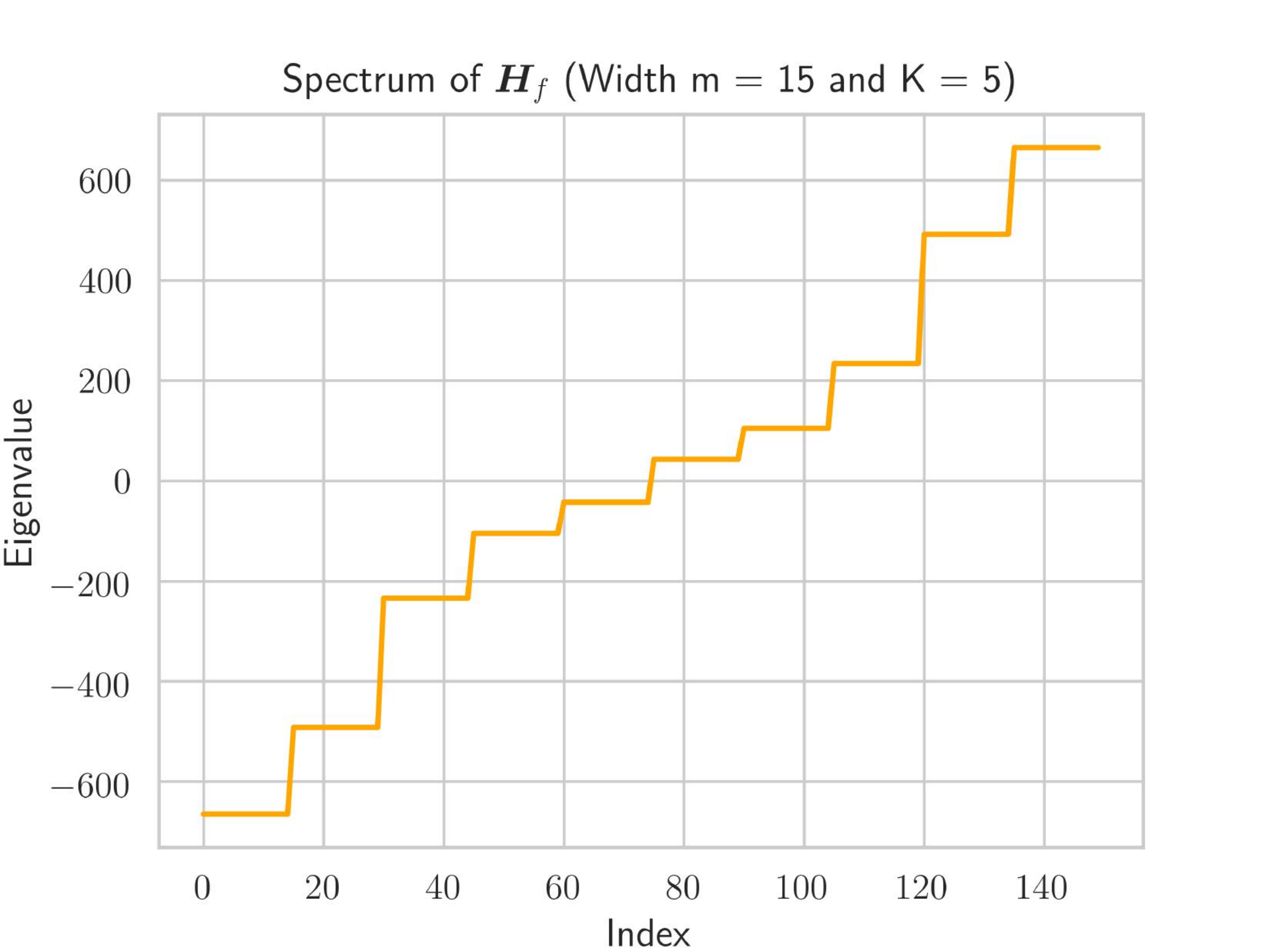}
    \caption{\textit{Spectrum of Functional Hessian.} We use a Gaussian mixture of dimension $d=5$ and a linear model with one hidden layer of size $M=15$. We vary the number of classes $K=3,4,5$ in each of the sub-figures respectively. Notice that we have $2K$ plateaus of width $M=15$.}
    \label{fig:functional_hessian_spectrum_linear}
\end{figure}
\begin{figure}[!h]
   \centering
   \includegraphics[width=0.3\textwidth]{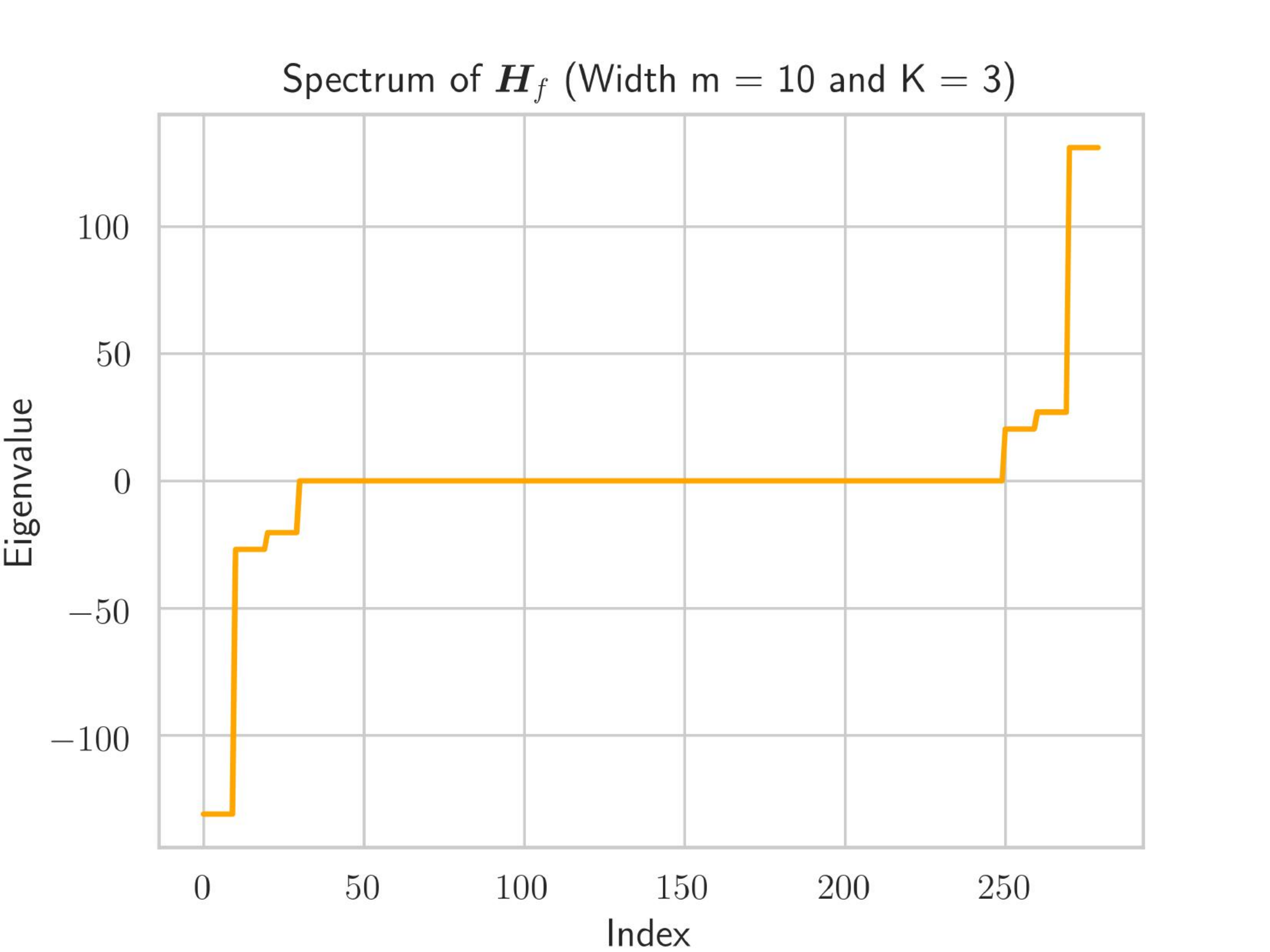}%
   \includegraphics[width=0.3\textwidth]{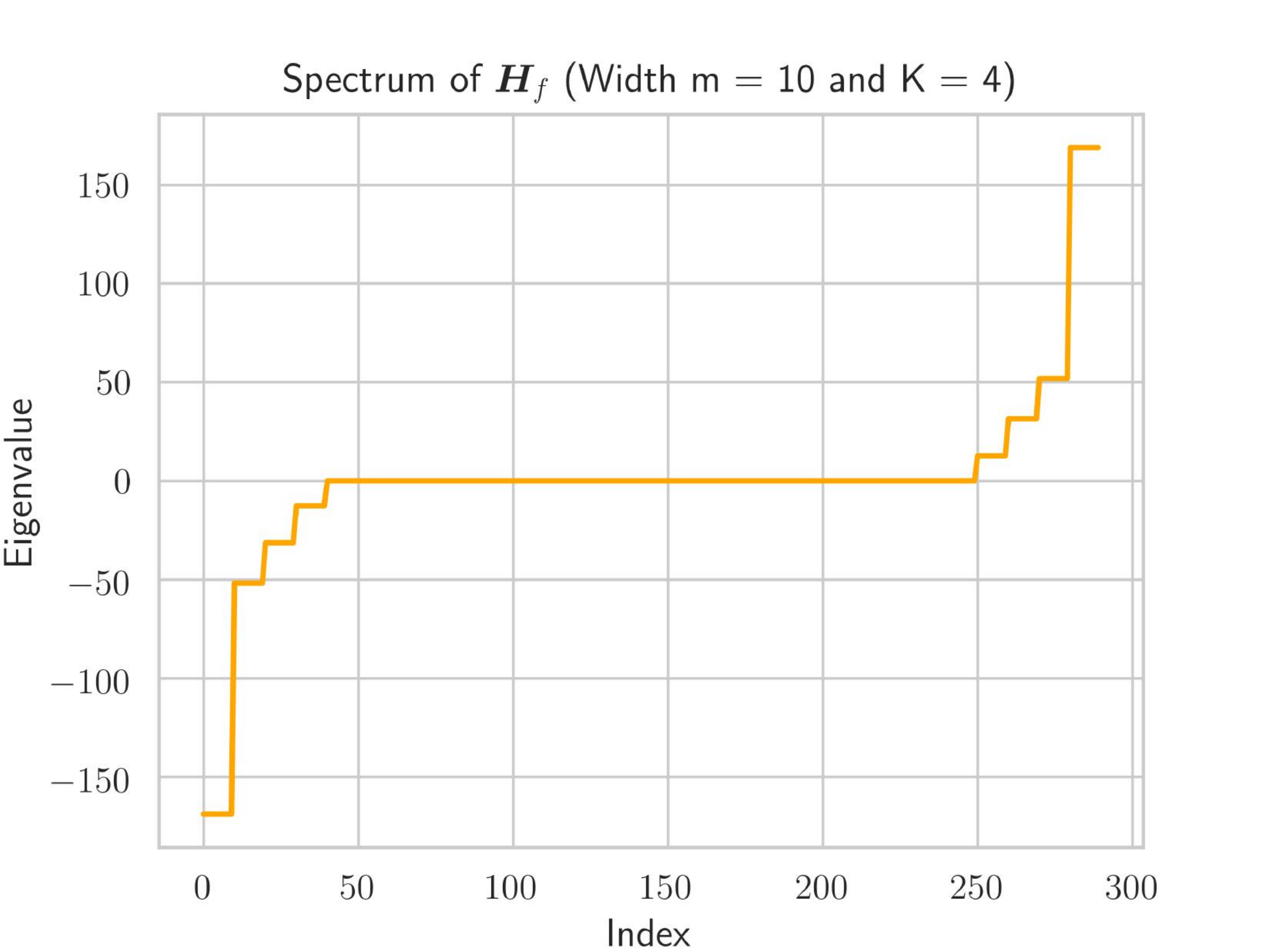}%
   \includegraphics[width=0.3\textwidth]{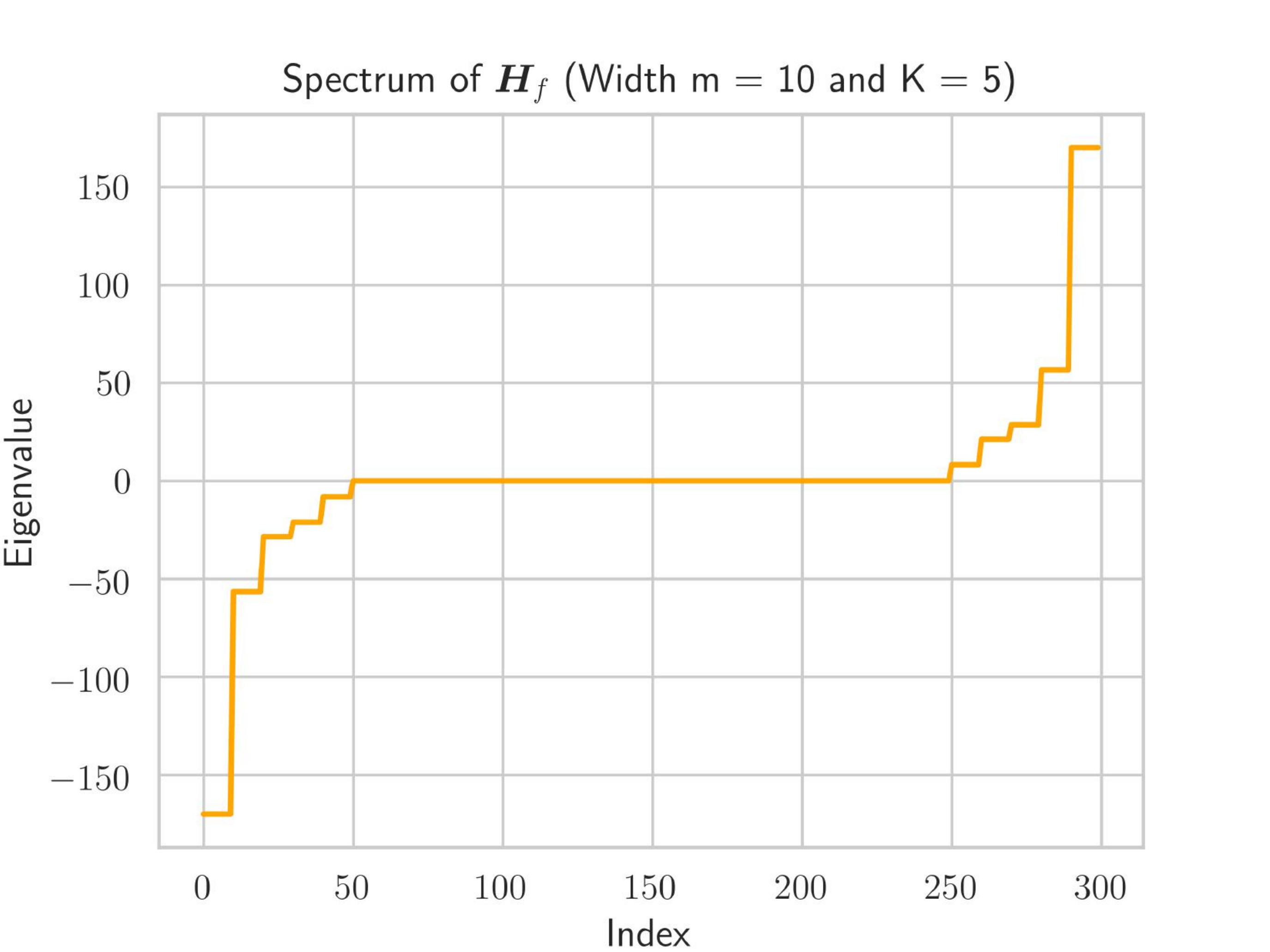}
    \caption{\textit{Spectrum of Functional Hessian.} We use down-sampled \textsc{MNIST} of dimension $d=25$ and a linear model with one hidden layer of size $M=10$. We vary the number of classes $K=3,4,5$  in each of the sub-figures respectively. Notice that we have $2K$ plateaus of width $M=10$.}
   \label{fig:mnist_spec}
\end{figure}

Consider the case of 1-hidden layer network with $M$ hidden neurons. We notice that the functional Hessian in the linear case has a interesting  step-like structure in the spectrum, while in the non-linear case empirically appears to interpolate or pass through it.

Here, the functional Hessian part is given as follows:

$$\HF = \begin{pmatrix}
\bm{0}_{K M} & \bm{\Omega} \otimes \Im_{M} \\[3mm]
\bm{\Omega}^{\top} \otimes \Im_{M} & \bm{0}_{d M}
\end{pmatrix}\,,$$

where, $\Om=\E[\pmb \delta_{\x,\y}\, \x^\top]$ as before.
Now since eigenvalues $\lambda$ are given by the solution to the characteristic polynomial, $\rho(\lambda) = \mathop{det}(\HF - \lambda \Im_{p})$, where $p = d M + K M$ denotes the total number of parameters. We can further write it as, 

$$\HF = \begin{pmatrix}
-\lambda \Im_{K M} & \bm{\Omega} \otimes \Im_{M} \\[3mm]
\bm{\Omega}^{\top} \otimes \Im_{M} & -\lambda \Im_{d M}
\end{pmatrix}$$

Now, we consider the determinant formula through the Schur complement assuming the block matrix $\Am$ is invertible, i.e., 
$$\mathop{det}(\Mm) = \mathop{det}\begin{pmatrix}
\Am & \Bm\\
\Cm  & \Dm
\end{pmatrix}= \mathop{det}(\Am) \mathop{det}(\Dm - \Cm \Am^{-1}\Bm)$$

Hence, in our case we obtain:
$$
\rho(\lambda) = (-\lambda)^{K M} \mathop{det}\left(-\lambda^2 \,\Im_{d M} + (\bm \Omega^\top \bm \Omega \kro \Im_{M})\right)
$$

Where we can see that $\mathop{det}\left(-\lambda^2 \,\Im_{d M} + (\bm \Omega^\top \bm \Omega \kro \Im_{M})\right)$ corresponds to the characteristic polynomial of the matrix $\Zm =(\bm \Omega^\top \bm \Omega) \kro \Im_{M}$ and with each eigenvalue of $\Zm$ occurring with both as positive and negative signs as eigenvalues of $\HF$, repeated $M$ times. See Figure \ref{fig:functional_hessian_spectrum_linear} for Gaussian mixture data and Figure \ref{fig:mnist_spec} for down-sampled \textsc{MNIST}.

\clearpage
\section{Detailed Empirical Results}\label{supp:empirical}

Here we collect the variety of experiments omitted from the main text due to space constraints. We begin by providing further evidence for the validity of our rank predictions for linear networks by varying the dataset and the loss function employed in the calculation of the Hessian. We then present more experiments for the non-linear case, showing more spectral plots and reconstruction errors for more non-linearities. Finally, also show how our rank predictions also extend to the neural tangent kernel.

Experiments were implemented in the JAX framework\footnote{\url{https://github.com/google/jax}} and performed on CPU (AMD EPYC 7H12) with 256 GB memory. 
\let\FloatBarrier\saveFloatBarrier
\subsection{Verification of Rank Predictions for Linear Networks}
\label{verif_exps}

We verify our formulas for \textsc{MNIST}  \citep{lecun-mnisthandwrittendigit-2010}, \textsc{CIFAR10} \citep{cifar10} and \textsc{Fashion-MNIST} \citep{xiao2017fashionmnist}. Moreover we employ diverse losses such as mean-squared error, cross entropy loss and cosh loss. We show the dynamics of rank as a function of sample size, minimal width and depth.

For all the considered settings, we observe exact matches across all datasets and all losses. We structure the experiments as follows. We group by loss functions, starting with MSE, then cross entropy and then cosh loss. For each loss function, we vary the sample size, width and depth of the architecture for the three datasets. Finally, we vary the initialization scheme and study the effect of sample size, width and depth for MSE loss on \textsc{CIFAR10}. Finally, we verify the predictions for architectures that use bias, again using MSE loss on \textsc{CIFAR10}.
\subsubsection{Mean Squared Error (MSE)}
Here we perform more experiments in the spirit of Figure \ref{fig:cifar10-a3}. We also show the rank dynamics with varying depth and only present the normalized plots for both width and depth. Figure \ref{cifar_mse} shows the results for \textsc{CIFAR10}, \ref{fashion_mse} for Fashion\textsc{MNIST} and \ref{mnist_mse} for \textsc{MNIST}. We observe a perfect match for all the datasets. For width and sample size, we down-sample the corresponding dataset to dimensionality $d=64$, while for depth, in order to be able to use deeper models, we down-sample to $d=16$. We use $N=300$ number of samples.

\begin{figure}[!h]
    \centering
    \begin{subfigure}[t]{0.3\textwidth}
    \includegraphics[width=\textwidth]{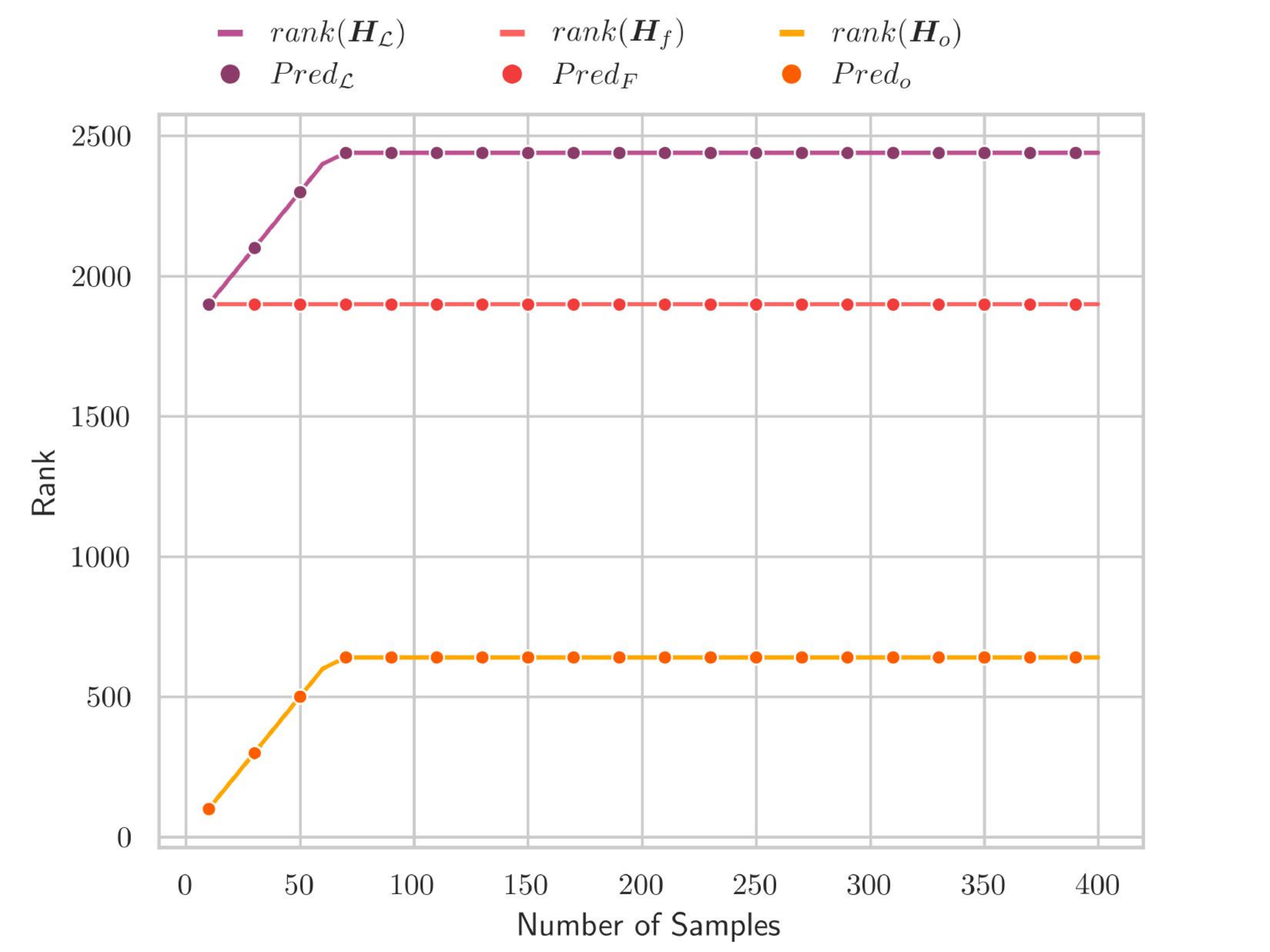}
    \caption{Rank vs sample size $n$}
    \label{fig:cifar10-a1_app}
    \end{subfigure}
    \begin{subfigure}[t]{0.3\textwidth}
    \includegraphics[width=\textwidth]{pdfpics/width_mse_cifar.pdf}
    \caption{$\effparams$ vs minimal width $M_{*}$}
    \label{fig:cifar10-a2_app}
    \end{subfigure}
    \begin{subfigure}[t]{0.3\textwidth}
    \includegraphics[width=\textwidth]{pdfpics/depth_mse_cifar.pdf}
    \caption{$\effparams$ vs depth $L$}
    \label{fig:cifar10-a3_app}
    \end{subfigure}
    \caption{\small{Behaviour of rank and rank/\#params on CIFAR10 using MSE, with hidden layers: $50,20,20,20$ (Fig.~\ref{fig:cifar10-a1_app}), $M_{*}, M_{*}$ (Fig.~\ref{fig:cifar10-a2_app}) and $L$ layers of width $M=25$ (Fig.~\ref{fig:cifar10-a3_app}). The lines indicate the true value and circles denote our formula predictions. }}
    \label{cifar_mse}
\end{figure}
\FloatBarrier
\begin{figure}[!h]
    \centering
    \begin{subfigure}[t]{0.3\textwidth}
    \includegraphics[width=\textwidth]{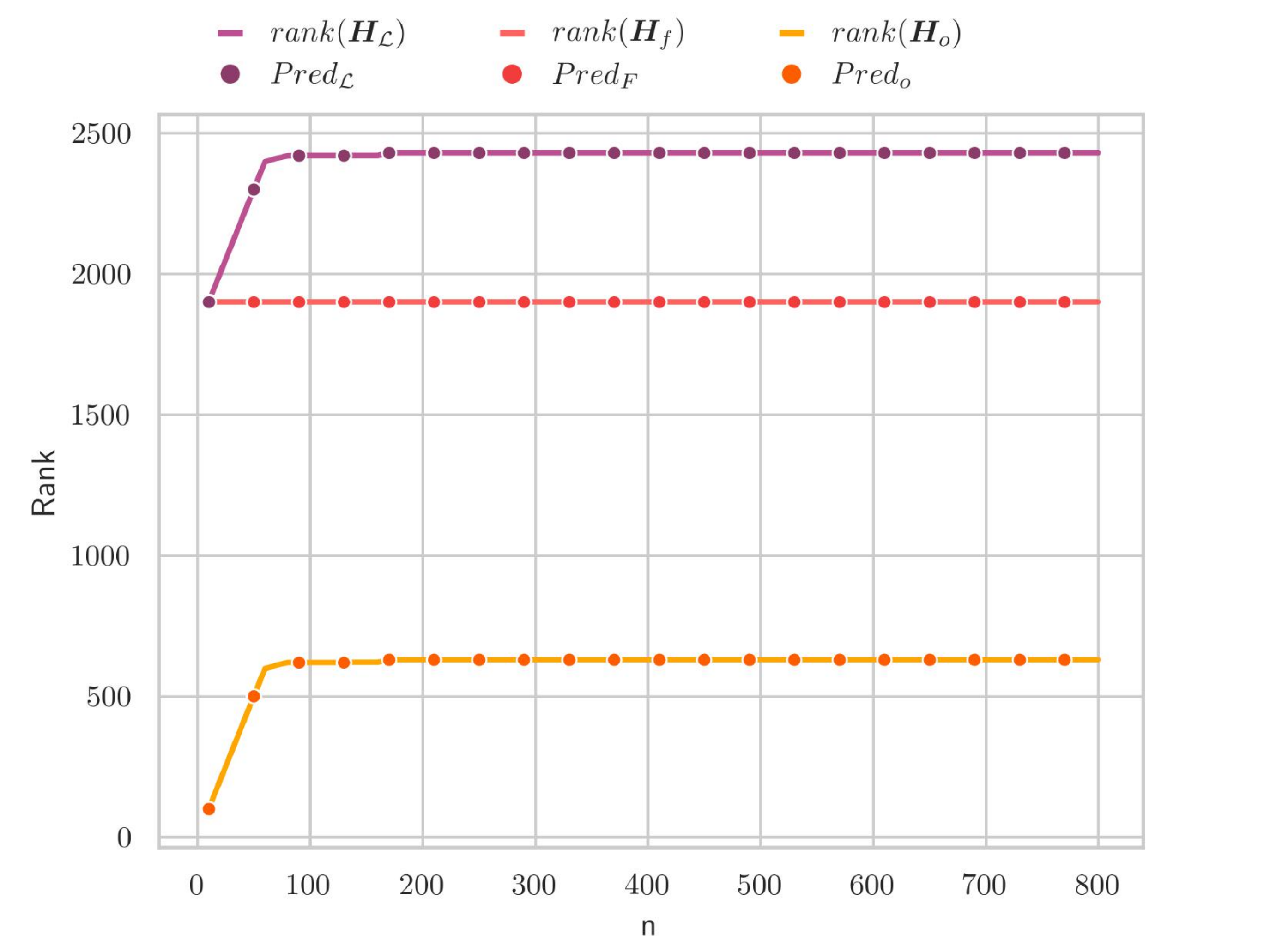}
    \caption{Rank vs sample size $n$}
    \label{fig:fashion-a1_app}
    \end{subfigure}
    \begin{subfigure}[t]{0.3\textwidth}
    \includegraphics[width=\textwidth]{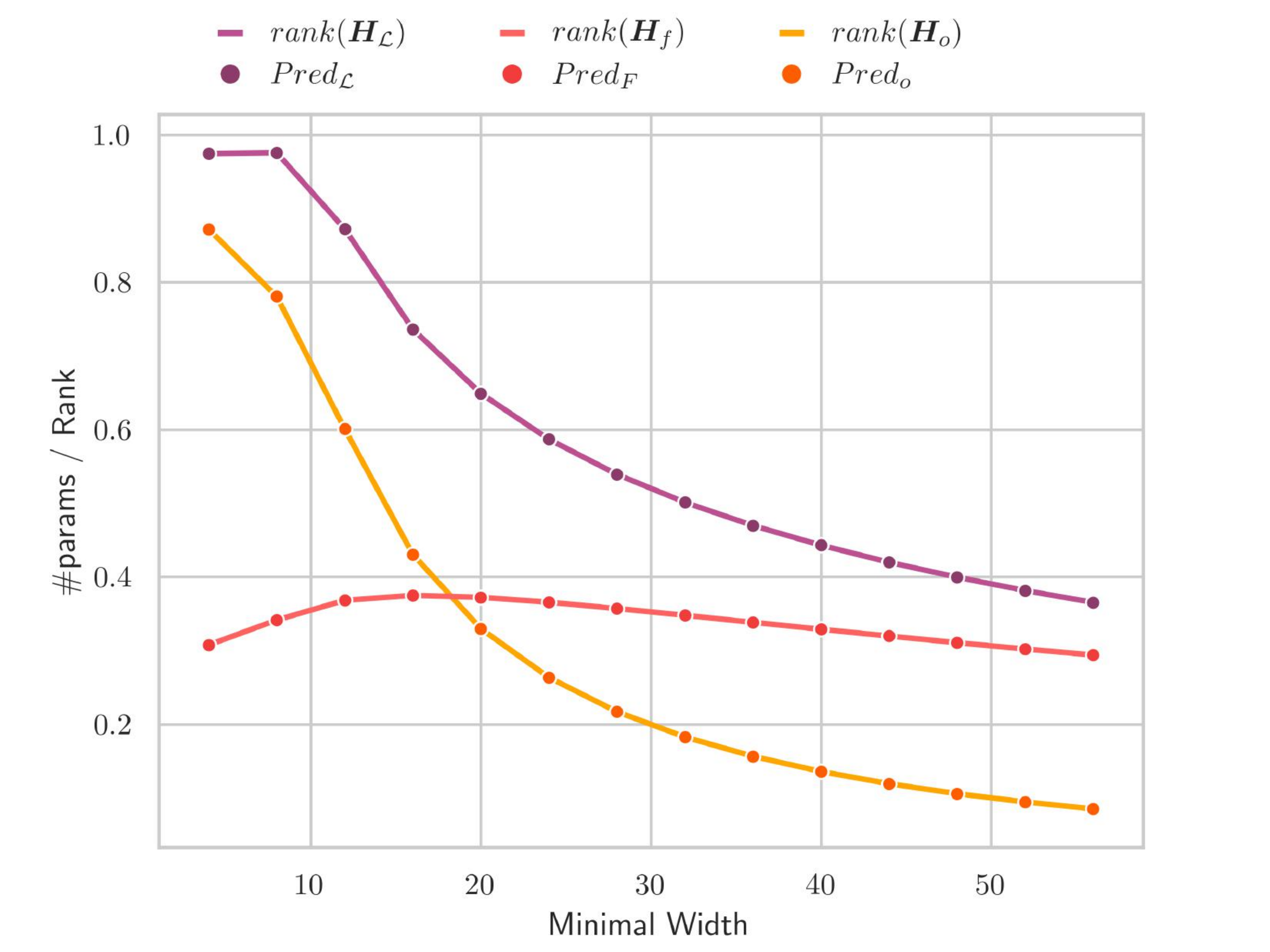}
    \caption{$\effparams$ vs minimal width $M_{*}$}
    \label{fig:fashion-a2_app}
    \end{subfigure}
    \begin{subfigure}[t]{0.3\textwidth}
    \includegraphics[width=\textwidth]{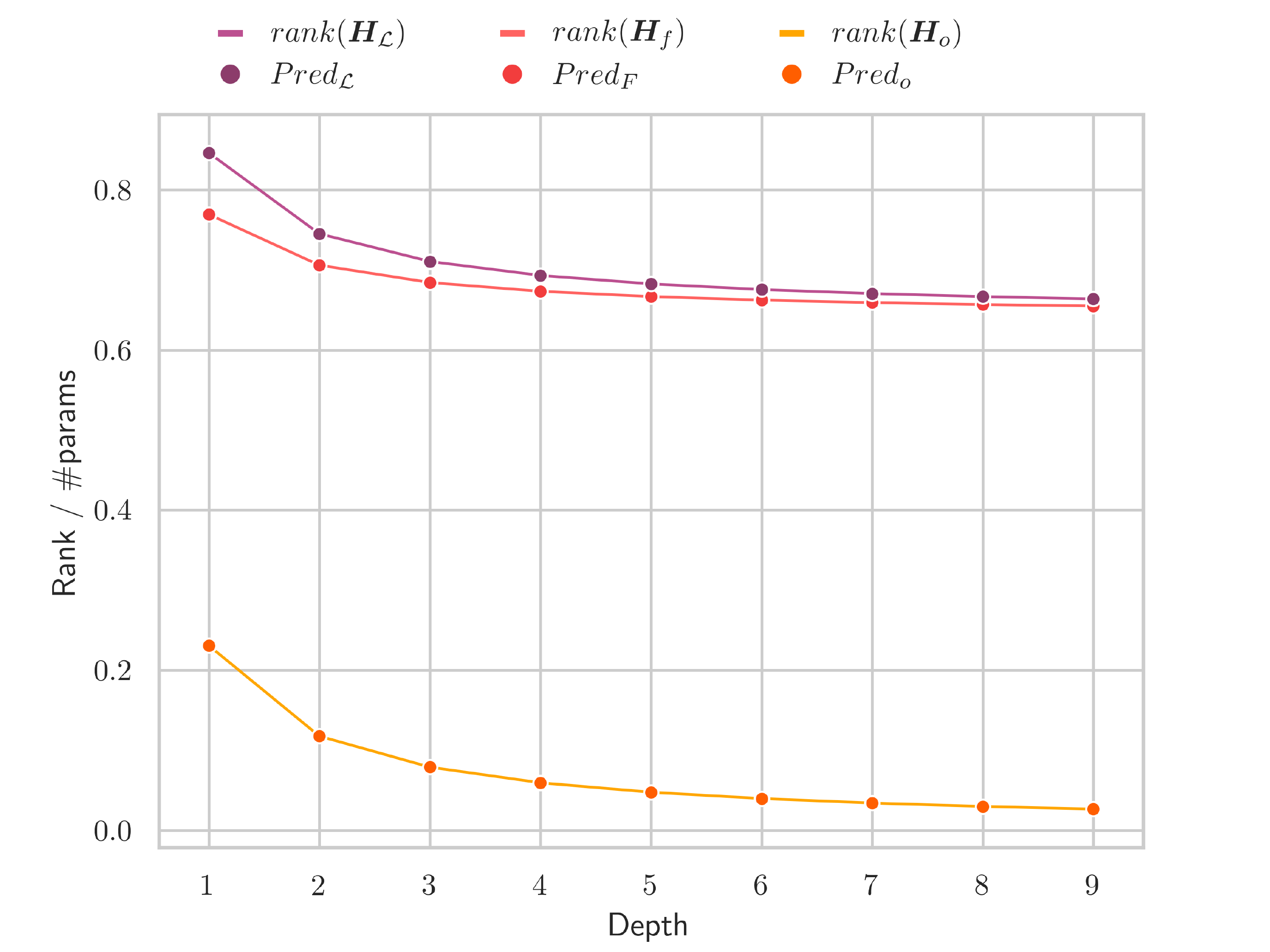}
    \caption{$\effparams$ vs depth $L$}
    \label{fig:fashion-a3_app}
    \end{subfigure}
    \caption{\small{Behaviour of rank and rank/\#params on Fashion\textsc{MNIST} using MSE, with hidden layers: $50,20,20,20$ (Fig.~\ref{fig:fashion-a1_app}), $M_{*}, M_{*}$ (Fig.~\ref{fig:fashion-a2_app}) and $L$ layers of width $M=25$ (Fig.~\ref{fig:fashion-a3_app}). The lines indicate the true value and circles denote our formula predictions. }}
    \label{fashion_mse}
\end{figure}

\FloatBarrier
\begin{figure}[!h]
    \centering
    \begin{subfigure}[t]{0.3\textwidth}
    \includegraphics[width=\textwidth]{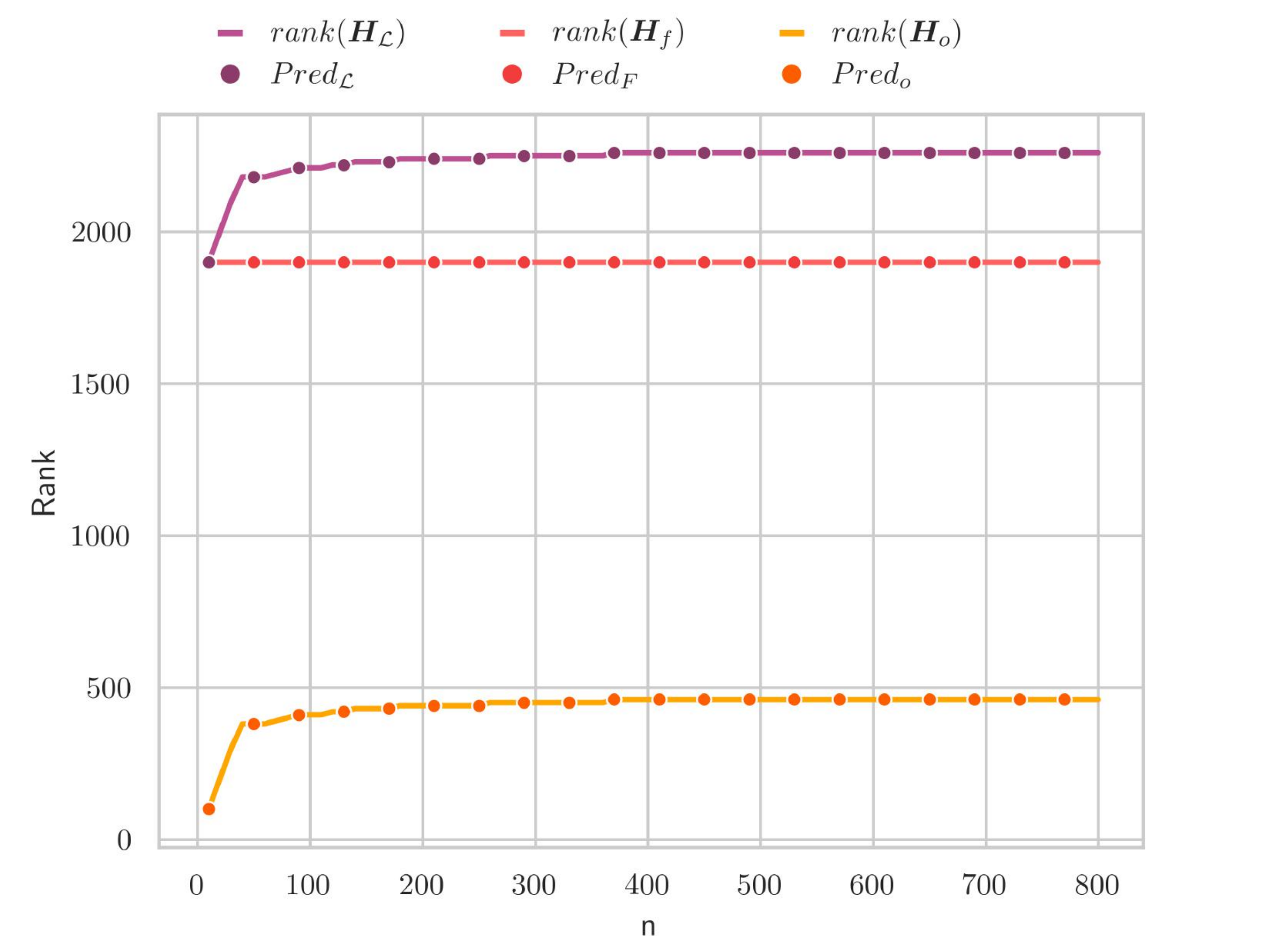}
    \caption{Rank vs sample size $n$}
    \label{fig:mnist-a1_app}
    \end{subfigure}
    \begin{subfigure}[t]{0.3\textwidth}
    \includegraphics[width=\textwidth]{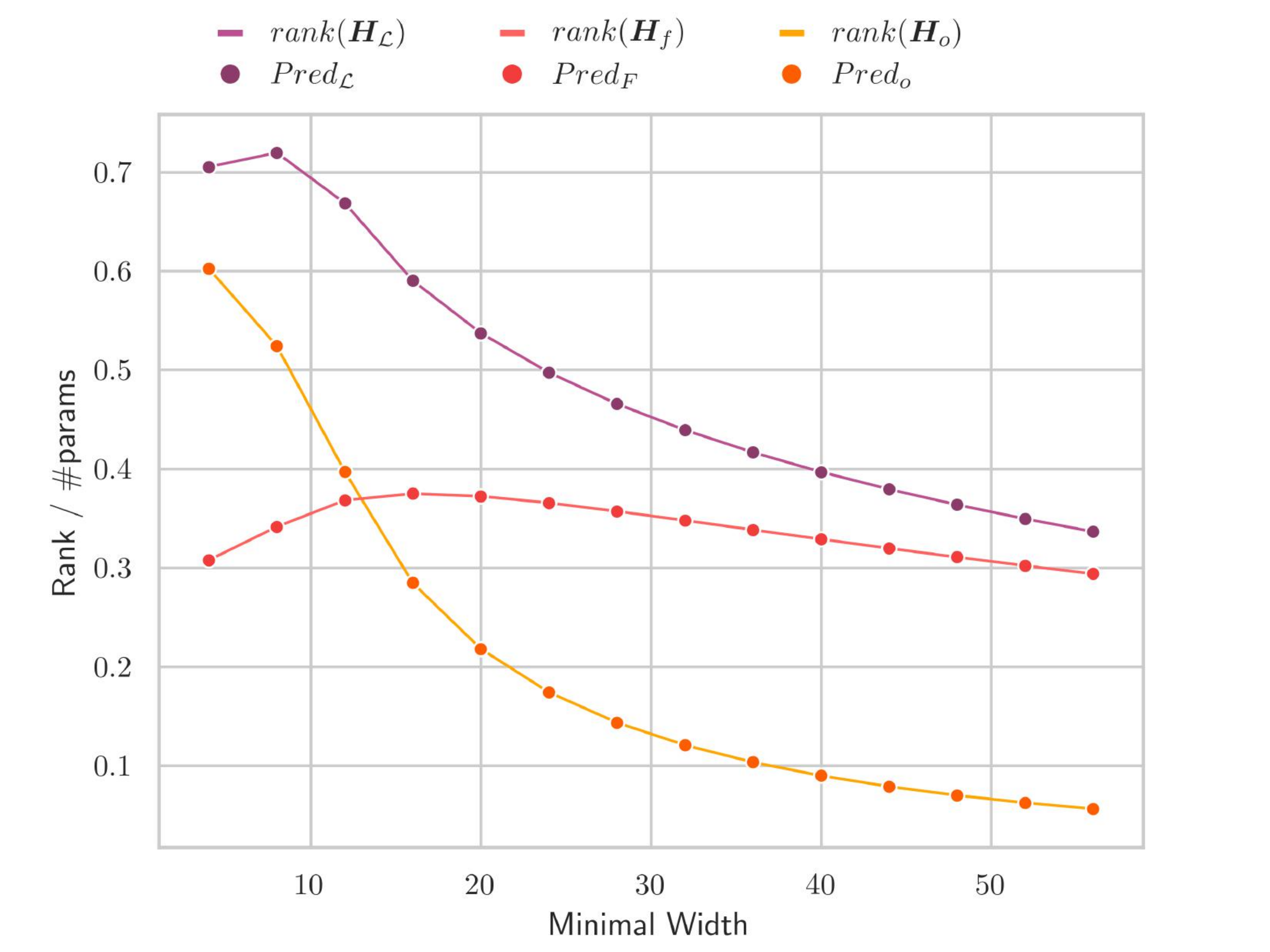}
    \caption{$\effparams$ vs minimal width $M_{*}$}
    \label{fig:mnist-a2_app}
    \end{subfigure}
    \begin{subfigure}[t]{0.3\textwidth}
    \includegraphics[width=\textwidth]{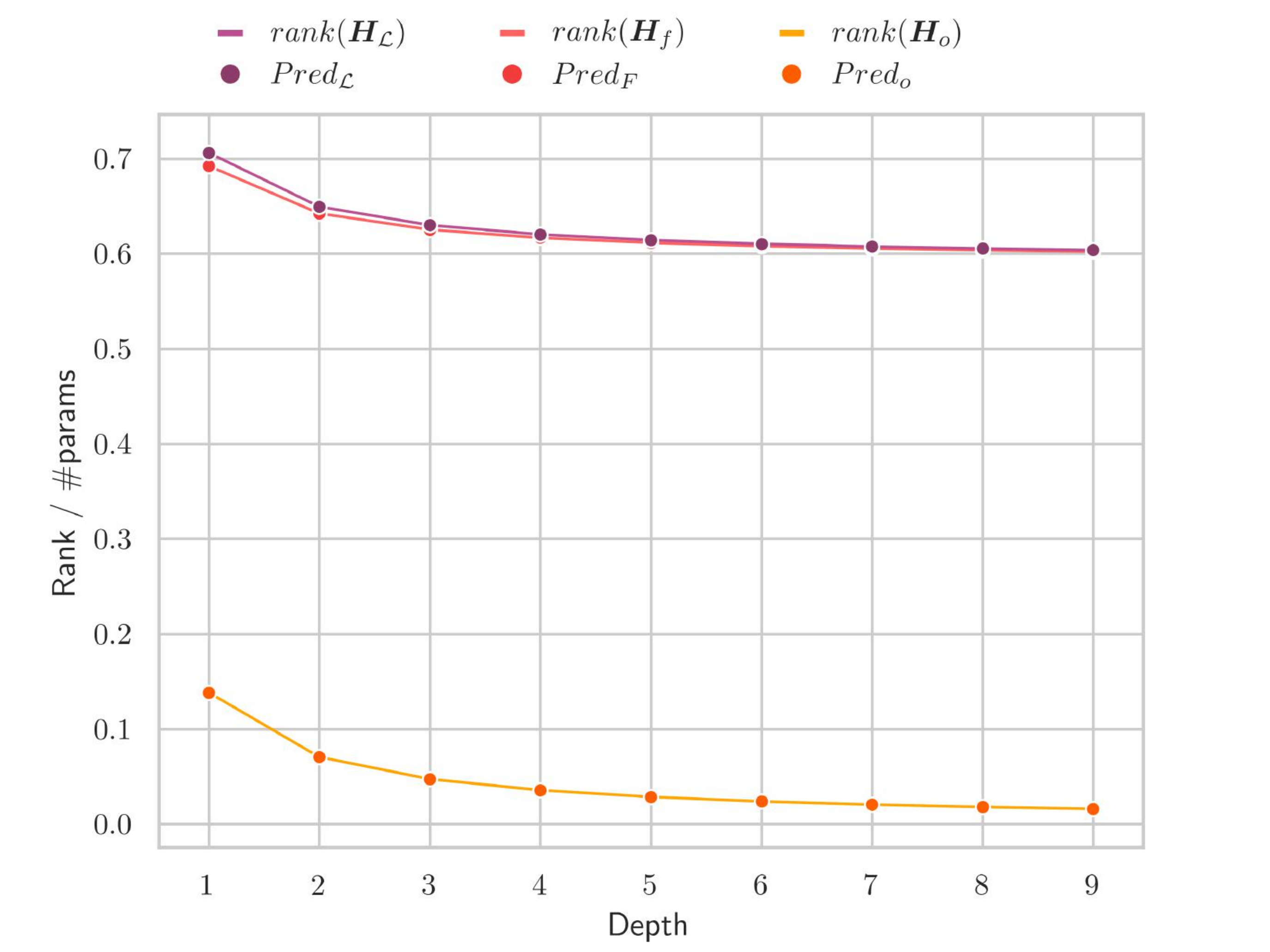}
    \caption{$\effparams$ vs depth $L$}
    \label{fig:mnist-a3_app}
    \end{subfigure}
    \caption{\small{Behaviour of rank and rank/\#params on \textsc{MNIST} using MSE, with hidden layers: $50,20,20,20$ (Fig.~\ref{fig:mnist-a1_app}), $M_{*}, M_{*}$ (Fig.~\ref{fig:mnist-a2_app}) and $L$ layers of width $M=25$ (Fig.~\ref{fig:mnist-a3_app}). The lines indicate the true value and circles denote our formula predictions. }}
    \label{mnist_mse}
\end{figure}

\FloatBarrier
\subsubsection{Cross Entropy}
Here we consider another popular loss function, namely cross entropy, which is defined as 
$$\ell_{\text{cp}}(\btheta)=-\sum_{i=1}^{N}\sum_{k=1}^{K}\operatorname{log}\left(\operatorname{softmax}_{k}(F_{\btheta}(\x_i)\right)y_{ik}$$
where $y_{ik} = \begin{cases} 1 \hspace{3mm} \text{if } ``k" \text{ is the label} \\
0 \hspace{3mm} \text{otherwise}\end{cases}$ and $\operatorname{softmax}_k(\bm{z}) = \frac{e^{z_k}}{\sum_{l=1}^{K}e^{z_l}}$.
Observe that cross entropy is combined with a softmax operation at the output layer, constraining the final vector to sum to $1$, i.e. $\sum_{l=1}^{K}\operatorname{softmax}_k(\bm{z}) = 1$. This induces, by construction a linear dependence at the output, thus instead of having $K$ free outputs, we only have $K-1$ independent outputs. We reflect this in our rank formulas by replacing every occurrence of $K$ by $K-1$. 

Figure \ref{cifar_cross} shows the results for \textsc{CIFAR10}, Figure \ref{fashion_cross} for Fashion\textsc{MNIST} and Figure \ref{mnist_cross} for \textsc{MNIST}. We observe a perfect match for all the datasets.
\begin{figure}[!h]
    \centering
    \begin{subfigure}[t]{0.3\textwidth}
    \includegraphics[width=\textwidth]{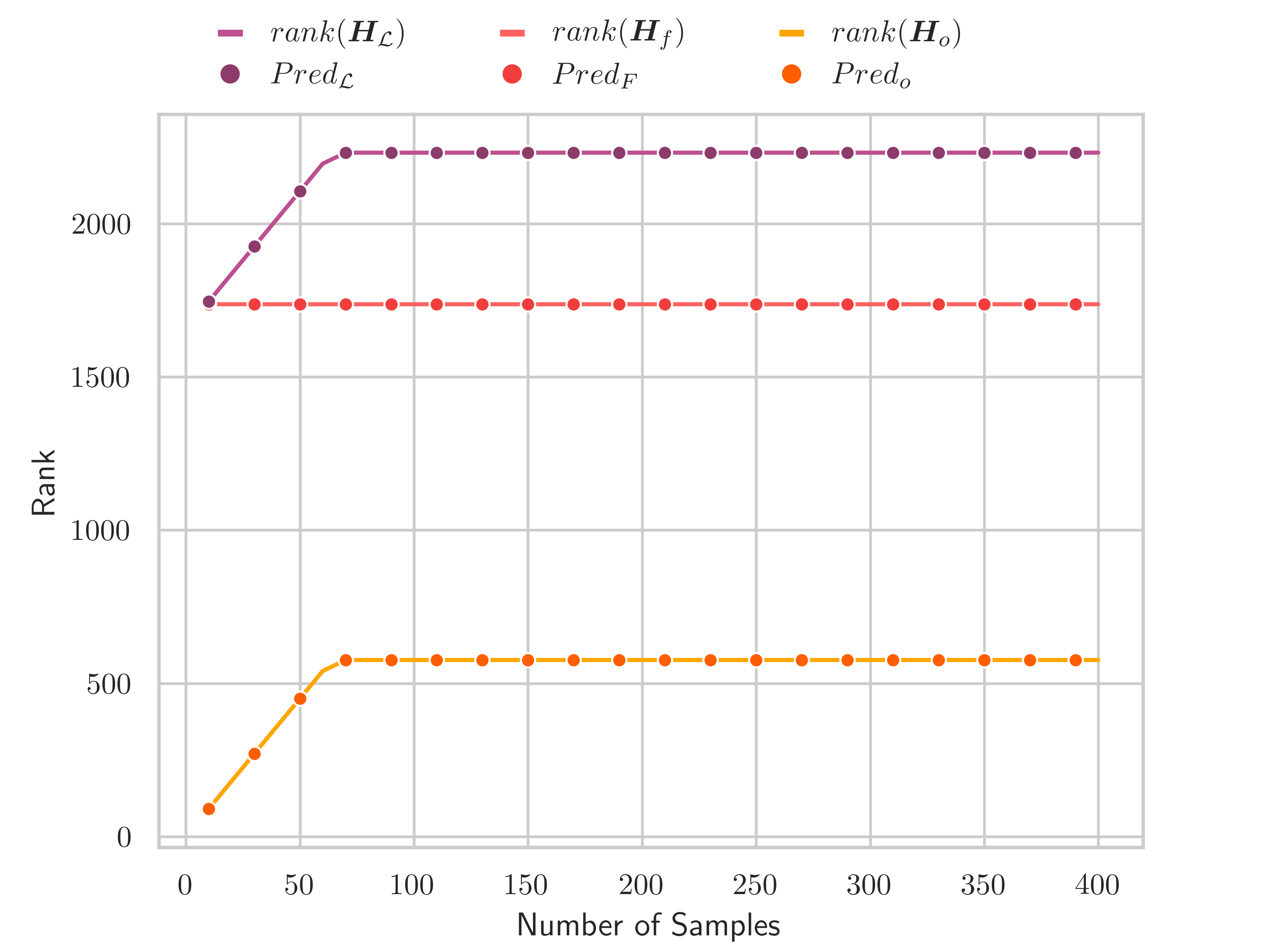}
    \caption{Rank vs sample size $n$}
    \label{fig:cifar10-a1_app_cross}
    \end{subfigure}
    \begin{subfigure}[t]{0.3\textwidth}
    \includegraphics[width=\textwidth]{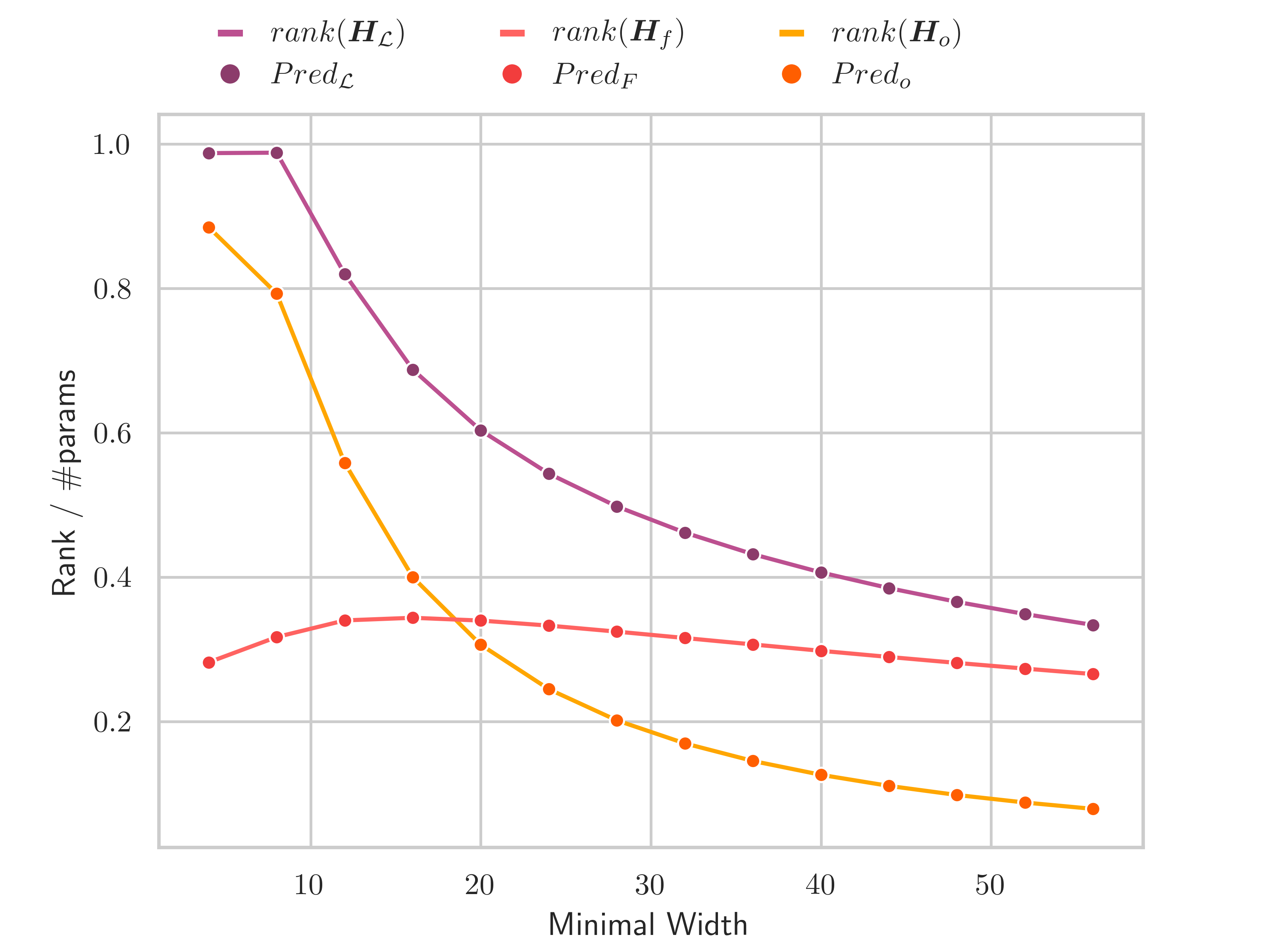}
    \caption{$\effparams$ vs minimal width $M_{*}$}
    \label{fig:cifar10-a2_app_cross}
    \end{subfigure}
    \begin{subfigure}[t]{0.3\textwidth}
    \includegraphics[width=\textwidth]{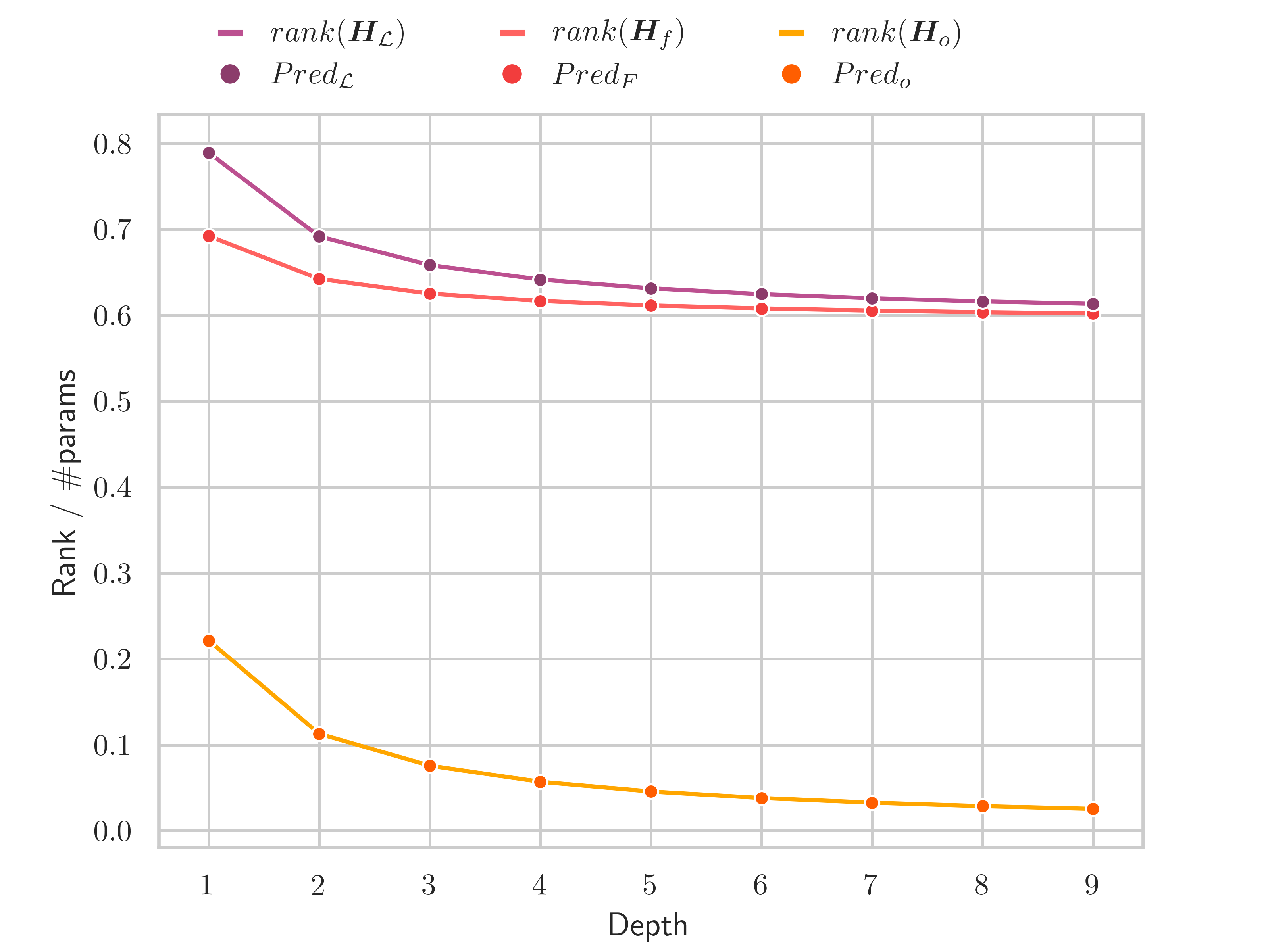}
    \caption{$\effparams$ vs depth $L$}
    \label{fig:cifar10-a3_app_cross}
    \end{subfigure}
    \caption{\small{Behaviour of rank and rank/\#params on CIFAR10 using cross entropy, with hidden layers: $50,20,20,20$ (Fig.~\ref{fig:cifar10-a1_app_cross}), $M_{*}, M_{*}$ (Fig.~\ref{fig:cifar10-a2_app_cross}) and $L$ layers of width $M=25$ (Fig.~\ref{fig:cifar10-a3_app_cross}). The lines indicate the true value and circles denote our formula predictions. }}
    \label{cifar_cross}
\end{figure}
\FloatBarrier
\begin{figure}[!h]
    \centering
    \begin{subfigure}[t]{0.3\textwidth}
    \includegraphics[width=\textwidth]{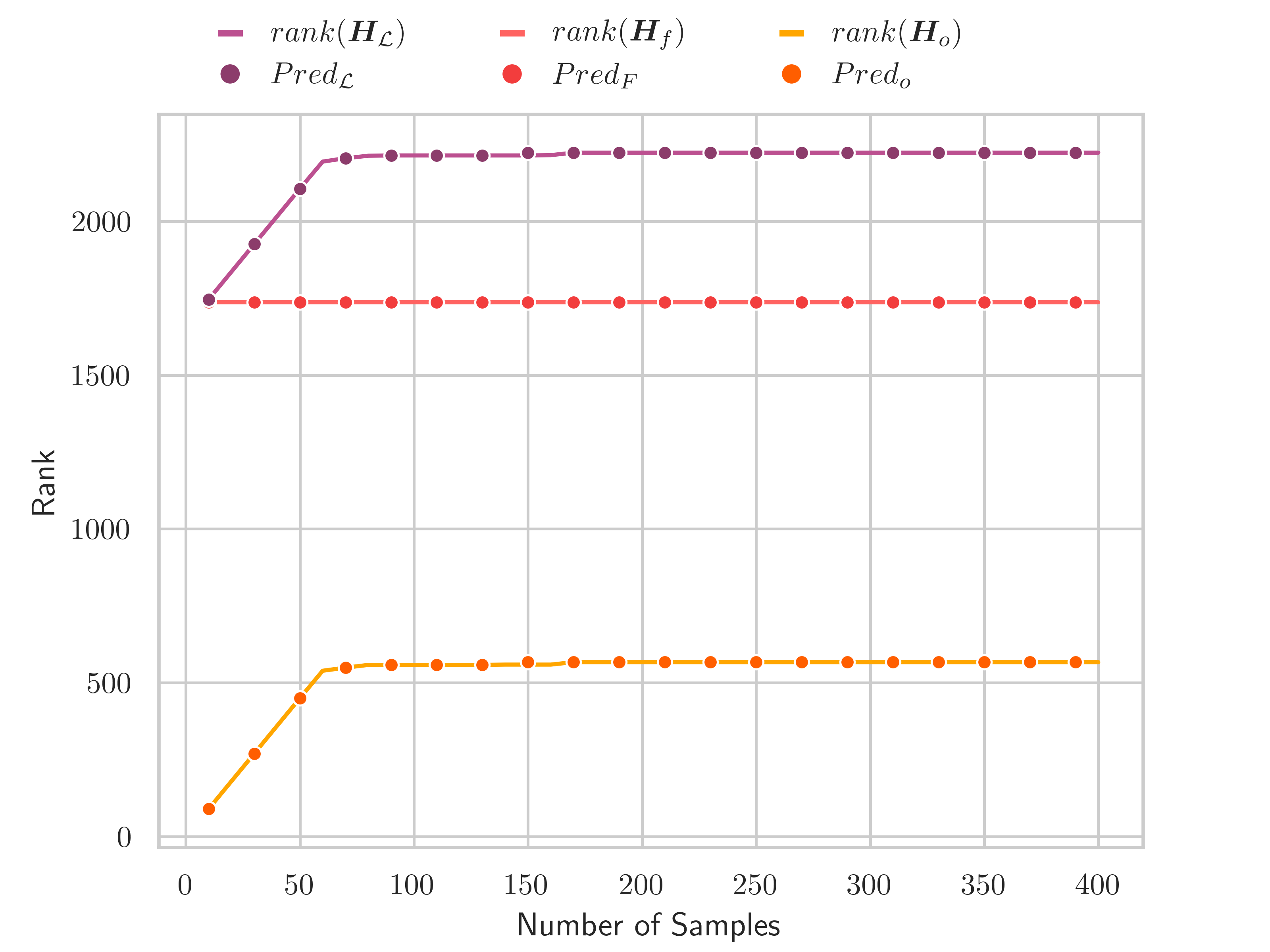}
    \caption{Rank vs sample size $n$}
    \label{fig:fashion-a1_app_cross}
    \end{subfigure}
    \begin{subfigure}[t]{0.3\textwidth}
    \includegraphics[width=\textwidth]{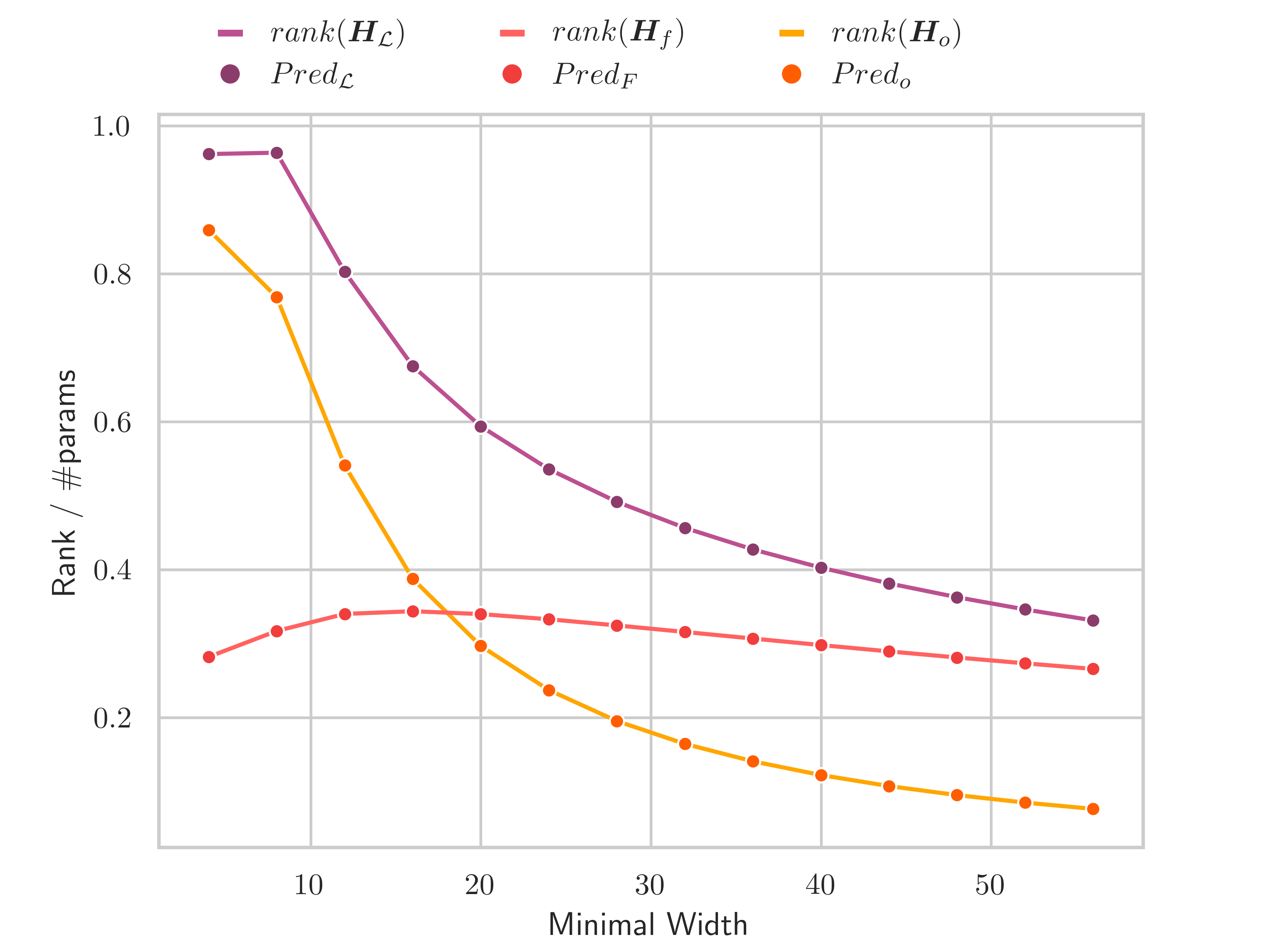}
    \caption{$\effparams$ vs minimal width $M_{*}$}
    \label{fig:fashion-a2_app_cross}
    \end{subfigure}
    \begin{subfigure}[t]{0.3\textwidth}
    \includegraphics[width=\textwidth]{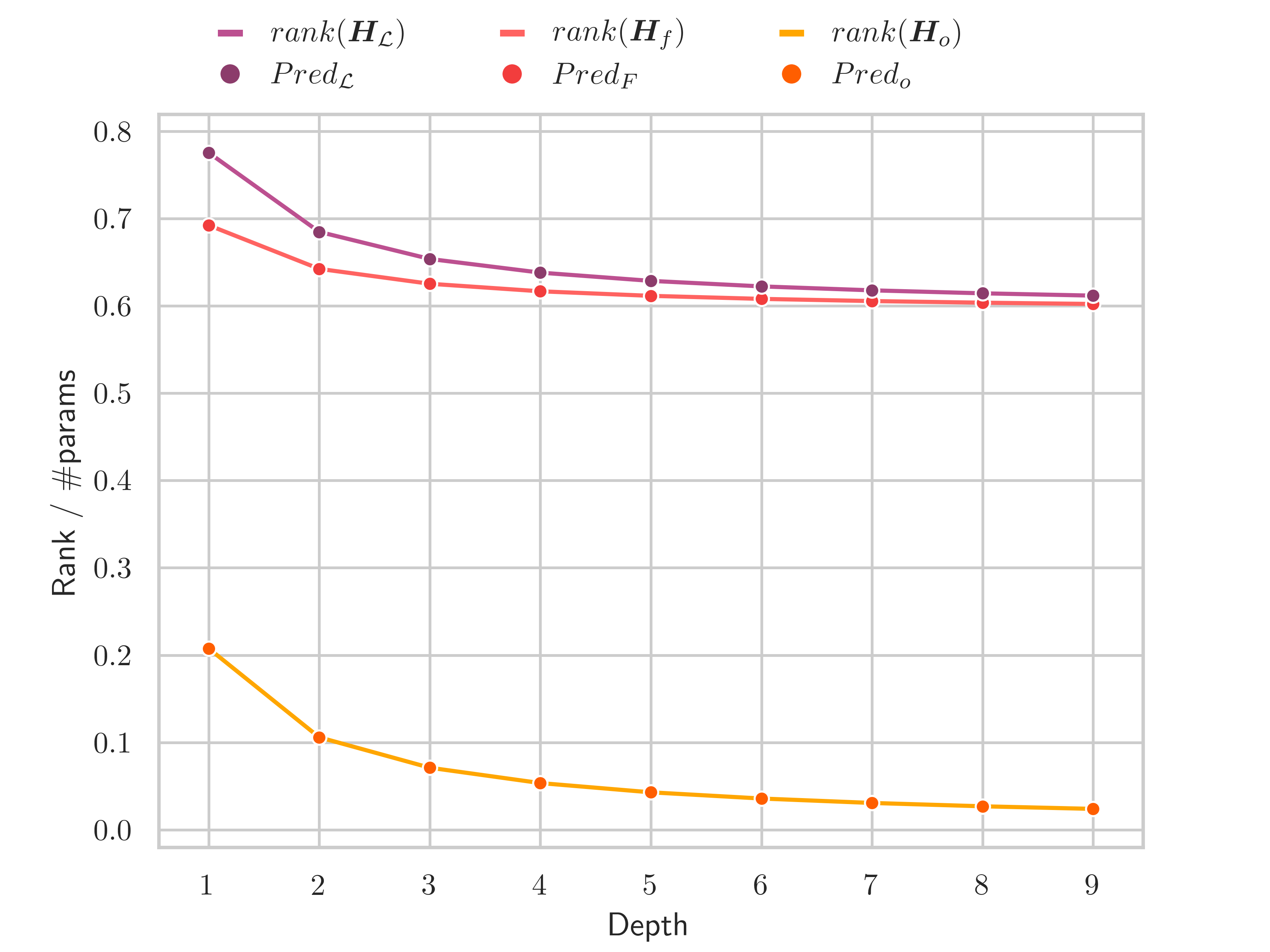}
    \caption{$\effparams$ vs depth $L$}
    \label{fig:fashion-a3_app_cross}
    \end{subfigure}
    \caption{\small{Behaviour of rank and rank/\#params on Fashion\textsc{MNIST} using cross entropy, with hidden layers: $50,20,20,20$ (Fig.~\ref{fig:fashion-a1_app_cross}), $M_{*}, M_{*}$ (Fig.~\ref{fig:fashion-a2_app_cross}) and $L$ layers of width $M=25$ (Fig.~\ref{fig:fashion-a3_app_cross}). The lines indicate the true value and circles denote our formula predictions. }}
    \label{fashion_cross}
\end{figure}
\FloatBarrier

\begin{figure}[!h]
    \centering
    \begin{subfigure}[t]{0.3\textwidth}
    \includegraphics[width=\textwidth]{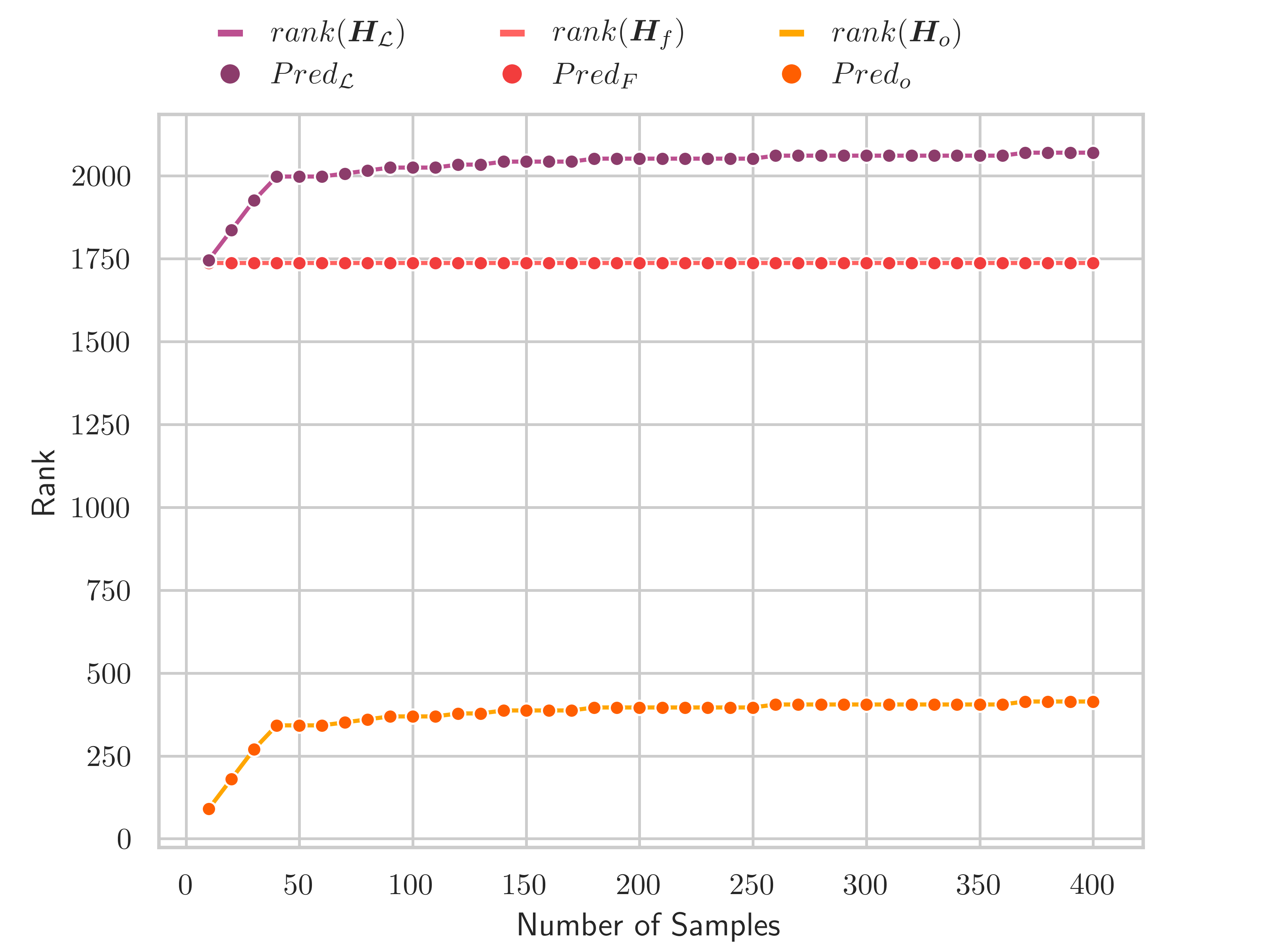}
    \caption{Rank vs sample size $n$}
    \label{fig:mnist-a1_app_cross}
    \end{subfigure}
    \begin{subfigure}[t]{0.3\textwidth}
    \includegraphics[width=\textwidth]{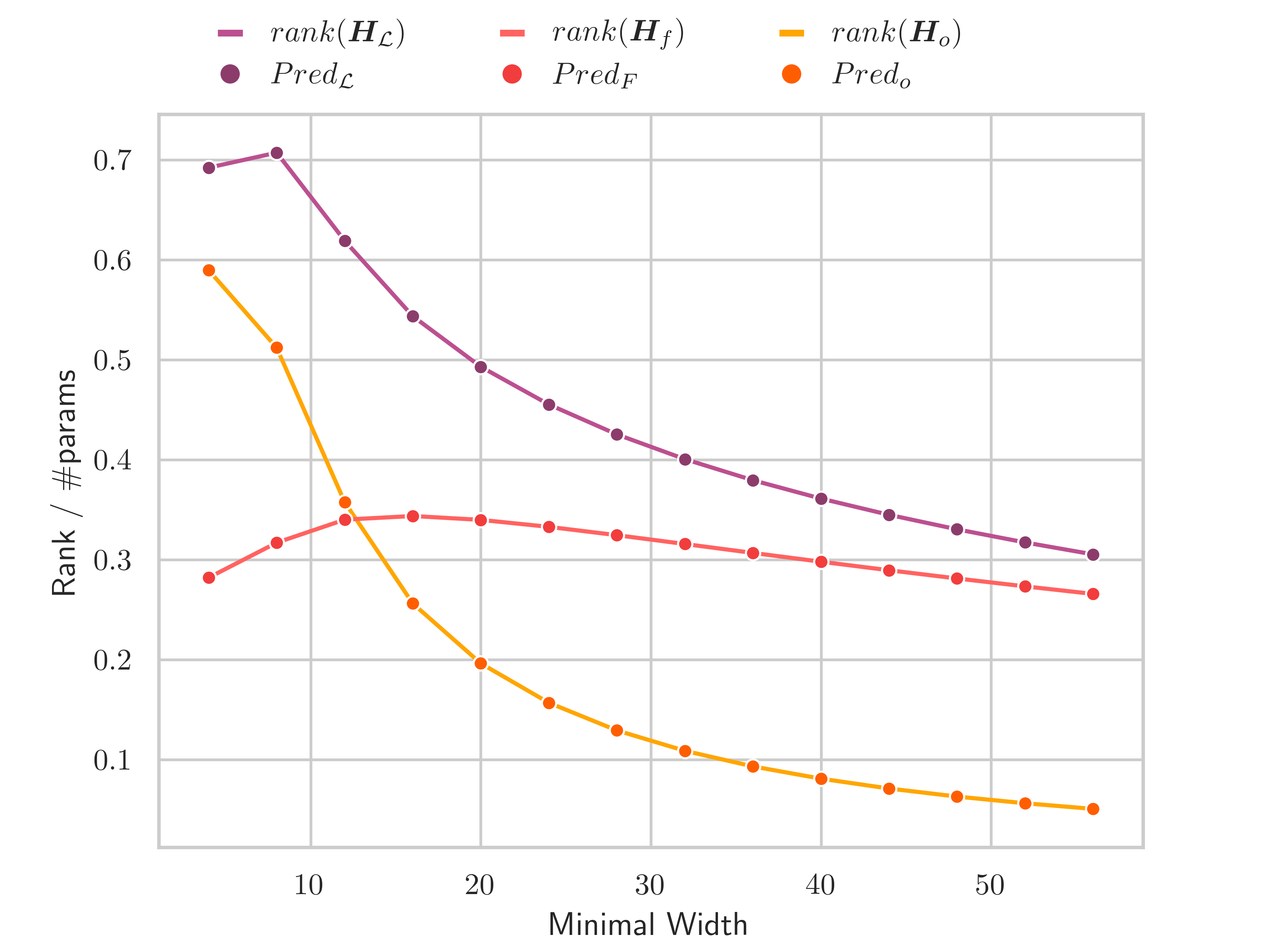}
    \caption{$\effparams$ vs minimal width $M_{*}$}
    \label{fig:mnist-a2_app_cross}
    \end{subfigure}
    \begin{subfigure}[t]{0.3\textwidth}
    \includegraphics[width=\textwidth]{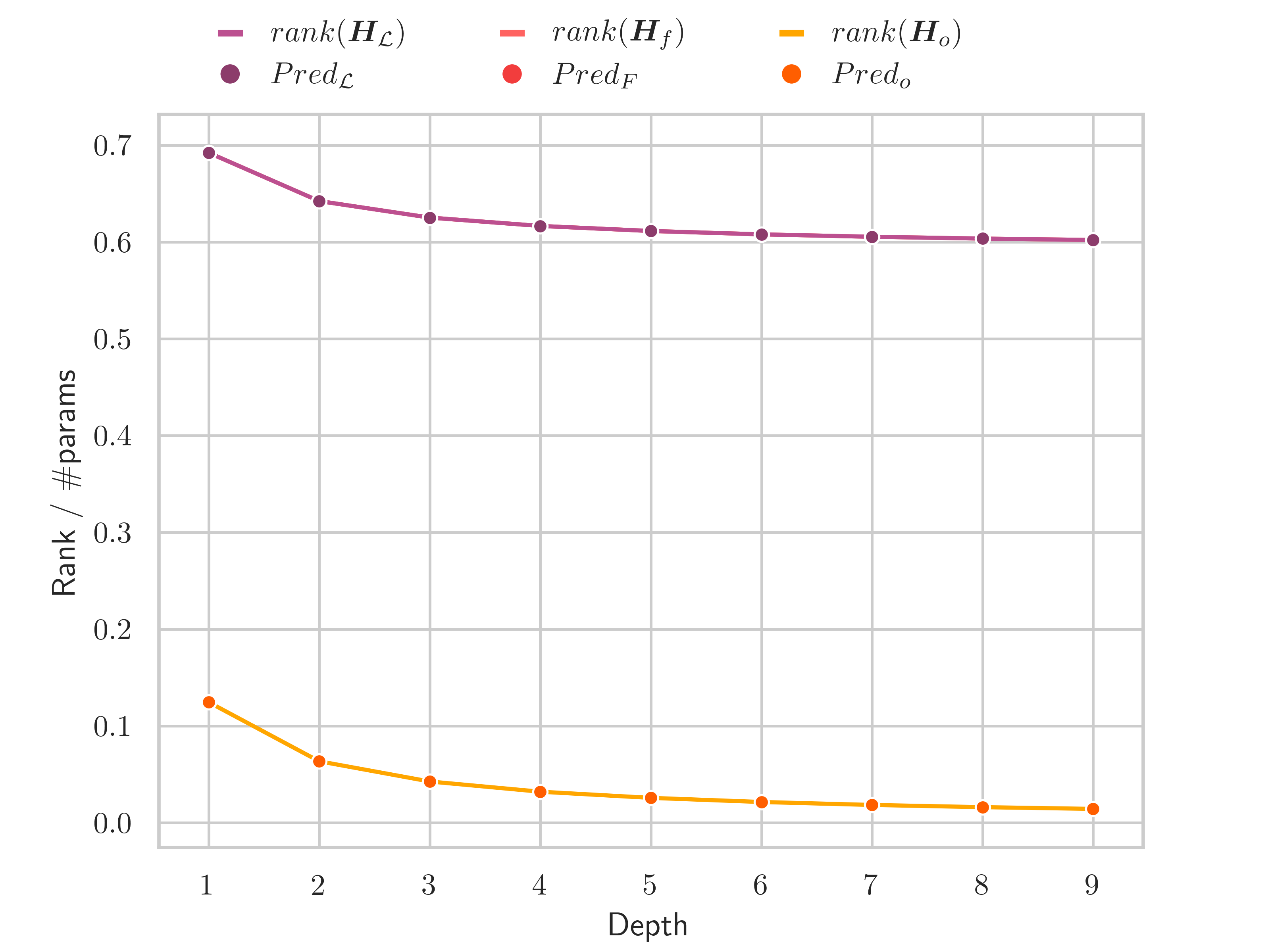}
    \caption{$\effparams$ vs depth $L$}
    \label{fig:mnist-a3_app_cross}
    \end{subfigure}
    \caption{\small{Behaviour of rank and rank/\#params on \textsc{MNIST} using cross entropy, with hidden layers: $50,20,20,20$ (Fig.~\ref{fig:mnist-a1_app_cross}), $M_{*}, M_{*}$ (Fig.~\ref{fig:mnist-a2_app_cross}) and $L$ layers of width $M=25$ (Fig.~\ref{fig:mnist-a3_app_cross}). The lines indicate the true value and circles denote our formula predictions. }}
    \label{mnist_cross}
\end{figure}
\FloatBarrier

\subsubsection{Cosh Loss}
To highlight that our formulas are very robust to even more exotic loss functions, we consider the cosh-loss, defined as
$$\ell_{cosh}(\btheta) = \sum_{i=1}^{N}\sum_{k=1}^{K}\operatorname{log}\left(\cosh\left(\hat{y}_{ik}-y_{ik}\right)\right)$$
Figure \ref{cifar_cosh} shows the results for \textsc{CIFAR10}, Figure \ref{fashion_cosh} for Fashion\textsc{MNIST} and Figure \ref{mnist_cosh} for \textsc{MNIST}. We observe a perfect match for all the datasets.
Also in this case we observe an exact match empirically for all datasets and varying sample size, width and depth. 

\begin{figure}[!h]
    \centering
    \begin{subfigure}[t]{0.3\textwidth}
    \includegraphics[width=\textwidth]{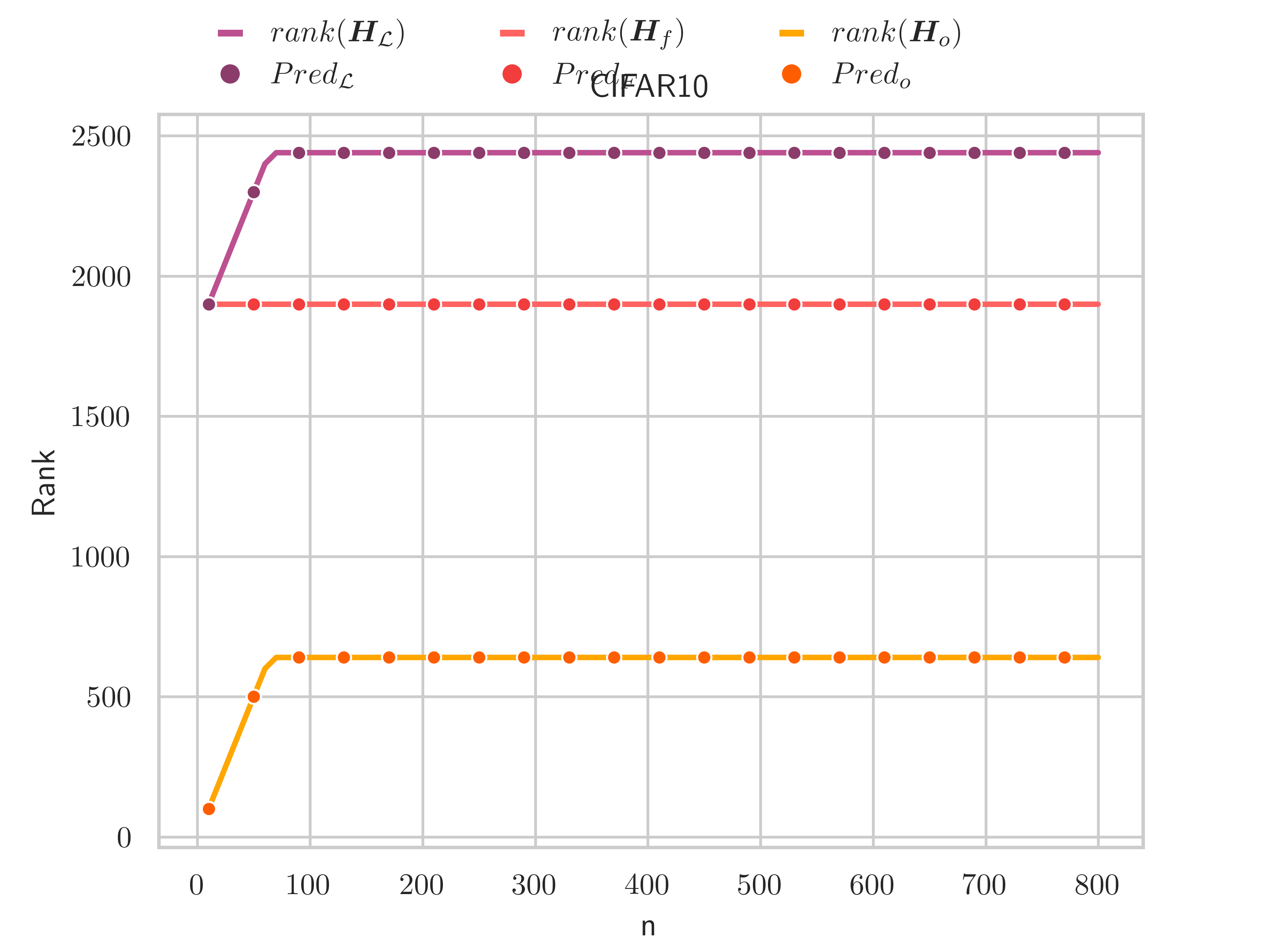}
    \caption{Rank vs sample size $n$}
    \label{fig:cifar10-a1_app_cosh}
    \end{subfigure}
    \begin{subfigure}[t]{0.3\textwidth}
    \includegraphics[width=\textwidth]{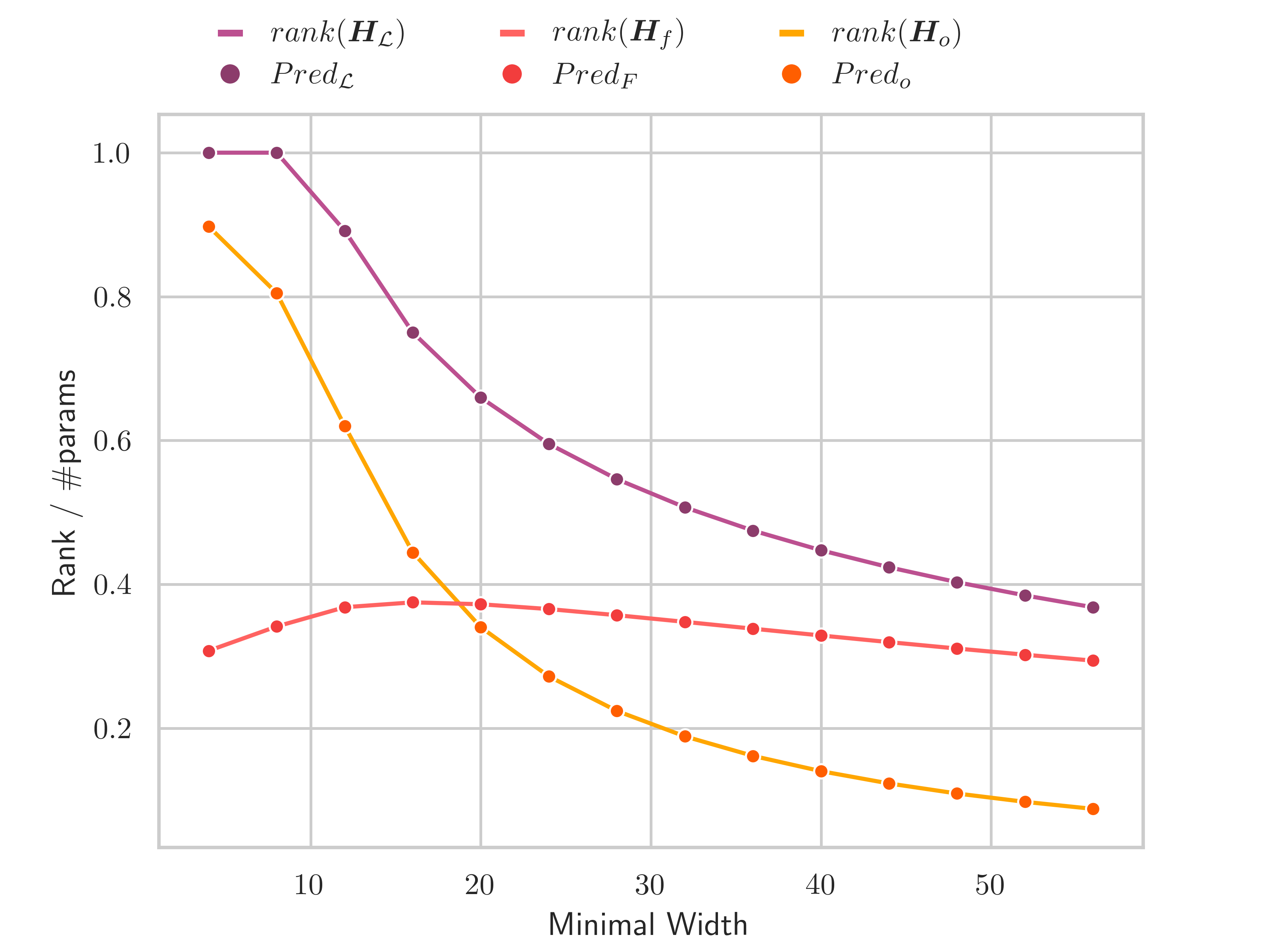}
    \caption{$\effparams$ vs minimal width $M_{*}$}
    \label{fig:cifar10-a2_app_cosh}
    \end{subfigure}
    \begin{subfigure}[t]{0.3\textwidth}
    \includegraphics[width=\textwidth]{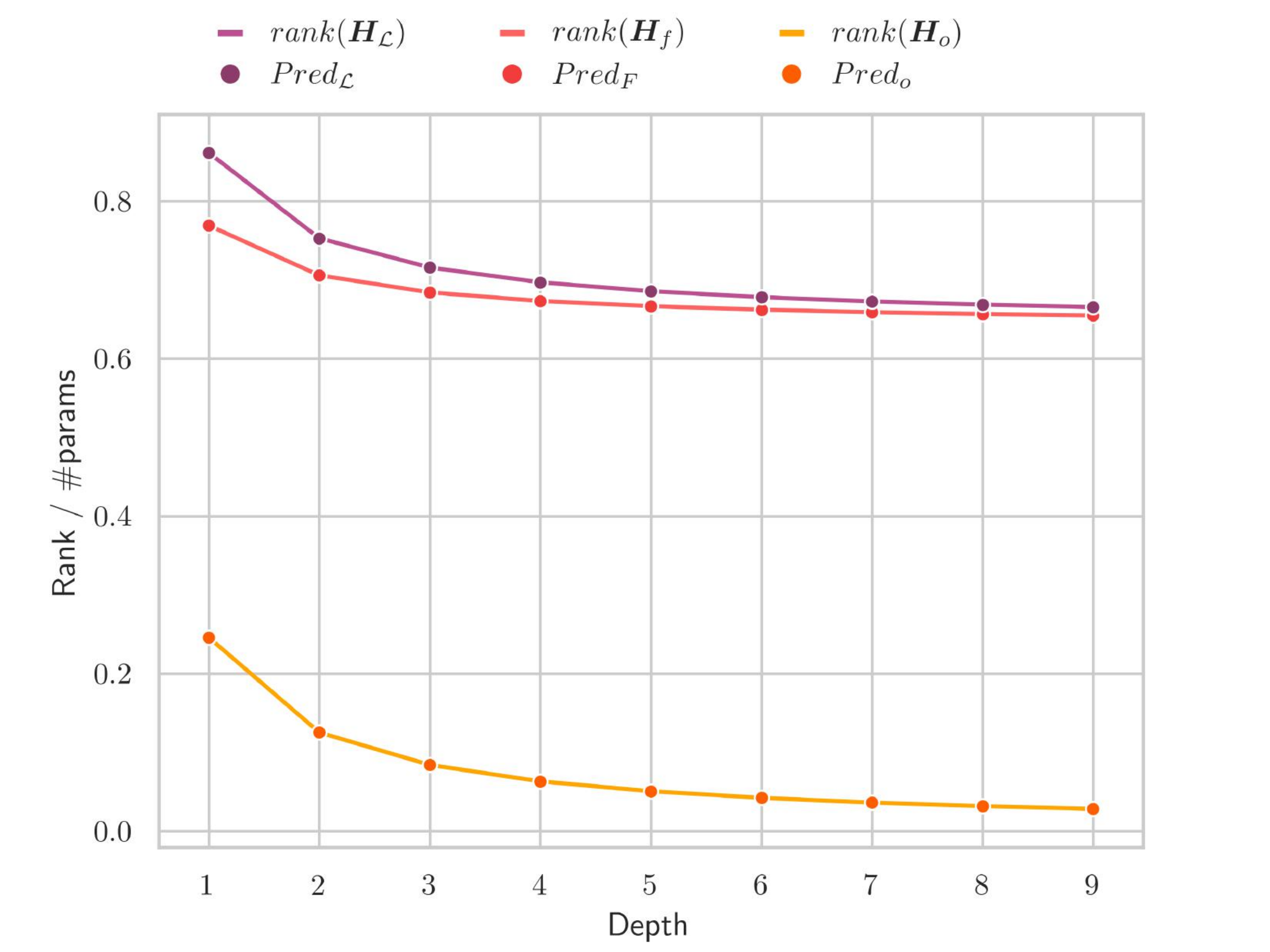}
    \caption{$\effparams$ vs depth $L$}
    \label{fig:cifar10-a3_app_cosh}
    \end{subfigure}
    \caption{\small{Behaviour of rank and rank/\#params on CIFAR10 using cosh loss, with hidden layers: $50,20,20,20$ (Fig.~\ref{fig:cifar10-a1_app_cosh}), $M_{*}, M_{*}$ (Fig.~\ref{fig:cifar10-a2_app_cosh}) and $L$ layers of width $M=25$ (Fig.~\ref{fig:cifar10-a3_app_cosh}). The lines indicate the true value and circles denote our formula predictions. }}
    \label{cifar_cosh}
\end{figure}
\FloatBarrier

\begin{figure}[!h]
    \centering
    \begin{subfigure}[t]{0.3\textwidth}
    \includegraphics[width=\textwidth]{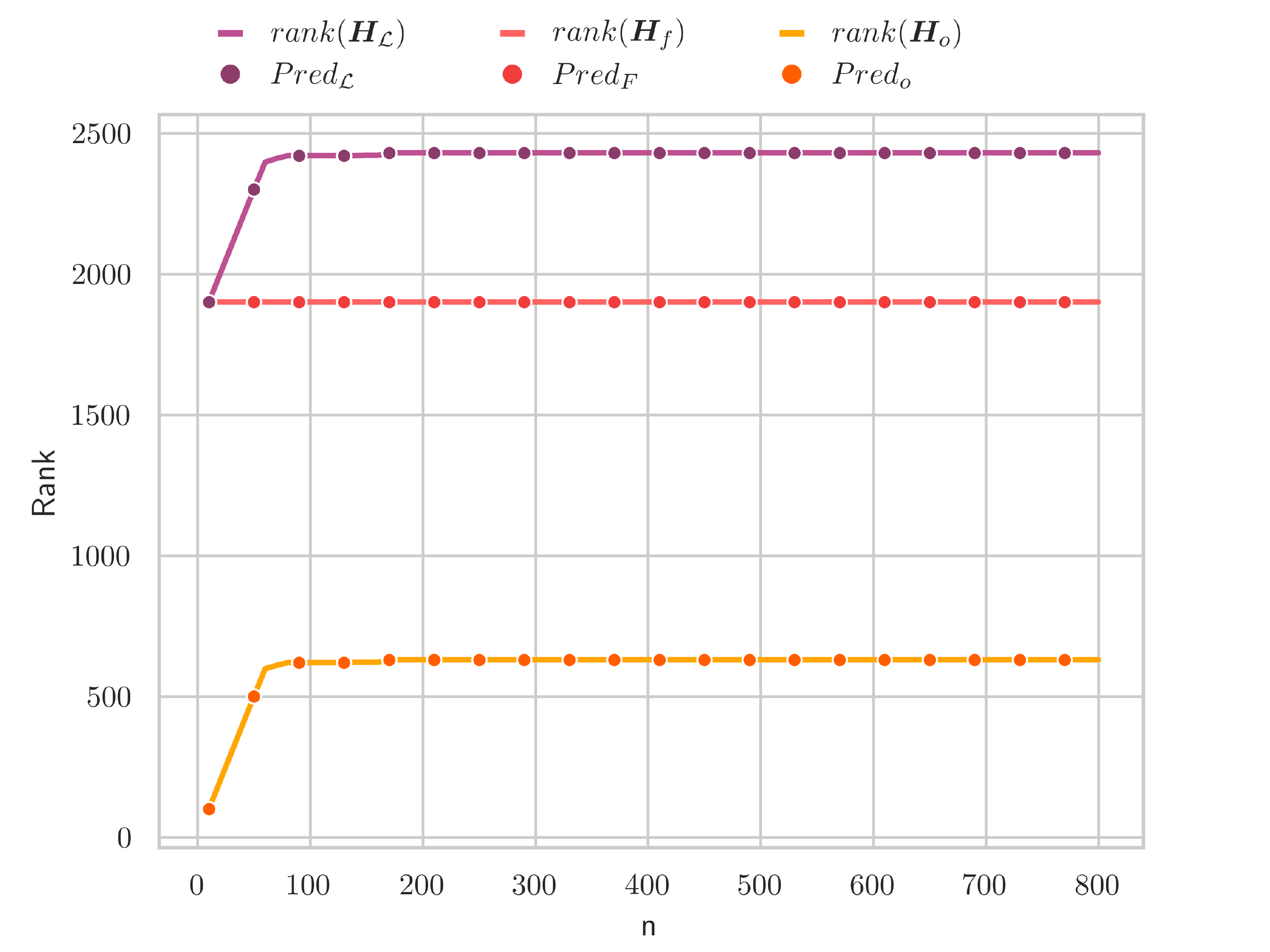}
    \caption{Rank vs sample size $n$}
    \label{fig:fashion-a1_app_cosh}
    \end{subfigure}
    \begin{subfigure}[t]{0.3\textwidth}
    \includegraphics[width=\textwidth]{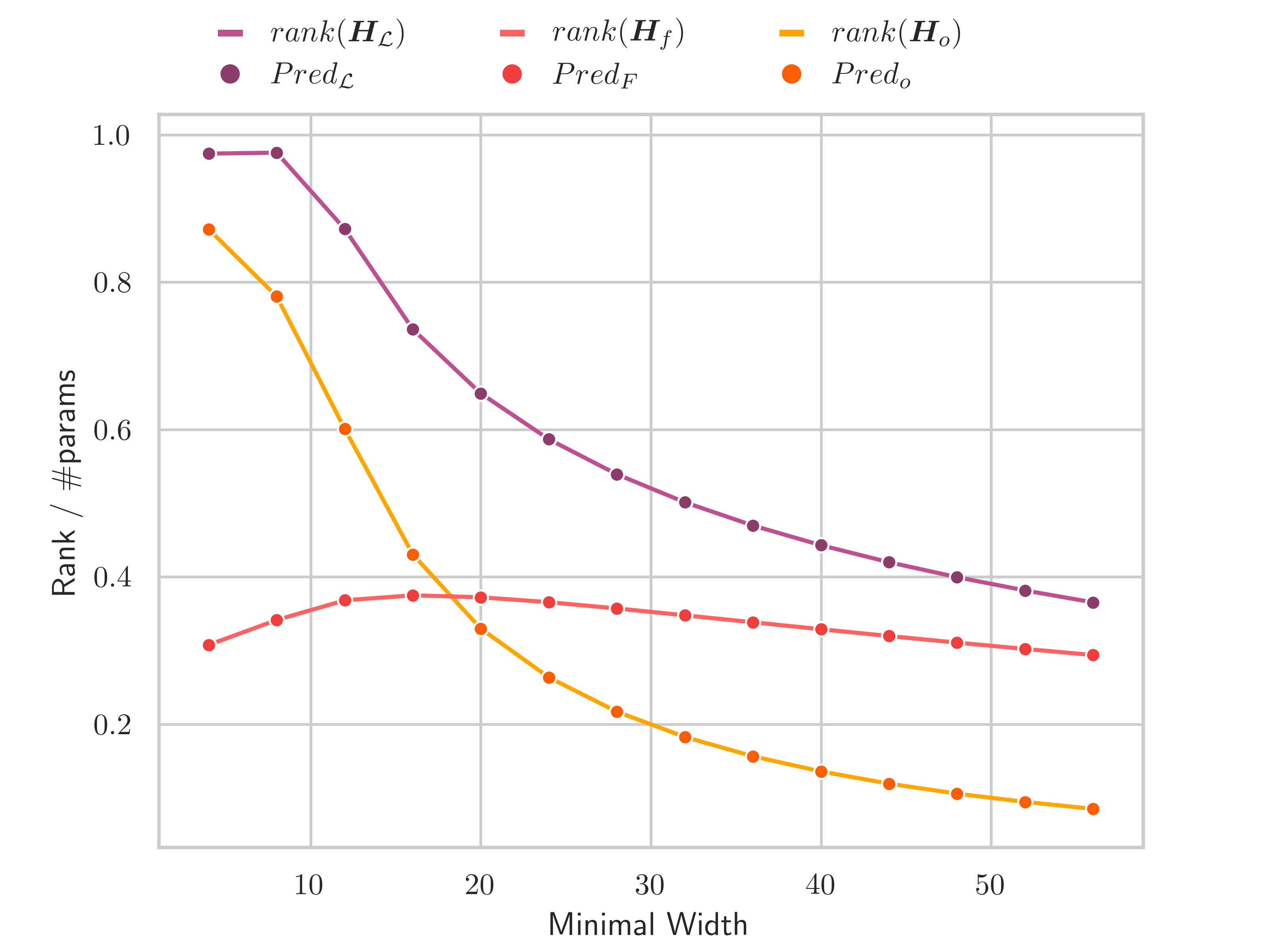}
    \caption{$\effparams$ vs minimal width $M_{*}$}
    \label{fig:fashion-a2_app_cosh}
    \end{subfigure}
    \begin{subfigure}[t]{0.3\textwidth}
    \includegraphics[width=\textwidth]{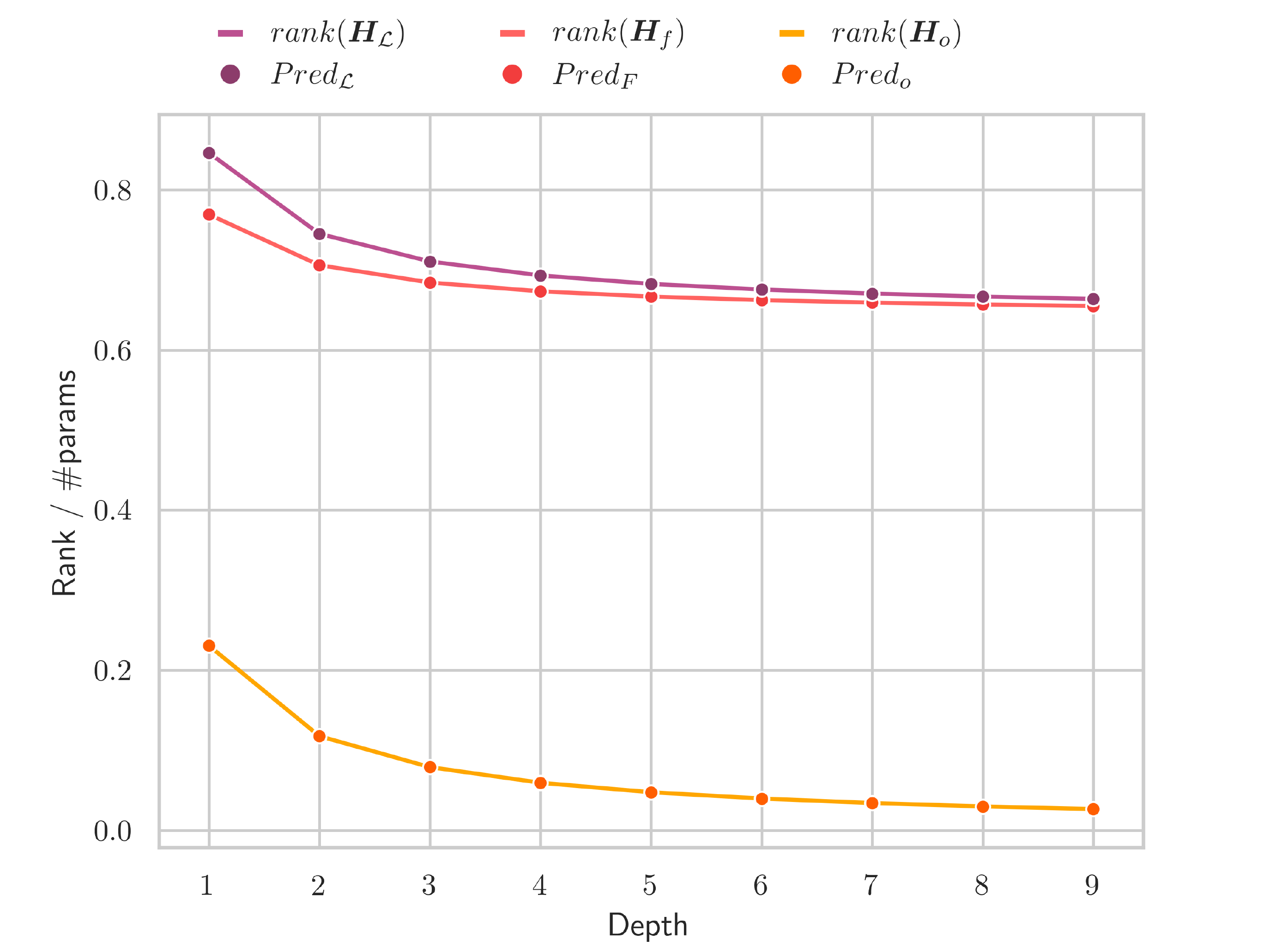}
    \caption{$\effparams$ vs depth $L$}
    \label{fig:fashion-a3_app_cosh}
    \end{subfigure}
    \caption{\small{Behaviour of rank and rank/\#params on Fashion\textsc{MNIST} using cosh loss, with hidden layers: $50,20,20,20$ (Fig.~\ref{fig:fashion-a1_app_cosh}), $M_{*}, M_{*}$ (Fig.~\ref{fig:fashion-a2_app_cosh}) and $L$ layers of width $M=25$ (Fig.~\ref{fig:fashion-a3_app_cosh}). The lines indicate the true value and circles denote our formula predictions. }}
    \label{fashion_cosh}
\end{figure}
\FloatBarrier
\begin{figure}[!h]
    \centering
    \begin{subfigure}[t]{0.3\textwidth}
    \includegraphics[width=\textwidth]{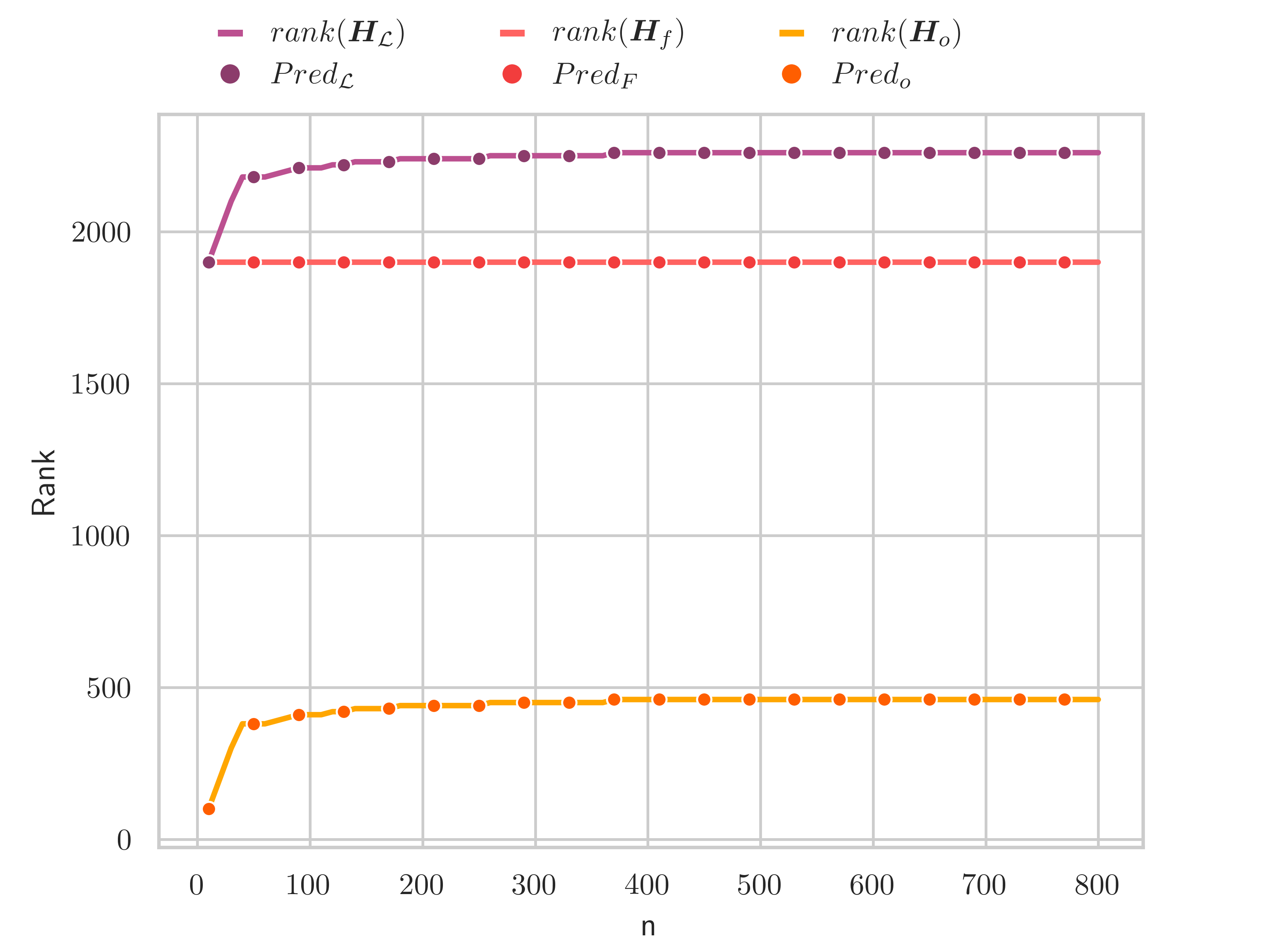}
    \caption{Rank vs sample size $n$}
    \label{fig:mnist-a1_app_cosh}
    \end{subfigure}
    \begin{subfigure}[t]{0.3\textwidth}
    \includegraphics[width=\textwidth]{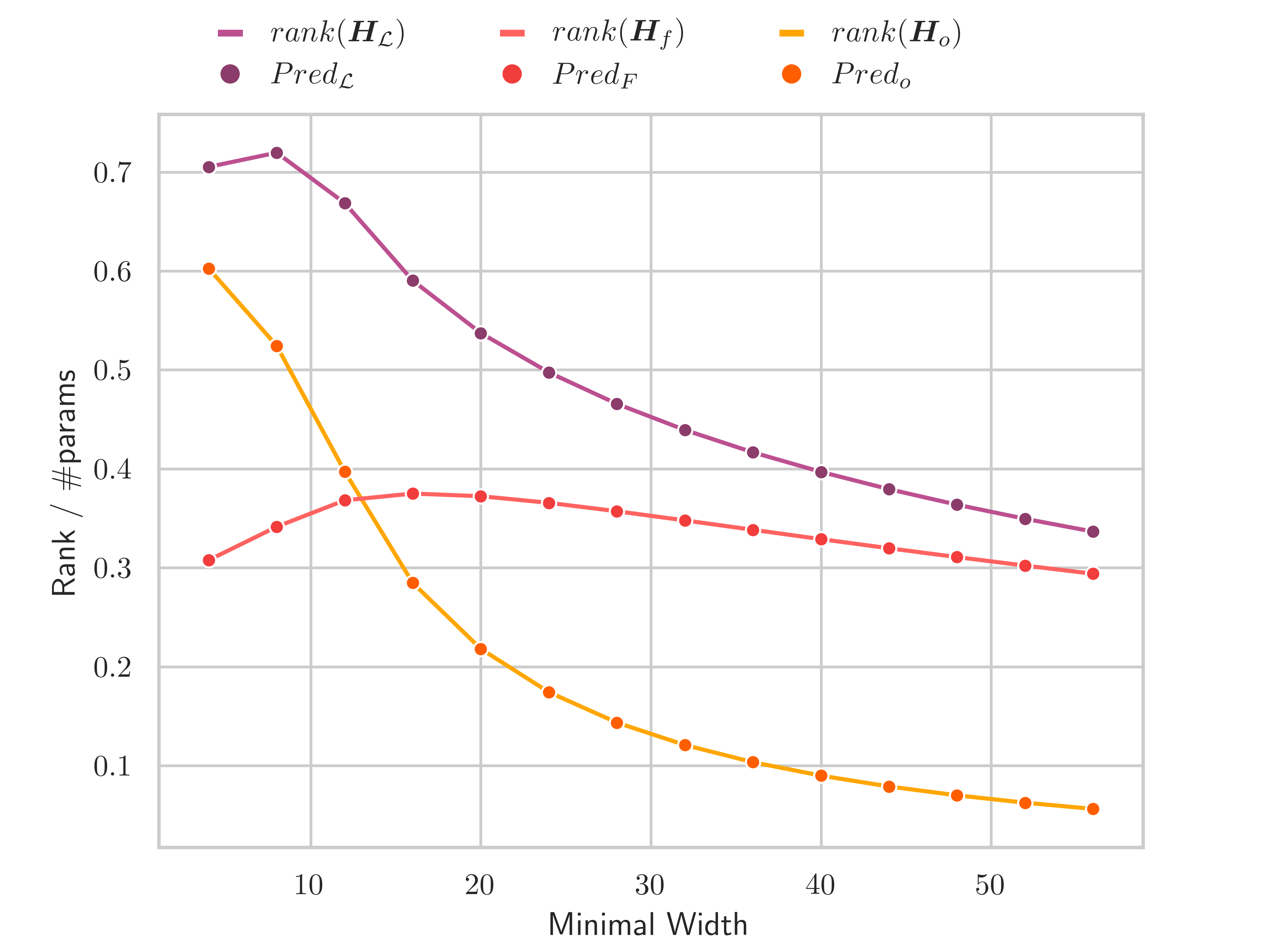}
    \caption{$\effparams$ vs minimal width $M_{*}$}
    \label{fig:mnist-a2_app_cosh}
    \end{subfigure}
    \begin{subfigure}[t]{0.3\textwidth}
    \includegraphics[width=\textwidth]{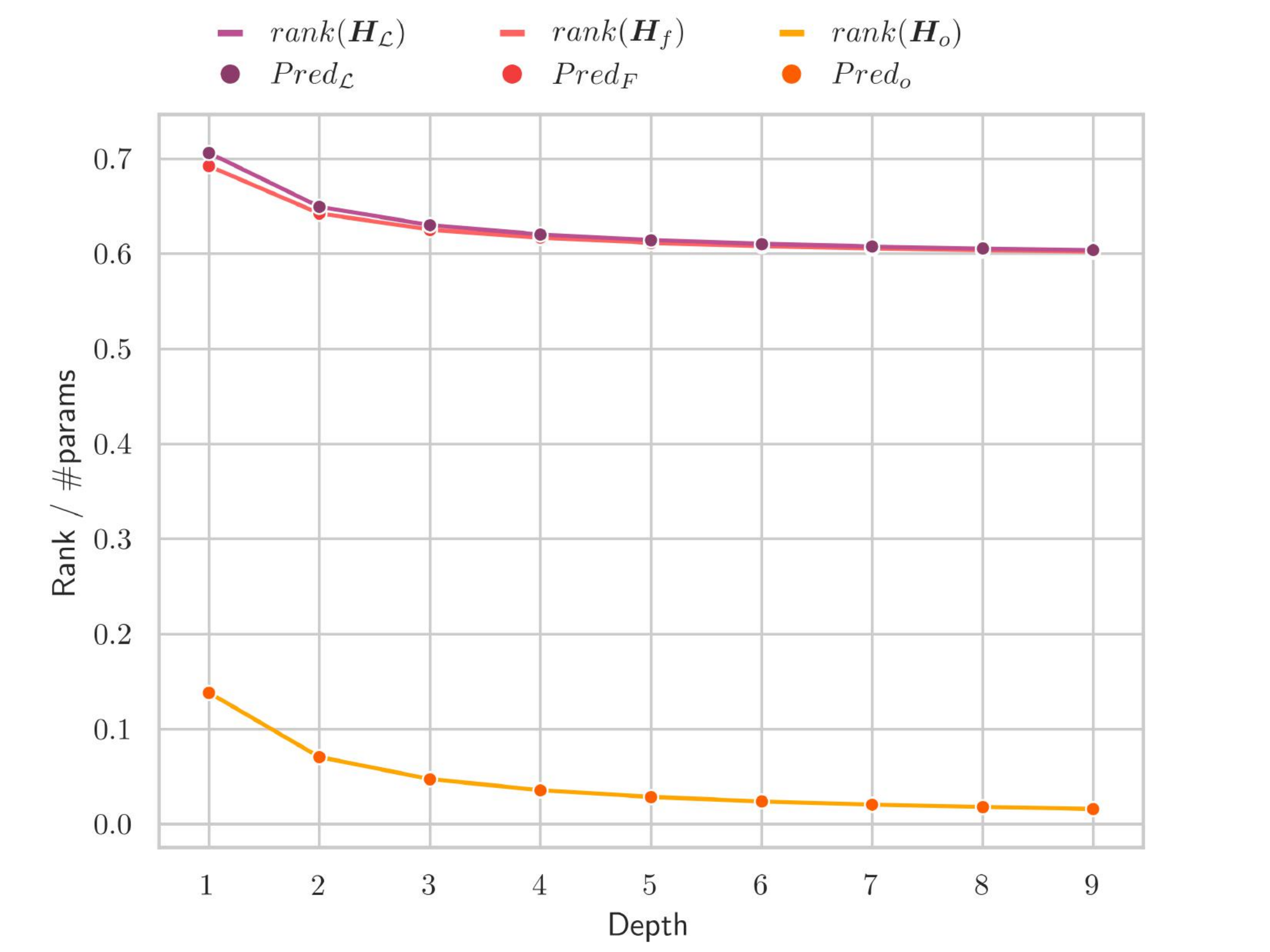}
    \caption{$\effparams$ vs depth $L$}
    \label{fig:mnist-a3_app_cosh}
    \end{subfigure}
    \caption{\small{Behaviour of rank and rank/\#params on \textsc{MNIST} using cosh loss, with hidden layers: $50,20,20,20$ (Fig.~\ref{fig:mnist-a1_app_cosh}), $M_{*}, M_{*}$ (Fig.~\ref{fig:mnist-a2_app_cosh}) and $L$ layers of width $M=25$ (Fig.~\ref{fig:mnist-a3_app_cosh}). The lines indicate the true value and circles denote our formula predictions. }}
     \label{mnist_cosh}
\end{figure}
\FloatBarrier
\subsubsection{Different Initializations}
Here we want to assess whether different initialization schemes can affect our rank predictions. Although our theoretical results suggest that our results hold for any initialization scheme that guarantees full-rank weight matrices, we perform an empirical study on \textsc{CIFAR10} to check this. All the preceding experiments have used Gaussian initialization. Here we also check for uniform initialization, $W^l_{ij} \sim \mathcal{U}(-1, 1)$, and for orthogonal initialization. We display the results for uniform initialization in  Figure \ref{uniform} while Figure \ref{orthogonal} shows the results for orthogonal initialization, for varying sample size, width and depth. As expected from our theoretical insights, we again observe exact matches with our predictions.

\begin{figure}[!h]
    \centering
    \begin{subfigure}[t]{0.3\textwidth}
    \includegraphics[width=\textwidth]{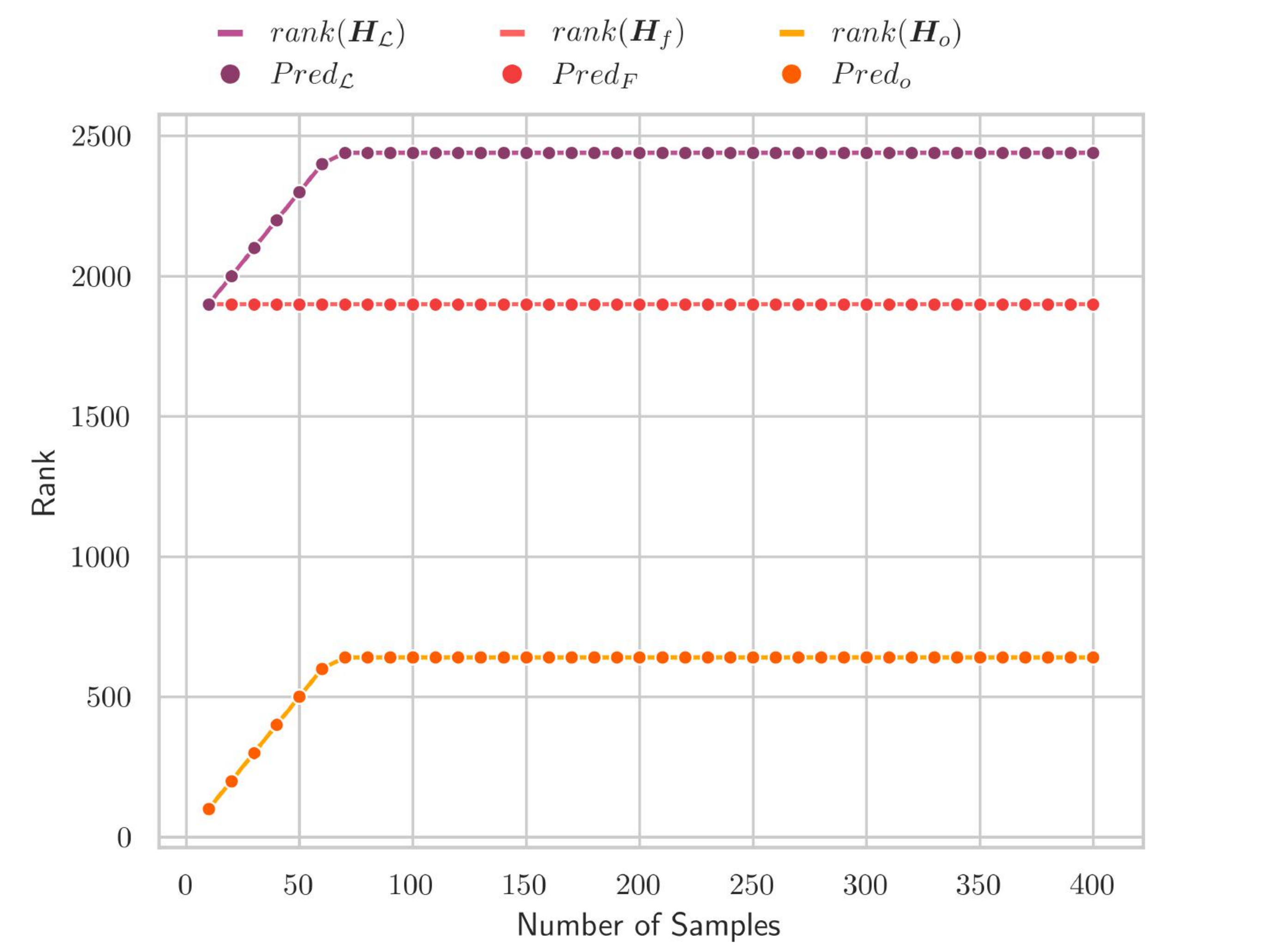}
    \caption{Rank vs sample size $n$}
    \label{fig:mnist-a1_app_uni}
    \end{subfigure}
    \begin{subfigure}[t]{0.3\textwidth}
    \includegraphics[width=\textwidth]{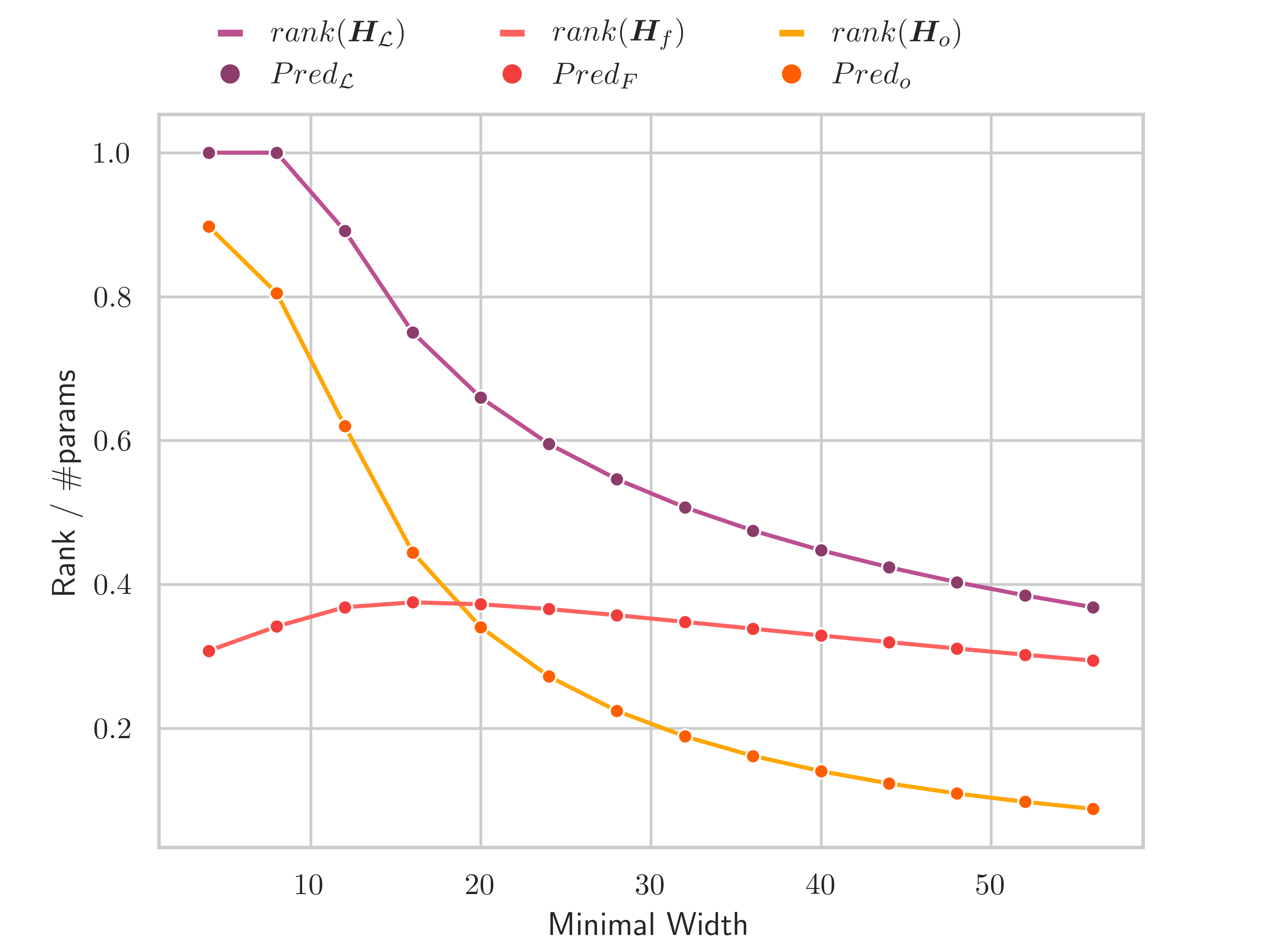}
    \caption{$\effparams$ vs minimal width $M_{*}$}
    \label{fig:mnist-a2_app_uni}
    \end{subfigure}
    \begin{subfigure}[t]{0.3\textwidth}
    \includegraphics[width=\textwidth]{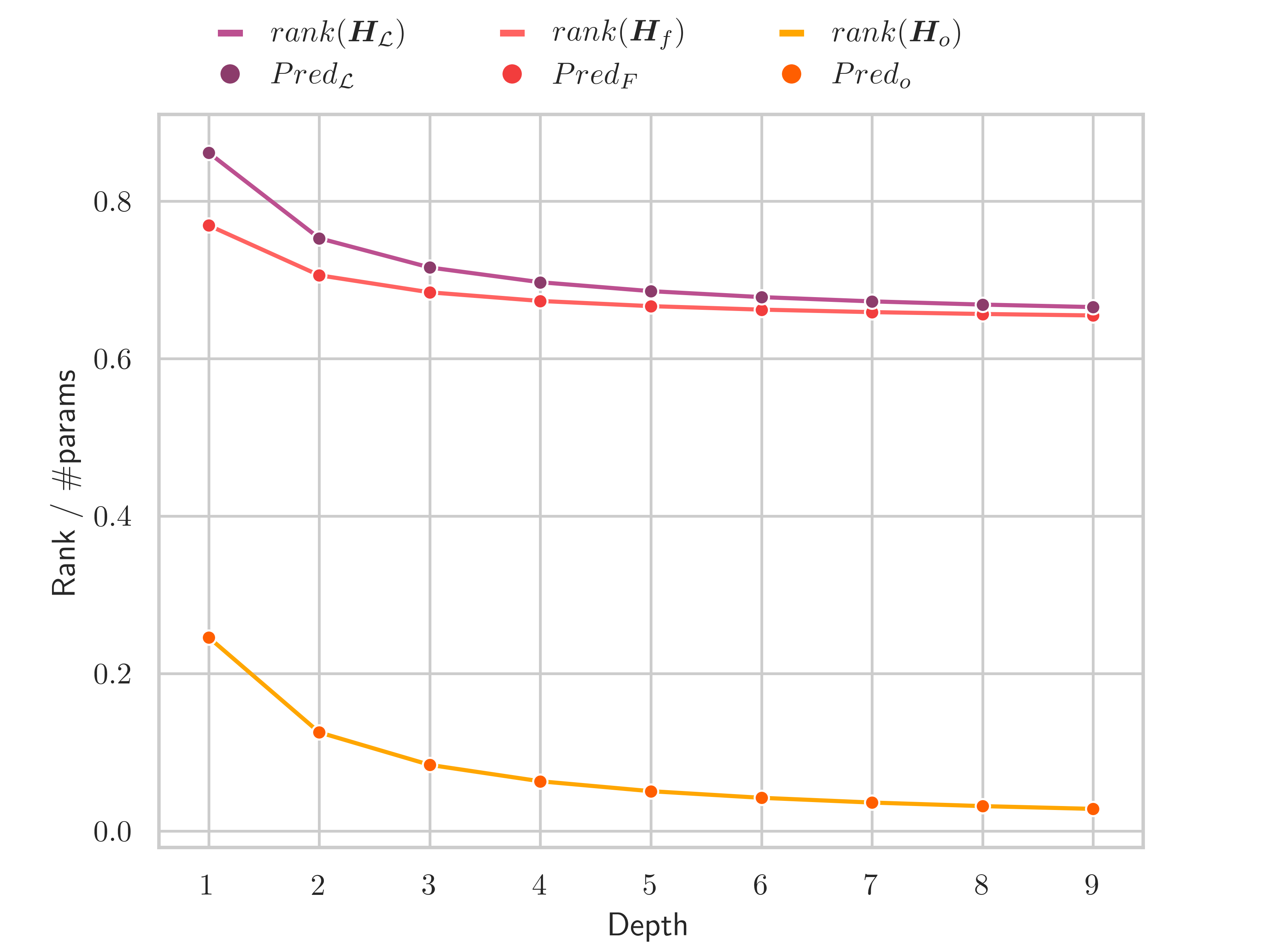}
    \caption{$\effparams$ vs depth $L$}
    \label{fig:mnist-a3_app_uni}
    \end{subfigure}
    \caption{\small{Behaviour of rank and rank/\#params on \textsc{CIFAR10} using MSE and \textbf{uniform initialization} with hidden layers: $50,20,20,20$ (Fig.~\ref{fig:mnist-a1_app_uni}), $M_{*}, M_{*}$ (Fig.~\ref{fig:mnist-a2_app_uni}) and $L$ layers of width $M=25$ (Fig.~\ref{fig:mnist-a3_app_uni}). The lines indicate the true value and circles denote our formula predictions. }}
     \label{uniform}
\end{figure}
\FloatBarrier
\begin{figure}[!h]
    \centering
    \begin{subfigure}[t]{0.3\textwidth}
    \includegraphics[width=\textwidth]{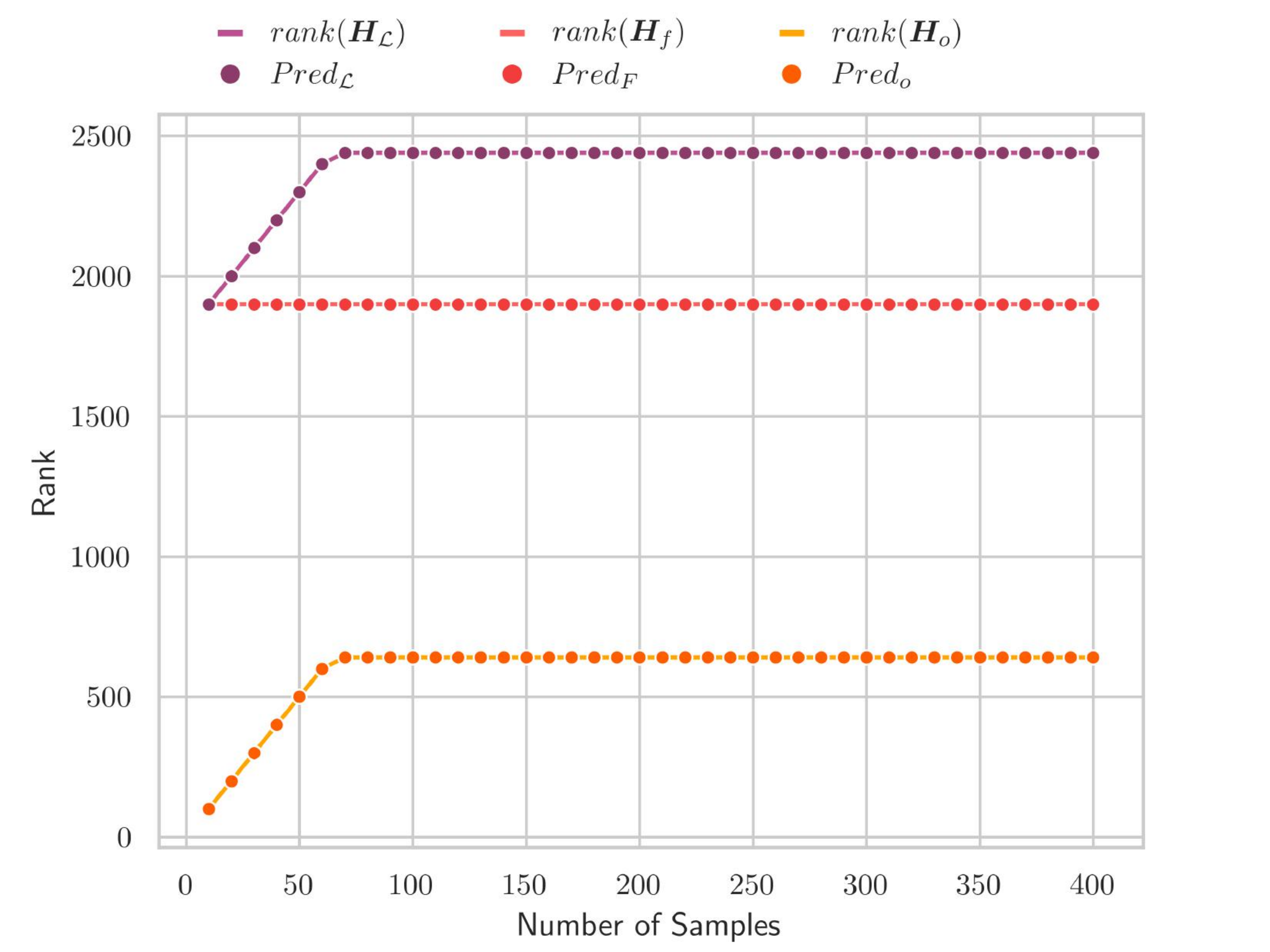}
    \caption{Rank vs sample size $n$}
    \label{fig:mnist-a1_app_orth}
    \end{subfigure}
    \begin{subfigure}[t]{0.3\textwidth}
    \includegraphics[width=\textwidth]{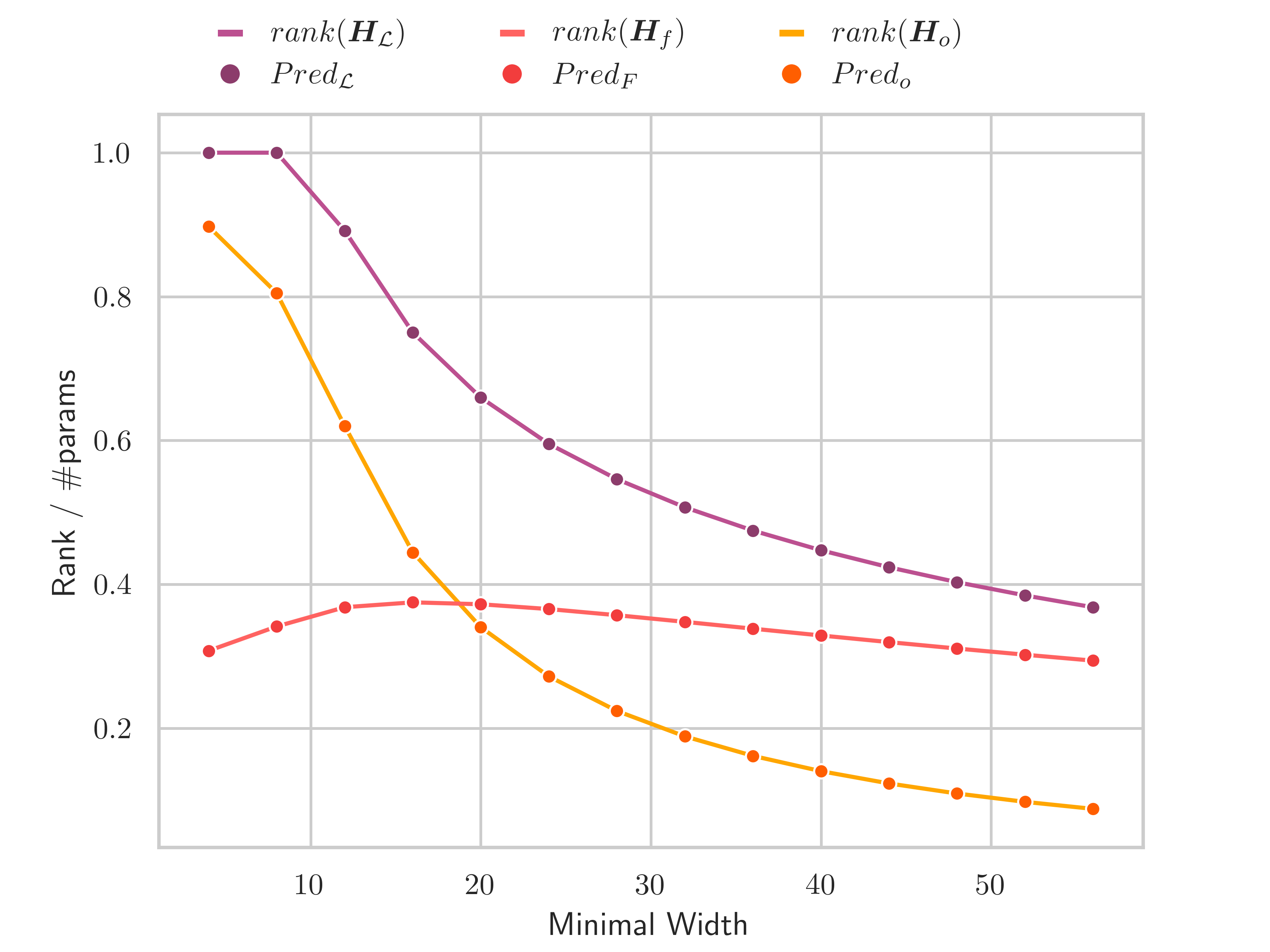}
    \caption{$\effparams$ vs minimal width $M_{*}$}
    \label{fig:mnist-a2_app_orth}
    \end{subfigure}
    \begin{subfigure}[t]{0.3\textwidth}
    \includegraphics[width=\textwidth]{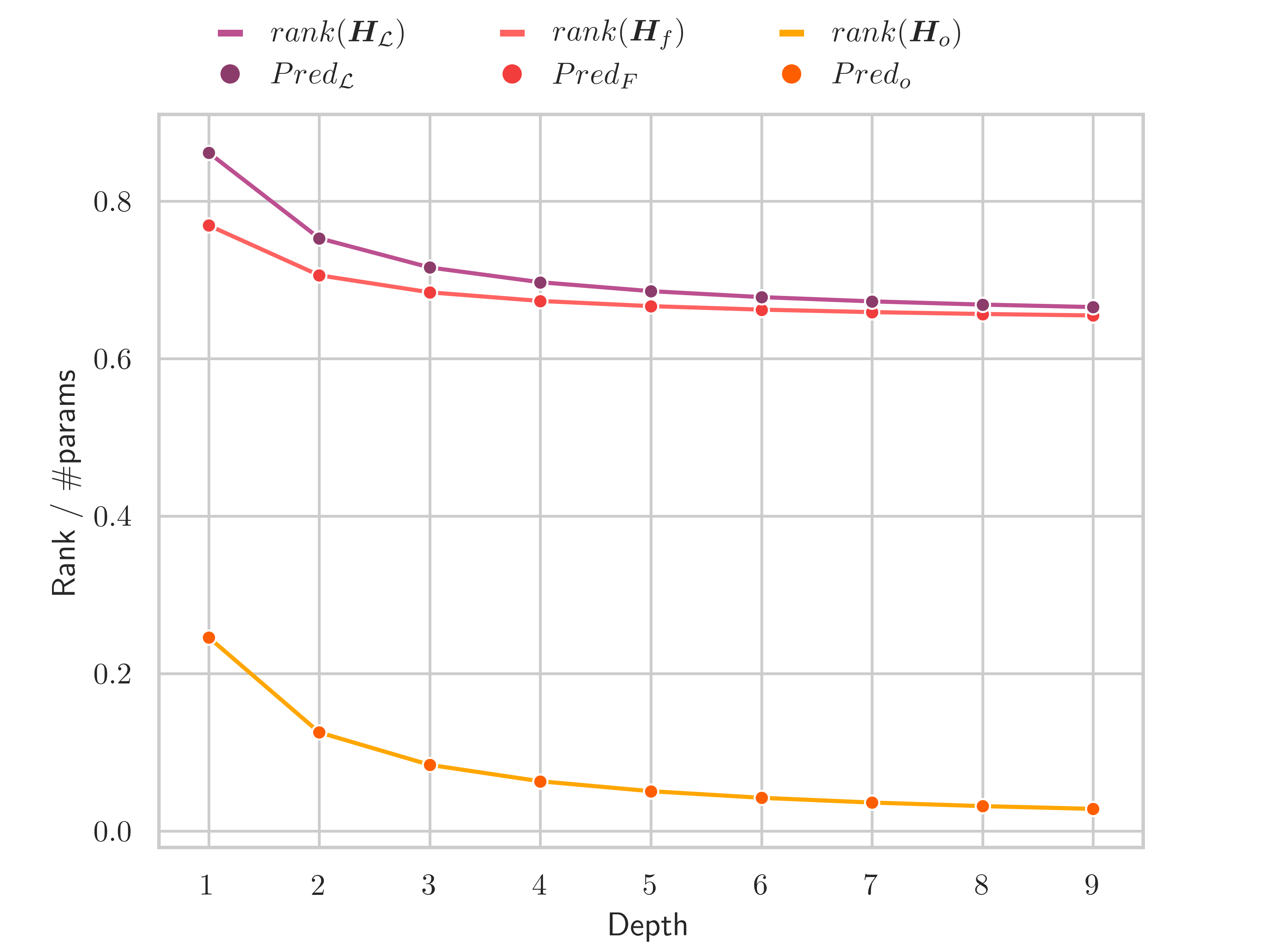}
    \caption{$\effparams$ vs depth $L$}
    \label{fig:mnist-a3_app_orth}
    \end{subfigure}
    \caption{\small{Behaviour of rank and rank/\#params on \textsc{CIFAR10} using MSE and \textbf{orthogonal initialization} with hidden layers: $50,20,20,20$ (Fig.~\ref{fig:mnist-a1_app_uni}), $M_{*}, M_{*}$ (Fig.~\ref{fig:mnist-a2_app_uni}) and $L$ layers of width $M=25$ (Fig.~\ref{fig:mnist-a3_app_uni}). The lines indicate the true value and circles denote our formula predictions. }}
     \label{orthogonal}
\end{figure}
\FloatBarrier

\subsubsection{Rank Formulas With Bias}\label{sec:app-bias-exp}
Here we verify the rank formulas derived for the case with bias in \ref{formulas_bias}. We use MSE loss and \textsc{CIFAR10} as the dataset. Again we see that the the rank predictions from our formulas exactly match the rank observed in practice.

\begin{figure}[!h]
    \centering
    \begin{subfigure}[t]{0.3\textwidth}
    \includegraphics[width=\textwidth]{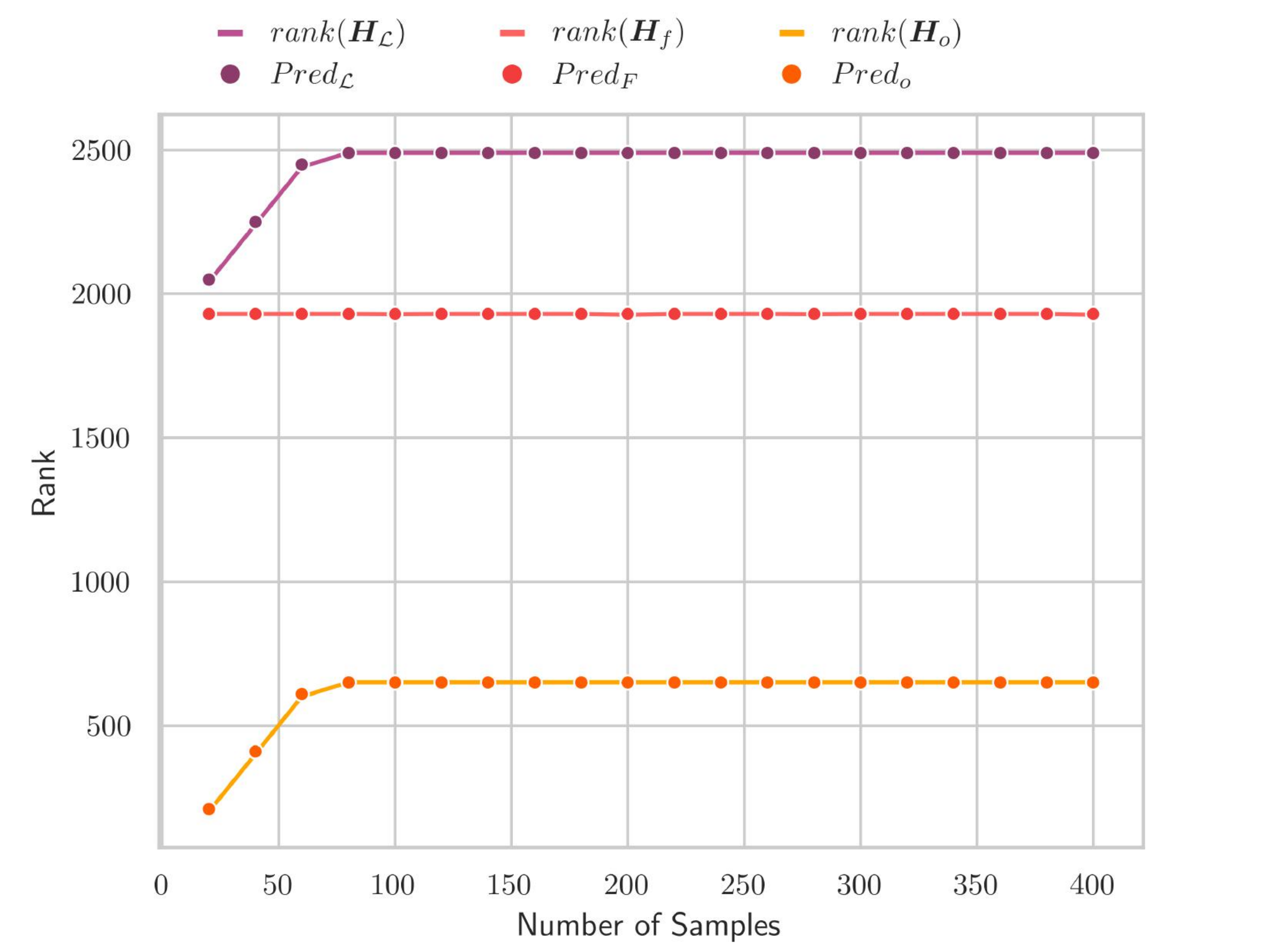}
    \caption{Rank vs sample size $n$}
    \label{fig:mnist-a1_app_bias}
    \end{subfigure}
    \begin{subfigure}[t]{0.3\textwidth}
    \includegraphics[width=\textwidth]{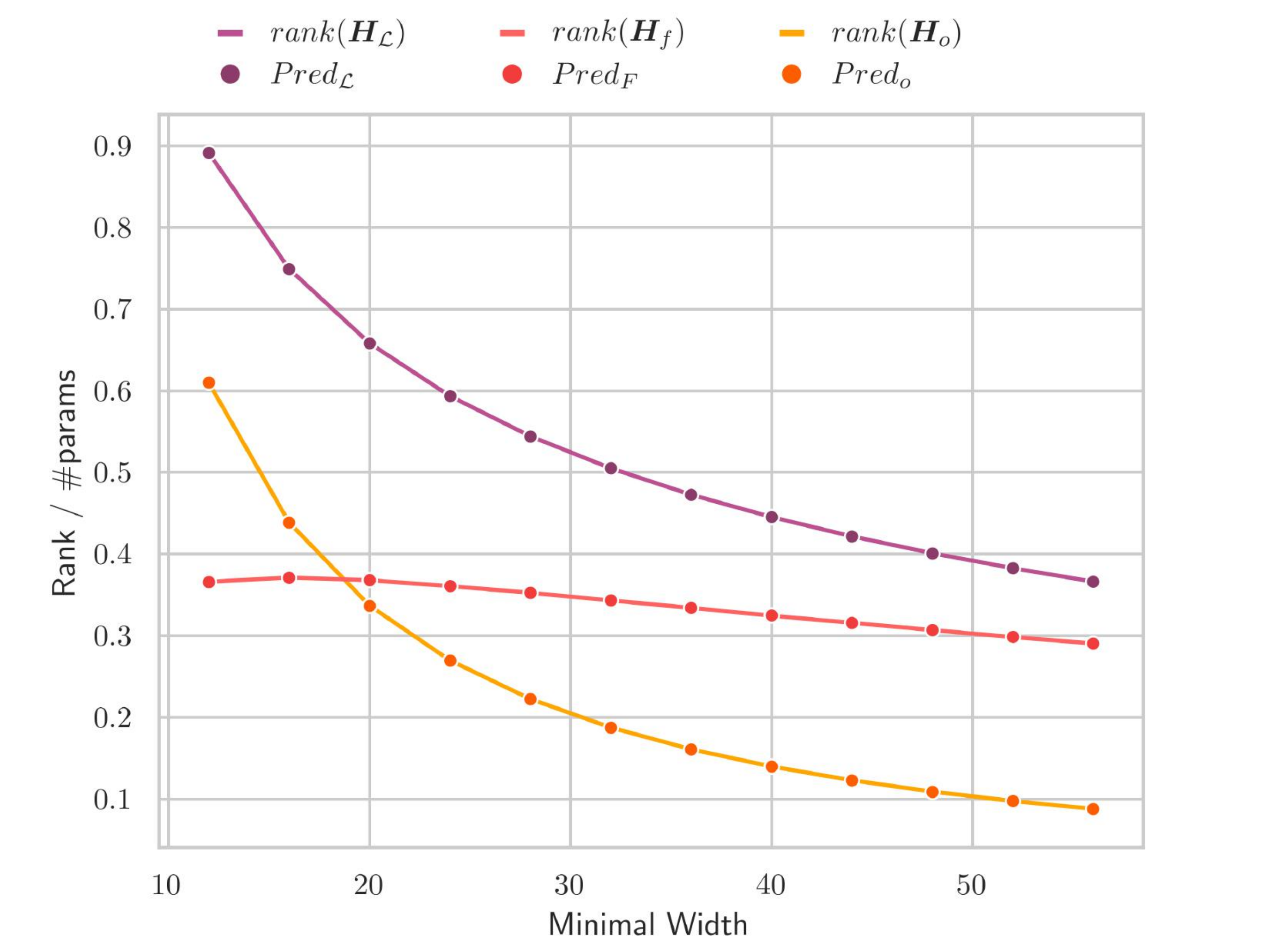}
    \caption{$\effparams$ vs minimal width $M_{*}$}
    \label{fig:mnist-a2_app_bias}
    \end{subfigure}
    \begin{subfigure}[t]{0.3\textwidth}
    \includegraphics[width=\textwidth]{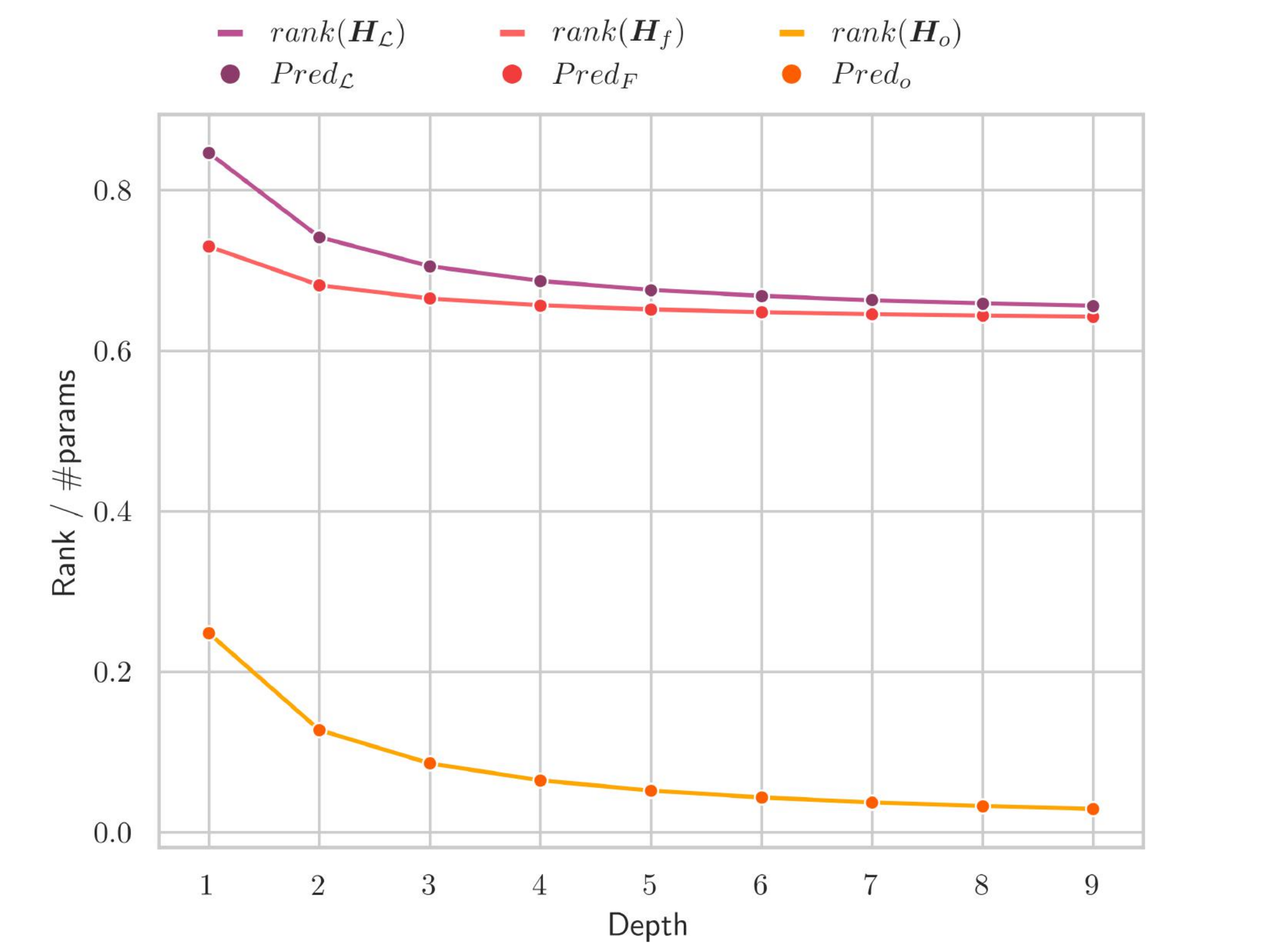}
    \caption{$\effparams$ vs depth $L$}
    \label{fig:mnist-a3_app_bias}
    \end{subfigure}
    \caption{\small{Behaviour of rank and rank/\#params on \textsc{CIFAR10} using MSE and \textbf{bias} with hidden layers: $50,20,20,20$ (Fig.~\ref{fig:mnist-a1_app_bias}), $M_{*}, M_{*}$ (Fig.~\ref{fig:mnist-a2_app_bias}) and $L$ layers of width $M=25$ (Fig.~\ref{fig:mnist-a3_app_bias}). The lines indicate the true value and circles denote our formula predictions. }}
     \label{bias}
\end{figure}
\FloatBarrier

\FloatBarrier

\subsection{Reconstruction Error Plots for More Non-Linearities and Losses}
\label{recon_error_app}
To further illustrate how our predictions extend to different non-linearities, we repeat the reconstruction error experiment for different types of non-linearities and loss functions. In particular we study the activation functions
$$\sigma(x) = \operatorname{ReLU}(x) \hspace{5mm} \sigma(x) = \tanh(x) \hspace{5mm} \sigma(x) = \operatorname{ELU}(x) = \begin{cases}x \hspace{8mm} x > 0 \\
e^{x}-1 \hspace{3mm} x\leq 0\end{cases}$$
As before, we group the experiment by the loss function employed and vary the non-linearity used in each architecture. We test on this down-scaled \textsc{MNIST} with input dimensionality of $d=64$ for the smaller architectures and $d=49$ for the bigger ones. The number of samples $N=200$ across all settings. 
\subsubsection{Mean Squared Error}
Here we expand on the Figure \ref{fig:relu_reconstruction}, using the same setting as presented in the main text but we consider more non-linearities. We display the results for ReLU in Figure \ref{fig:relu_recon}, for ELU in Figure \ref{fig:elu_recon} and for tanh in Figure \ref{fig:tanh_reconstruction}. We also consider slightly bigger architectures in Figures \ref{fig:relu_recon_bigger}, \ref{fig:elu_recon_bigger} and \ref{fig:tanh_reconstruction_bigger}, using the same ordering for the non-linearities as before. Again we observe that our rank prediction offers an excellent cut-off, allowing to preserve almost the entire structure of the Hessian, even for the bigger architectures. This is again strong evidence that our prediction captures the relevant eigenvalues but becomes distorted by smaller, irrelevant ones, inflating the exact rank.

\begin{figure}[!htb]
    \centering
    \includegraphics[width=0.3\textwidth]{pdfpics/relu_loss_hessian_recon.pdf}%
    \includegraphics[width=0.3\textwidth]{pdf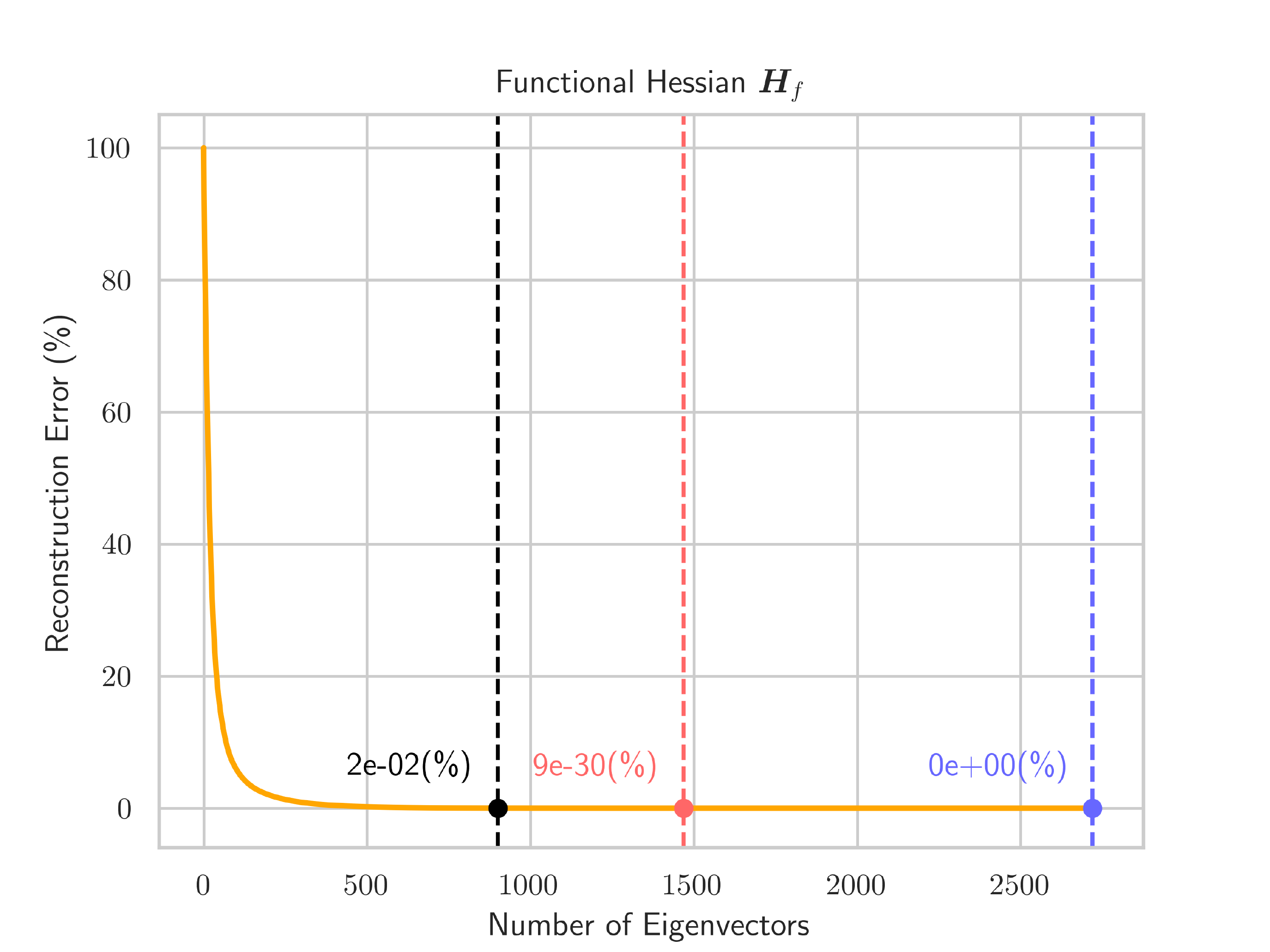}%
    \includegraphics[width=0.3\textwidth]{pdf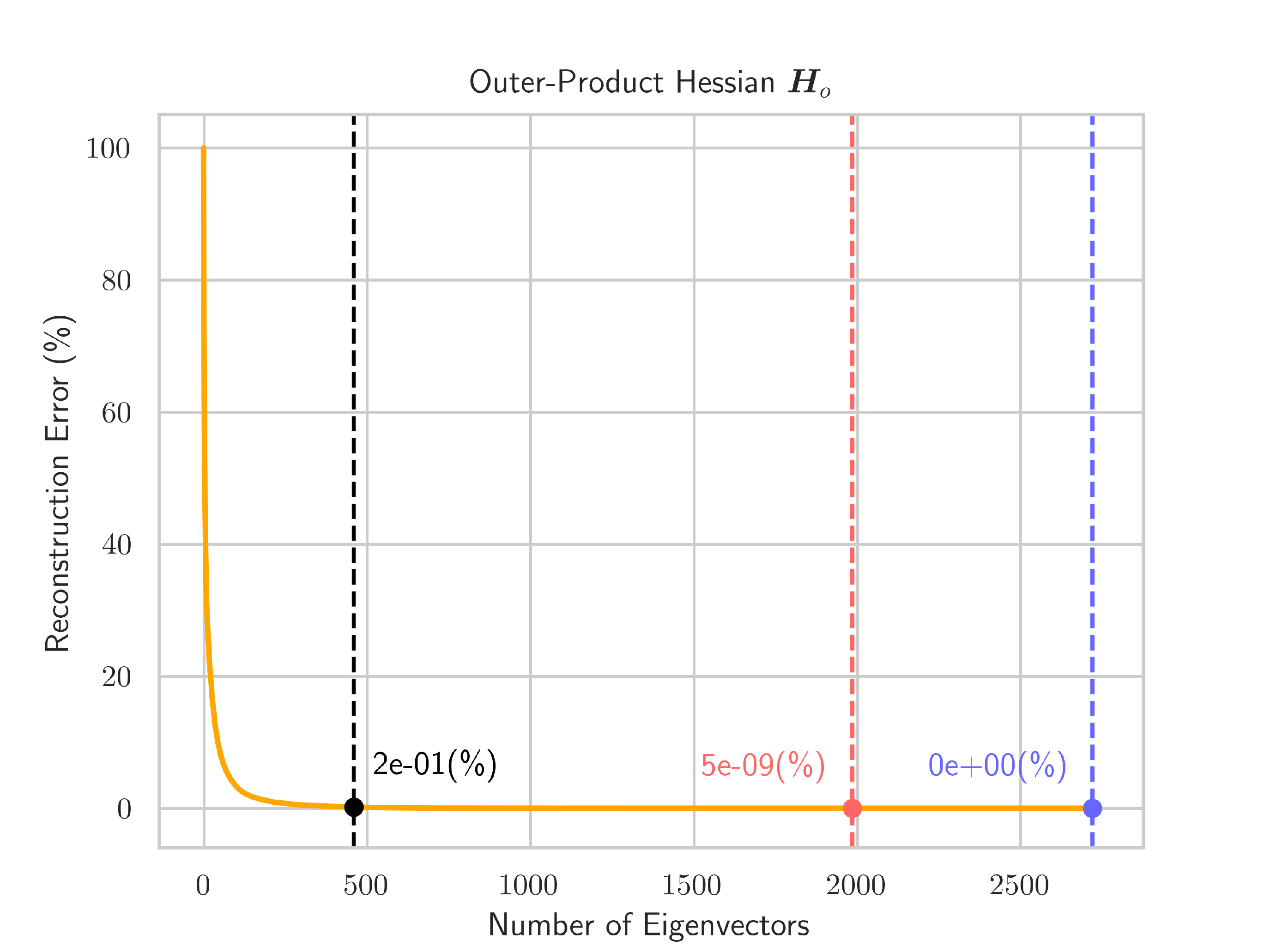}
    \caption{Hessian reconstruction error for \textbf{ReLU} under \textbf{MSE} as the rank of the approximation is increased. The x-axis represents the number of top eigenvectors that form the low-rank approximation. The $y$-axis displays the \textit{reconstruction error in percentage} (100 $\%$ for zero eigenvectors used). The dashed vertical lines indicate the cut-off at various values of the rank: \textbf{first line} at the prediction based on the linear model, \textcolor{red}{\textbf{second line}} at the empirical measurement of rank, and \textcolor{blue}{\textbf{third line}} based on upper bounds from \citep{jacot2019asymptotic}, which become too coarse to be of any use (actually even greater than the \# of parameters but not marked there for visualization purposes). The hidden layer sizes are $30, 20$.}
    \label{fig:relu_recon}
\end{figure}
\FloatBarrier
\begin{figure}[!htb]
    \centering
    \includegraphics[width=0.3\textwidth]{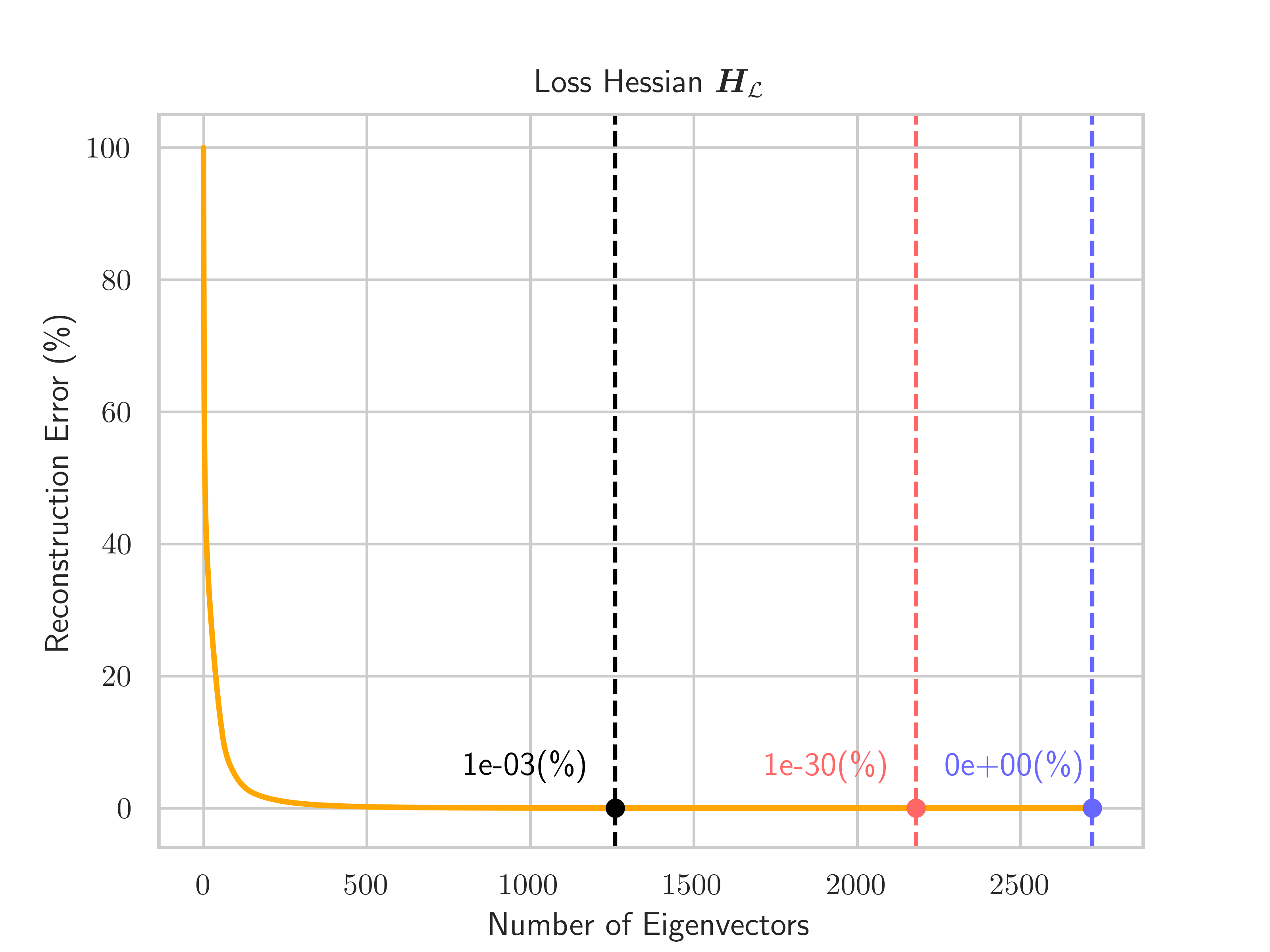}%
    \includegraphics[width=0.3\textwidth]{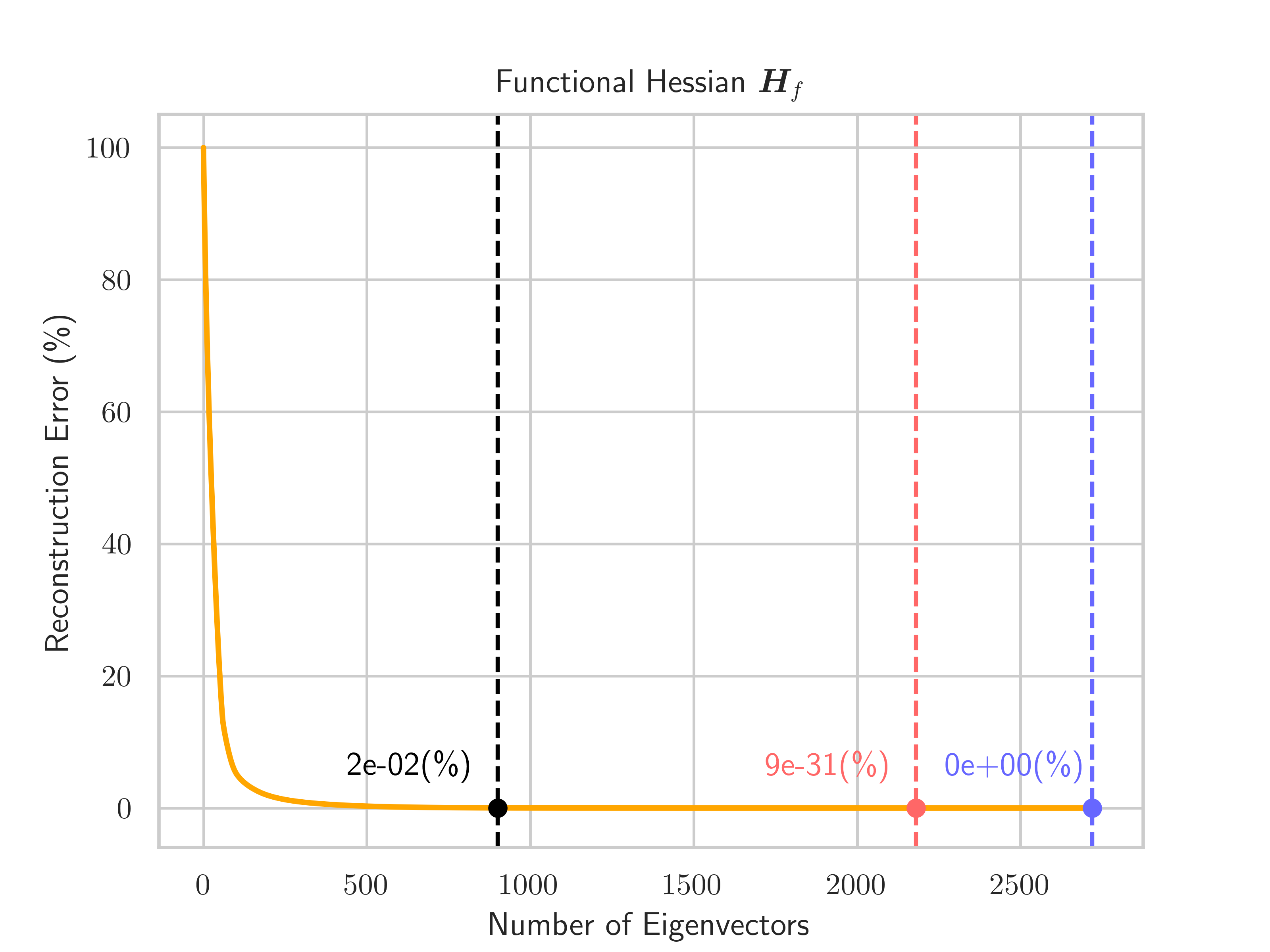}%
    \includegraphics[width=0.3\textwidth]{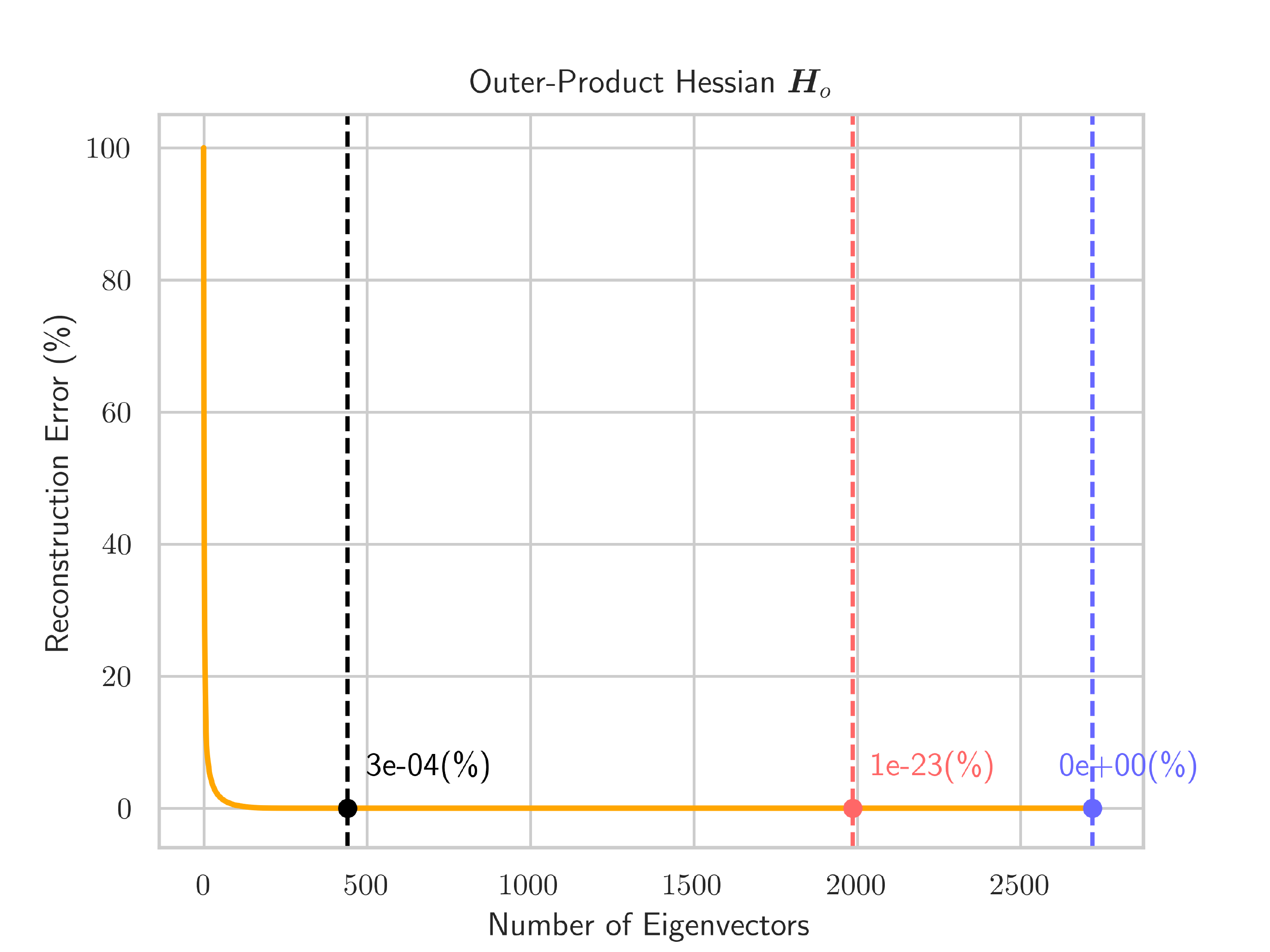}
    \caption{Hessian reconstruction error for \textbf{ELU} under \textbf{MSE} as the rank of the approximation is increased. The x-axis represents the number of top eigenvectors that form the low-rank approximation. The $y$-axis displays the \textit{reconstruction error in percentage} (100 $\%$ for zero eigenvectors used). The dashed vertical lines indicate the cut-off at various values of the rank: \textbf{first line} at the prediction based on the linear model, \textcolor{red}{\textbf{second line}} at the empirical measurement of rank, and \textcolor{blue}{\textbf{third line}} based on upper bounds from \citep{jacot2019asymptotic}, which become too coarse to be of any use (actually even greater than the \# of parameters but not marked there for visualization purposes). The hidden layer sizes are $30, 20$.}
    \label{fig:elu_recon}
\end{figure}
\FloatBarrier

\begin{figure}[!htb]
    \centering
    \includegraphics[width=0.33\textwidth]{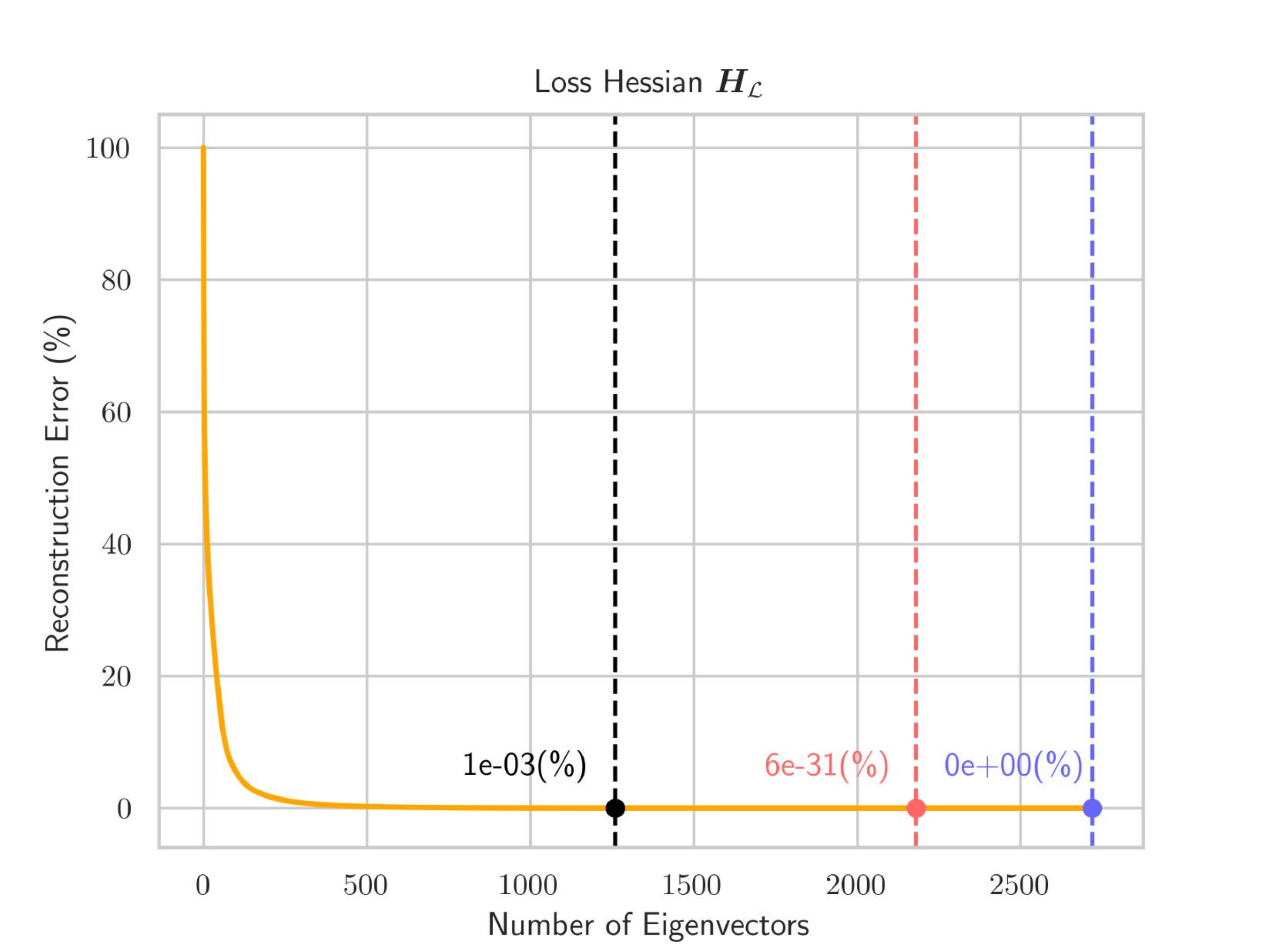}%
    \includegraphics[width=0.33\textwidth]{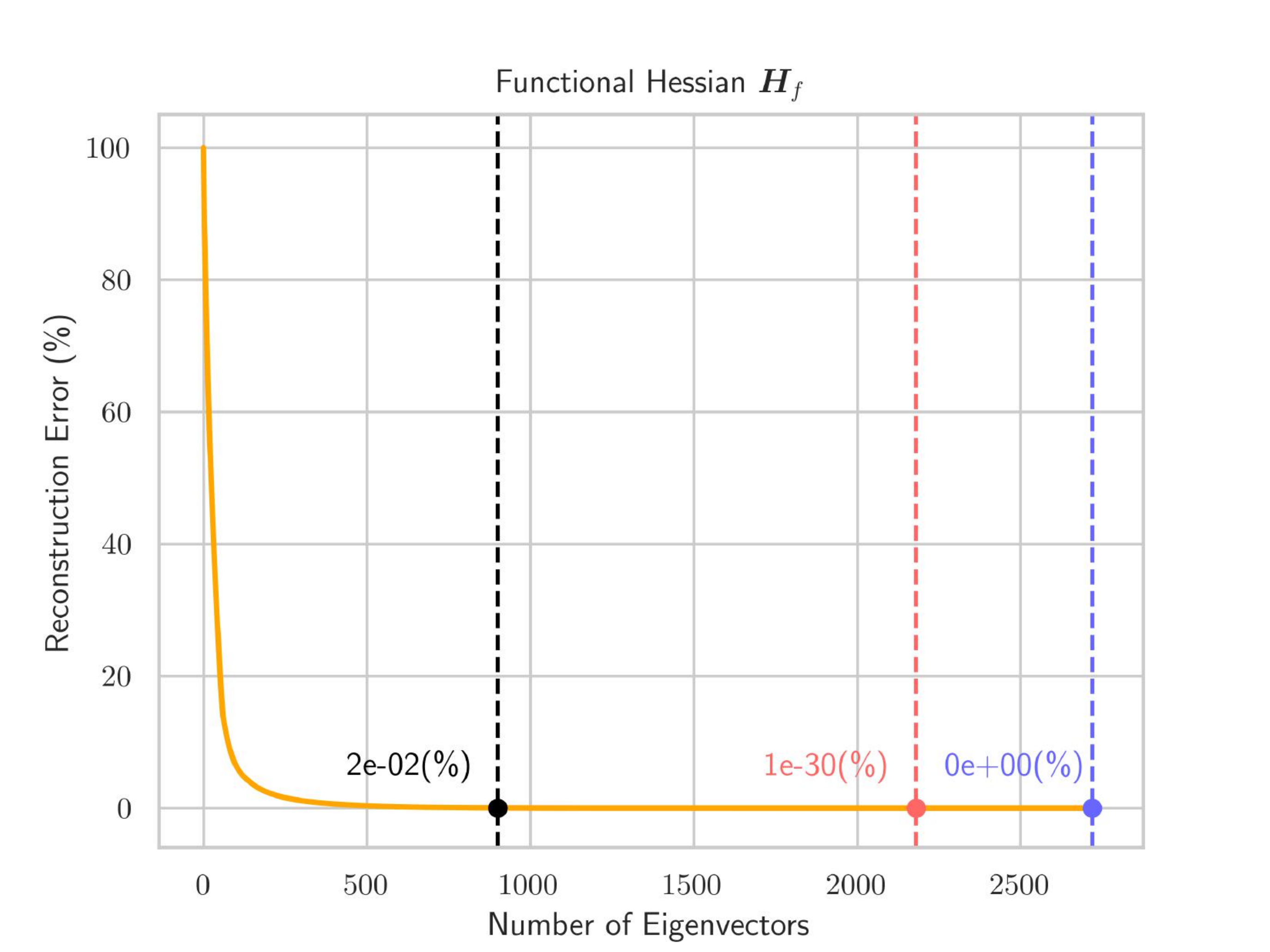}
    \includegraphics[width=0.33\textwidth]{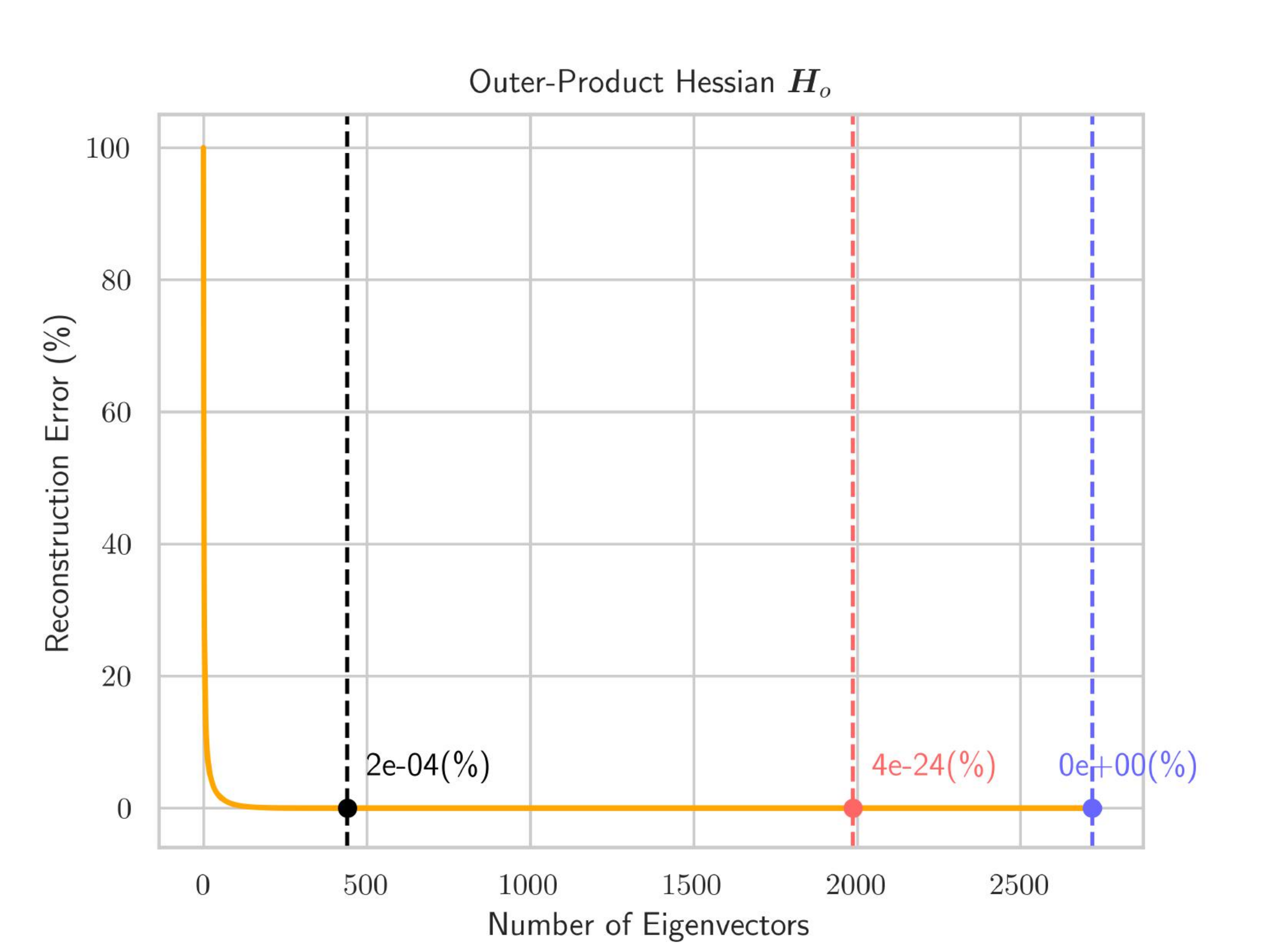}
    \caption{Hessian reconstruction error for \textbf{Tanh} under \textbf{MSE} as the rank of the approximation is increased. The x-axis represents the number of top eigenvectors that form the low-rank approximation. The $y$-axis displays the \textit{reconstruction error in percentage} (100 $\%$ for zero eigenvectors used). The dashed vertical lines indicate the cut-off at various values of the rank: \textbf{first line} at the prediction based on the linear model, \textcolor{red}{\textbf{second line}} at the empirical measurement of rank, and \textcolor{blue}{\textbf{third line}} based on upper bounds from \citep{jacot2019asymptotic}, which become too coarse to be of any use (actually even greater than the \# of parameters but not marked there for visualization purposes). The hidden layer sizes are $30, 20$.}
    \label{fig:tanh_reconstruction}
\end{figure}
\FloatBarrier
\begin{figure}[!htb]
    \centering
    \includegraphics[width=0.3\textwidth]{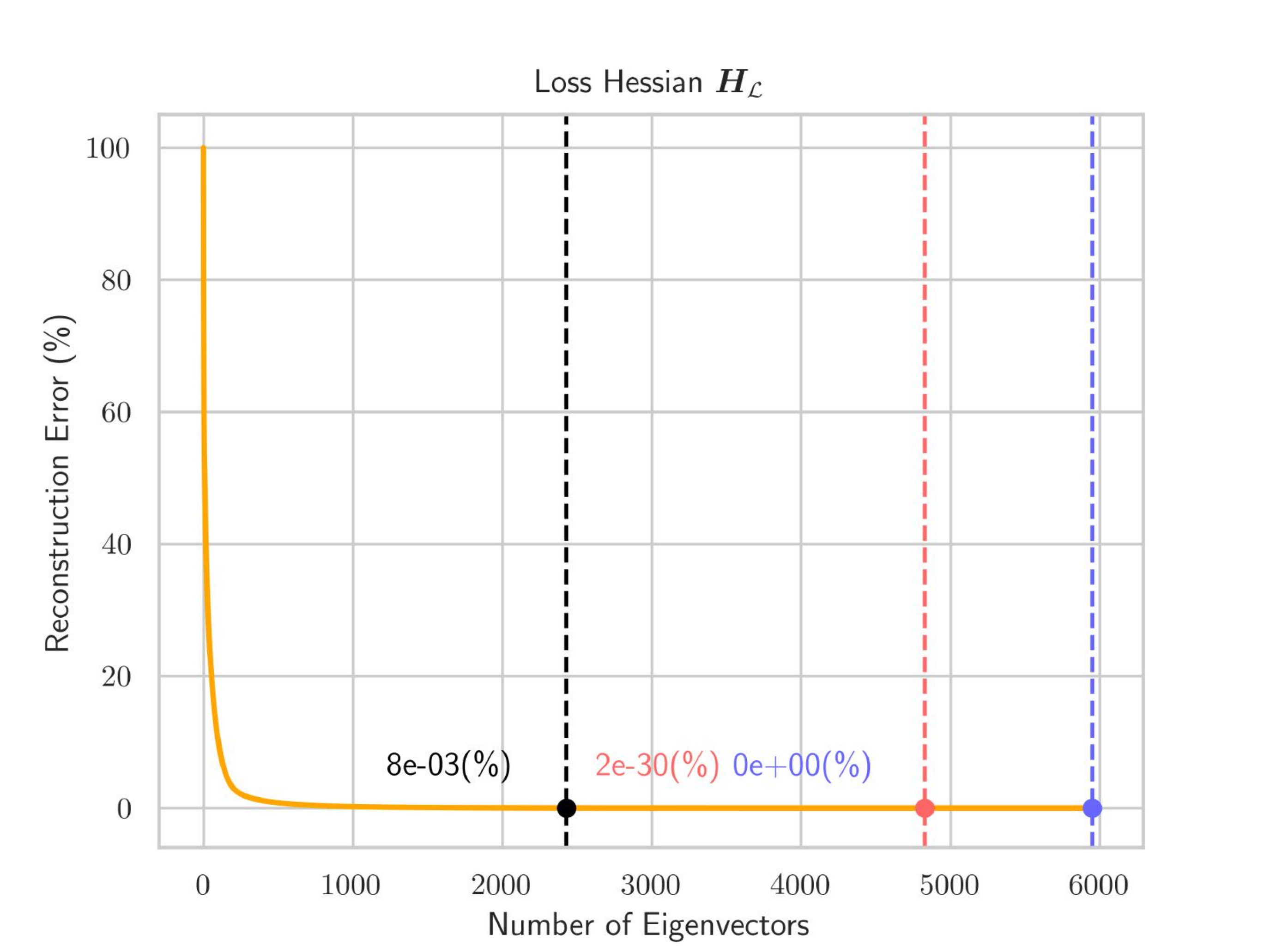}%
    \includegraphics[width=0.3\textwidth]{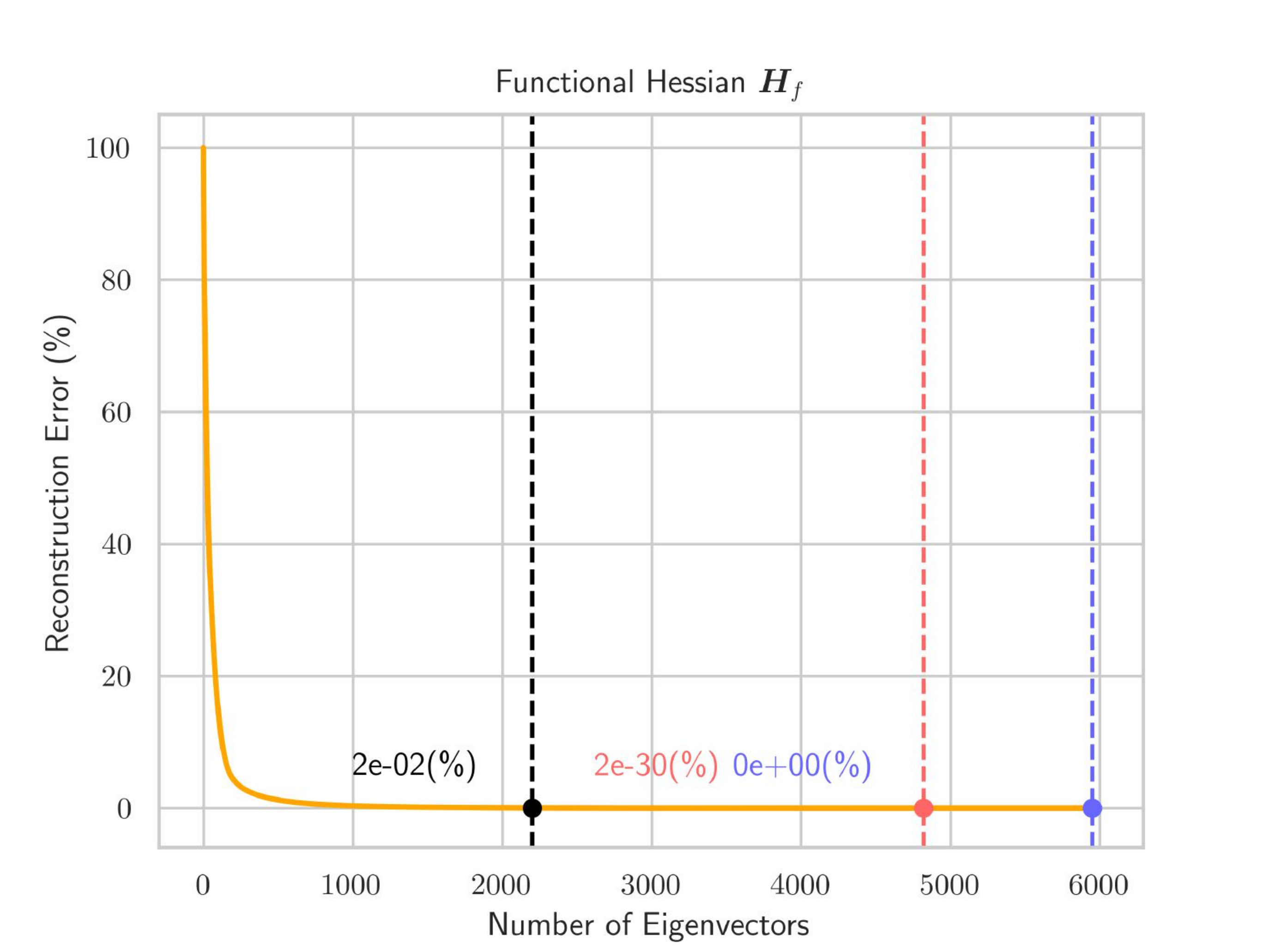}%
    \includegraphics[width=0.3\textwidth]{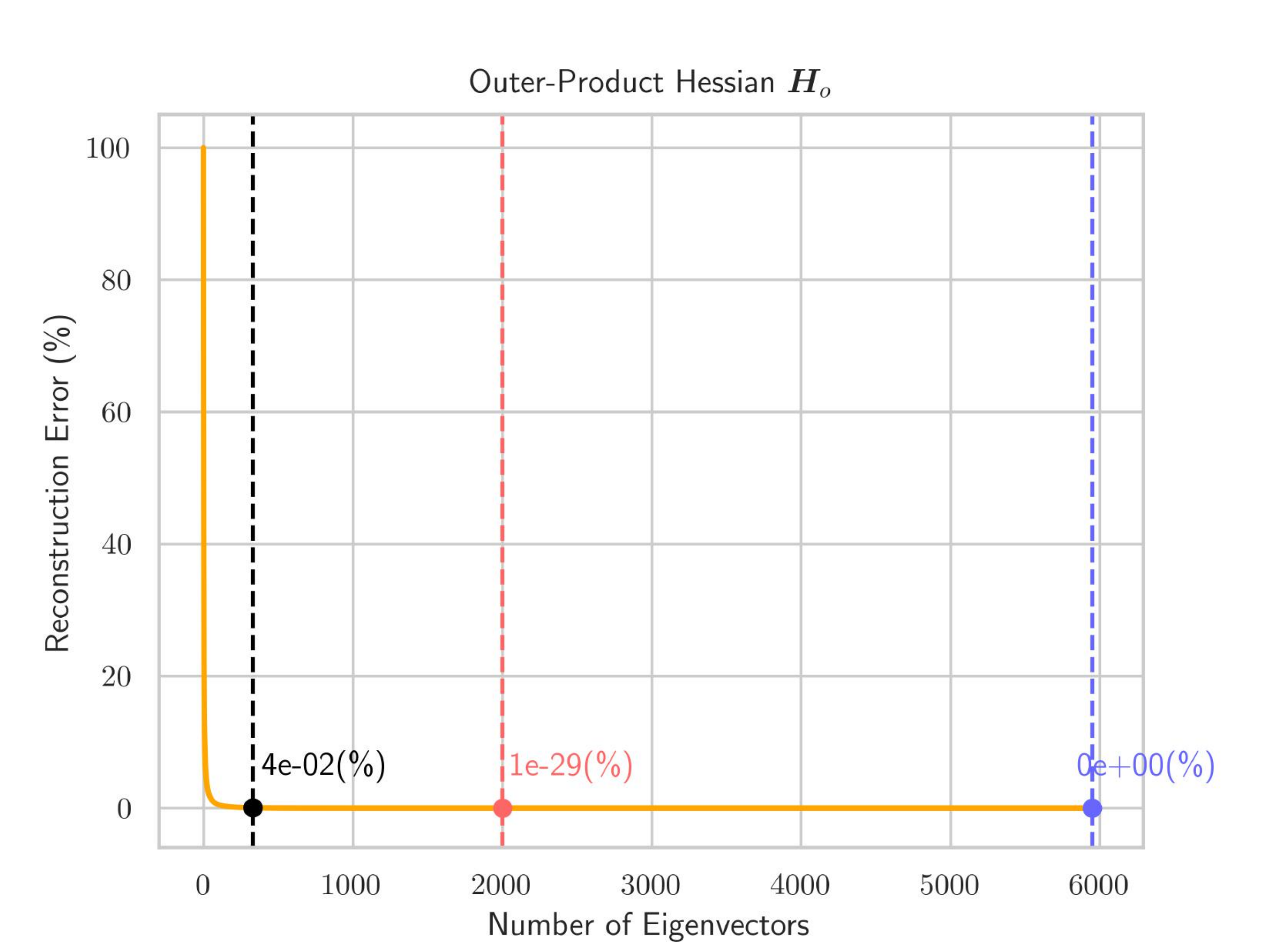}
    \caption{Hessian reconstruction error for \textbf{ReLU} under \textbf{MSE} as the rank of the approximation is increased. The x-axis represents the number of top eigenvectors that form the low-rank approximation. The $y$-axis displays the \textit{reconstruction error in percentage} (100 $\%$ for zero eigenvectors used). The dashed vertical lines indicate the cut-off at various values of the rank: \textbf{first line} at the prediction based on the linear model, \textcolor{red}{\textbf{second line}} at the empirical measurement of rank, and \textcolor{blue}{\textbf{third line}} based on upper bounds from \citep{jacot2019asymptotic}, which become too coarse to be of any use (actually even greater than the \# of parameters but not marked there for visualization purposes). The hidden layer sizes are $50, 40, 30$.}
    \label{fig:relu_recon_bigger}
\end{figure}
\begin{figure}[!htb]
    \centering
    \includegraphics[width=0.3\textwidth]{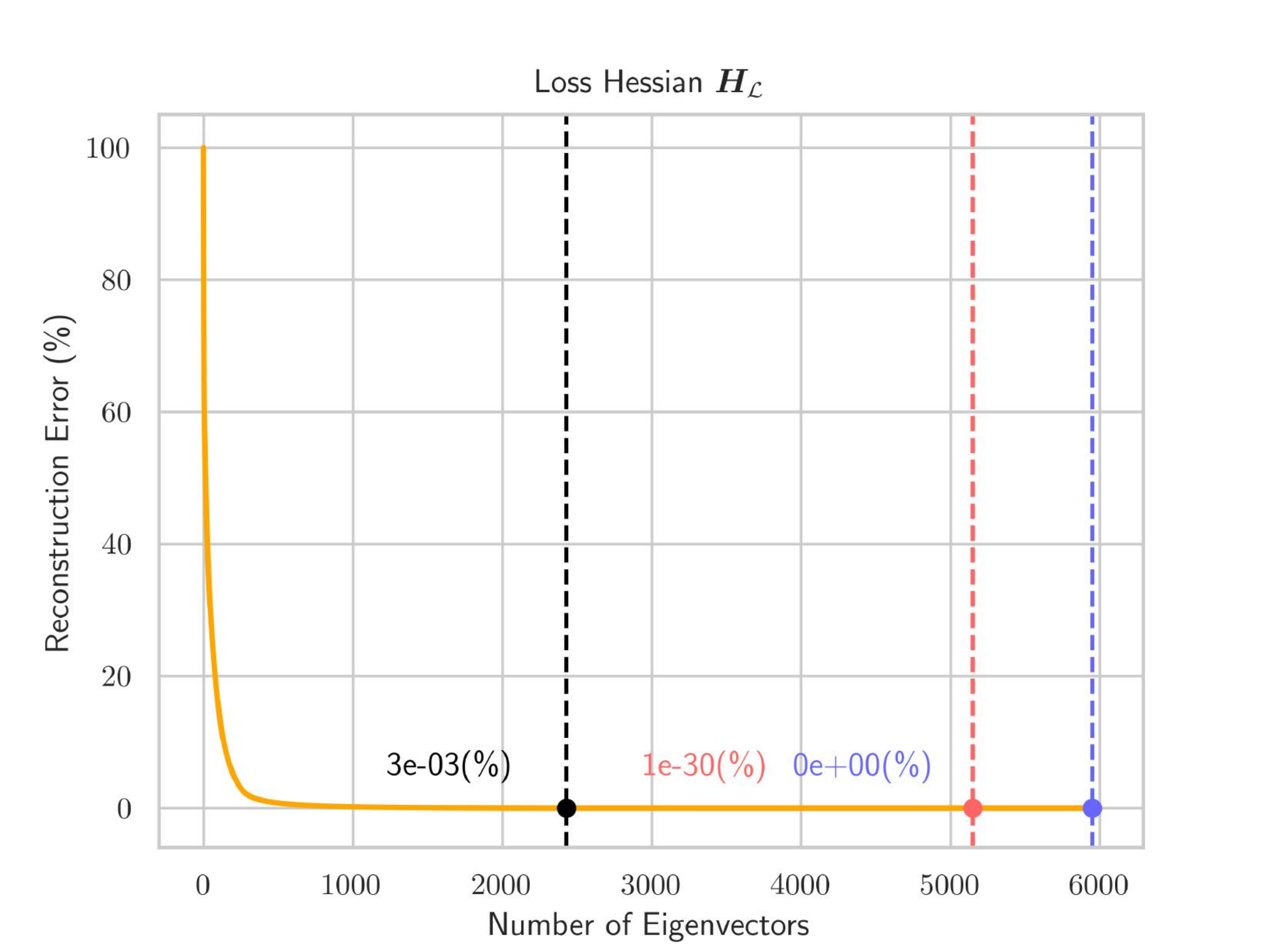}%
    \includegraphics[width=0.3\textwidth]{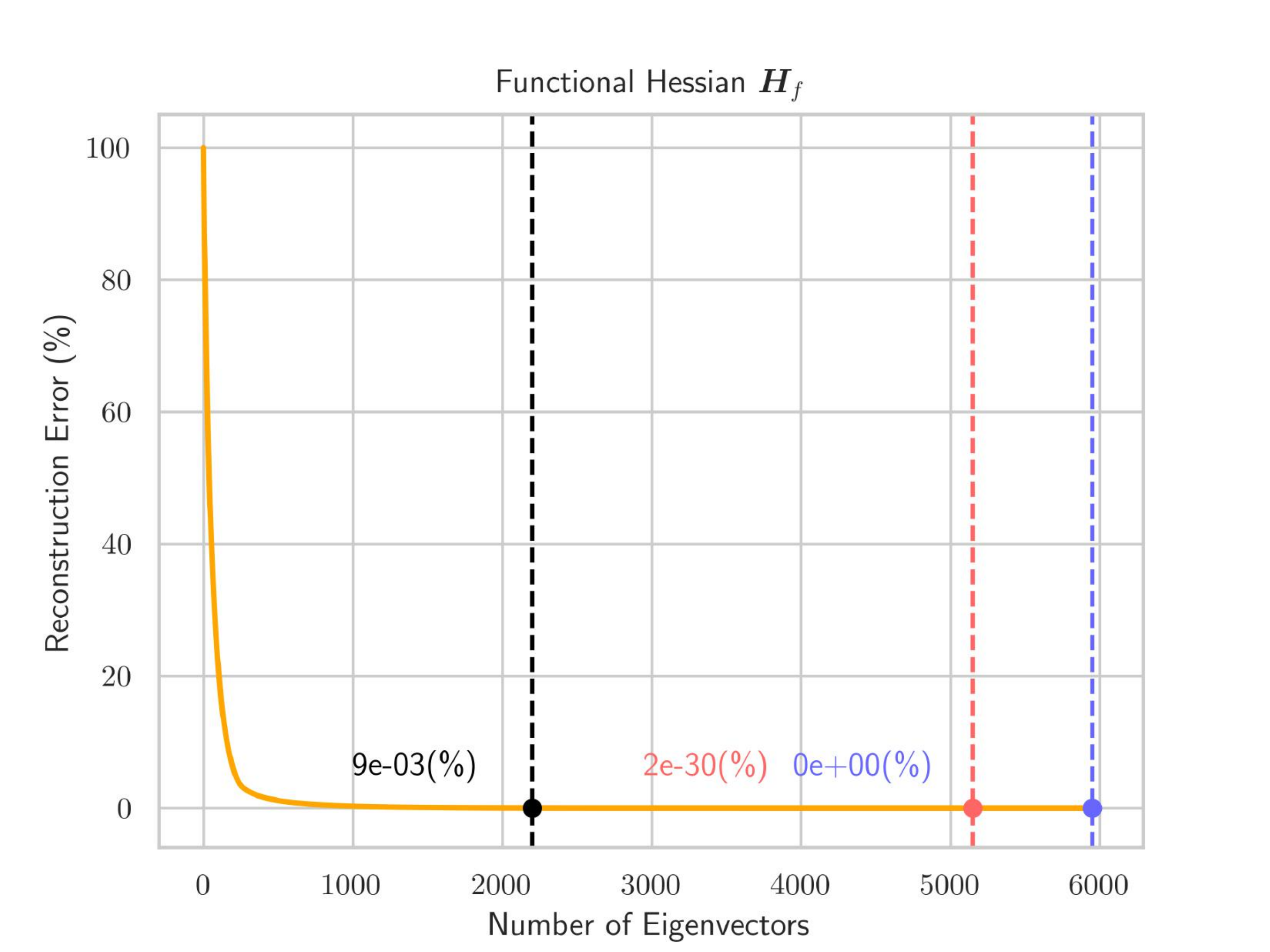}%
    \includegraphics[width=0.3\textwidth]{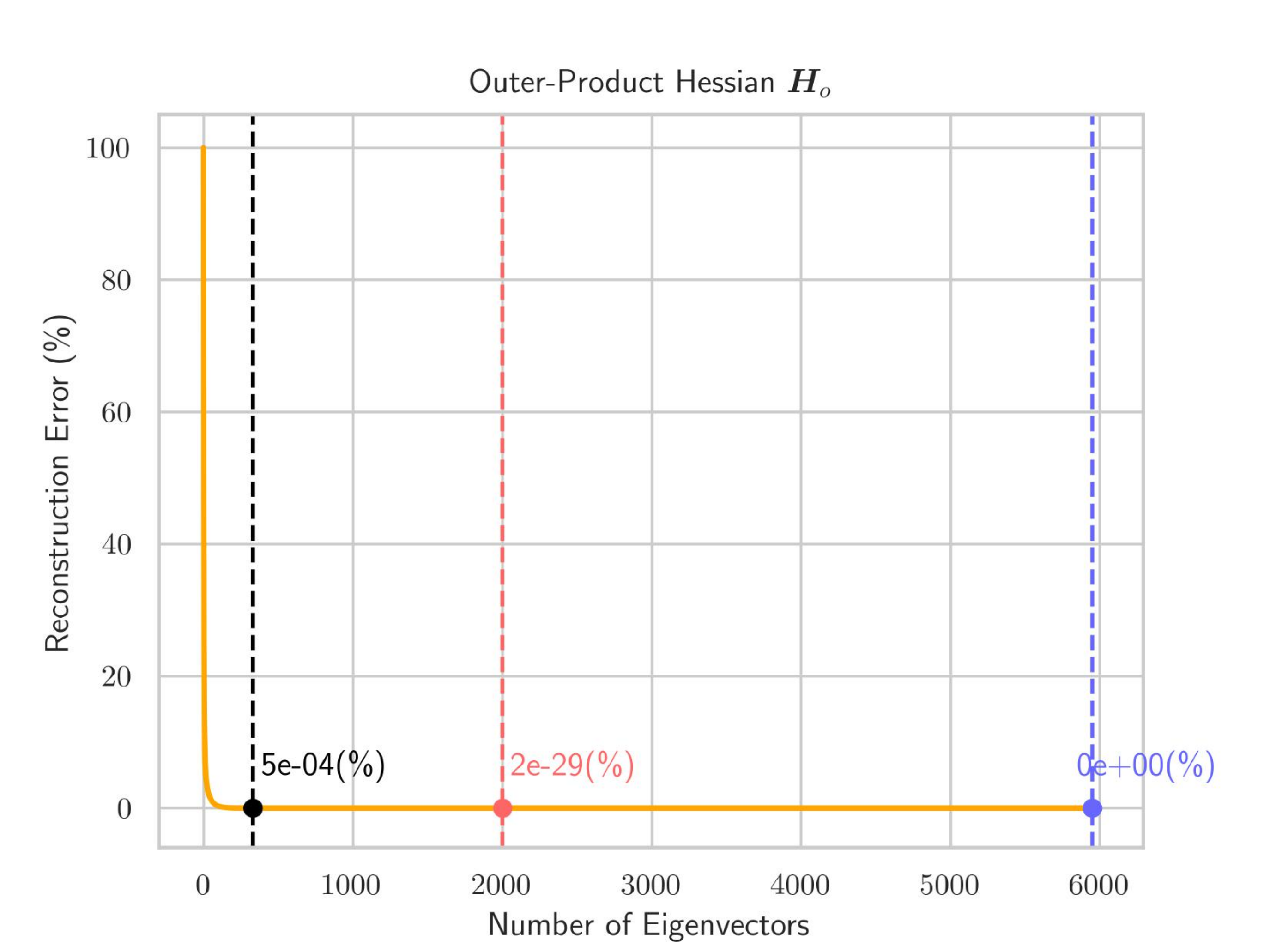}
    \caption{Hessian reconstruction error for \textbf{ELU} under \textbf{MSE} as the rank of the approximation is increased. The x-axis represents the number of top eigenvectors that form the low-rank approximation. The $y$-axis displays the \textit{reconstruction error in percentage} (100 $\%$ for zero eigenvectors used). The dashed vertical lines indicate the cut-off at various values of the rank: \textbf{first line} at the prediction based on the linear model, \textcolor{red}{\textbf{second line}} at the empirical measurement of rank, and \textcolor{blue}{\textbf{third line}} based on upper bounds from \citep{jacot2019asymptotic}, which become too coarse to be of any use (actually even greater than the \# of parameters but not marked there for visualization purposes). The hidden layer sizes are $50, 40, 30$.}
    \label{fig:elu_recon_bigger}
\end{figure}
\FloatBarrier
\begin{figure}[!htb]
    \centering
    \includegraphics[width=0.33\textwidth]{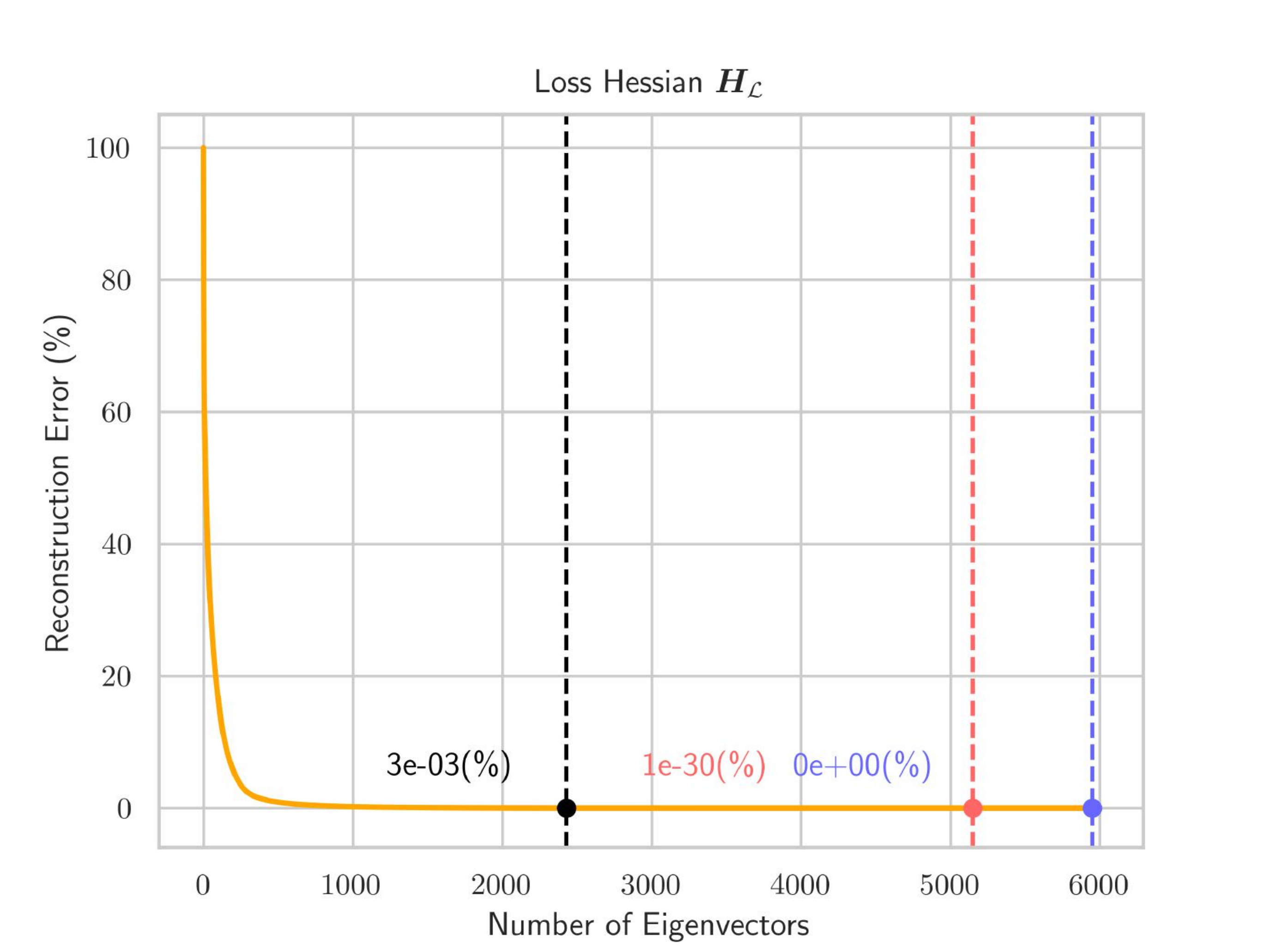}%
    \includegraphics[width=0.33\textwidth]{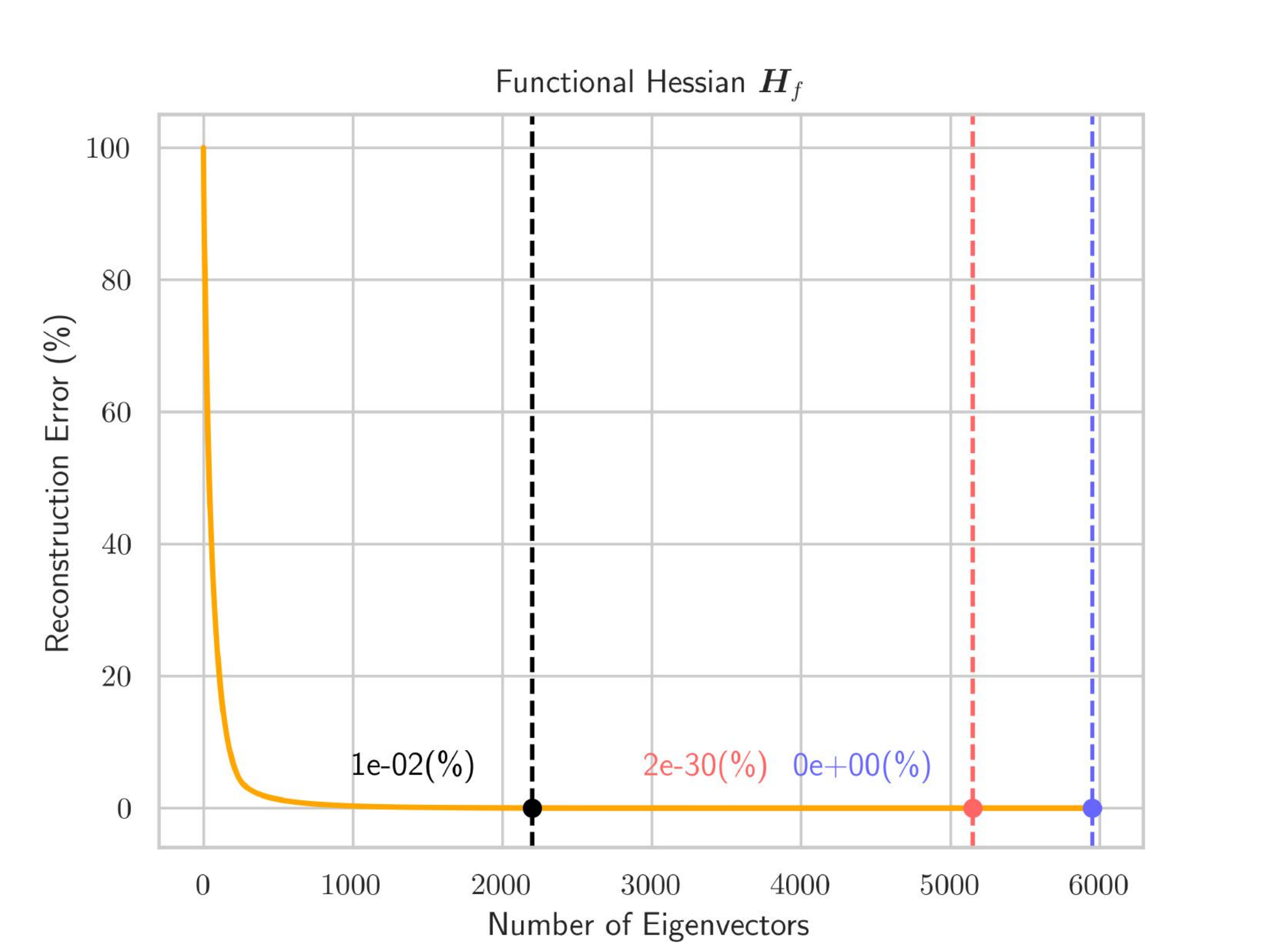}
    \includegraphics[width=0.33\textwidth]{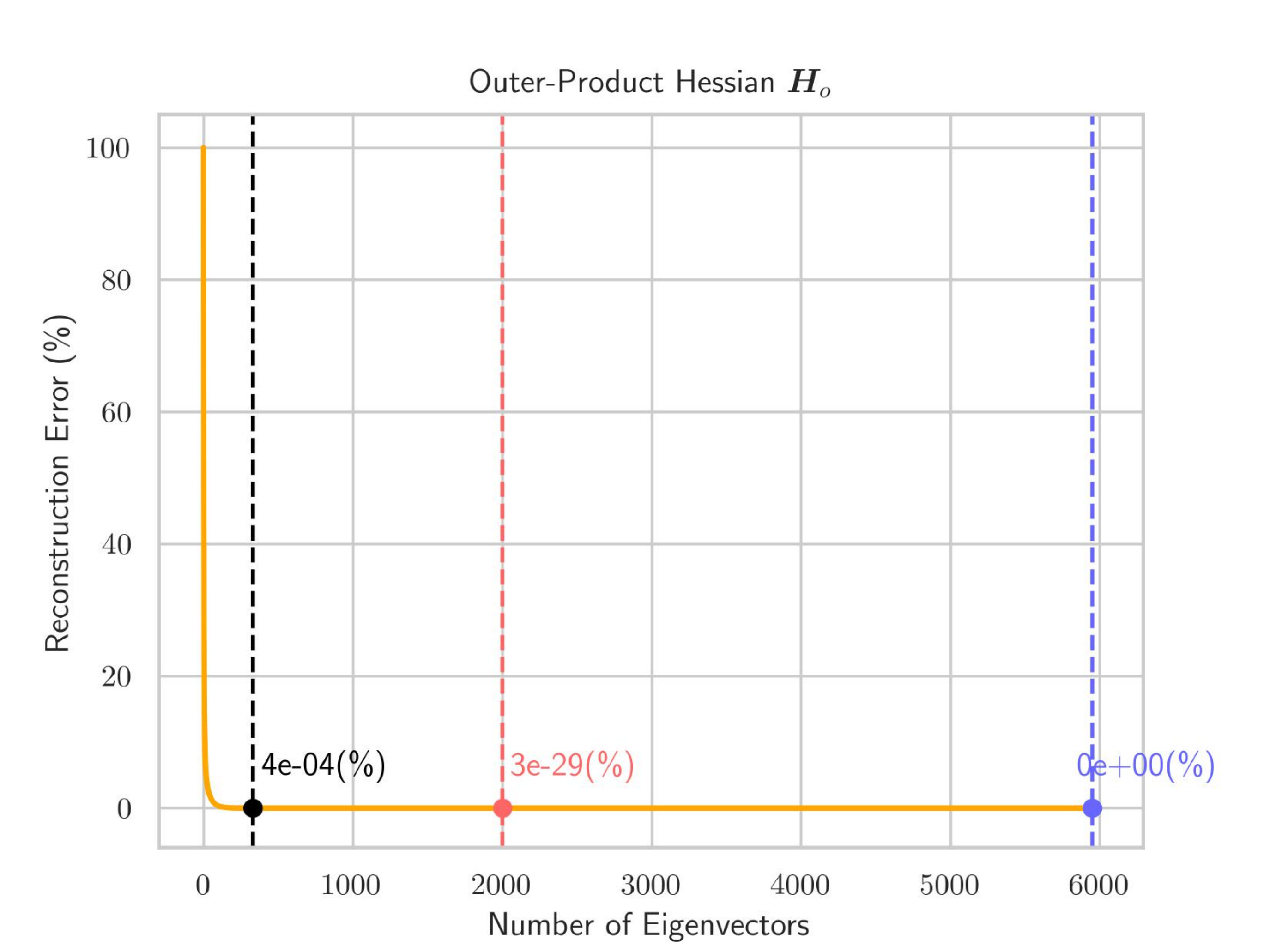}
    \caption{Hessian reconstruction error for \textbf{Tanh} under \textbf{MSE} as the rank of the approximation is increased. The x-axis represents the number of top eigenvectors that form the low-rank approximation. The $y$-axis displays the \textit{reconstruction error in percentage} (100 $\%$ for zero eigenvectors used). The dashed vertical lines indicate the cut-off at various values of the rank: \textbf{first line} at the prediction based on the linear model, \textcolor{red}{\textbf{second line}} at the empirical measurement of rank, and \textcolor{blue}{\textbf{third line}} based on upper bounds from \citep{jacot2019asymptotic}, which become too coarse to be of any use (actually even greater than the \# of parameters but not marked there for visualization purposes). The hidden layer sizes are $50, 40, 30$.}
    \label{fig:tanh_reconstruction_bigger}
\end{figure}
\FloatBarrier
\subsubsection{Cross Entropy}
Here we repeat the same experiments for cross entropy loss. We use the adjusted formula for the linear rank predictions, i.e. we replace $K$ by $K-1$. We display the results for ReLU in Figure \ref{fig:relu_reconstruction_cross}, for ELU in Figure \ref{fig:elu_recon_cross} and for tanh in Figure \ref{fig:tanh_recon_cross}. We also obtain excellent approximations for the numerical rank in this setting, showing that our predictions also extend to other losses under non-linearities. 
\begin{figure}[!htb]
    \centering
    \includegraphics[width=0.33\textwidth]{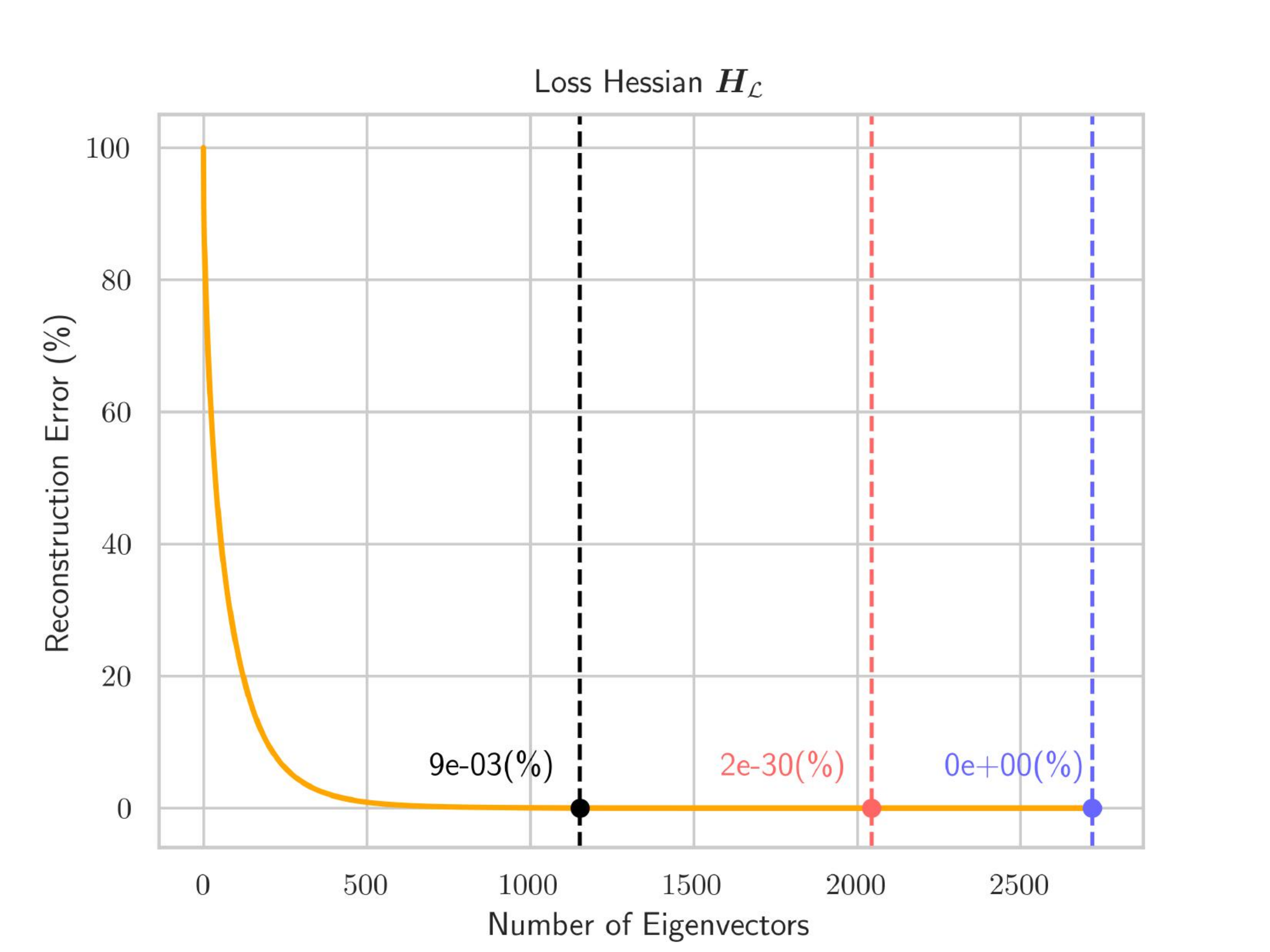}%
    \includegraphics[width=0.33\textwidth]{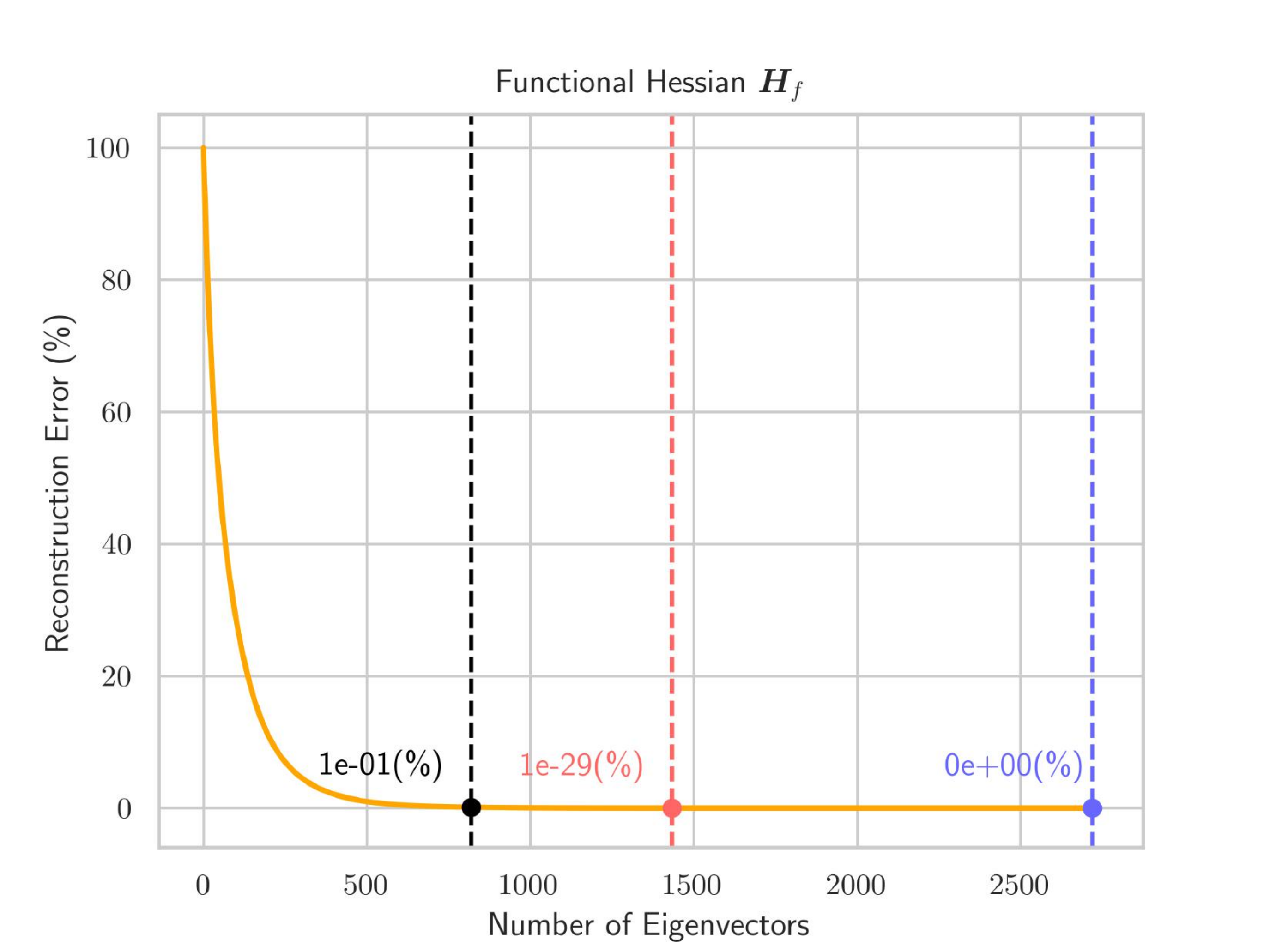}
    \includegraphics[width=0.33\textwidth]{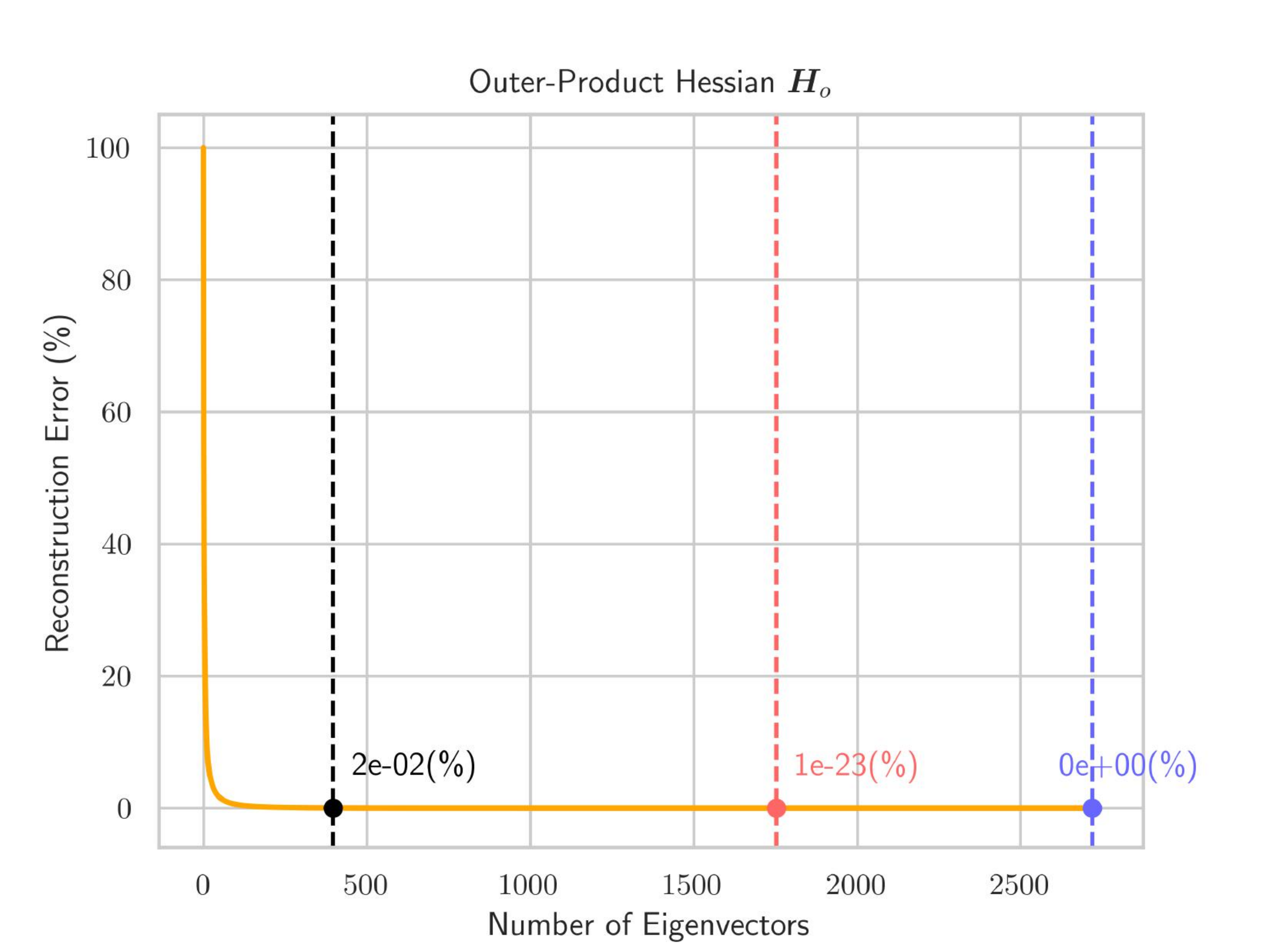}
    \caption{Hessian reconstruction error for \textbf{ReLU} as the rank of the approximation is increased under \textbf{cross entropy loss}. The x-axis represents the number of top eigenvectors that form the low-rank approximation. The $y$-axis displays the \textit{reconstruction error in percentage} (100 $\%$ for zero eigenvectors used). The dashed vertical lines indicate the cut-off at various values of the rank: \textbf{first line} at the prediction based on the linear model, \textcolor{red}{\textbf{second line}} at the empirical measurement of rank, and \textcolor{blue}{\textbf{third line}} based on upper bounds from \citep{jacot2019asymptotic}, which become too coarse to be of any use (actually even greater than the \# of parameters but not marked there for visualization purposes). The hidden layer sizes are $30, 20$.}
    \label{fig:relu_reconstruction_cross}
\end{figure}
\FloatBarrier

\begin{figure}[!htb]
    \centering
    \includegraphics[width=0.33\textwidth]{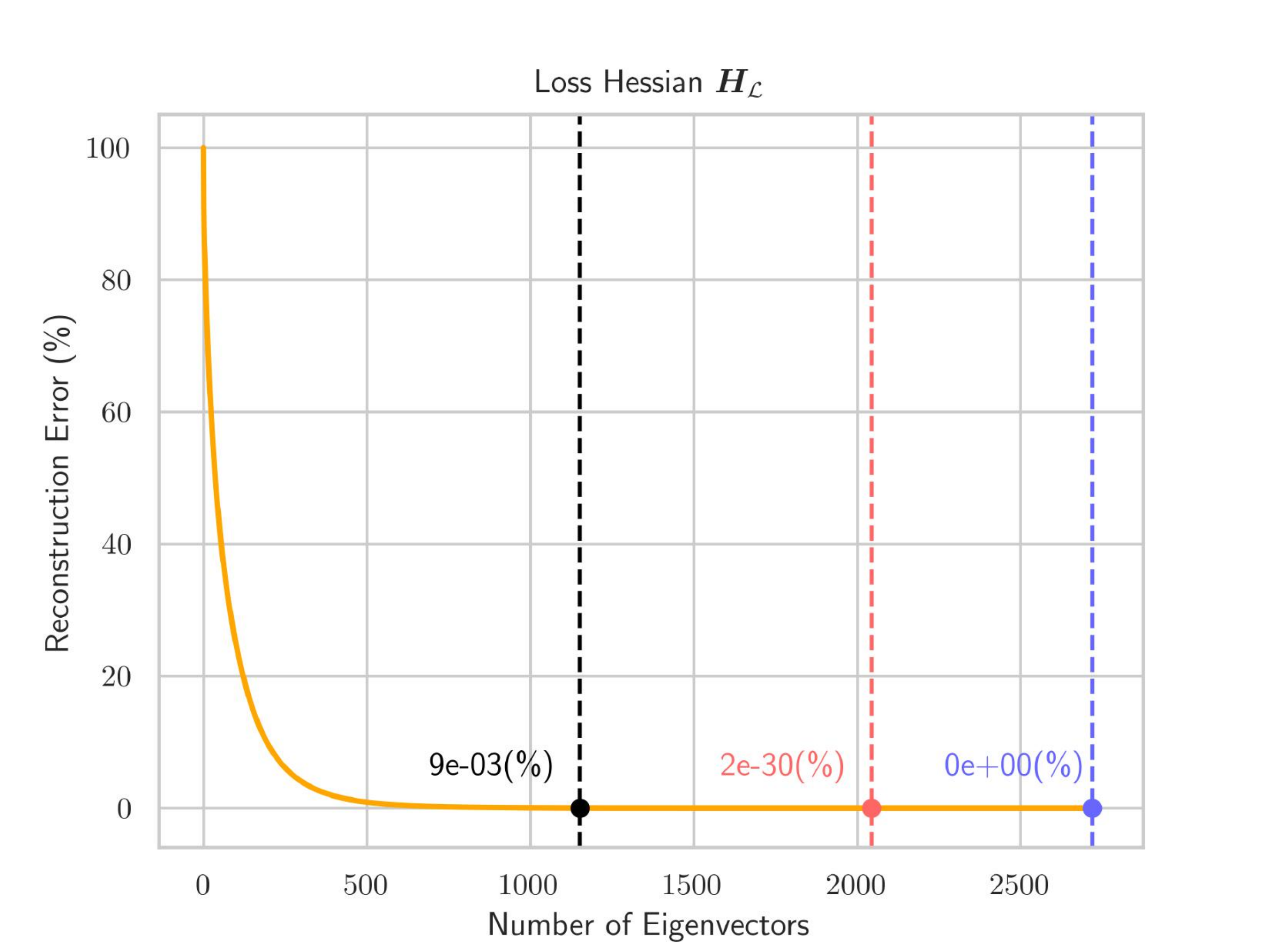}%
    \includegraphics[width=0.33\textwidth]{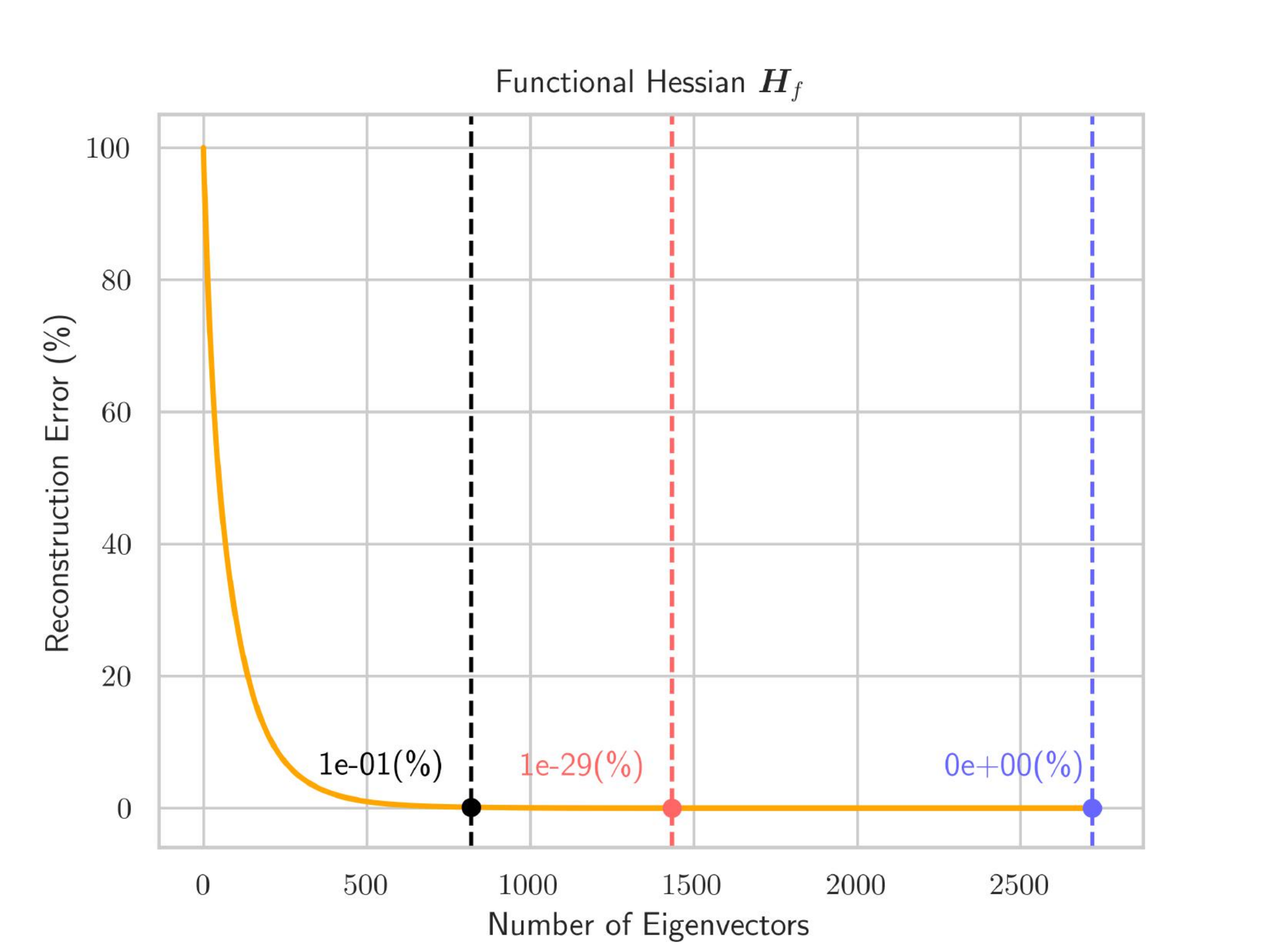}
    \includegraphics[width=0.33\textwidth]{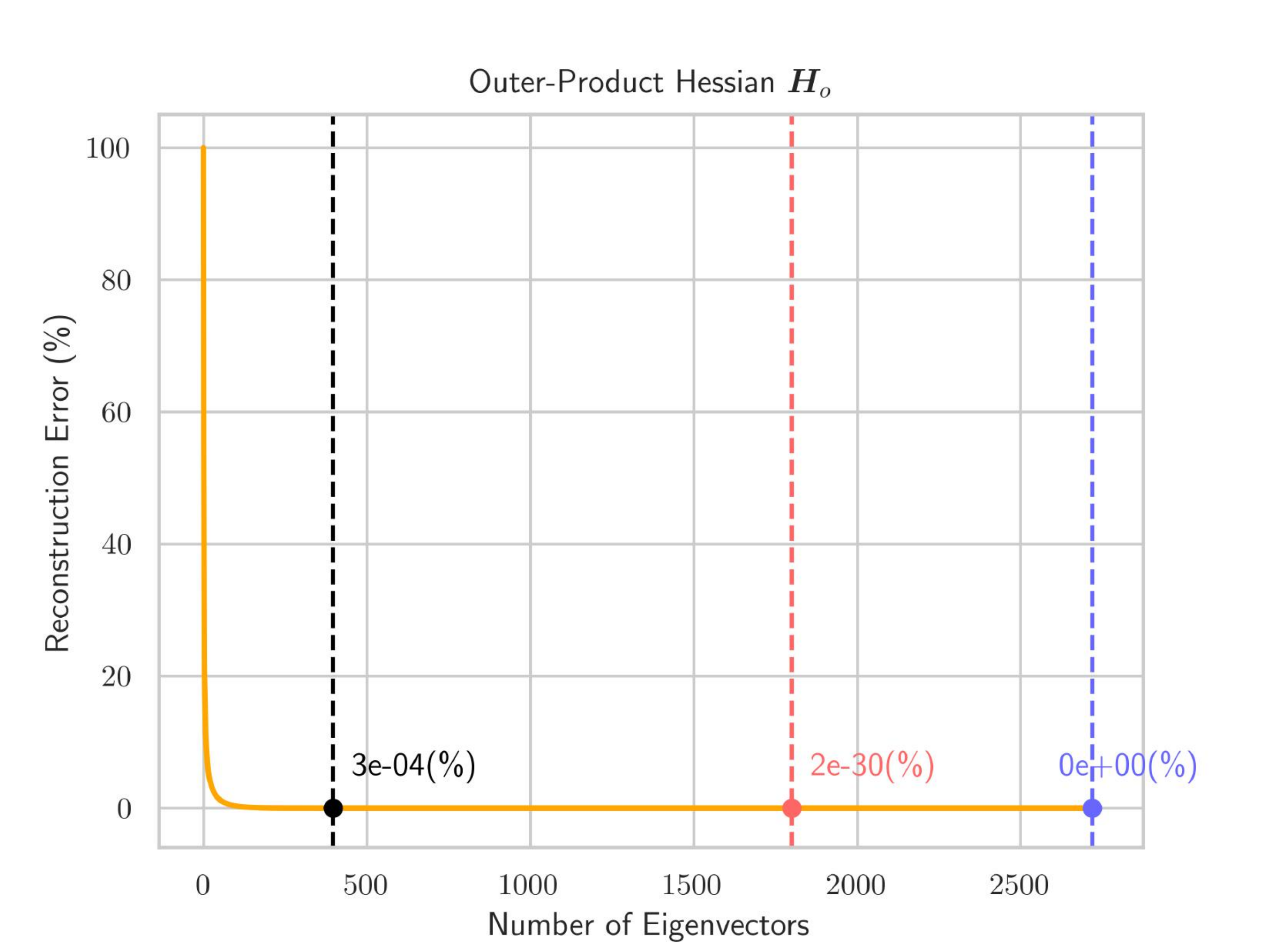}
    \caption{Hessian reconstruction error for \textbf{ELU} as the rank of the approximation is increased under \textbf{cross entropy loss}. The x-axis represents the number of top eigenvectors that form the low-rank approximation. The $y$-axis displays the \textit{reconstruction error in percentage} (100 $\%$ for zero eigenvectors used). The dashed vertical lines indicate the cut-off at various values of the rank: \textbf{first line} at the prediction based on the linear model, \textcolor{red}{\textbf{second line}} at the empirical measurement of rank, and \textcolor{blue}{\textbf{third line}} based on upper bounds from \citep{jacot2019asymptotic}, which become too coarse to be of any use (actually even greater than the \# of parameters but not marked there for visualization purposes). The hidden layer sizes are $30, 20$.}
    \label{fig:elu_recon_cross}
\end{figure}
\FloatBarrier
\begin{figure}[!htb]
    \centering
    \includegraphics[width=0.33\textwidth]{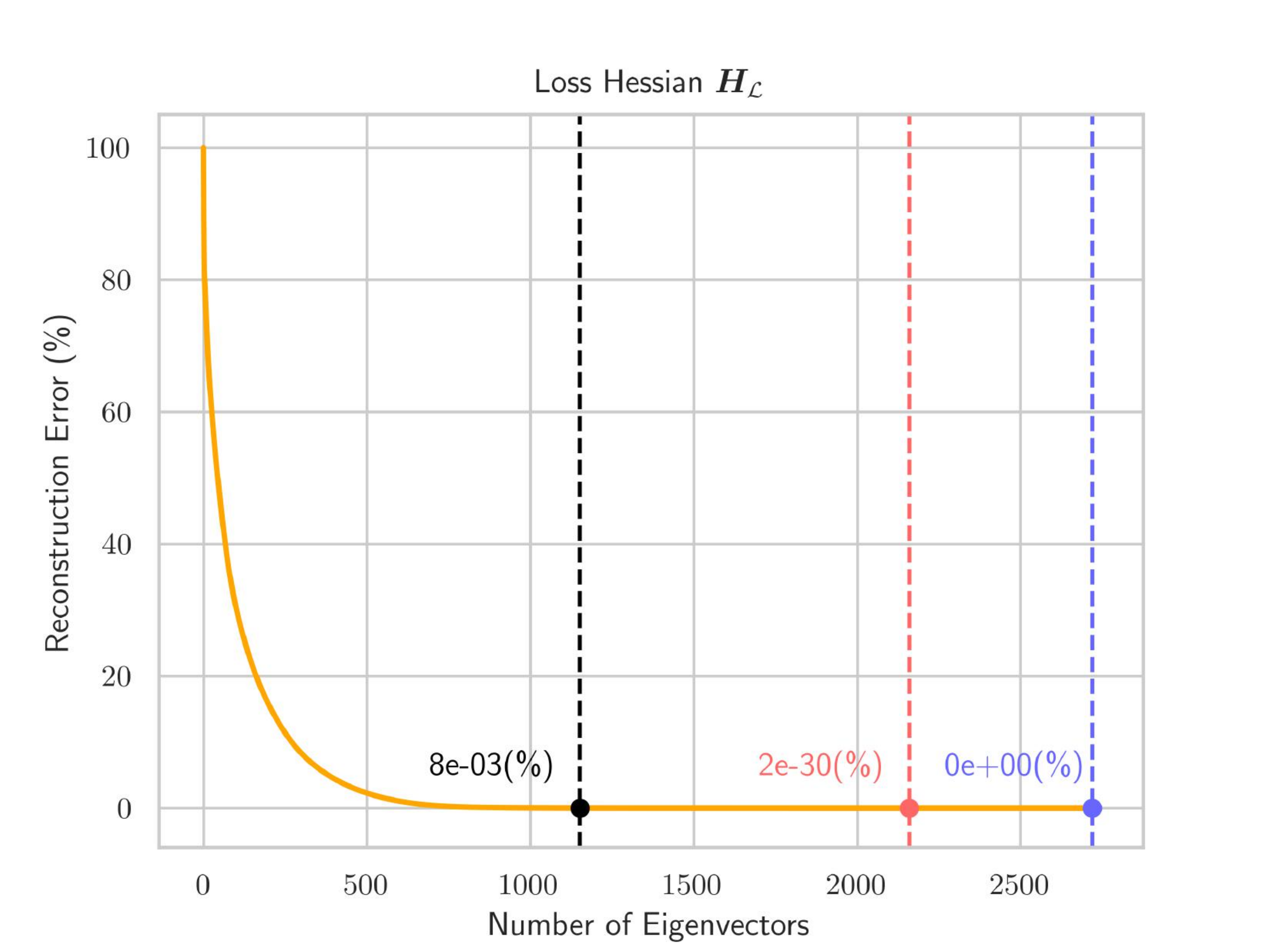}%
    \includegraphics[width=0.33\textwidth]{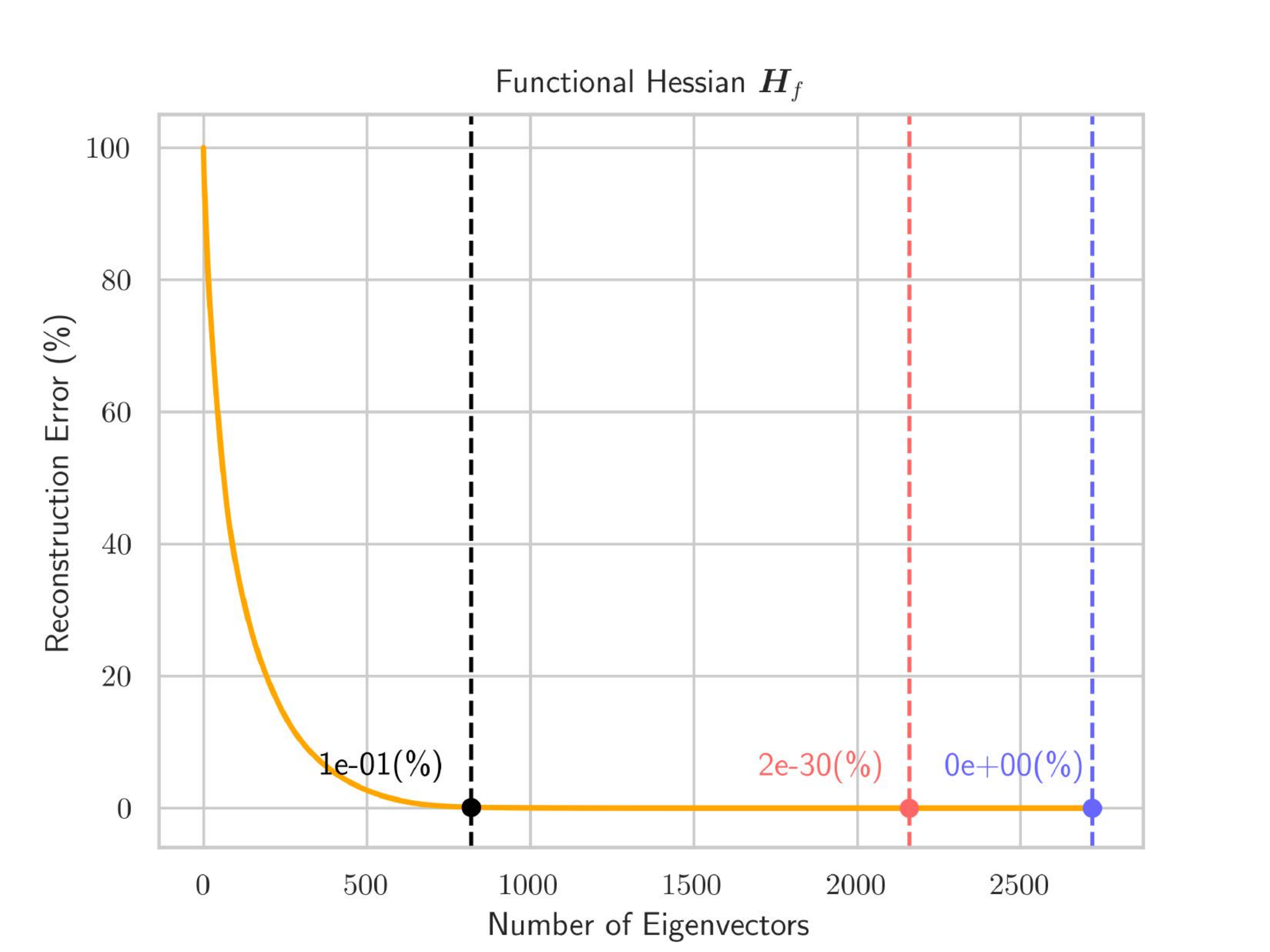}
    \includegraphics[width=0.33\textwidth]{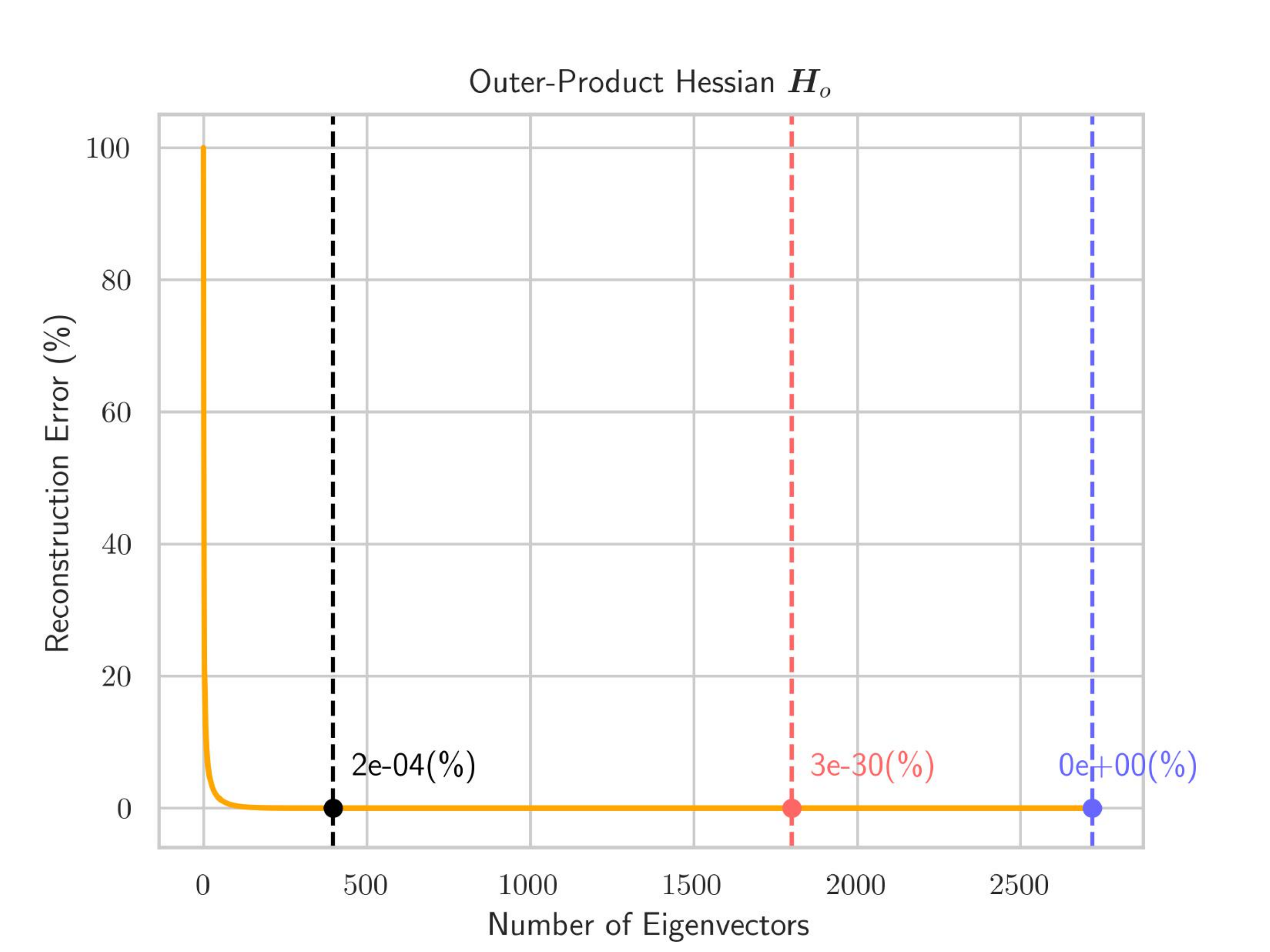}
    \caption{Hessian reconstruction error for \textbf{Tanh} as the rank of the approximation is increased under \textbf{cross entropy loss}. The x-axis represents the number of top eigenvectors that form the low-rank approximation. The $y$-axis displays the \textit{reconstruction error in percentage} (100 $\%$ for zero eigenvectors used). The dashed vertical lines indicate the cut-off at various values of the rank: \textbf{first line} at the prediction based on the linear model, \textcolor{red}{\textbf{second line}} at the empirical measurement of rank, and \textcolor{blue}{\textbf{third line}} based on upper bounds from \citep{jacot2019asymptotic}, which become too coarse to be of any use (actually even greater than the \# of parameters but not marked there for visualization purposes). The hidden layer sizes are $30, 20$.}
    \label{fig:tanh_recon_cross}
\end{figure}
\FloatBarrier
\subsection{Spectral Plots for More Non-Linearities}
\label{spectral_app}
To further underline the utility of our theoretical results in the non-linear setting, we present more spectral plots, super-imposing the linear and corresponding non-linear spectrum for more non-linearities and loss functions. For all experiments we use $d=64$ and $N=200$. 
\subsubsection{Mean Squared Error}
In Figure~\ref{fig:relu_spectrum}, we show the results for ReLU non-linearity, observing that the plateau of the spectrum is accurately described by our predictions. The same also holds for ELU activation, as can be readily seen in Figure \ref{fig:elu_spectrum}.
\begin{figure}[!htb]
    \centering
    \includegraphics[width=0.33\textwidth]{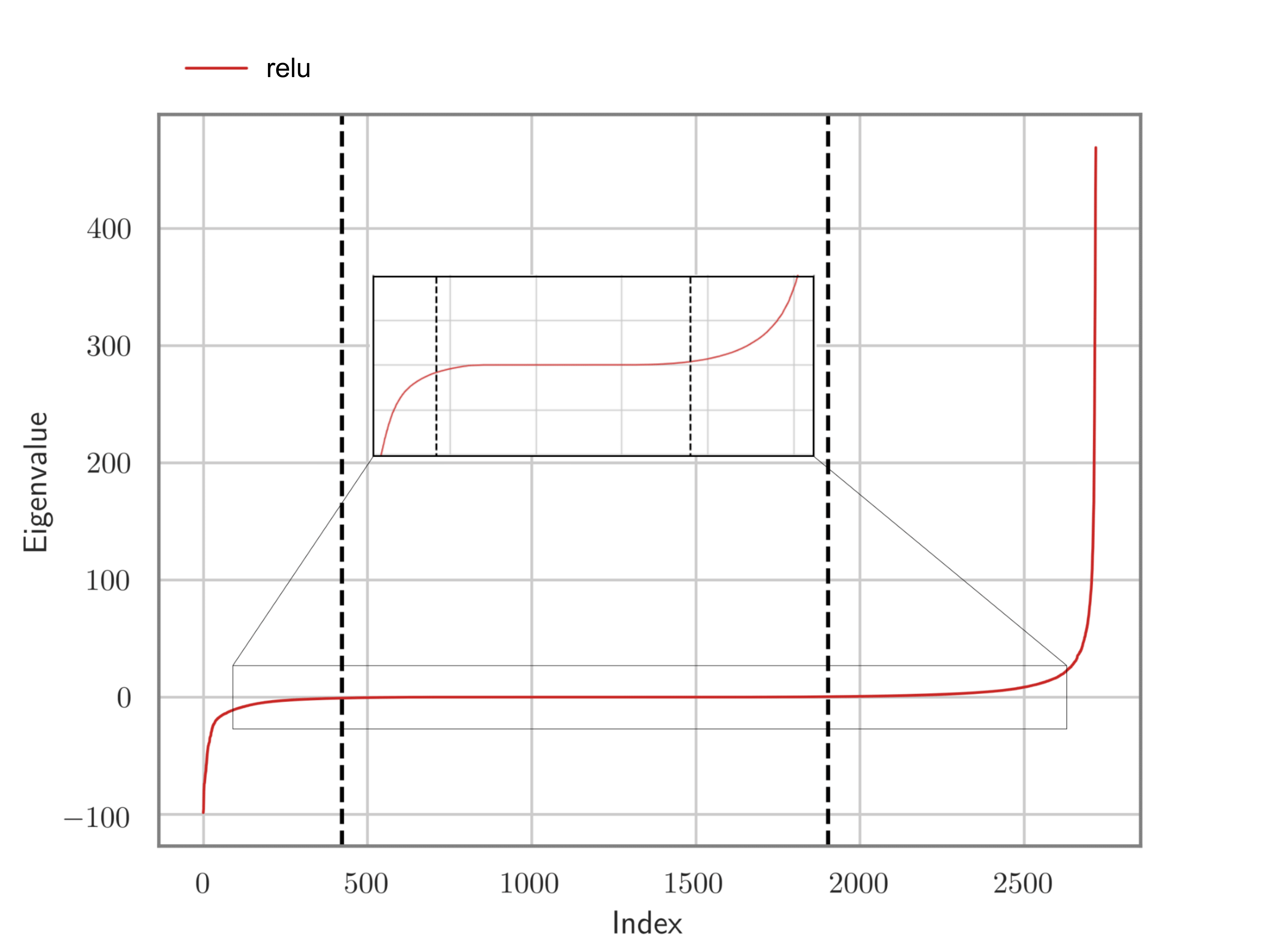}%
    \includegraphics[width=0.33\textwidth]{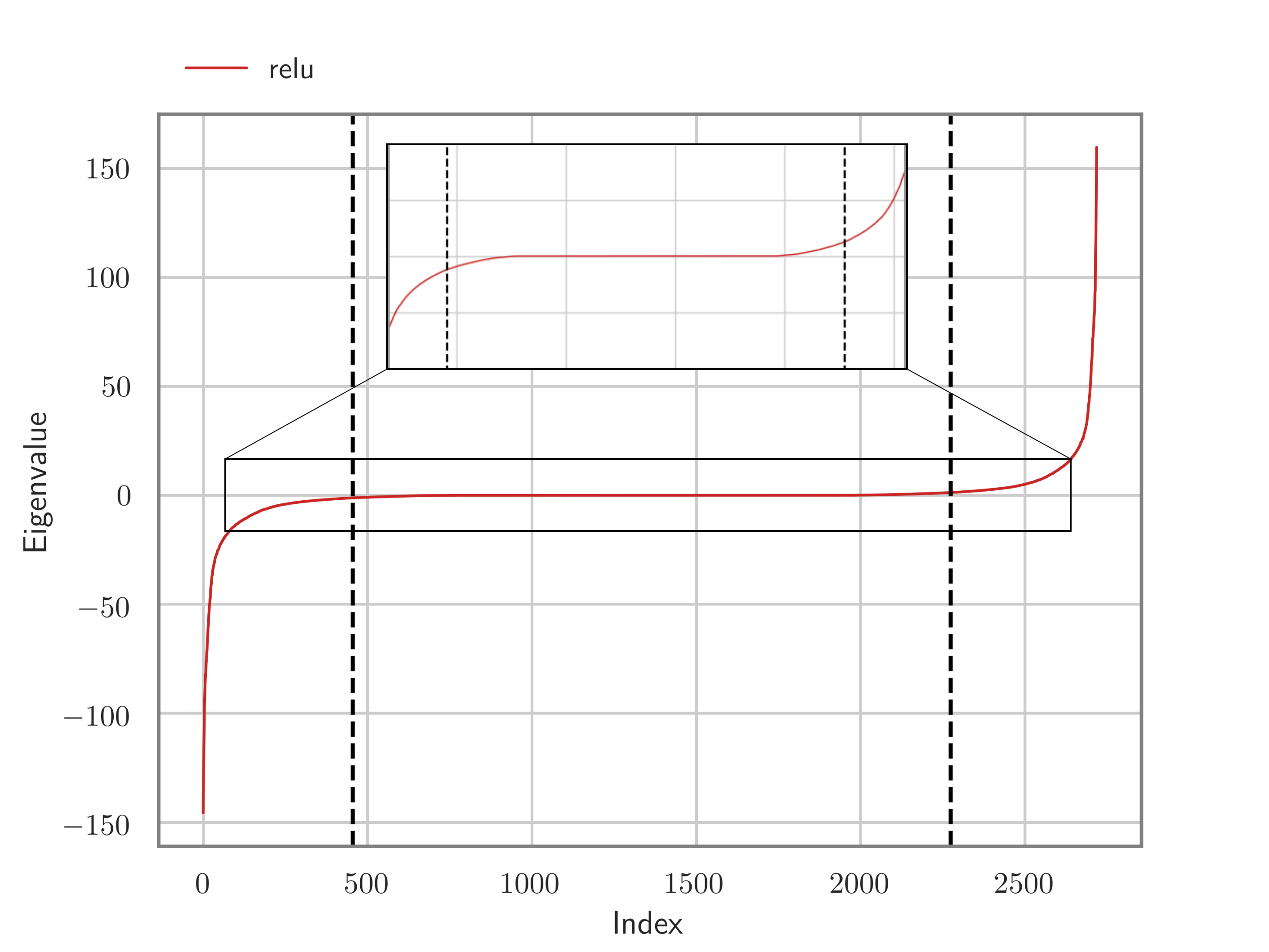}%
    \includegraphics[width=0.33\textwidth]{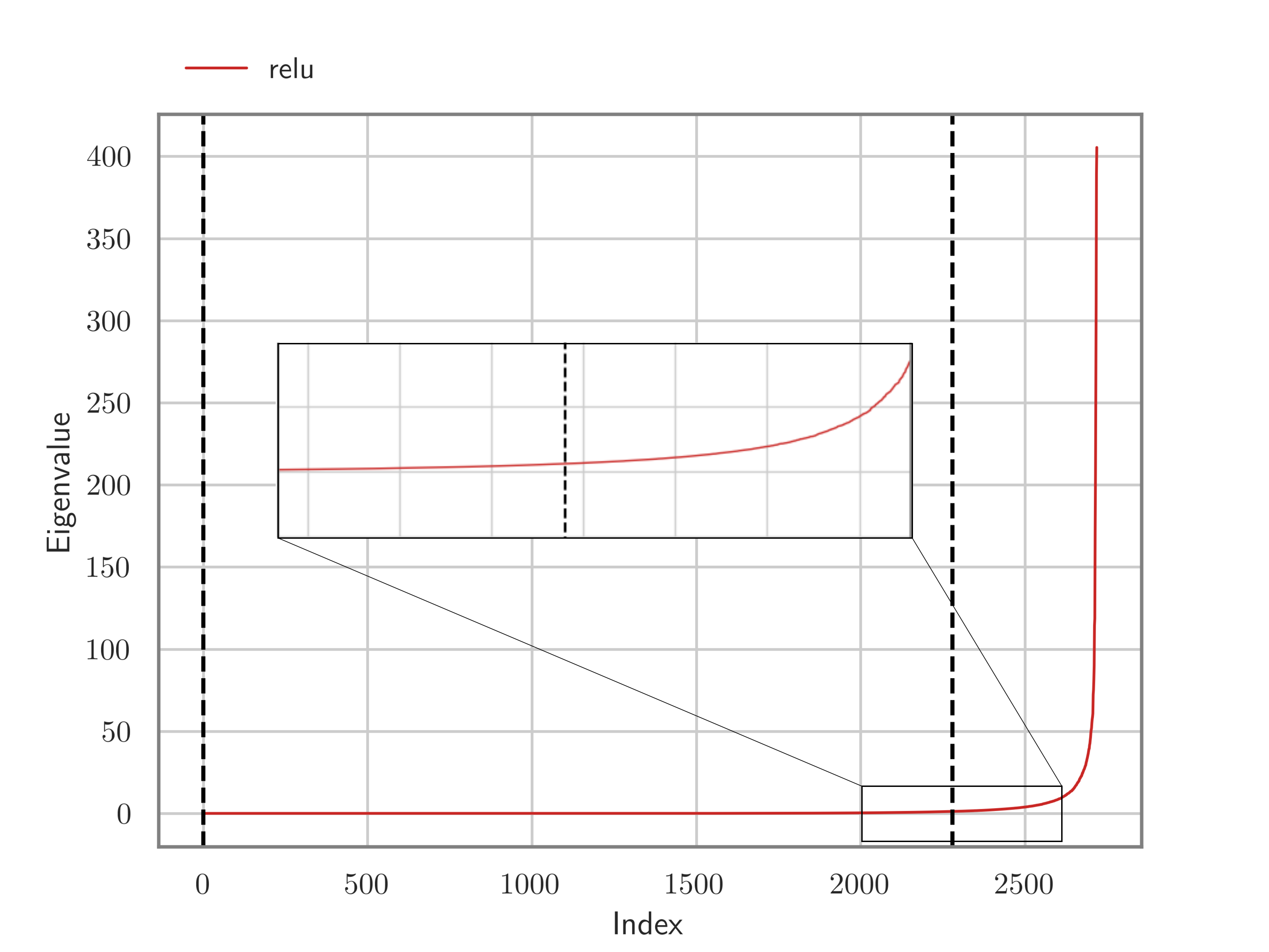}
    \caption{Spectrum of the loss Hessian $\HL$ (left), functional Hessian $\HF$ (middle) and outer product $\HO$ (right), for a \textbf{ReLU} network. Black dashed lines are the predictions of the bulk size via our rank formulas. We use 2 hidden layers of size $30,20$ on \textsc{MNIST}.}
    \label{fig:relu_spectrum}
\end{figure}
\FloatBarrier
\begin{figure}[ht!]
    \centering
    \includegraphics[width=0.33\textwidth]{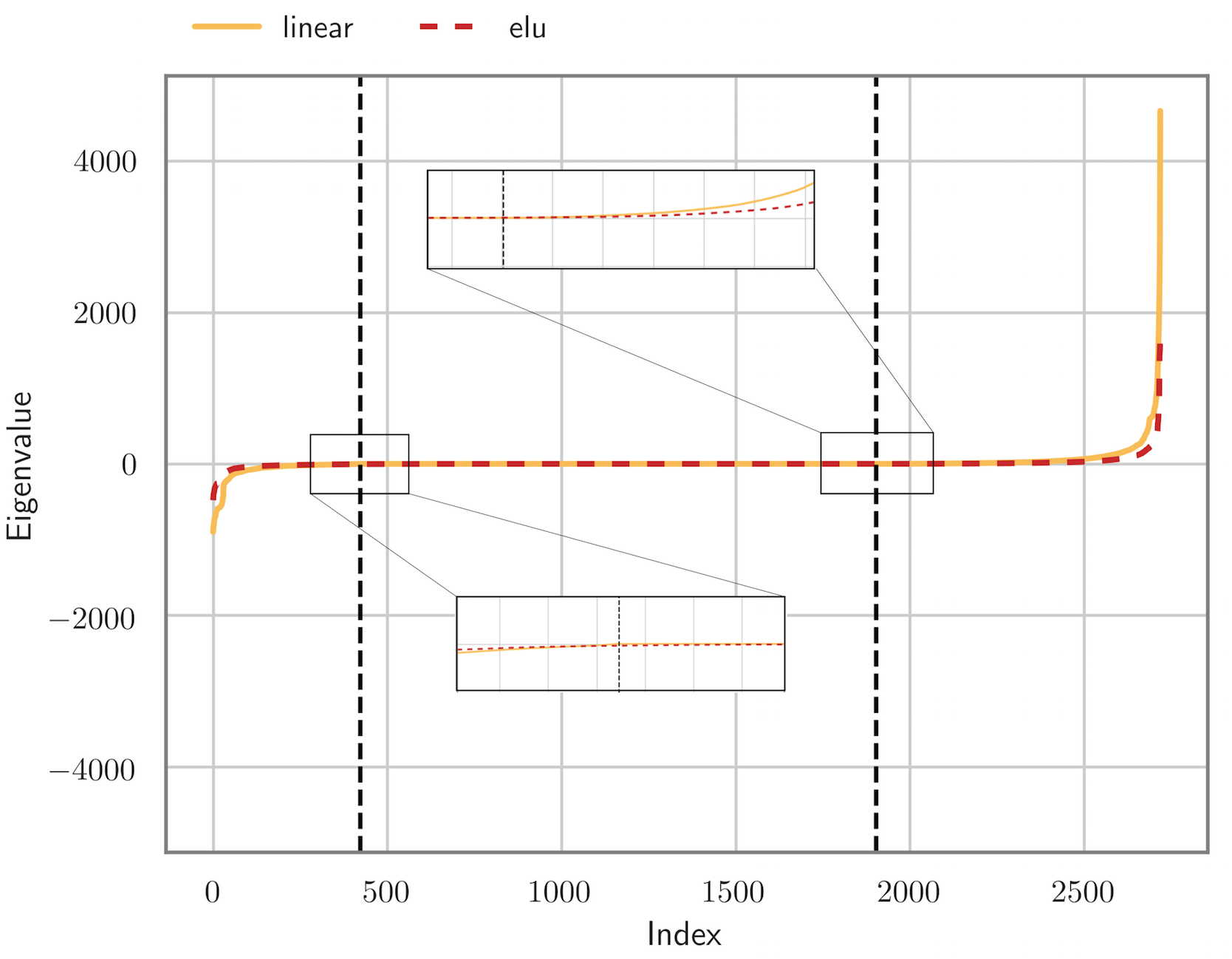}%
    \includegraphics[width=0.33\textwidth]{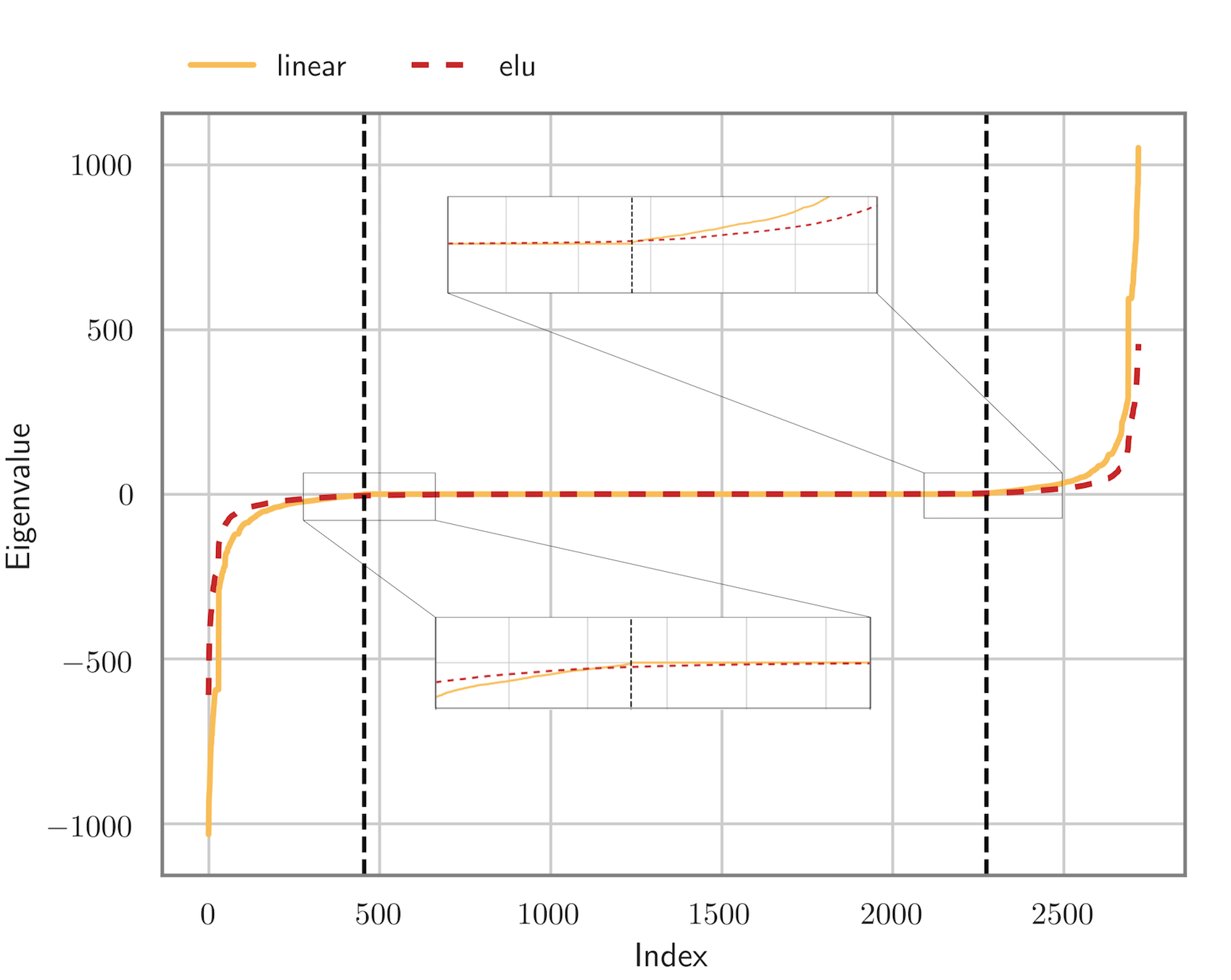}%
    \includegraphics[width=0.33\textwidth]{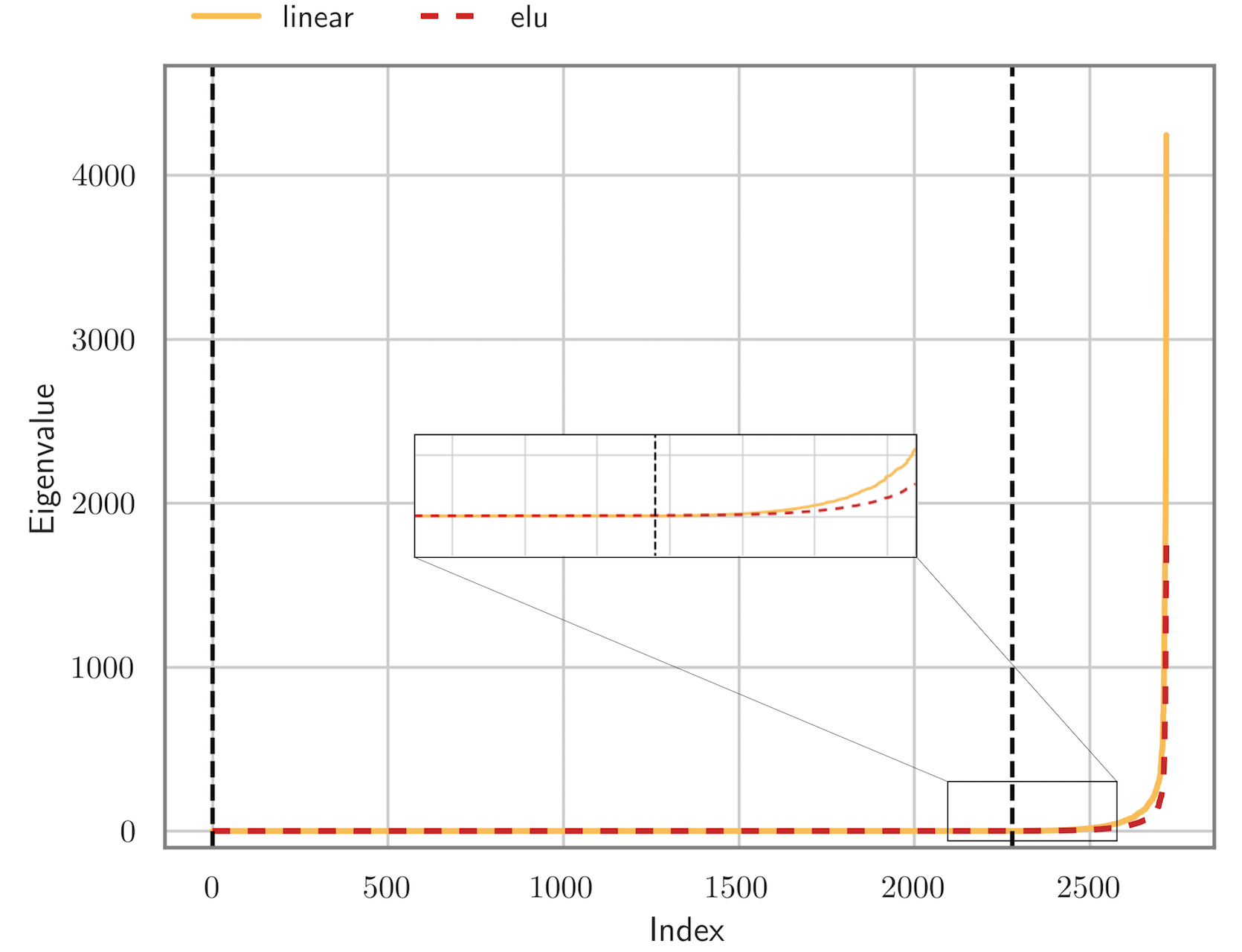}
    \caption{Spectrum of the loss Hessian $\HL$ (left), functional Hessian $\HF$ (middle)  and outer product $\HO$ (right), for \textcolor{orange}{\textbf{linear}} and \textcolor{red}{\textbf{non-linear}} networks. Black dashed lines are the predictions of the bulk size via our rank formulas. We use 2 hidden layers of size $30,20$ with \textbf{ELU} activation on \textsc{MNIST}.}
    \label{fig:elu_spectrum}
\end{figure}
\FloatBarrier
\subsubsection{Cross Entropy}
Here we show that the spectrum also behaves very similar if cross entropy is employed, again regardless of the non-linearity used. We show the results for ReLU non-linearity in Figure \ref{fig:relu_spectrum_cross}, for tanh in Figure \ref{fig:tanh_spectrum_cross} and ELU in Figure \ref{fig:elu_spectrum_cross}.
\begin{figure}[ht!]
    \centering
    \includegraphics[width=0.33\textwidth]{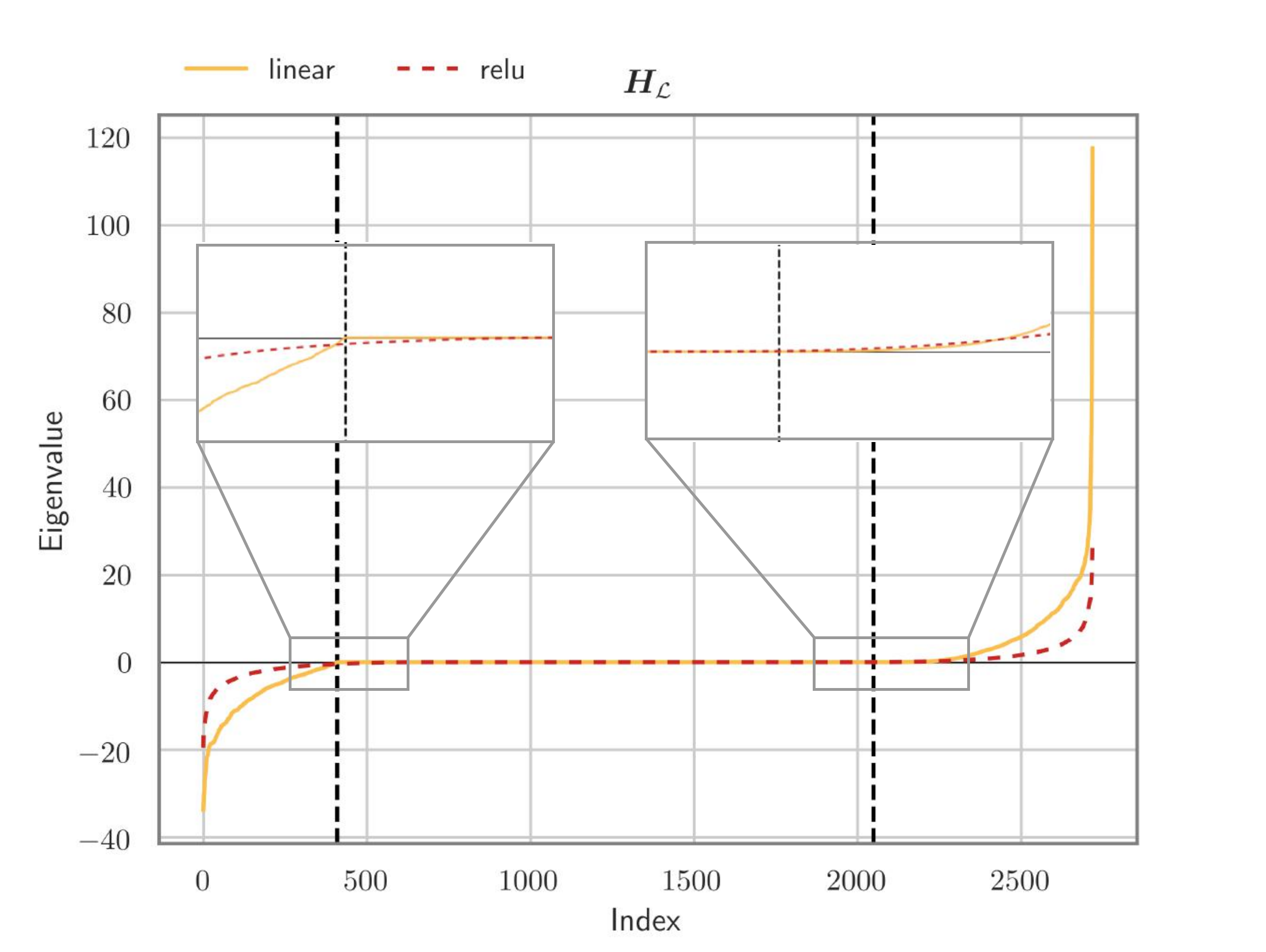}%
    \includegraphics[width=0.33\textwidth]{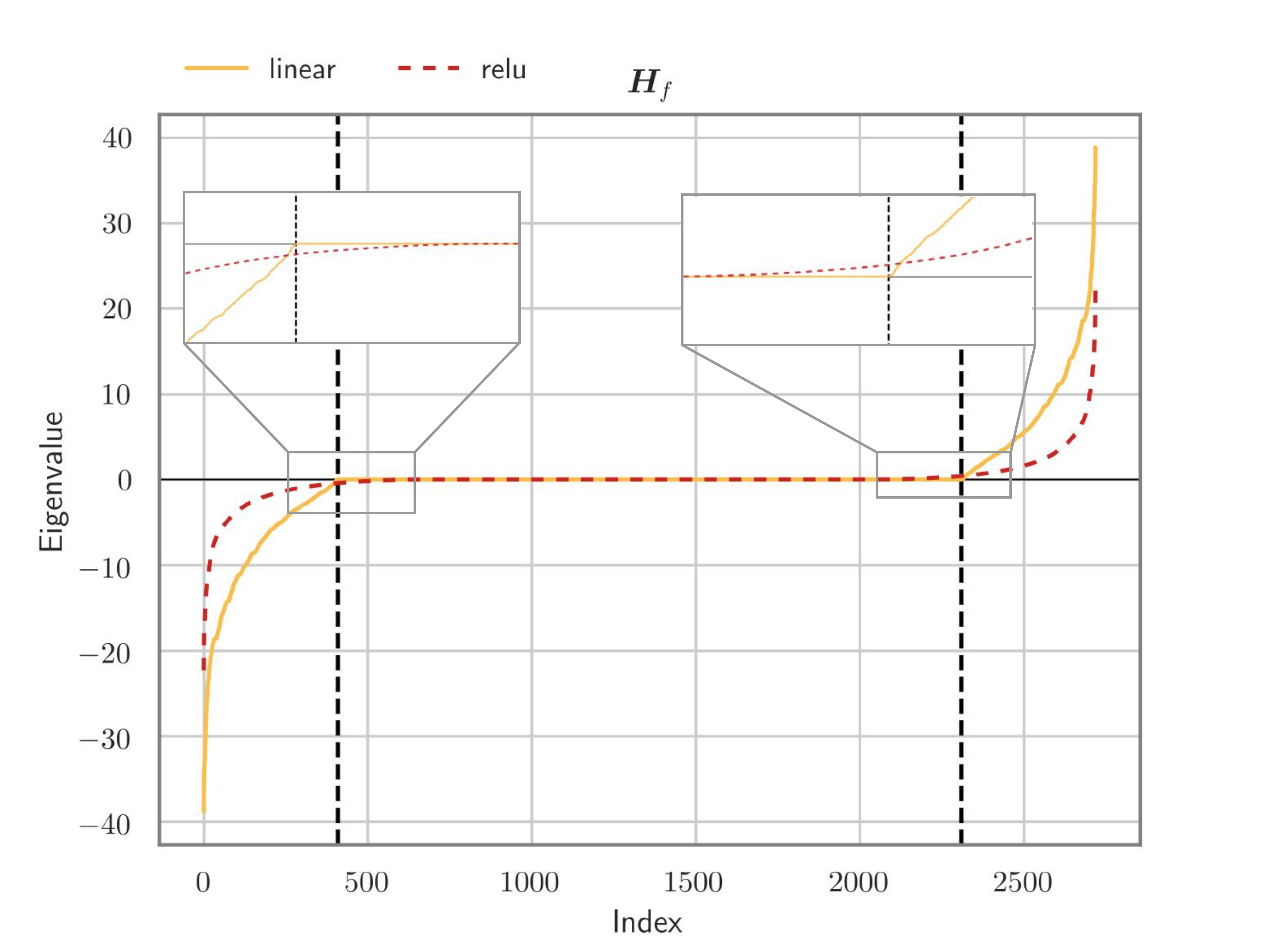}%
    \includegraphics[width=0.33\textwidth]{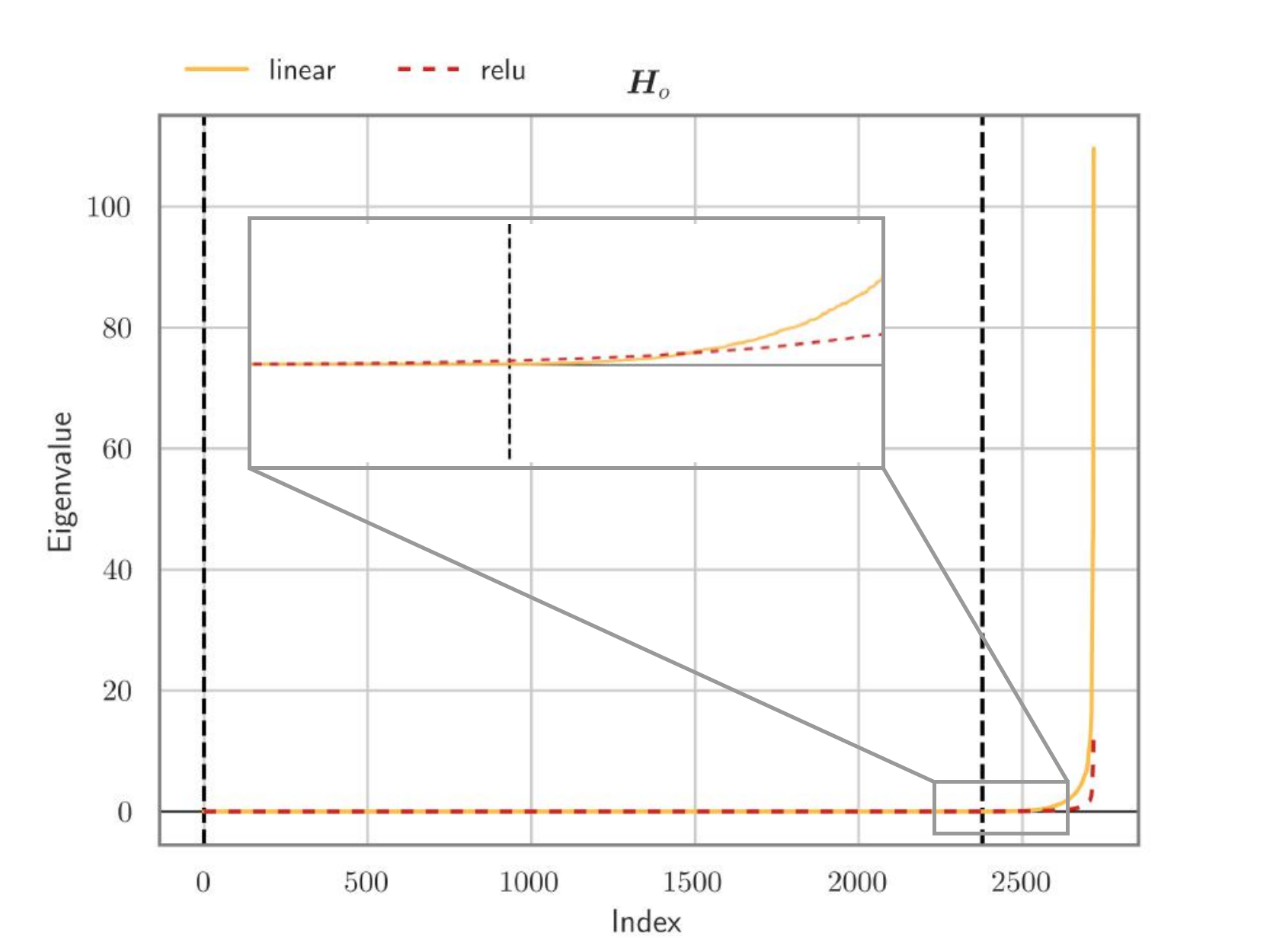}
    \caption{Spectrum of the loss Hessian $\HL$ (left), functional Hessian $\HF$ (middle)  and outer product $\HO$ (right), for \textcolor{orange}{\textbf{linear}} and \textcolor{red}{\textbf{non-linear}} networks. Black dashed lines are the predictions of the bulk size via our rank formulas. We use 2 hidden layers of size $30,20$ with \textbf{ReLU} activation on \textsc{MNIST} under \textbf{cross entropy} loss.}
    \label{fig:relu_spectrum_cross}
\end{figure}
\FloatBarrier
\begin{figure}[ht!]
    \centering
    \includegraphics[width=0.33\textwidth]{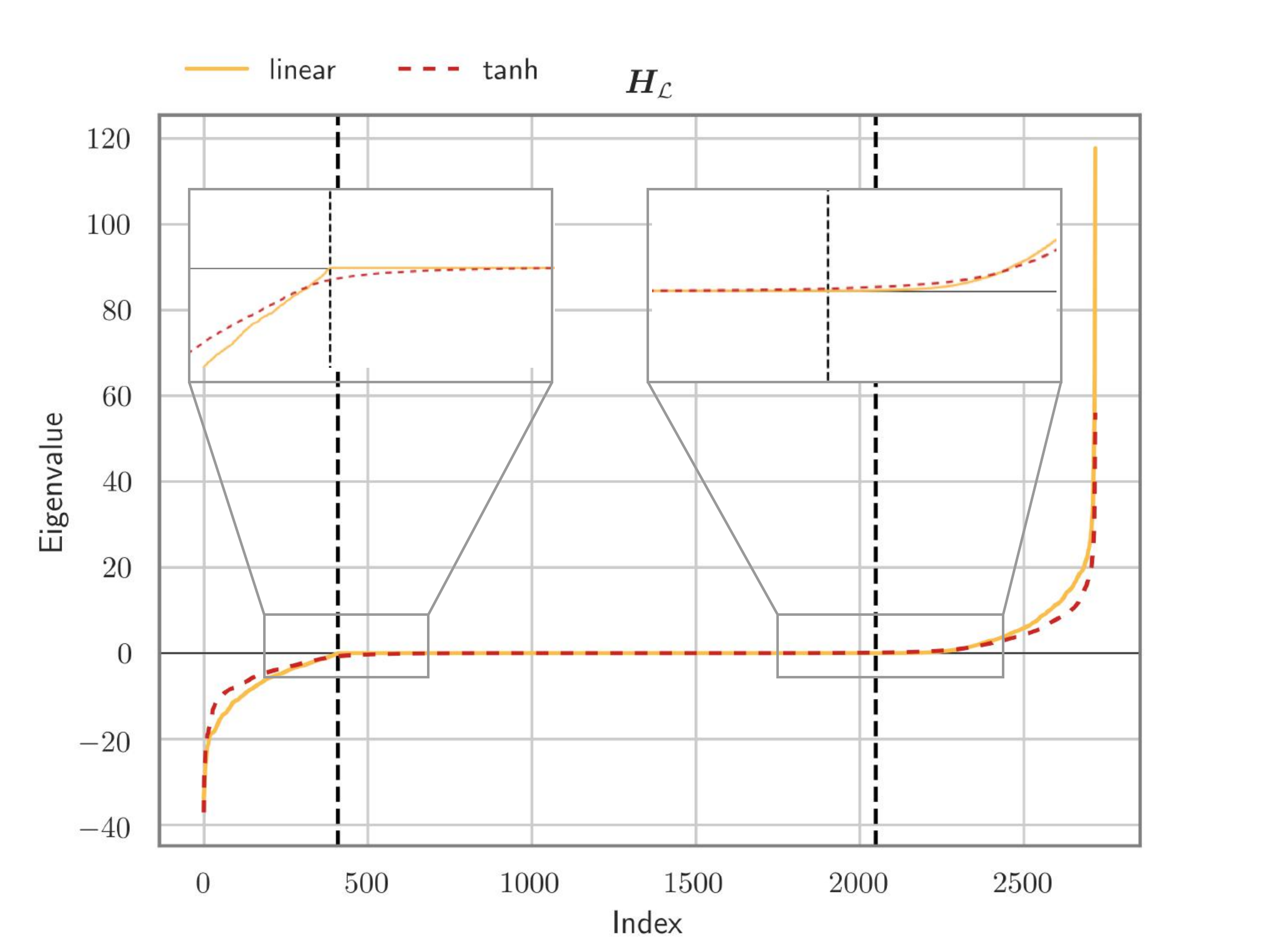}%
    \includegraphics[width=0.33\textwidth]{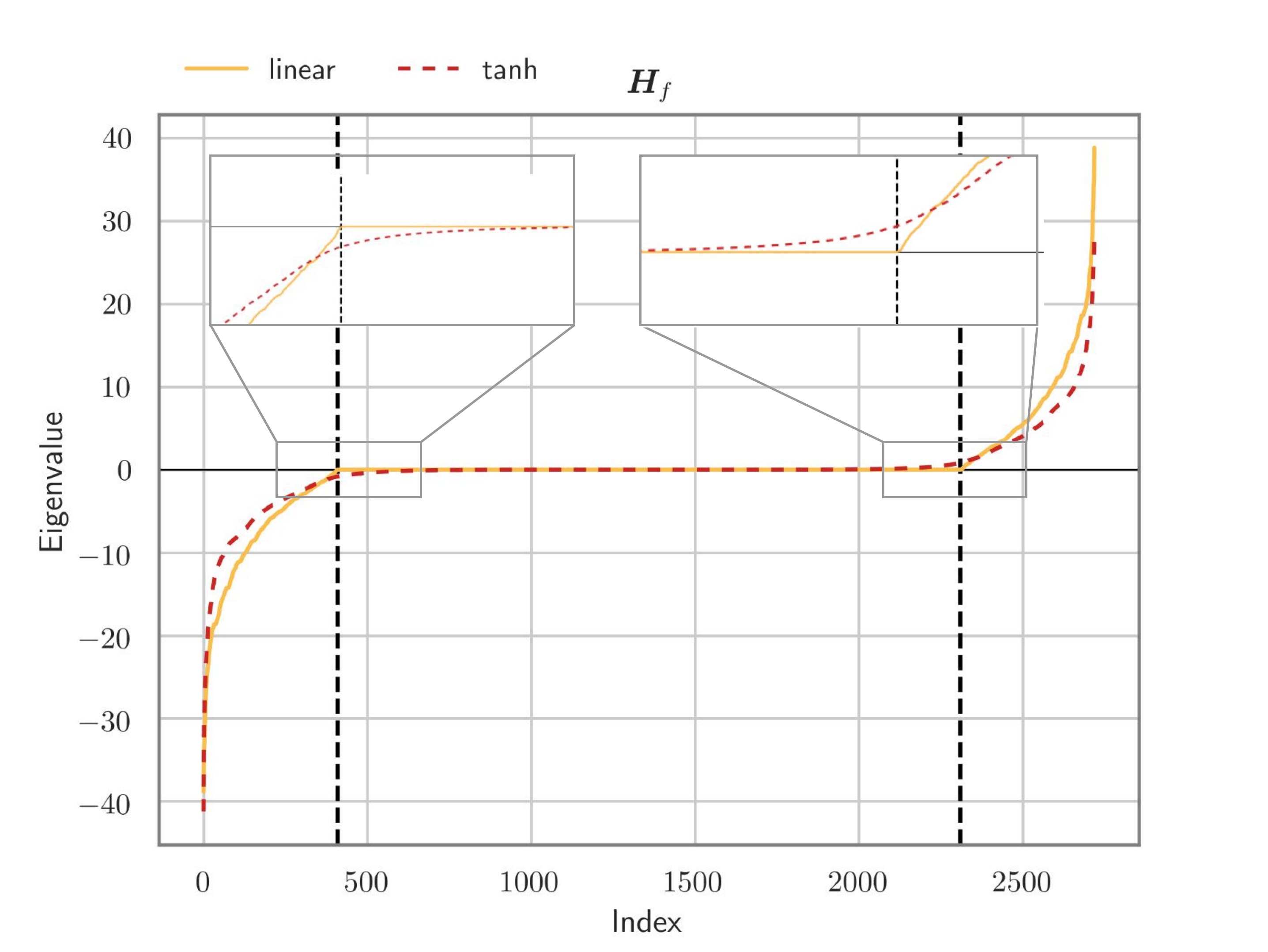}%
    \includegraphics[width=0.33\textwidth]{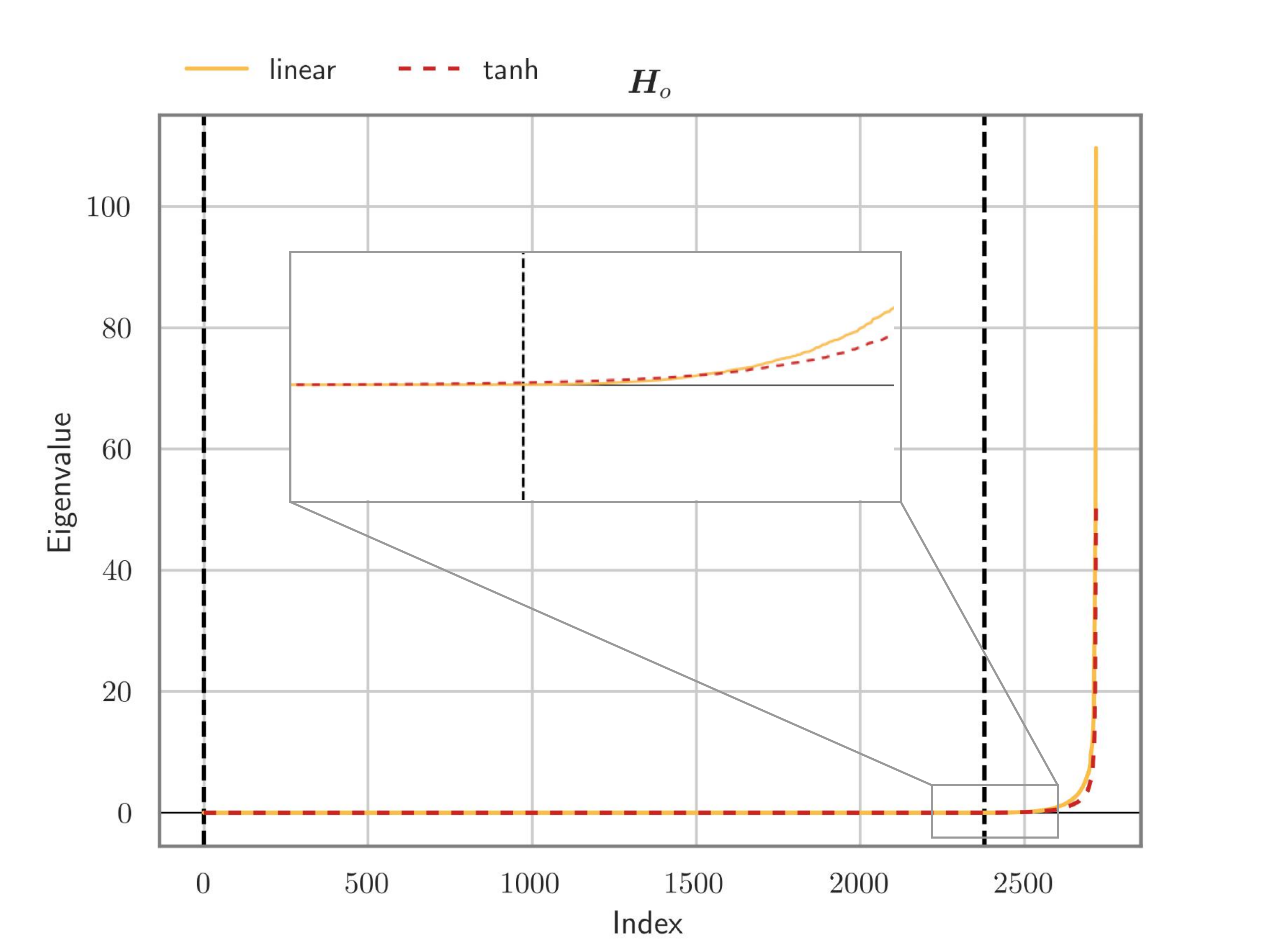}
    \caption{Spectrum of the loss Hessian $\HL$ (left), functional Hessian $\HF$ (middle)  and outer product $\HO$ (right), for \textcolor{orange}{\textbf{linear}} and \textcolor{red}{\textbf{non-linear}} networks. Black dashed lines are the predictions of the bulk size via our rank formulas. We use 2 hidden layers of size $30,20$ with \textbf{tanh} activation on \textsc{MNIST} under \textbf{cross entropy} loss.}
    \label{fig:tanh_spectrum_cross}
\end{figure}
\FloatBarrier
\begin{figure}[ht!]
    \centering
    \includegraphics[width=0.33\textwidth]{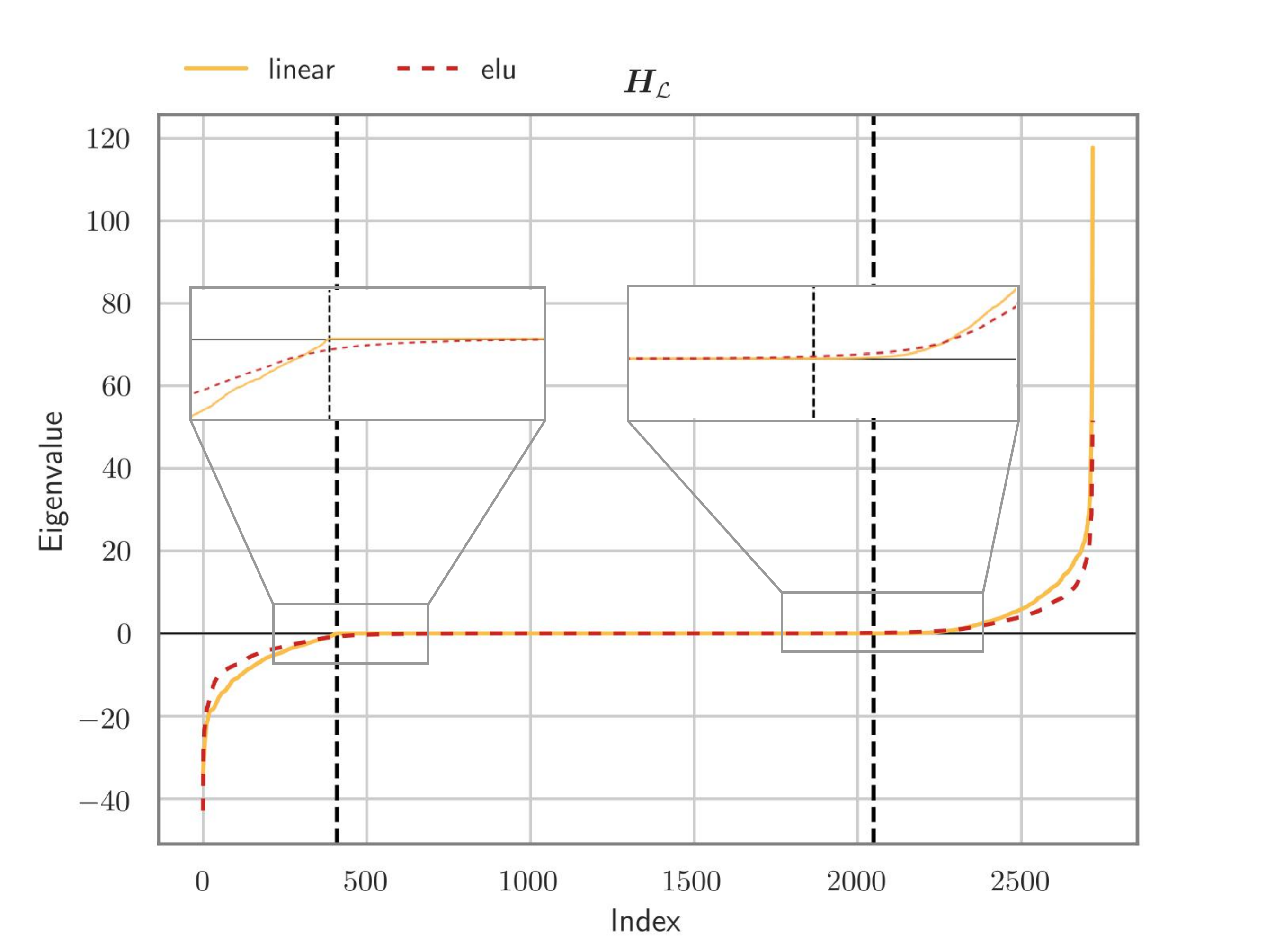}%
    \includegraphics[width=0.33\textwidth]{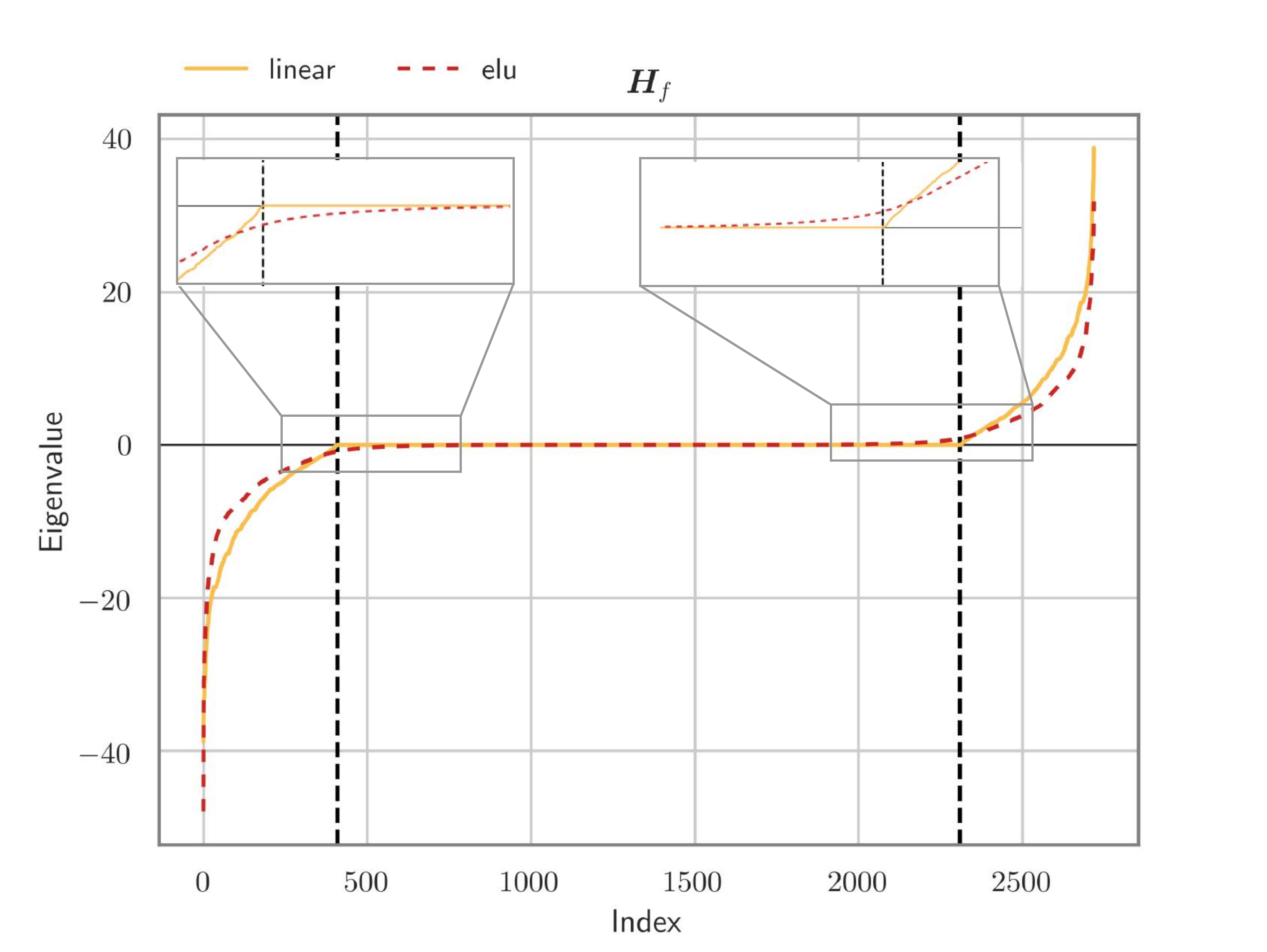}%
    \includegraphics[width=0.33\textwidth]{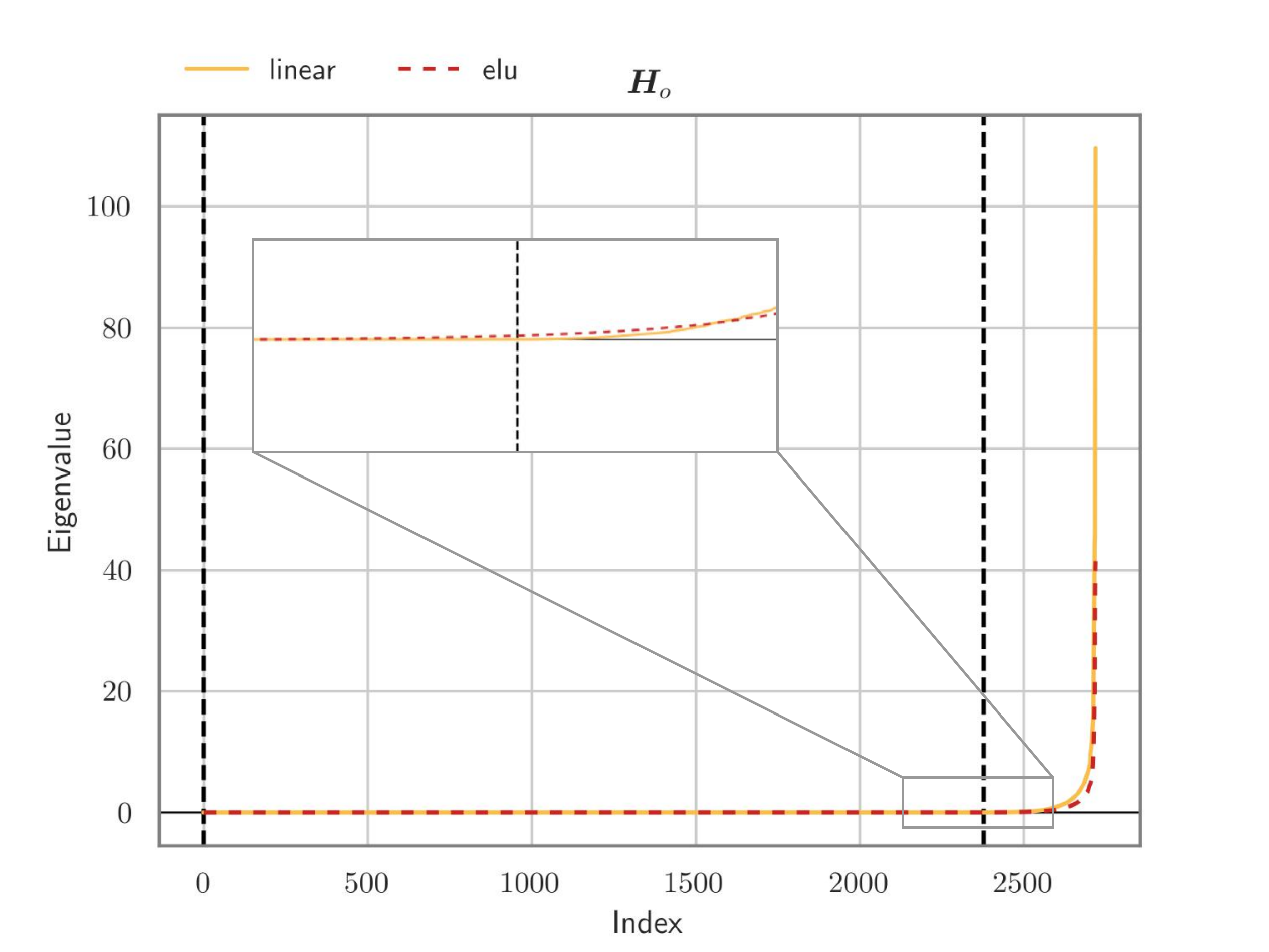}
    \caption{Spectrum of the loss Hessian $\HL$ (left), functional Hessian $\HF$ (middle)  and outer product $\HO$ (right), for \textcolor{orange}{\textbf{linear}} and \textcolor{red}{\textbf{non-linear}} networks. Black dashed lines are the predictions of the bulk size via our rank formulas. We use 2 hidden layers of size $30,20$ with \textbf{ELU} activation on \textsc{MNIST} under \textbf{cross entropy} loss.}
    \label{fig:elu_spectrum_cross}
\end{figure}
\FloatBarrier
\subsection{Rank Results for Neural Tangent Kernel}\label{supp:ntk}
In this section we show that our formulas also allow for insights into rank of the Gram matrix induced by the neural tangent kernel (NTK) \citep{jacot2018neural} at initialization. The NTK is a matrix defined entry-wise as
\begin{equation*}
    \hat{\Sigma}_{ij} = \left(\nabla_{\bm{\theta}}F_{\bm{\theta}}(\x_i)\right)^{\top}\nabla_{\bm{\theta}}F_{\bm{\theta}}(\x_j)
\end{equation*}
In Figure~\ref{fig:ntk} we display the rank dynamics  of the Gram matrix as a function of sample size. We use the predictions based on the outer-product Hessian. We observe an exact match for all datasets and sample sizes.
\begin{figure}[!htb]
    \centering
    \includegraphics[width=0.33\textwidth]{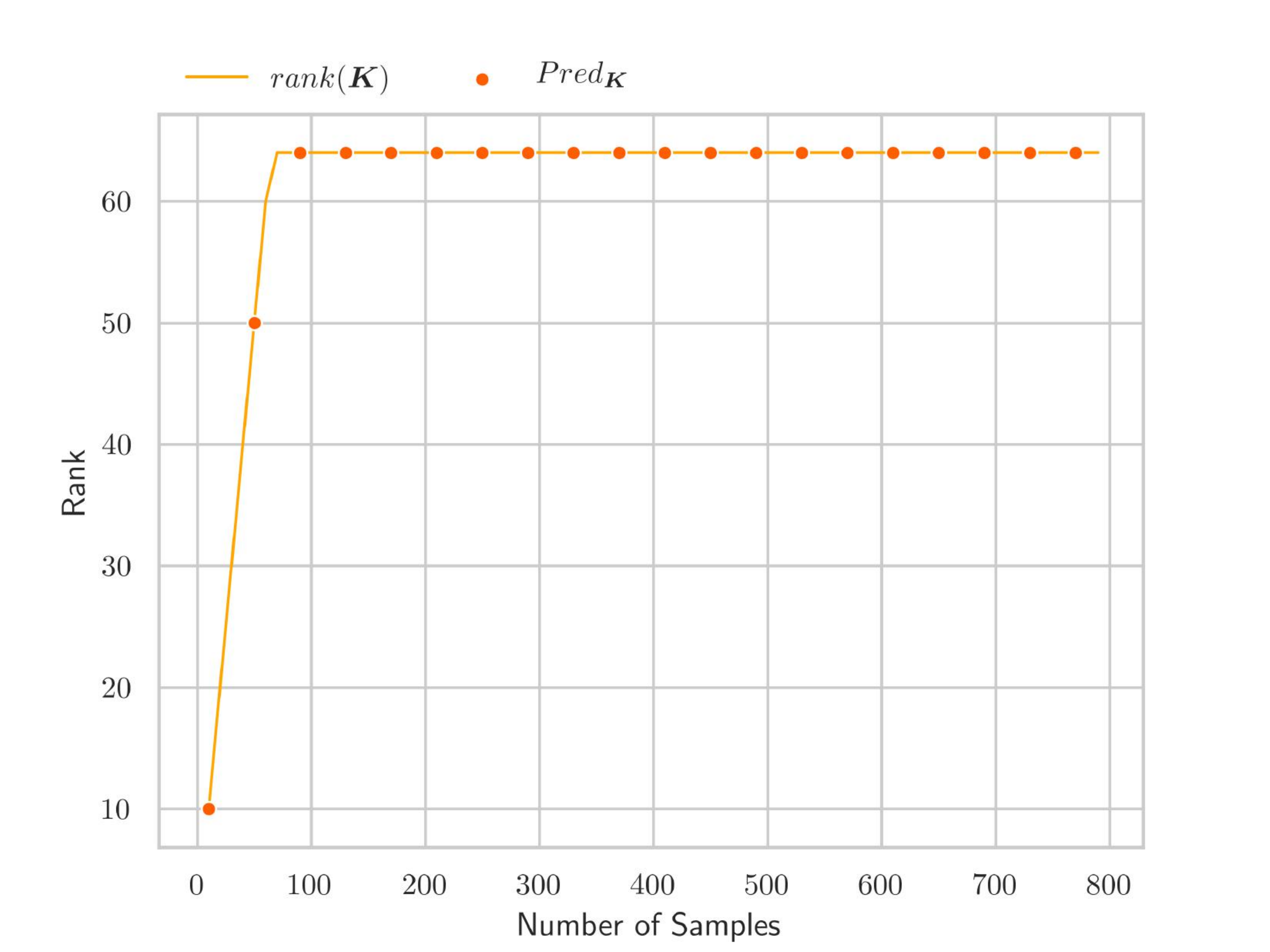}%
    \includegraphics[width=0.33\textwidth]{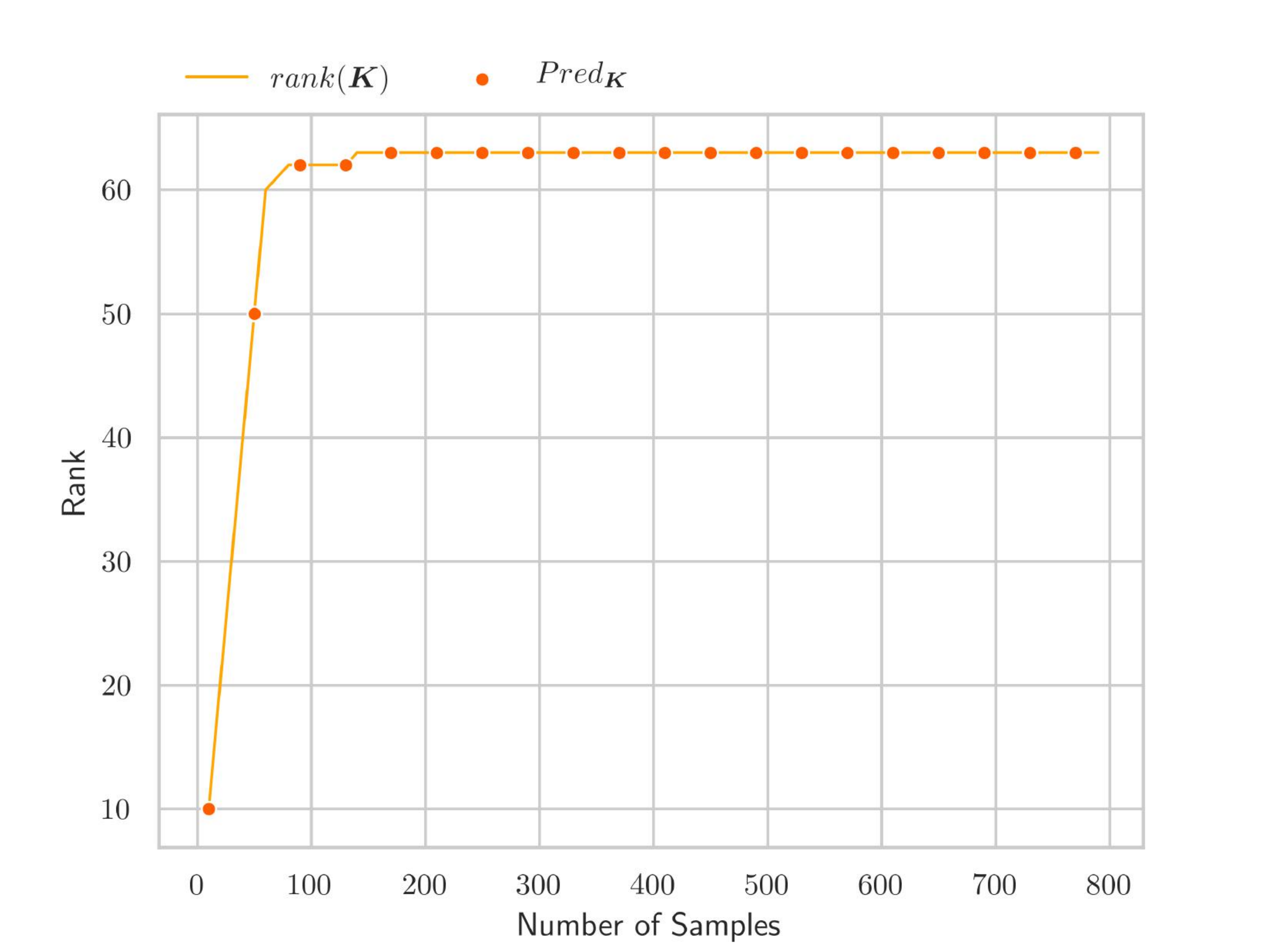}%
    \includegraphics[width=0.33\textwidth]{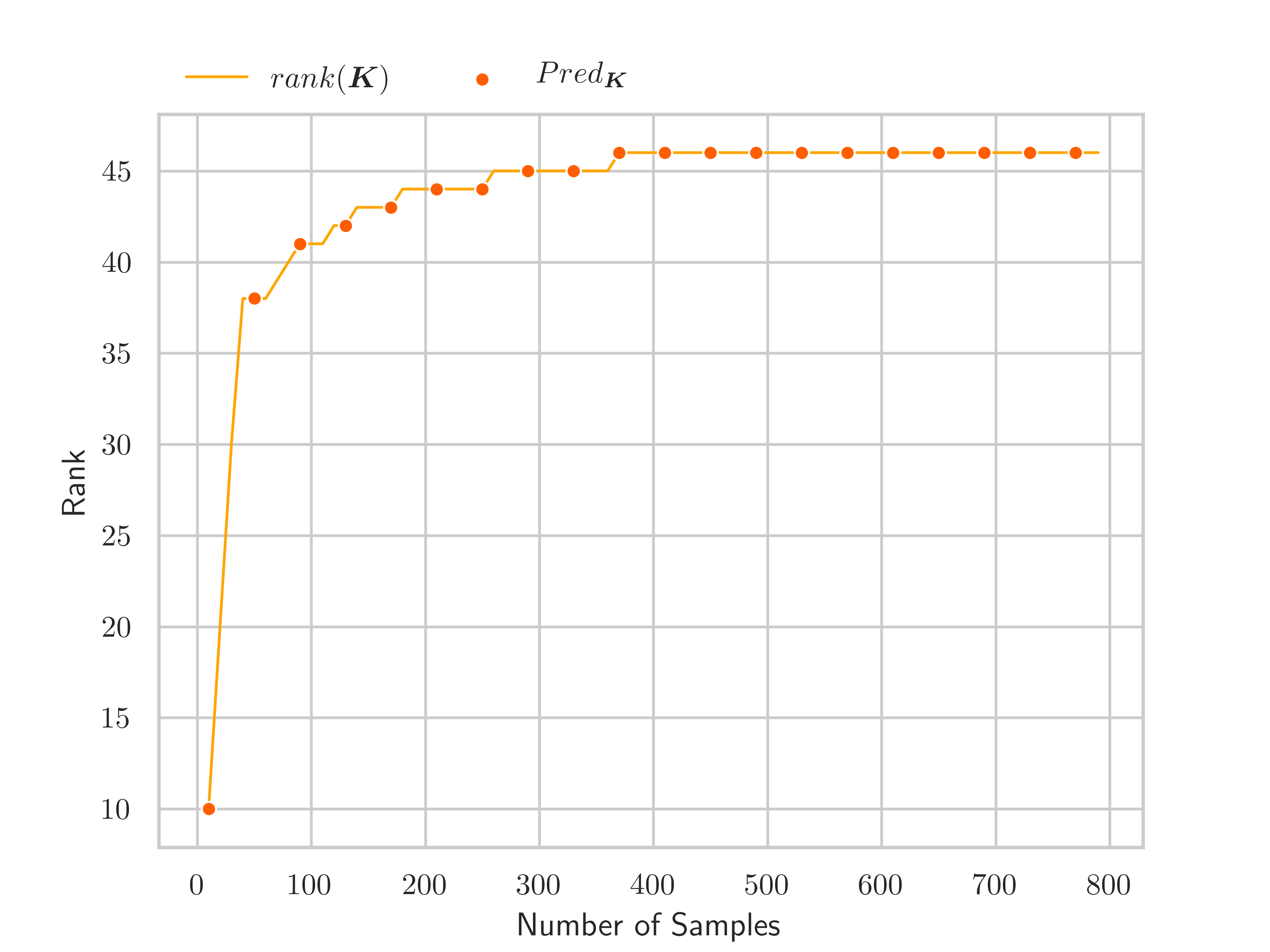}
    \caption{Rank of the empirical NTK versus sample size $n$ for architecture $20,20$. We display the predictions based on the outer-product $\HO$ as dots, using \textsc{CIFAR10} (left), Fashion\textsc{MNIST} (middle) and \textsc{MNIST} (right).}
    \label{fig:ntk}
\end{figure}
\FloatBarrier

\end{document}